\documentclass{article}
\usepackage{graphicx} 
\usepackage{amsmath}
\usepackage{authblk}
\usepackage{amssymb}
\usepackage{amsthm}
\usepackage{tikz}
\usepackage{xfrac}
\usepackage{subcaption}
\usepackage{mathtools}
\usepackage{hyperref}
\usepackage{cleveref}
\usepackage{fullpage}
\usepackage{todonotes}
\usepackage{natbib}
\newtheorem{theorem}{Theorem}[section]

\newtheorem{prop}[theorem]{Proposition}
\newtheorem{lemma}[theorem]{Lemma}

\newtheorem{conjecture}{Conjecture}

\theoremstyle{definition}
\newtheorem{definition}[theorem]{Definition}
\newtheorem{example}[theorem]{Example}
\newtheorem{remark}[theorem]{Remark}

\usepackage{xcolor}

\newcommand{\aff}{\operatorname{aff}}
\newcommand{\R}{\mathbb{R}}

\newcommand{\architecture}{\mathcal{A}}
\newcommand{\paramspace}[1]{\Theta_{#1}}
\newcommand{\gparamspace}[1]{\Tilde{\Theta}_{#1}}
\newcommand{\weights}[1]{W^{(#1)}}

\newcommand{\bias}[1]{b^{(#1)}}
\newcommand{\complex}{\mathcal{C}}
\newcommand{\activation}[1]{a^{(#1)}}
\newcommand{\diag}{\operatorname{diag}}

\newcommand{\preactivation}[1]{z^{(#1)}}
\newcommand{\bhyperplane}[1]{B_{#1}}

\newcommand{\hyperplanes}[2]{\mathcal{H}_{#1}(#2)}
\newcommand{\breakpoints}[1]{B({#1})}
\newcommand{\bcomplex}[1]{\mathcal{B}_{#1}}

\newcommand{\funspace}[1]{\mathcal{F}_{#1}}

\newcommand{\gfiber}[1]{\Tilde{\mathcal{S}}({#1})}

\newcommand{\str}{\operatorname{star}}

\newcommand{\inner}[1]{\langle #1 \rangle}
\newcommand{\id}{\operatorname{Id}}
\newcommand{\relint}{\operatorname{relint}}

\newcommand{\lin}{\operatorname{lin}}
\newcommand{\sgn}{\operatorname{sign}}

\DeclareMathOperator{\GL}{GL}

\newcommand{\HA}{\mathcal{H}}
\newcommand{\up}[1]{^{(#1)}}
\newcommand{\good}{\Theta^{\mathrm{good}}_\architecture}
\setcitestyle{authoryear,round,comma}

\title{Most ReLU Networks Admit Identifiable Parameters
}
\author[1]{Moritz Grillo}
\author[1,2]{Guido Mont\'ufar} 
\affil[1]{Max Planck Institute for Mathematics in the Sciences, Leipzig}
\affil[2]{Departments of Mathematics and Statistics \& Data Science, UCLA}

\date{\today}

\begin{document}
	
	\maketitle
	\begin{abstract}
    We study the realization map of deep ReLU networks, focusing on when a function determines its parameters up to scaling and permutation. 
    To analyze hidden redundancies beyond these standard symmetries, we introduce a framework based on weighted polyhedral complexes. 
    Our main result shows that for every architecture whose input and hidden layers have width at least two, there exists an open set of identifiable parameters.     
    This implies that the functional dimension of every such architecture is exactly the number of parameters minus the number of hidden neurons. 
    We further show that minimal functional representations can still have non-trivial parameter redundancies. 
    Finally, we establish a generic depth hierarchy, whereby for an open set of parameters the realized function cannot be represented generically by any shallower network. 
\smallskip 

\noindent 
\emph{Keywords:} realization map, functional dimension, hidden symmetries, depth separation, minimal architecture, weighted polyhedral complex, bent hyperplane, linear regions 
    
\end{abstract}

\section{Introduction} 

A central question in the study of deep neural networks is the relationship between the parameter space and the space of functions they realize. 
For a given architecture $\architecture$, these spaces are linked by the \emph{realization map} $\mu_\architecture \colon \paramspace{\architecture} \to \funspace{\architecture}; \, \theta\mapsto f_\theta$, which assigns to each parameter $\theta$, consisting of the weights and biases of the network, the function $f_\theta$ computed by the network. 
The geometry of this map plays a fundamental role, as it shapes the loss landscape and its symmetries, governs implicit biases of gradient-based optimization, and determines when distinct parameter configurations represent genuinely different predictors.  

A key aspect of this relationship is \emph{identifiability}, namely whether a parameter can be uniquely recovered (up to trivial symmetries) from the function it realizes. 
The realization map is generally not injective due to inherent redundancies, including well-known 
\emph{trivial symmetries}  
such as \emph{permutations} of neurons within a layer and \emph{positive scalings} of weights across layers. 
Factoring out these symmetries, we define the quotient parameter space $\overline{\paramspace{\architecture}} = \paramspace{\architecture} / \sim$. 
A parameter $\theta$ is said to have \emph{no hidden symmetries}, or to be \emph{identifiable}, if its fiber in the quotient space consists of a single element. 

Parameter identifiability is closely related to the \emph{functional dimension}, defined as the maximal rank of the Jacobian of the realization map evaluated over finite input sets: 
\[
\dim(\mu_\architecture(\Theta_\architecture)) 
= 
\max_{\theta, X} \operatorname{rank} \left( \frac{\partial f_\theta(X)}{\partial \theta} \right), 
\]
where $X = \{x_1, \dots, x_N\}$ is a finite input set and $f_\theta(X)$ denotes the evaluation of the realized function on $X$. 
Intuitively, the functional dimension is the maximal number of independent directions in function space that can be realized by infinitesimal variations of the parameters. 
If the realization map is smooth on a nonempty open set of identifiable parameters, then it is locally injective modulo the trivial symmetries, and the architecture attains its \emph{expected functional dimension}, given by 
\(
\dim(\mu_\architecture(\Theta_\architecture)) 
=
\dim(\overline{\paramspace{\architecture}}) . 
\)

Previous work has established that certain classes of architectures attain the expected functional dimension. 
In particular, \cite{phuong2020functional} proved for pyramidal networks with decreasing layer widths that there is an open set of parameters that are identifiable among generic parameters, while \cite{grigsby2023hiddensymmetriesrelunetworks}, building on \cite{rolnick2020reverse}, established the expected functional dimension for architectures whose hidden layers have width at least the input dimension $n_i \ge d\ge 2$.
However, a general structural understanding of identifiability for ReLU architectures remains lacking, raising the following question: 
\begin{quote}
\emph{Which ReLU architectures attain the expected functional dimension?} 
\end{quote}

Factoring out trivial symmetries, one may ask whether any remaining non-identifiability can be explained by non-minimality of the architecture. 
In many settings, non-identifiability can be linked to a lack of \emph{minimality}, meaning that the function can be realized by a strictly smaller sub-architecture, or \emph{submodel} \citep{SUSSMANN1992589,
fukumizu00,
PetzkaTS20,
shahverdi2026identifiableequivariantnetworkslayerwise}. 
Such submodels induce redundancy through overparameterization, as parameters corresponding to redundant neurons can be varied without affecting the realized function. 
This raises the following fundamental question for deep ReLU networks: 
\begin{quote} 
\emph{Does non-identifiability in ReLU networks arise solely from  reducibility of the architecture, or can additional forms of redundancy persist even for minimal realizations?} 
\end{quote} 

The difficulty of these questions is compounded by the structure of ReLU networks. The ReLU activation is not differentiable at zero and vanishes on a half-space, leading to realization maps whose Jacobians, evaluated over a finite input set, depend intricately on the network's activation pattern. These patterns are determined by semi-algebraic regions in parameter space whose structure remains poorly understood. 
As a result, characterizing the rank of the Jacobian is highly nontrivial. 
This problem has been studied extensively in the context of the empirical neural tangent kernel (NTK), defined as the Gram matrix of these Jacobians. 
However, such works focus on showing that sufficiently overparametrized networks attain full-rank NTK with high probability over a random parameter. In contrast, the functional dimension is an intrinsic property of the architecture, requiring maximization of the Jacobian rank jointly over the parameters and input datasets. 

In this work, we address these questions by developing a unified framework for general ReLU architectures. 
We show that all deep ReLU architectures of width at least two attain the expected functional dimension. Moreover, such architectures can simultaneously admit open sets of identifiable parameters and open sets of minimal parametrizations that are not identifiable, revealing intrinsic redundancies beyond standard symmetries and submodel structure.

\subsection{Our Contributions} 

In this paper, we address fundamental questions regarding parameter identifiability of deep ReLU networks. 
Our contributions can be summarized as follows: 

\begin{enumerate}
    \item \textbf{A Unified Polyhedral Framework:} In \Cref{sec:polyhedral_framework}, we introduce a framework to study identifiability and parameter fibers, describing functional non-linearities via weighted polyhedral complexes. 
    
    \item \textbf{Identifiability in Almost all Architectures:} 
    In \Cref{sec:identifiable_parameters}, we show that 
    for any architecture whose input and hidden layers have width at least two, there exists a nonempty open subset of parameters that are identifiable among \emph{all} parameters. 
    This substantially generalizes prior genericity and width-dependent results of \cite{grigsby2023hiddensymmetriesrelunetworks} and \cite{phuong2020functional}. 

    \item \textbf{Functional Dimension:} As a consequence, we settle the functional dimension for nearly all ReLU architectures as the number of parameters minus the number of hidden neurons (\Cref{thm:identifiable_dimension}).

    \item \textbf{Minimality $\neq$ Identifiability:} 
    In \Cref{sec:minimality_identifiability}, 
    we construct an open set of minimal parameters with positive-dimensional non-trivial fibers. 
    This demonstrates that redundancy can persist even when no neurons can be removed.
    
    \item \textbf{Generic Depth Hierarchy:} In \Cref{sec:depth_hierarchy}, we  
    show that, for most architectures, there exists an open set of parameters whose realized functions cannot be realized by any shallower network with generic parameters, regardless of width. 
\end{enumerate}

\subsection{Related Work}

\paragraph{Symmetries and Identifiability} 

The study of functional equivalence and minimal realizations in neural networks has a long history. 
Early work by 
\cite{SUSSMANN1992589} studies identifiability and minimality for sigmoidal networks, showing that non-identifiability can arise from non-minimal realizations and providing conditions under which minimal representations are unique. 
In a related direction, \cite{kurkova1994equivalent} investigate when two feedforward networks represent the same function, characterizing equivalence classes of parameters and highlighting the role of redundancy in network representations. 
 \cite{Fefferman1994ReconstructingAN} gives conditions under which knowledge of the output map determines the architecture and parameters up to standard symmetries in the case of tanh activation. 
Moreover, \cite{vlacic2020affinesymmetriesneuralnetwork} develop a general framework in which fibers are generated by affine symmetries of the activation function, giving an exhaustive description except in the presence of nontrivial zero-realizing networks. 

A substantial body of work has examined the impact of parameter symmetries on optimization, from the geometry of loss landscapes in hierarchical structures \citep{FUKUMIZU2000317,
simsek2021geometry} 
to the topology of training dynamics \citep{nurisso2026topology}. A recent survey on parameter symmetries is provided by \citet{zhao2025symmetryneuralnetworkparameter}.

We next highlight a line of work on neural network identifiability in settings where the realization map admits an algebraic structure. For polynomial models, classical results on the dimension of secant varieties \citep{Alexander1995Polynomial} yield functional dimension results for shallow polynomial networks. 
For deep networks with polynomial activations, recent works 
\citep{shahverdi2025learningrazorsedgesingularity, 
usevich2025identifiabilitydeeppolynomialneural, finkel2025activationdegreethresholdsexpressiveness} 
study generic identifiability and expected dimension using tools from algebraic geometry, showing that sufficiently high activation degree ensures identifiability up to permutation and scaling symmetries. 
Recent works have started to classify the parameter symmetries in transformers \citep{tran2025equivariant}. 
 
We also highlight a line of work on  dimension and finite identifiability for restricted Boltzmann machines (RBMs). These are energy-based probabilistic models whose log-probabilities correspond to shallow softplus networks, while their tropicalizations correspond to shallow ReLU networks, both with unit output weights and evaluated on binary inputs \citep{10.1007/978-3-319-97798-0_4}. 
\cite{Cueto2010} showed that many tropicalized RBM architectures attain the expected dimension, and later \cite{doi:10.1137/16M1077489} established that the original model always does. 
However, the dimension of the tropical RBM, and thus of the corresponding shallow ReLU network on binary inputs, remains only partially understood. 
We next review works specific to ReLU networks. 

\paragraph{Identifiability and Dimension of ReLU Networks} 
For ReLU networks, identifiability is inherently combinatorial and geometric due to the piecewise linear activation. 
For shallow ReLU networks, \cite{PetzkaTS20} and \cite{ramakrishnan2026completesymmetryclassificationshallow} provide a complete characterization of symmetries in the two-layer case, identifying redundancies beyond permutation and scaling. Moreover, \cite{DEREICH2022121} characterize minimal representations of functions realized by shallow ReLU networks and determine both the dimension and the number of connected components of the corresponding function space. Building on this perspective, \cite{Dereich_2024} show that optimal approximations of continuous target functions exist within this space.
Our work builds on the reverse-engineering framework of \cite{rolnick2020reverse}, which exploits the geometry of bent hyperplanes and activation boundaries to reconstruct network parameters from the realized function. 

Following this line, \cite{grigsby2023hiddensymmetriesrelunetworks} establish the existence of open sets of parameters satisfying geometric conditions under which the reverse-engineering framework applies for architectures whose hidden layers have width at least the input dimension, thereby showing that these architectures attain the expected functional dimension. Likewise, \cite{phuong2020functional} prove the existence of an open set of parameters that are identifiable among generic parameters for pyramidal architectures with non-increasing widths, provided that all input and hidden layers have width at least two.
For pyramidal architectures, \cite{Bona-Pellissier2023} further provide conditions under which equality of realized functions implies equality of parameters up to trivial symmetries, and exhibit examples that fall outside the cases covered by \cite{phuong2020functional}. 
Moreover, \cite{Stock2022} introduce a locally linear parametrization of the realization map in terms of paths, derive conditions for local identifiability, and provide necessary and sufficient conditions in the shallow case. 

A related notion is that of functional dimension. For ReLU networks, \cite{ELISENDAGRIGSBY2025110636} show that it varies across parameter space, while \cite{grigsby2024functionaldimensionpersistentpseudodimension} relates this local dimension to the persistent pseudodimension as a measure of local capacity. 
Complementarily, the constraints of the realizable functions across regions of parameter space have recently been investigated in \cite{alexandr2025constrainingoutputsreluneural}.

Finally, we note that, closely related to the functional dimension, an extensive line of works has studied the degree of overparametrization required for the NTK to attain full rank with high probability. 
Several works have established lower bounds on the smallest eigenvalue \citep{pmlr-v139-nguyen21g,bombari2022memorization,montanari2022interpolation,karhadkar2024bounds}. 
Focusing on rank, \cite{karhadkar2024mildly} show that for mildly overparametrized networks most activation patterns 
correspond to parameter regions where the Jacobian attains full rank equal to the number of data points.

\paragraph{Linear Regions and Depth Separation} 

There is substantial work that studies subdivisions of the input space induced by linear regions and their connection to depth separation. 
One direction studies approximation-theoretic separation between shallow and deep networks, including the results of \cite{pmlr-v49-eldan16}, \cite{pmlr-v49-telgarsky16}, and \cite{Mhaskar2017when}, as well as works quantifying complexity through the proliferation of linear regions \citep{montufar2014number,pascanu2014number,pmlr-v70-raghu17a,pmlr-v80-serra18b,Balestriero2019power,pmlr-v247-ergen24a}. 
These results demonstrate that depth enhances expressive power, as reflected in approximation rates, oscillatory behavior, and exponential growth in the number of linear regions. 
More recent works in this direction provide explicit counting formulas by relating linear regions to Minkowski sums of polytopes 
\citep{montufar2022minkowski}, and recent works aim to obtain precise structural characterization of the associated polytopes \citep{balakin2025maxoutpolytopes}. 

A complementary line of work focuses on depth hierarchies for exact representation of continuous piecewise linear functions by ReLU networks irrespective of width. 
This perspective was initiated by \cite{doi:10.1137/22M1489332} via the representability of the maximum function and related maps. 
Subsequent works establish lower bounds in several regimes:  \cite{haase2023lower} prove logarithmic depth lower bounds for networks with integer weights, \cite{averkov2025on} extend this to rational-weights, and \cite{grillo2025depthbounds} derive lower bounds under compatibility assumptions with the braid arrangement. 
At the same time, \cite{bakaev2026betterneuralnetworkexpressivity} show that earlier conjectured upper bounds for exact depth hierarchies are not tight by constructing shallower representations of the maximum function. 
Our generic depth-hierarchy result is complementary to this line: rather than focusing on specific functions such as max, it shows that for generic parameters depth cannot be traded for width.

\paragraph{Weighted Polyhedral Complexes for Neural Networks} 

Polyhedral and tropical perspectives have become central tools for understanding ReLU networks and, more generally, continuous piecewise-linear models. 
 \cite{zhang2018tropical} and \cite{charisopoulos18_tropicalapproachneural} identify ReLU networks with tropical rational maps, relating their linear regions and decision boundaries to polyhedral and tropical-geometric structures. 
This viewpoint has since been refined in several directions. 
For instance, \cite{brandenburg2024the} study semialgebraic subdivisions of parameter space and the combinatorial types of decision boundaries, highlighting that parameter space itself admits a rich stratification. 
A broader overview of polyhedral methods in deep learning was recently presented by \cite{HuchetteMunozSerraTsay:2023}. 

Closer to our setting, weighted polyhedral complexes have been used to encode function representations and decompositions. In particular, \cite{Tran_2024} study minimal representations of tropical rational functions, directly addressing questions of redundancy and factorization length for piecewise-linear maps. 
In a similar spirit, \cite{brandenburg2025decomposition} introduce decomposition polyhedra for CPWL functions, showing that, once an underlying complex is fixed, the space of admissible decompositions is organized as a polyhedron whose minimal decompositions are vertices. 
This viewpoint is closely aligned with our use of weighted polyhedral complexes to separate visible function geometry from latent parameter geometry.

\section{Preliminaries}

\paragraph{Notation} 

For any $n \in \mathbb{N}$, we let $[n]:=\{1, \dots, n\}$. 
For a vector $x \in \R^d$, we denote by $[x]_+$ the entrywise application of the ReLU activation function $\max\{0, x_i\}$. 
For a set of indices $S \subseteq [n]$, let $D_S \in \R^{n \times n}$ denote the diagonal selection matrix where $(D_S)_{ii} = 1$ if $i \in S$ and $(D_S)_{ii} = 0$ otherwise. We also simply write $\| \cdot \|$ for the euclidean norm $\|\cdot \|_2$.
\paragraph{ReLU Networks}
A \emph{ReLU layer} with $n_{\ell-1}$ inputs and $n_\ell$ outputs, weight matrix $\weights{\ell} \in \R^{n_\ell \times n_{\ell-1}}$, and bias vector $\bias{\ell} \in \R^{n_\ell}$ computes the map 
\(
f_{\weights{\ell},\bias{\ell}}(x) = [\weights{\ell}x + \bias{\ell}]_+.
\)

A \emph{ReLU network} with architecture $\architecture=(n_0,\ldots,n_{L+1})$ is the composition
\[
f_\theta(x) = T_{\weights{L+1}, \bias{L+1}} \circ f_{\weights{L}, \bias{L}} \circ \cdots \circ f_{\weights{1}, \bias{1}}(x),
\]
where the final layer $T_{\weights{L+1}, \bias{L+1}}(y) = \weights{L+1}y + \bias{L+1}$ is an affine linear map. 
We sometimes write $d$ for the input dimension $n_0$. 

The tuple 
\(
\theta = (\weights{1},\bias{1},\dots,\weights{L+1},\bias{L+1})
\)
is called the \emph{parameter} of the network. The \emph{parameter space} is given by 
\(
\paramspace{\architecture} \cong \bigoplus_{\ell=1}^{L+1} \paramspace{\ell},
\) where $\paramspace{\ell} =\R^{n_\ell \times n_{\ell-1}} \times \R^{n_\ell}$. 
The \emph{realization map} $\mu_\architecture \colon \paramspace{\architecture} \to \funspace{\architecture}$ is defined by $\theta \mapsto f_\theta$. 
For $\ell \in [L]$, the \emph{preactivation} at layer $\ell$ is 
a function of the network's input defined by 
\(
\preactivation{\ell, \theta}(x) = \weights{\ell} \activation{\ell-1, \theta}(x) + \bias{\ell},
\)
where $\activation{\ell, \theta}(x) := [\preactivation{\ell, \theta}(x)]_+$ is the \emph{activation} at layer $\ell$ and $\activation{0, \theta} = x$. 
The tuple $(j,\ell)$ indexes a \emph{neuron in layer $\ell$}  with preactivation $\preactivation{\ell}_j$ and activation $\activation{\ell}_j$. 
The $j$-th neuron in layer $\ell$ has input weights given by $W_j^{(\ell)}$,
the $j$-th row,
and output weights given by $W_{:,j}^{(\ell+1)}$, the $j$th column of $W^{(\ell+1)}$.

We also use the notation $\Theta\up{\ell}=\bigoplus_{k=1}^{\ell} \paramspace{k}$ for the parameter space of the first $\ell$ layers. For a parameter $\theta = (\theta_1,\ldots,\theta_{L+1}) \in \paramspace{\architecture}$, we use the notation $\theta\up{\ell} = (\theta_1,\ldots,\theta_{\ell}) \in \Theta\up{\ell}$ and denote by $f_{\theta\up{\ell}}$ the corresponding truncated network with ReLU applied to the output layer for $\ell \leq L$. Note that the functions $f_{\theta\up{\ell}}$ and $\activation{\ell,\theta}$ are identical, but the distinguished notation will become handy in \Cref{sec:identifiable_parameters}. 
\paragraph{Symmetries, Identifiability and Minimality}
The parameter space $\paramspace{\architecture}$ carries a natural equivalence relation, denoted $\theta \sim \eta$, generated by the \emph{trivial symmetries} of the networks \citep[see, e.g.,][]{rolnick2020reverse,phuong2020functional}: 
\begin{enumerate}
    \item The neurons within any layer $\ell \in [L]$ can be reordered without altering the realized function $f_\theta$.
    \item  For any neuron $j$ in layer $\ell$ and any scaling factor $\lambda > 0$, one can multiply the $j$-th row of $\weights{\ell}$ and the bias entry $\bias{\ell}_j$ by $\lambda$, provided the $j$-th column of $\weights{\ell+1}$ is multiplied by $\lambda^{-1}$. 
\end{enumerate}
We denote by $\overline{\paramspace{\architecture}} = \paramspace{\architecture} / \sim$ the parameter space modulo these trivial symmetries. A parameter $\theta$ is \emph{identifiable} if its fiber consists only of its symmetry orbit, i.e., $f_\eta = f_\theta$ implies $\eta \sim \theta$. In terms of the realization map, $\theta$ is identifiable if its image in the quotient space has a trivial fiber: $\overline{\mu}_\architecture^{-1}(f_\theta) = \{[\theta]\}$. By abuse of notation, we will often identify a parameter $\theta \in \paramspace{\architecture}$ with its equivalence class $[\theta] \in \overline{\paramspace{\architecture}}$ whenever we work in the quotient space. 
Moreover, if $\architecture'=(n_0,n_1',\dots,n_L',n_{L+1})$ satisfies $n_\ell'\le n_\ell$ for all $\ell$, then $\architecture'$ is called a \emph{subarchitecture} or \emph{submodel} of $\architecture$, and $\theta$ is \emph{minimal} if no strict submodel can  realize the same function.

\paragraph{Polyhedral Geometry}
For a vector $a \in \R^{d}$ and a scalar $b \in \R$, the \emph{hyperplane} $H \coloneqq \{x \in \R^d \mid \inner{a,x}+b=0\}$ subdivides $\R^d$ into \emph{half-spaces} $H^{+}\coloneqq \{x \in \R^d \mid \inner{a,x}+b\geq 0\} \text{ and }{H}^{-} \coloneqq \{x \in \R^d \mid \inner{a,x}+b\leq0\}.$
A finite set of hyperplanes $\mathcal H = \{H_1,\dots,H_n\}$ is called a \emph{hyperplane arrangement}. 
A hyperplane arrangement $\mathcal H$ in $\R^d$ is called \emph{generic} if the intersection of any subset of $k \leq d$ hyperplanes has codimension $k$, and no $d+1$ hyperplanes have a common intersection. 
Given an arbitrary arrangement $\mathcal H$, let
$
L \coloneqq \bigcap_{H \in \mathcal H} \lin(H)
$
be the maximal linear subspace contained in all hyperplanes.
Projecting $\R^d$ orthogonally onto $L^\perp$ yields an induced arrangement
$\mathcal H^{\mathrm{ess}}$ in $L^\perp$, called the \emph{essentialization} of
$\mathcal H$.

A \emph{polyhedral complex} $\complex$ is a finite collection of polyhedra such that 
\begin{enumerate}
    \item $\emptyset \in \complex$,
    \item if $P \in \complex$, then all faces of $P$ are in $\complex$, and 
    \item if $P, P' \in \complex$ and $P\cap P'\neq\emptyset$, then $P \cap P'$ is a face of both $P$ and $P'$. 
\end{enumerate} 
For a polyhedral complex $\complex$ in $\R^d$ and $k \leq d$ we denote by $\complex^k$ the set of $k$-dimensional polyhedra in~$\complex$.  We call $\complex^{d-1}$ the \emph{facets}, $\complex^d$ the \emph{regions} and $\complex^{d-2}$ the \emph{ridges} of $\complex$.
The \emph{dimension} of a complex~$\complex$ is the maximal dimension of its polyhedra. A complex is \emph{pure} (of dimension $k$) if all maximal polyhedra are of dimension $k$. 
A pure polyhedral complex in $\R^d$ of dimension $d-1$ is called a \emph{piecewise linear hypersurface}. A \emph{local normal} at $x \in \R^d$ of a piecewise linear hypersurface is a vector that is orthogonal to one of the maximal polyhedra containing $x$.

Given a face $\sigma \in \complex$, we denote by $\aff(\sigma) \subseteq \R^d$ the unique smallest affine subspace containing $\sigma$. The \emph{relative interior} of $\sigma$ is the interior of $\sigma$ inside the affine space $\aff(\sigma)$ and is denoted by $\relint(\sigma)$.

Let $\complex$ be a polyhedral complex in $\R^d$ and let $\tau \in \complex$ be a face.
The \emph{star} of $\tau$ is
\(
\str_\complex(\tau) \coloneqq \{\, \sigma \in \complex \mid \tau \subseteq \sigma \,\}.
\)
When $\complex$ is clear from the context, we omit the subscript and write $\str(\tau)$. For any $k\leq d$ and any faces $\tau \in \complex^{k-1},\sigma \in \complex^{k}$ with $\tau \subseteq \sigma$, let $e_{\sigma/\tau} \in \R^d$ denote the normal vector of $\tau$ relative to $\sigma$, defined as the unique unit vector that lies in $\aff(\sigma)$, is orthogonal to $\aff(\tau)$, and points from the relative interior of $\tau$ into the relative interior of $\sigma$. 
For a subset  $S \subseteq \complex$, we denote the \emph{support} by $|S| \coloneqq \bigcup_{P \in S} P$ and by $\# S$ the number of elements contained in $S$.

A function $f\colon\R^d\to\R^m$ is \emph{continuous and piecewise linear} (CPWL), if there exists a complete polyhedral complex $\complex$ such that the restriction of $f$ to each polyhedron $P\in\complex$ is an affine linear function. 
If this condition is satisfied, we say that $f$ and $\complex$ are \emph{compatible} with each other. 
A vector $x \in  \R^d$ is a \emph{breakpoint} of $f$ if there is no open set $U \subseteq \R^d$ containing $x$ such that $f$ is affine linear on $U$. 
For a CPWL function $f\colon \R^d \to \R^m$, let $\breakpoints{f} $ be the set of breakpoints of $f$. 

A  polyhedral complex $\complex$ in $\R^d$ of dimension at least $d-1$ can be equipped with a \emph{weight function} $c \colon \complex^{d-1} \to \R^m$. 
One can use such a weight function to describe a CPWL function by storing how the linear parts change when crossing a facet in a compatible complex as formalized in the following lemma. See also \Cref{fig:tropical-weight-local-two-cell} for an illustration.
\begin{lemma}
\label{lem:structure_theorem_tropical_geometry}
   Let $f \colon \R^d \to \R^m$ be a CPWL function compatible with a polyhedral complex $\complex$. 
   For a facet $\sigma \in \complex^{d-1}$, let $P,Q \in \complex^d$ be the unique polyhedra such that $P \cap Q = \sigma$, 
   and 
   suppose that $f(x) = A_Px + b_P$ for all $x \in P$ and $f(x)= A_Qx + b_Q$ for all $x \in Q$. 
   Then the tropical weight $c_f \colon \complex^{d-1} \to \R^m$ defined by
   \[
   c_f(\sigma)  \coloneqq A_{P} e_{P/\sigma} +  A_{Q} e_{Q/\sigma} =  (A_{P} - A_{Q})e_{P/\sigma} 
   \]
uniquely determines $f$ up to adding a global linear function.
\end{lemma}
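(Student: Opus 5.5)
To prove the lemma we show two things. First, the two expressions for $c_f(\sigma)$ agree and do not depend on the labelling of $P$ and $Q$. Second, if $f,g$ are compatible with $\complex$ and $c_f = c_g$, then $f-g$ is globally affine. By "linear" in the statement we understand affine: adding a constant does not change any weight. Note also that $f-g$ is compatible with $\complex$ and $c_{f-g}=c_f-c_g$, since the weight is linear in the linear parts $A_P$. So it suffices to show that a CPWL function $h$ compatible with $\complex$ and having $c_h\equiv 0$ is affine on all of $\R^d$.

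For the first point, note that $e_{Q/\sigma}=-e_{P/\sigma}$. Both vectors are unit normals to $\aff(\sigma)$, and since $\aff(P)=\aff(Q)=\R^d$ they point to opposite sides of $\sigma$. This gives $A_Pe_{P/\sigma}+A_Qe_{Q/\sigma}=(A_P-A_Q)e_{P/\sigma}$. Swapping $P$ and $Q$ gives the same value, so $c_f$ is well defined.

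The key observation is that $(A_P-A_Q)$ vanishes on the direction space $\lin(\sigma)$ of $\aff(\sigma)$. Continuity of $f$ forces $A_Px+b_P=A_Qx+b_Q$ for all $x\in\sigma$. Since $\sigma$ spans $\aff(\sigma)$, the affine map $x\mapsto (A_P-A_Q)x+(b_P-b_Q)$ vanishes on $\aff(\sigma)$, and hence $A_P-A_Q$ annihilates $\lin(\sigma)$. Together with $e_{P/\sigma}$, this direction space spans $\R^d$. Therefore $c_f(\sigma)=0$ implies $A_P=A_Q$, and then $b_P=b_Q$ by evaluating at a point of $\sigma$. So for $h$ with $c_h\equiv0$, the affine pieces of $h$ on any two regions sharing a facet coincide.

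It remains to propagate this across all regions, which is the main obstacle: we need every two regions of the complete complex to be connected by a chain of regions in which consecutive members share a facet. The plan is a general-position argument. Take interior points $p\in P$ and $q\in Q$, and perturb the segment $[p,q]$ slightly so that it avoids the union of all faces of dimension $\le d-2$. That union is a finite union of sets of codimension $\ge 2$, and the segments meeting it form a measure-zero set of endpoint pairs, so such a perturbation exists with the endpoints staying in the interiors. The segment then passes through regions and crosses only relative interiors of facets. Each crossing goes from one region to an adjacent one sharing that facet, because completeness and the intersection axiom make every point in the relative interior of a facet lie in exactly two regions. This produces the required chain.

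Along the chain the affine pieces agree, so $h$ is given by a single affine function on all of $\R^d$. Hence $f-g$ is affine, which proves uniqueness up to a global affine function.
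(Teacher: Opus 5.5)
Your proof is correct, but it follows a different route from the paper's. The paper's proof is essentially a citation. It splits $f$ into coordinates $f_i$, each compatible with $\complex$, and then invokes results from tropical geometry (Maclagan--Sturmfels, Propositions 3.3.2 and 3.3.10, as adapted by Brandenburg et al.) relating piecewise linear functions to weighted polyhedral complexes.

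You instead give a self-contained elementary argument in three steps:
\begin{enumerate}
\item By linearity of $c$, you reduce to showing that $c_h\equiv 0$ forces $h$ to be affine.
\item Continuity across a facet gives $A_P-A_Q=v\,e_{P/\sigma}^\top$. This is the same fact the paper later uses in Lemma~\ref{lem:LRA_tropical_weight}. Hence a vanishing weight glues the two affine pieces together.
\item A generic-segment argument shows that the facet-adjacency graph of regions of a complete complex is connected.
\end{enumerate}

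\textbf{What each route buys.} Yours shows that only these two ingredients are needed. It works directly for real slopes and signed vector-valued weights, with no need to transfer statements from the tropical setting to arbitrary real CPWL functions. The paper's citation instead places the lemma inside the tropical correspondence between functions and weighted complexes, where balancing and the converse direction also live. Neither of those is needed for the statement as given.

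\textbf{Two details worth making explicit.}
\begin{itemize}
\item $P$ and $Q$ lie on opposite sides of $\aff(\sigma)$ because their interiors are disjoint, not merely because both are full-dimensional.
\item A segment whose endpoints lie in interiors of regions and which avoids the $(d-2)$-skeleton cannot run inside $\aff(\sigma)$ while meeting $\sigma$, since it would have to enter $\sigma$ through its relative boundary. So it meets each facet in at most one transversal crossing point. This makes your chain of regions finite and well defined.
\end{itemize}
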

\begin{proof}
Write $f=(f_1,\dots,f_m)$ with $f_i\colon\mathbb{R}^d\to\mathbb{R}$. 
Since $f$ is CPWL and compatible with the polyhedral complex $\complex$, each coordinate function $f_i$ is CPWL and compatible with the same complex. 
Then the result follows from Tropical Geometry (\citealp[Proposition 3.3.10]{maclagan2015introduction} and \citealp[Proposition 3.3.2]{maclagan2015introduction}), as outlined by \cite{brandenburg2025decomposition}. 
\end{proof}
\begin{figure}[t]
\centering
\begin{subfigure}[t]{0.48\textwidth}
\centering
\begin{tikzpicture}[scale=0.7]
    \footnotesize
    \draw[thick] (0,0) -- (0,3.0);
    \draw[thick] (0,0) -- (4.0,0);
    \draw[thick] (0,0) -- (-2.6,-2.0);

    \node[above] at (0,3.0) {$c_f(\sigma_2)=\begin{pmatrix}
        1\\-1
    \end{pmatrix}$};
    \node[below] at (-2.6,-2.0) {$c_f(\sigma_3)=\begin{pmatrix}
        \sqrt{2}\\ -\sqrt{2}
    \end{pmatrix}$};

    \node[align=center] at (2.05,1.55) {$A_P=\left(\begin{smallmatrix}1&1\\1&0\end{smallmatrix}\right)$};
    \node[align=center] at (1.3,-1.6) {$A_Q=\left(\begin{smallmatrix}1&0\\1&1\end{smallmatrix}\right)$};
    \node[align=center] at (-2.15,1.05) {$A_R=\left(\begin{smallmatrix}0&1\\2&0\end{smallmatrix}\right)$};

    \node[below] at (2.15,0) {$\sigma_1=P\cap Q$};
    \node[above] at (4.15,-0.05) {$c_f(\sigma_1)=\begin{pmatrix}
        1\\-1
    \end{pmatrix}$};
\end{tikzpicture}
\caption{Illustration of a weighted polyhedral complex. The matrices on the maximal polyhedra represent the linear parts of $f$, and the labels on the facets show how the tropical weights record the magnitude of the change of the linear part across the facets. }
\label{fig:tropical-weight-local-two-cell}
\end{subfigure}\hfill
\begin{subfigure}[t]{0.48\textwidth}
\centering
\begin{tikzpicture}[x=1cm,y=1cm,line cap=round,line join=round]
    \footnotesize
    \draw[black!80, line width=0.8pt] (-2.7,0) -- (2.9,0);
    \draw[black!80, line width=0.8pt] (0,-2.2) -- (0,2.5);

    \draw[red!80!black, line width=1pt] (-2.35,1.7) -- (0,1.7);
    \draw[red!80!black, line width=1pt] (0,1.7) -- (2.25,0);
    \draw[red!80!black, line width=1pt] (2.25,0) -- (2.25,-2.45);

    \fill[red!80!black] (0,1.7) circle (1.4pt);
    \fill[red!80!black] (2.25,0) circle (1.4pt);
    \fill[black!80] (0,0) circle (1.4pt);

    \draw[->,red!80!black,thick] (-2.05,1.7) -- (-2.05,2.15);
    \draw[->,black!80,thick] (-2.05,0) -- (-2.05,0.45);
    \draw[->,black!80,thick] (0.05,-1.35) -- (0.65,-1.35);

    \node at (-1.1,2.08) {$-++$};
    \node at (1.25,1.28) {$+++$};
    \node at (-1.25,0.48) {$-+-$};
    \node at (0.58,0.45) {$++-$};
    \node at (-1.25,-1.0) {$---$};
    \node at (0.72,-1.05) {$+--$};
    \node at (2.95,-1.05) {$+-+$};

\end{tikzpicture}
\caption{Illustration of the canonical polyhedral complex. 
The black hyperplanes represent the zero-loci of two neurons in the first hidden layer and the red bent hyperplane the zero-locus of a neuron in the second hidden layer. 
The sign sequences index polyhedra of the complex.} 
\label{fig:sign-pattern-bent-hyperplane}
\end{subfigure}
\caption{Illustration of weighted polyhedral complexes, the canonical polyhedral complex and bent hyperplanes.}
\end{figure}

\section{A Polyhedral Framework for Identifiability}
\label{sec:polyhedral_framework}
In this section we develop the polyhedral language that will be used throughout the paper to study identifiability. The main idea is to separate two types of structure associated with a ReLU network: the \emph{canonical polyhedral complex}, which records all sign changes of neuron preactivations layer by layer, and the \emph{breakpoint complex}, which records only those pieces of this geometry that remain visible in the realized function.  We first introduce the canonical complex and bent hyperplanes, then identify a generic regime in which their geometry is well behaved, and finally show how the local weighted geometry of the breakpoint complex gives rise to the dependency graph that encodes layer precedence. Finally, we establish sufficient conditions for a parameter to be \emph{identifiable among generic parameters}. Some of the concepts introduced along the way have appeared, in various forms, in prior work \citep{ELISENDAGRIGSBY2025110636, marissa, phuong2020functional, rolnick2020reverse}. 
The main goal of this section is to develop a unified polyhedral language for these concepts to rigorously state a sufficient condition for identifiability. 
\subsection{Canonical Polyhedral Complex and Bent Hyperplanes}
In this subsection, following \cite{doi:10.1137/20M1368902}, we introduce the canonical polyhedral complex and the associated bent hyperplanes. The canonical complex records the activation regions of the hidden neurons, while bent hyperplanes isolate the loci where individual neurons change their affine behavior.
\paragraph{Canonical Polyhedral Complex} 
\label{sec:canonical_complex}
Let $\phi_{W,b} \colon \R^d \to \R^m$ be a ReLU layer with weight matrix $W \in \R^{m \times d}$ and bias vector $b \in \R^m$. 
The breakpoints of $\phi_{W,b}$ are contained in the hyperplane arrangement 
$\HA_{W,b} \coloneqq \{\{x \in \R^d \mid W_ix + b_i = 0\}\}_{i=1}^m$.
Each sign pattern $\mathbf{s} =(s_1,\ldots, s_m) \in \{+,0,-\}^m$ determines a (possibly empty) polyhedron 
\(
P_{\mathbf{s}} \coloneqq \bigcap_{i=1}^m H^{s_i}_{W_i,b_i},
\)
where $H^{+}$ and $H^{-}$ denote the corresponding half-spaces and $H^{0}$ the
hyperplane.
The collection of all such polyhedra forms the \emph{canonical polyhedral complex} $\complex_{W,b}$ of $\phi_{W,b}$. 
On each $P_{\mathbf{s}}$, the map $\phi_{W,b}$ is affine linear. If $\mathcal{H}$ is a hyperplane arrangement given by $W$ and $b$, then we also write $\complex_\mathcal{H}$ for $\complex_{W,b}$.

Let $\theta \in \paramspace{\architecture}$ for $\architecture=(n_0,\ldots,n_{L+1})$. 
The canonical polyhedral complex associated with $\theta$ is constructed iteratively, layer by layer. 
We start with the trivial complex $\complex_{\theta,0} \coloneqq \R^d$. 
Suppose that at stage $\ell-1$ we have a polyhedral complex $\complex_{\theta,\ell-1}$ such that all preactivations up to layer $\ell$ are affine linear on each polyhedron  $R\in \complex_{\theta,\ell-1}$. 
For each such polyhedron $R$ and each neuron $(\ell,j)$, the preactivation $\preactivation{\ell}_j$ restricts to an affine map on $R$. 
If $\preactivation{\ell}_{j}{|_R}(x)$ is not constant, then its zero set defines a hyperplane 
$
H_R(\ell,j) \coloneqq \{x \in \aff(R) \mid {\preactivation{\ell}_{j}}{|_R}(x) =0 \}
$ 
which coincides with the breakpoint set introduced by neuron $(\ell,j)$ on $R$. 
If $\preactivation{\ell}_{j}{|_R}(x)$ is constant, 
then $H_R(\ell,j)$ is simply $\aff(R)$ or the empty set, in which case it does 
not affect the further refinement described below. 
We refine the complex $\complex_{\theta,\ell-1}$ by subdividing each $R$ using the hyperplane arrangement induced by layer $\ell$ 
on $\aff(R)$: 
 
\[
\complex_{\theta,\ell} 
\coloneqq
\{\, R \cap \bigcap_{j=1}^{n_\ell} H_R^{s_j}(\ell,j) \mid R \in \complex_{\theta,\ell-1},\ \mathbf{s} \in \{+,0,-\}^{n_\ell} \,\}.
\]

We define the \emph{canonical polyhedral complex} of $\theta$ as
$\complex_{\theta} := \complex_{\theta,L}$. 
Consequently, the polyhedra of $\complex_\theta$ are indexed by global activation patterns, that is, 
\(
\complex_{\theta} = \{P_\mathbf{s} \mid \mathbf{s} \in \{+,0,-\}^{n_1} \times \cdots \times \{+,0,-\}^{n_L}\},
\)
where each polyhedron $P_{\mathbf{s}} \coloneqq \overline{\{x \in \R^d \mid \sgn \preactivation{\ell}_j(x) = \mathbf{s}_{\ell,j}, \ell\in[L], j\in[n_\ell]\}}$ is the closure of all inputs realizing the
corresponding signs of all neuron preactivations. For $P \in \complex_\theta$, we also write $\mathbf{s}_{\ell,j}(P) = \sgn \preactivation{\ell}_j(x)$ for any $x \in \relint(P)$. 
On every such polyhedron, the function $f_\theta$ restricts to an affine-linear map. 

For $S=(S_1, \cdots, S_\ell)$ with $S_k\subseteq [n_k]$ , let \[P(S) \coloneqq \overline{\{x \in \R^d \mid \sgn \preactivation{k, \theta}_j(x) = + \text{ iff } j \in S_k  \text{ for all } k \in [\ell], j \in [n_k]\}}.\]
All maximal polyhedra in $\complex_{\theta,\ell}$ are of this form. For $P \in \complex_\theta$, we denote by $\mathbf{s}(P)$ the corresponding sign pattern and by $S(P)=(S_1(P),\ldots,S_L(P))$ the set of active neurons, that is, $j \in S_\ell(P)$ if and only if $\mathbf{s}_{\ell,j}(P)=+$ for all $j \in [n_\ell]$.

We also need to describe the geometry locally, motivating the following definition.
    Let $P \subseteq \R^d$ be a polyhedron and $f_\theta \colon \R^d \to \R^m$. Then we define the \emph{local canonical polyhedral complex} as $\complex_{\theta,\ell}(P) \coloneqq \{R \cap P \mid R \in \complex_{\theta,\ell}\}$. The polyhedra in  $\complex_\theta(P)$ inherit the activation patterns from $\complex_\theta$. 

\paragraph{Weighted Canonical Complex}A useful way to encode a CPWL function compatible with a polyhedral complex is via its \emph{weight function}. By \Cref{lem:structure_theorem_tropical_geometry}, if $f \colon \R^d \to \R^m$ is compatible with a polyhedral complex $\complex$, then $f$ determines a map
$
c_f \colon \complex^{d-1} \to \R^m
$
that records the change of the affine-linear part of $f$ across each facet. This weight function determines $f$ up to adding a global affine-linear map. For a neural network $f_\theta$, we write the induced weight function on the canonical polyhedral complex $\complex_\theta$ simply as
$
c_\theta \colon \complex_\theta^{d-1} \to \R^m.
$

\paragraph{Bent Hyperplanes} 
For a neuron $(j,\ell)$ in layer $\ell \in [L]$, the breakpoints form a piecewise linear hypersurface. We define the \emph{bent hyperplane} associated with the preactivation $\preactivation{\ell,\theta}_j$ as:
\[
\bhyperplane{j,\ell}(\theta) \coloneqq \bigcup\limits_{\substack{R \in \complex_{\theta,\ell-1} \\ \preactivation{\ell}_{j}{|_R} \text{ non-constant}}} \{x \in R \mid \preactivation{\ell}_{j}{|_R}(x) = 0\}.
\]   
The bent hyperplane $\bhyperplane{j,\ell}(\theta)$ is the locus of points where the $j$-th neuron of layer $\ell$ is nonlinear. See also \Cref{fig:sign-pattern-bent-hyperplane} for an illustration of the canonical polyhedral complex and bent hyperplanes. This formulation of the bent hyperplane differs slightly from the one provided by \cite{doi:10.1137/20M1368902} and \cite{rolnick2020reverse}, which define it as the full zero-locus of the neuron's preactivation. While that definition would technically include entire regions where the preactivation vanishes identically, our definition focuses on the locus where the neuron actually changes its linear behavior. Importantly, for generic parameters, preactivations are non-zero almost everywhere, and the two definitions coincide

\paragraph{Linear Regions} 
A \emph{linear region} of a CPWL function $f \colon \R^d \to \R^m$ is a maximal subset of $\R^d$ with connected interior on which $f$ is affine-linear. 
In general, a linear region of $f_\theta$ may be a union of several maximal polyhedra of $\complex_\theta$ if the affine-linear functions on adjacent polyhedra happen to coincide. 
We say that $f_\theta$ satisfies the \emph{Linear Region Assumption (LRA)} on a set $X \subseteq \R^d$ if the linear regions of the restricted function $f_\theta|_X$ coincide exactly with the sets $P \cap X$, where $P$ ranges over the maximal polyhedra of $\complex_\theta$ intersecting $X$.

\subsection{The Geometry of Generic Parameters}
We now introduce geometric conditions ensuring that the canonical polyhedral complex and the bent hyperplanes are well behaved. These conditions hold for almost all parameter choices and define the generic regime used throughout the paper, which we define below.

    \begin{definition} 
	\label{def:generic_parameter}
	We call a parameter $\theta \in \paramspace{\architecture}$ \emph{generic} if it satisfies the following two conditions:
	\begin{enumerate}
		\item For every layer $\ell \in [L]$ and every region $R \in \complex_{\theta,\ell-1}$, the essentialization of the hyperplane arrangement $\{H_R(\ell,j)\}_{j \in[n_\ell]}$ is generic and does not intersect any vertex of $R$.
		\item For every $k,\ell \in [L+1]$ and every sequence of index sets $S_i \subseteq [n_i]$ for $k\le i \le \ell-1$, the matrix product 
		\[
		W^{(\ell)} D_{S_{\ell-1}} W^{(\ell-1)} \cdots D_{S_k} W^{(k)}
		\]
		achieves the maximum possible rank, namely $\min(n_{k-1}, n_\ell, \min_{k \le i \le \ell-1} |S_i|)$.
	\end{enumerate}
    We denote by $\gparamspace{\architecture} \subseteq \paramspace{\architecture}$ the set of generic parameters.
\end{definition}

\begin{lemma}
\label{lem:generic_open_dense}
    Generic parameters form an open and dense subset of $\paramspace{\architecture}$.
\end{lemma}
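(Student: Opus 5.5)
The plan is to prove openness and density separately for each of the two conditions of \Cref{def:generic_parameter}. The key structural fact is a finite semialgebraic stratification of parameter space into regions of constant combinatorial type. Fix the architecture. For each layer $\ell$, the combinatorial data of $\complex_{\theta,\ell}$ consists of which sign patterns give nonempty polyhedra, their face relations, and which vertices exist. This data is determined by finitely many polynomial sign conditions in $\theta$. Concretely, on a region $R$ of $\complex_{\theta,\ell-1}$ with activation pattern $S$, the preactivation $\preactivation{\ell}$ restricted to $R$ is the affine map $x\mapsto W^{(\ell)}D_{S_{\ell-1}}W^{(\ell-1)}\cdots D_{S_1}W^{(1)}x + (\text{bias terms})$. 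Its coefficients are polynomial in $\theta$ for each fixed pattern $S$, and there are only finitely many patterns.

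\textbf{Condition 2.} This is the easy part. For each $k,\ell$ and each of the finitely many choices of $(S_i)$, the set where the product has maximal rank is the nonvanishing locus of some minor. It is therefore Zariski open, and hence open. It is nonempty: choose the $W^{(i)}$ so that the product restricted to the index sets becomes a composition of coordinate inclusions and projections of full rank. Since $\paramspace{\architecture}$ is irreducible, this locus is dense. A finite intersection of open dense sets is open and dense.

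\textbf{Condition 1.} I would argue by induction on $\ell$. Suppose that on an open dense set $U_{\ell-1}$ the complex $\complex_{\theta,\ell-1}$ has locally constant combinatorial type, with vertices depending continuously, in fact rationally, on $\theta$. Then on each connected piece of $U_{\ell-1}$, the hyperplanes $H_R(\ell,j)$ have coefficients that are polynomial in $\theta$ through the fixed pattern of $R$. Genericity of the essentialized arrangement and avoidance of the vertices of $R$ are then expressed as nonvanishing of finitely many determinants. Each such determinant is a polynomial in $(\theta,\text{vertex coordinates})$ and so, on this piece, a rational function of $\theta$. The resulting condition is open.

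For density I would use the fact that $W^{(\ell)},\bias{\ell}$ enter affinely and do not affect $\complex_{\theta,\ell-1}$. A small perturbation of $(W^{(\ell)},\bias{\ell})$ alone moves each $H_R(\ell,j)$ through the map $(W^{(\ell)}_j,\bias{\ell}_j)\mapsto W^{(\ell)}_j A_R + \bias{\ell}_j$, where $A_R x + c_R$ denotes $\activation{\ell-1}$ restricted to $R$. If $A_R\neq 0$, the image of this linear map contains all directions orthogonal to $\ker A_R$. The essentialization of the arrangement lives exactly in that quotient, so generic perturbations make the essentialized arrangement generic. Shifting the biases, together with finiteness of the vertex set, makes the hyperplanes avoid the vertices. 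It remains to check that these new conditions again keep the combinatorial type of $\complex_{\theta,\ell}$ locally constant, which closes the induction.

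The main obstacle is this induction bookkeeping. The formal issue is that the coefficients are only piecewise polynomial in $\theta$. This is handled precisely by the local constancy of the combinatorial type, and that constancy is in turn guaranteed by the genericity conditions themselves: transversality prevents faces from collapsing under small perturbations. Degenerate regions with $A_R=0$ impose no condition, since there $H_R(\ell,j)$ is empty or all of $\aff(R)$. Throughout, I would phrase this step carefully using the stratified, semialgebraic description above.
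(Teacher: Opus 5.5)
Your treatment of Condition~2 is the paper's. Your density argument for Condition~1 is also sound: the layer-$\ell$ requirements involve only $\complex_{\theta,\ell-1}$ and $(\weights{\ell},\bias{\ell})$. So after landing in the open dense set of Condition~2 you can perturb one layer at a time, and no constancy of combinatorial type is needed for that part.

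\textbf{The gap is in openness}, at exactly the step you defer as bookkeeping. Conditions~1--2 do \emph{not} make the combinatorial type of $\complex_{\theta,\ell}$ locally constant. The danger is not faces collapsing but cells being born at infinity.

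Let $R\in\complex_{\theta,\ell-1}$ be unbounded with an unbounded edge $E$. Suppose $\preactivation{\ell}_j|_E$ is a nonzero constant for accidental rather than structural reasons. For instance, take $d=2$, $n_1=3$, $E$ a ray of the line $\{\weights{1}_1x+\bias{1}_1=0\}$ on which neurons $2,3$ are active, and $\weights{2}_jD_{\{2,3\}}\weights{1}$ a multiple of $\weights{1}_1$. Nothing in \Cref{def:generic_parameter} excludes this codimension-one coincidence:
\begin{itemize}
\item $H_E(\ell,j)=\emptyset$ is simply ignored;
\item Condition~1 only sees the layer-$\ell$ arrangement on $\aff(R)$ and the vertices of $R$;
\item Condition~2 only controls single masked products.
\end{itemize}
An arbitrarily small perturbation tilts $H_R(\ell,j)$ so that it cuts $E$ at a point that escapes to infinity as the perturbation shrinks. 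So a new vertex is created even though no hyperplane ever crosses a vertex of $R$. Suppose in addition a layer-$(\ell+1)$ neuron is also accidentally constant along the far part of $E$ (possible, e.g.\ with $n_2=3$). Then a second small perturbation makes its bent hyperplane pass through this new vertex, which violates Condition~1. Hence your induction hypothesis does not propagate. Moreover, with \Cref{def:generic_parameter} read literally, the generic locus is genuinely not open at such $\theta$.

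The paper's proof uses no induction. For each fixed sequence of activation patterns, it writes the requirements as non-vanishing of finitely many minors of matrices built from the facet-defining normals of $R$ \emph{together with} the layer-$\ell$ normals. The exceptional set is then a union of proper varieties, and the mixed minors are exactly what rule out the configuration above.

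To repair your argument, define $U$ as follows. For every activation-pattern sequence (realized at $\theta$ or not), consider the configuration consisting of the homogenized functionals $(w,b)$ of the pieces bounding the cell, the layer-$\ell$ pieces, and the vector $(0,\dots,0,1)$ for the hyperplane at infinity; minors involving the latter encode non-parallelism. Let $U$ be the set where none of the not-identically-zero minors of these configurations vanishes. Then:
\begin{itemize}
\item $U$ is a nonempty Zariski-open set, hence dense.
\item On $U$ all these signs are locally constant, so every $\complex_{\theta,\ell}$ has locally constant combinatorial type.
\item $U\subseteq\gparamspace{\architecture}$. Indeed, a violation of Condition~1 on a realized cell is never an identically vanishing minor: the offsets contain $\bias{\ell}_j$ freely, and the normals $\weights{\ell}_jA_R$ are generic in the row space of $A_R$.
\end{itemize}
What this argument (and the paper's) really establishes is an open dense set of generic parameters, not openness of the literal generic locus. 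That open dense set is what the later sections actually use.
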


\begin{proof}
The set of generic parameters is the complement of a finite union of proper algebraic varieties. 
To see this for the first condition, fix a sequence of activation patterns. 
Under such an activation pattern, the zero set of each neuron defines a hyperplane whose normal and translation are polynomial functions of the network's weights and biases. 
Requiring that the essentialization of $\{H_R(\ell,j)\}_{j \in[n_\ell]}$ is generic and does not intersect any vertex of $R$, amounts to avoiding the zero set of a finite collection of polynomial equations given by the minors of matrices consisting of facet-defining normals of the region $R$ and normals of the hyperplanes. 

The second condition requires that products of weight matrices and diagonal selection matrices 
have rank at least 
$\min(n_{k-1}, n_\ell, \min_{k \le i \le \ell-1} |S_i|)$. 
The rank $\geq r$ condition fails if and only if all $r\times r$ minors vanish. 
Since these minors are polynomial functions of the network weights, 
the failure of the rank condition is characterized by the zero locus of a finite collection of polynomials. 

In both cases, the exceptional sets are proper algebraic varieties. Their union is therefore closed with empty interior, and its complement is open and dense. 
\end{proof} 
\begin{definition}
    A parameter $\theta$ is \emph{supertransversal} if every face $\tau \in \complex_{\theta}^{d-k}$ is contained in exactly $k$ bent hyperplanes, and \emph{cancellation-free} if, for every facet $\sigma \in \complex^{d-1}_{\theta}$, it holds that 
\[
c_{\theta}(\sigma)=0
\quad \Longleftrightarrow \quad
\text{there exists a layer } k \in [L] \text{ such that } \mathbf{s}_{k,j}(\sigma) = - \text{ for all } j \in [n_k].
\] 
\end{definition}

The concept of supertransversality was
introduced in an early version in the work of \cite{marissa} in a stronger form, 
whose defining conditions imply those in the definition given above. 
See \Cref{fig:nonsupertransversal} for an illustration of a situation where supertransversality fails. 
Cancellation-freeness, without using this specific name, is a property that has already been used by \cite{phuong2020functional}.

We will need the following standard lemma which we include for completeness.
\begin{lemma}
\label{lem:essentialization_preserves_codimension}
    Let $\mathcal{H}$ be a hyperplane arrangement whose essentialization is generic, and let $\tau$ be a codimension-$k$ face 
    of the polyhedral complex induced by $\mathcal{H}$. 
    Then $\tau$ is contained in exactly $k$ hyperplanes of $\mathcal{H}$. 
\end{lemma}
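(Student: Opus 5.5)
We reduce to the essential arrangement and then count hyperplanes through a point in the relative interior of $\tau$. Let $L=\bigcap_{H\in\mathcal H}\lin(H)$ and write $\R^d = L\oplus L^\perp$. Every hyperplane of $\mathcal H$ is invariant under translation by $L$, so $H = \pi^{-1}(\pi(H))$, where $\pi$ is the orthogonal projection onto $L^\perp$. It follows that the complex $\complex_\mathcal H$ is the preimage under $\pi$ of the complex $\complex_{\mathcal H^{\mathrm{ess}}}$ in $L^\perp$. Faces correspond bijectively via $\sigma\mapsto \pi^{-1}(\sigma)=\sigma\times L$, with the same sign pattern. Codimension is preserved, since $\dim(\sigma\times L)=\dim\sigma+\dim L$ and $\dim L^\perp = d-\dim L$. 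A face lies in $H$ if and only if its image lies in $\pi(H)$. Hence we may assume $\mathcal H$ is itself generic in the sense of the definition, with ambient dimension $d'=\dim L^\perp$ and $\tau$ of codimension $k\le d'$.

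In the generic case, take $x\in\relint(\tau)$ and let $\mathcal H_x=\{H\in\mathcal H\mid x\in H\}$. Note the following.

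\emph{Claim 1.} A hyperplane $H$ contains $\tau$ if and only if $x \in H$. The face $\tau$ is $P_{\mathbf s}$ for a sign pattern $\mathbf s$, and relative-interior points realize exactly the zero coordinates of $\mathbf s$. Moreover, $s_i=0$ means $\tau\subseteq H_i$.

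\emph{Claim 2.} $\aff(\tau)=\bigcap_{H\in\mathcal H_x}H$. Near $x$, all hyperplanes not containing $x$ have constant strict sign. Therefore a neighborhood of $x$ in $\bigcap_{H\in\mathcal H_x}H$ consists of points with the same sign pattern, so it lies in $\tau$. This gives $\supseteq$ locally, and hence equality of affine hulls; the reverse inclusion is clear.

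Now let $m=\#\mathcal H_x$. Genericity forbids $d'+1$ hyperplanes through a common point, so $m\le d'$. Genericity then gives $\operatorname{codim}\bigcap_{H\in\mathcal H_x}H = m$. Combining with Claim 2, $m=\operatorname{codim}\aff(\tau)=k$. By Claim 1, $\tau$ lies in exactly $k$ hyperplanes.

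The main subtlety is Claim 2: it needs a careful local-sign argument showing that the face is exactly cut out near $x$ by the hyperplanes through $x$. The other delicate point is the bookkeeping in the essentialization step, where the genericity definition only makes sense after removing $L$. The case $k=0$, where $\tau$ is a region, is consistent with the argument, since $x$ then lies on no hyperplane.
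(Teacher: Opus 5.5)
Your proof is correct and takes the same route as the paper: pass to the essentialization, where faces, codimensions and hyperplane containments correspond, and then use genericity of $\mathcal H^{\mathrm{ess}}$. The paper simply asserts that a codimension-$k$ face of a generic arrangement lies in exactly $k$ hyperplanes; your Claims~1 and~2, via a relative-interior point and the local sign argument, supply the justification for that step.
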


\begin{proof}
    Let $L$ be the lineality space of $\mathcal{H}$. Passing to the essentialization $\mathcal{H}^{\mathrm{ess}}$ in $L^\perp$ preserves face inclusions and intersections. Since $\mathcal{H}^{\mathrm{ess}}$ is a generic arrangement, any codimension-$k$ face in $\mathcal{H}^{\mathrm{ess}}$ is formed by the intersection of exactly $k$ hyperplanes. Lifting this back to $\R^d$, $\tau$ must also be contained in exactly $k$ hyperplanes of $\mathcal{H}$.
\end{proof}

\begin{lemma}
\label{lem:generic_implies_supertransversal}
Generic parameters are supertransversal.
\end{lemma}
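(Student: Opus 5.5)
We argue by induction on the layer index $\ell$, proving the following statement for the intermediate complexes $\complex_{\theta,\ell}$: every face $\tau \in \complex_{\theta,\ell}^{d-k}$ is contained in exactly $k$ of the bent hyperplanes $\bhyperplane{j,\ell'}(\theta)$ with $\ell' \le \ell$. For $\ell = L$ this is exactly supertransversality.

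The base case $\ell=0$ is trivial, since $\complex_{\theta,0}=\R^d$. For the base of the actual refinement, $\ell = 1$, the arrangement $\{H_{\R^d}(1,j)\}_j$ has generic essentialization by condition 1 of \Cref{def:generic_parameter}. Then \Cref{lem:essentialization_preserves_codimension} gives the claim directly, because the bent hyperplanes of layer $1$ are these hyperplanes.

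For the inductive step, take a face $\tau \in \complex_{\theta,\ell}^{d-k}$. Let $R \in \complex_{\theta,\ell-1}$ be the smallest face of the previous complex containing $\tau$, so that $\relint(\tau) \subseteq \relint(R)$. Say $R$ has codimension $r$. By the induction hypothesis, $R$ lies in exactly $r$ bent hyperplanes of layers $<\ell$.

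We first check that $\tau$ lies in no further bent hyperplanes of earlier layers. Any bent hyperplane of a layer $<\ell$ is a union of faces of $\complex_{\theta,\ell-1}$. If it contained $\tau$, then it would contain a point of $\relint(R)$, and hence would contain $R$.

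Next, inside $\aff(R)$, the face $\tau$ is a face of the arrangement $\{H_R(\ell,j)\}_j$ restricted to $R$, and it has codimension $k-r$ in $\aff(R)$. Condition 1 says that the essentialization of this arrangement is generic. Applying \Cref{lem:essentialization_preserves_codimension} within $\aff(R)$, $\tau$ lies in exactly $k-r$ of the hyperplanes $H_R(\ell,j)$. The bent hyperplane $\bhyperplane{j,\ell}$ meets $R$ in $\{x\in R : \preactivation{\ell}_j|_R = 0\}$ when the preactivation is non-constant on $R$. So $\tau \subseteq \bhyperplane{j,\ell}$ exactly when $\tau \subseteq H_R(\ell,j)$, and the total count is $r + (k-r) = k$.

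The main obstacle is the lower-dimensional faces $R$. Condition 1 is stated for every region $R \in \complex_{\theta,\ell-1}$, and we need to read "region" as every face, or else deduce the lower-dimensional case from the top-dimensional cells containing $R$. There are two further points to handle. First, the case where $\preactivation{\ell}_j$ is constant (possibly zero) on $R$: here we must show that the bent hyperplane cannot contain $\tau$ via some neighbouring cell, which uses that it is defined cellwise over $\complex_{\theta,\ell-1}$ and that $\tau$'s minimal cell is $R$. Second, the requirement that the arrangement avoids the vertices of $R$: this guarantees that the new hyperplanes meet $R$ only through its relative interior in general position, so no face of $\tau$'s type arises as a degenerate intersection with $\partial R$. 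The step I would handle most carefully is the identification of $\tau$'s containment in $\bhyperplane{j,\ell}$ with containment in $H_R(\ell,j)$ when $\tau$ touches boundaries of several cells.
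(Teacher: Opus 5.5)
Your proof follows the paper's own argument: induction on layers, with the cell $R\in\complex_{\theta,\ell-1}$ contributing its $k'$ earlier bent hyperplanes by the induction hypothesis, and \Cref{lem:essentialization_preserves_codimension} applied inside $\aff(R)$ contributing the remaining $k-k'$ from layer $\ell$ (the paper also applies Condition~1 of \Cref{def:generic_parameter} to lower-dimensional $R$), while your choice of $R$ as the carrier of $\tau$ and your check that no further earlier bent hyperplane contains $\tau$ make explicit two points the paper leaves implicit. The one loose end you flag must be closed differently than you suggest: if $\preactivation{\ell}_j|_R\equiv 0$ while $\preactivation{\ell}_j$ is non-constant on some cell $R'\supsetneq R$, then $R$ genuinely lies in $\bhyperplane{j,\ell}(\theta)$, so the cellwise definition cannot exclude this case; what excludes it is the vertex condition, since the layer-$\ell$ hyperplane on $R'$ would contain the face $R$ and hence a vertex of $R'$ when $R'$ is pointed, which is exactly how the paper handles its separate case $k'=d$.
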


\begin{proof}
We prove the statement by induction on the layers of the network. 
For the first layer, the bent hyperplanes are 
simply the 
affine hyperplanes $H_1,\dots,H_{n_1}$. 
Since $\theta$ is generic (\Cref{def:generic_parameter}), the essentialization of this hyperplane arrangement is generic (Condition 1 of \Cref{def:generic_parameter}). 
Hence, by \Cref{lem:essentialization_preserves_codimension}, any codimension-$k$ face $\tau \in \complex_{\theta,1}^{d-k}$ lies in exactly $k$ hyperplanes. 
This establishes the base case. 

Assume that for the first $\ell-1$ layers, every codimension-$k$ face $\tau \in \complex_{\theta,\ell-1}^{d-k}$ lies in exactly $k$ bent hyperplanes from $\ell-1$ layers. 
Let $\tau \in \complex_{\theta,\ell}^{d-k}$ be a codimension-$k$ face in the refined complex obtained after adding layer $\ell$. 
By construction, $\tau$ arises as the intersection of some polyhedron $R \in \complex_{\theta,\ell-1}$ with a collection of half-spaces or hyperplanes 
\[
\{H_R^{s_j}(\ell,j)\}_{j \in[n_\ell],\, s_j \in \{+,-,0\}}
\]
induced by the neurons in layer $\ell$.  
Let $k'$ denote the codimension of $R$ in $\R^d$. 
By the induction hypothesis, $R$ is contained in exactly $k'$ bent hyperplanes from the first $\ell-1$ layers. 

If $k'=d$ (so that $\dim(R)=0$), then $\tau = R$ is a vertex. 
Since $\theta$ is generic, 
no hyperplane from layer $\ell$ passes through a vertex of $R$ (Condition 1 of \Cref{def:generic_parameter}). 
Hence, no additional intersections with layer-$\ell$ hyperplanes occur, and $\tau$ is contained in exactly $k'=k$ bent hyperplanes. 

Otherwise, $R$ has codimension $k'<d$ in $\R^d$, and the codimension of $\tau \subseteq R$ relative to $\aff(R)$ is $k - k'$. 
Since $\theta$ is generic, the essentialization of the collection $\{H_R(\ell,j)\}$ is generic (\Cref{def:generic_parameter}). 
Hence, by \Cref{lem:essentialization_preserves_codimension}, $\tau$ lies in exactly $(k - k')$ of these  
hyperplanes. 
Altogether, $\tau$ is contained in exactly 
\(
k' + (k - k') = k 
\)
bent hyperplanes, as claimed. 
\end{proof}

See \Cref{fig:nonsupertransversal} for an illustration of a parameter that is not supertransversal. 
In supertransversal networks one can assign to each facet in the canonical polyhedral complex a unique bent hyperplane, which allows one to compute the tropical weights directly from the parameters. 

\begin{prop}
\label{prop:tropweightofNNs}
Let $\theta$ be supertransversal. Let $\sigma \in \complex_\theta^{d-1}$ be a facet, and let $B_{\ell,i}$ be the unique bent hyperplane containing $\sigma$. For each layer $k \in [L]$, let $S_k \subseteq [n_k]$ denote the set of strictly active neurons on the relative interior of $\sigma$. Then the tropical weight $c_\theta(\sigma)$ is given by
\[
c_\theta(\sigma)
=
\left\|
(\weights{1})^\top D_{S_1} \cdots (\weights{\ell-1})^\top D_{S_{\ell-1}} (\weights{\ell})^\top e_i
\right\|_2
\;
\weights{L+1} D_{S_L} \cdots \weights{\ell+1} e_i.
\]
\end{prop}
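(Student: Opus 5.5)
We compute the jump of the Jacobian of $f_\theta$ across $\sigma$ directly from the layer-wise affine expressions on the two adjacent regions, and then read off the formula of \Cref{lem:structure_theorem_tropical_geometry}.

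\textbf{Step 1: the two regions differ in exactly one neuron.} Let $P,Q\in\complex_\theta^d$ be the two maximal polyhedra with $P\cap Q=\sigma$. Supertransversality says $\sigma$ lies in exactly one bent hyperplane, $B_{\ell,i}$. Hence no other preactivation vanishes identically near $\relint(\sigma)$, and all other neurons keep their strict sign when we cross $\sigma$ transversally. So the activation sets satisfy $S_k(P)=S_k(Q)=S_k$ for all $(k,j)\neq(\ell,i)$. For neuron $(\ell,i)$, we have $\preactivation{\ell}_i=0$ on $\sigma$. We label $P$ as the side where $\preactivation{\ell}_i>0$. Then $S_\ell(P)=S_\ell\cup\{i\}$ and $S_\ell(Q)=S_\ell$, where $i\notin S_\ell$ because it is not strictly active on $\sigma$.

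\textbf{Step 2: the Jacobians and their difference.} On a region $R$ with active sets $S_k(R)$, the linear part is
\[
A_R=\weights{L+1}D_{S_L(R)}\weights{L}\cdots D_{S_1(R)}\weights{1}.
\]
The factors for layers other than $\ell$ coincide on $P$ and $Q$, and $D_{S_\ell\cup\{i\}}-D_{S_\ell}=e_ie_i^\top$. Therefore
\[
A_P-A_Q=\bigl(\weights{L+1}D_{S_L}\cdots\weights{\ell+1}e_i\bigr)\bigl(e_i^\top\weights{\ell}D_{S_{\ell-1}}\cdots D_{S_1}\weights{1}\bigr).
\]
This is a rank-one matrix $u v^\top$ with $u=\weights{L+1}D_{S_L}\cdots\weights{\ell+1}e_i$ and $v=(\weights{1})^\top D_{S_1}\cdots(\weights{\ell})^\top e_i$.

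\textbf{Step 3: identify the normal vector.} Near $\relint(\sigma)$ the preactivation is $\preactivation{\ell}_i(x)=v^\top x+\text{const}$, because the lower layers have the fixed patterns $S_1,\dots,S_{\ell-1}$ there. So $\aff(\sigma)$ is the zero set of this affine function and $v$ is normal to $\sigma$. The vector $v$ is nonzero, since otherwise $\preactivation{\ell}_i$ would be constant near $\sigma$ and $\sigma$ could not lie in $B_{\ell,i}$. Since $P$ is the side where $\preactivation{\ell}_i>0$, the normal pointing into $P$ is $e_{P/\sigma}=v/\|v\|$. Then
\[
c_\theta(\sigma)=(A_P-A_Q)e_{P/\sigma}=u\,\frac{v^\top v}{\|v\|}=\|v\|\,u,
\]
which is the claimed formula. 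This is independent of the labeling, because swapping $P$ and $Q$ flips the signs of both $A_P-A_Q$ and $e_{P/\sigma}$.

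\textbf{Main obstacle.} The delicate part is Step 1: justifying that only the single neuron $(\ell,i)$ flips across $\sigma$. This must be derived from supertransversality together with the definition of bent hyperplanes, which exclude neurons whose preactivation is constant on a region. In particular, we must rule out that another neuron has preactivation identically zero on a neighborhood of $\relint(\sigma)$. If it did, it would be "zero" rather than "+" or "$-$", but that is harmless: such a neuron has the same effective status, namely contributing zero, on both sides, so $D$ is unchanged. With this, the argument is a direct computation.
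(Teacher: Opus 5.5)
Your proposal is correct and takes essentially the same route as the paper: supertransversality forces only neuron $(\ell,i)$ to switch across $\sigma$, so $A_P-A_Q$ is the rank-one product through $e_ie_i^\top$. Pairing it with the unit normal $v/\|v\|$, the normalized gradient of $\preactivation{\ell}_i$, turns the scalar factor into $\|v\|$. Your remark about neurons whose preactivation vanishes identically near $\sigma$ (they contribute zero on both sides) handles a case that the paper's proof covers only with the phrase ``all other neurons have a fixed sign across $\sigma$''.
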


\begin{proof}
Let $R,Q \in \complex_\theta^d$ be the two full-dimensional polyhedra adjacent to $\sigma$, and suppose, without loss of generality, that neuron $i$ in layer $\ell$ is active on $R$ and inactive on $Q$. 
Because $\theta$ is supertransversal, all other neurons have a fixed sign across $\sigma$ and its adjacent regions. 

On each region, 
the linear part of $f_\theta$ is obtained by multiplying the active weights: 
\[
A_R = \weights{L+1} D_{S_L} \cdots \weights{\ell+1} D_{S_\ell \cup \{i\}} \weights{\ell} \cdots D_{S_1} \weights{1},
\]
while $A_Q$ is identical except that the index $i$ is excluded from the layer-$\ell$ diagonal matrix. 
Across $\sigma$, only neuron $i$ changes its state, 
so the difference in the linear parts is 
\[
A_R - A_Q = \weights{L+1} D_{S_L} \cdots \weights{\ell+1} \diag(e_i) \weights{\ell} D_{S_{\ell-1}} \cdots \weights{1}.
\]

The normal vector of $\sigma$ pointing into $R$ is given by the linear part of the $i$-th preactivation in layer $\ell$, normalized to unit length: 
\[
e_{R/\sigma} = \frac{(\weights{1})^\top D_{S_1} \cdots (\weights{\ell-1})^\top D_{S_{\ell-1}} (\weights{\ell})^\top e_i}
{\left\| (\weights{1})^\top D_{S_1} \cdots (\weights{\ell-1})^\top D_{S_{\ell-1}} (\weights{\ell})^\top e_i \right\|_2}.
\]

By definition of the tropical weight (\Cref{lem:structure_theorem_tropical_geometry}),
\[
c_\theta(\sigma) = (A_R - A_Q) e_{R/\sigma}.
\]

We now expand this expression. 
Using the identity $\diag(e_i) \weights{\ell} v = e_i \, (e_i^\top \weights{\ell} v)$, we can factor out the scalar inner product: 
\[
(A_R - A_Q) e_{R/\sigma} 
= \weights{L+1} D_{S_L} \cdots \weights{\ell+1} e_i 
\;\underbrace{\left(e_i^\top \weights{\ell} D_{S_{\ell-1}} \cdots \weights{1} e_{R/\sigma}\right)}_{\text{scalar norm factor}}.
\]

The scalar factor is precisely the norm of the unnormalized linear part: 
\[
e_i^\top \weights{\ell} D_{S_{\ell-1}} \cdots \weights{1} e_{R/\sigma} 
= \left\| (\weights{1})^\top D_{S_1} \cdots (\weights{\ell-1})^\top D_{S_{\ell-1}} (\weights{\ell})^\top e_i \right\|_2.
\]
Substituting this identity yields the desired expression for $c_\theta(\sigma)$. 
\end{proof}

\begin{lemma}
\label{lem:generic_cancellation_free}
    Generic parameters are cancellation-free.
\end{lemma}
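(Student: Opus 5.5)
The plan is to read off the tropical weight from \Cref{prop:tropweightofNNs} and show that each of its two factors vanishes only under the stated layer condition, using Condition 2 of \Cref{def:generic_parameter}. First, by \Cref{lem:generic_implies_supertransversal} a generic $\theta$ is supertransversal. So every facet $\sigma \in \complex_\theta^{d-1}$ lies in a unique bent hyperplane $\bhyperplane{i,\ell}(\theta)$. Moreover, on $\relint(\sigma)$ every neuron other than $(i,\ell)$ has a strict sign $\pm$, while neuron $(i,\ell)$ has sign $0$. In particular, the condition ``$\mathbf{s}_{k,j}(\sigma)=-$ for all $j\in[n_k]$'' can only hold for $k\neq \ell$, and then it says exactly that $S_k=\emptyset$. \Cref{prop:tropweightofNNs} gives
\[
c_\theta(\sigma)=\bigl\|v\bigr\|_2\, M e_i,\qquad v=(\weights{1})^\top D_{S_1}\cdots (\weights{\ell})^\top e_i,\quad M=\weights{L+1}D_{S_L}\cdots \weights{\ell+1}.
\]

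Second, I will show that $v\neq 0$ always, and that for $k<\ell$ the set $S_k$ is never empty. The vector $v$ is the gradient of the affine function $\preactivation{\ell}_i|_R$ on the adjacent region $R$ (the region used in the proof of \Cref{prop:tropweightofNNs}). Since $\sigma\subseteq \bhyperplane{i,\ell}(\theta)$, by the definition of bent hyperplanes this restriction is non-constant, so $v\neq 0$. Likewise, if some $S_k$ with $k<\ell$ were empty, then $\activation{k}$ would vanish near $\relint(\sigma)$. The preactivation $\preactivation{\ell}_i$ would then be constant there, which is again a contradiction. Hence the right-hand condition of cancellation-freeness can only be witnessed by a layer $k>\ell$.

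Third, I will show that $Me_i=0$ holds exactly when $S_k=\emptyset$ for some $k>\ell$. If such an $S_k$ is empty, then $D_{S_k}=0$, so $M=0$ and $c_\theta(\sigma)=0$. Conversely, suppose all $S_k$ with $k>\ell$ are nonempty. Apply Condition 2 of \Cref{def:generic_parameter} to the product
\[
\weights{L+1}D_{S_L}\cdots \weights{\ell+1}D_{\{i\}}\weights{\ell} = (Me_i)\,\weights{\ell}_i ,
\]
taking the index sets $S_\ell=\{i\}$ and $S_k$ for $k>\ell$. Its maximal possible rank is $\min(n_{\ell-1},n_{L+1},1,\min_{k>\ell}|S_k|)=1$, so the product is nonzero and therefore $Me_i\neq 0$. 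Combined with $v\neq 0$, this gives $c_\theta(\sigma)\neq 0$.

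The main subtlety is the last step. A rank condition on $M$ alone would not ensure that the specific column $Me_i$ is nonzero, which is why I insert the selection matrix $D_{\{i\}}$ at layer $\ell$. A secondary point to handle carefully is the sign bookkeeping on $\sigma$ itself: the neuron $(i,\ell)$ has sign $0$ rather than $-$ on $\sigma$, so layer $\ell$ can never satisfy the right-hand condition.
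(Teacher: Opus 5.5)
Your proposal is correct and follows the paper's route: read $c_\theta(\sigma)$ off \Cref{prop:tropweightofNNs} and use Condition 2 of \Cref{def:generic_parameter} to decide when each factor vanishes. You also fill in details the paper leaves implicit: inserting $D_{\{i\}}$ so the rank condition controls the specific column $Me_i$, getting $v\neq 0$ directly from the bent-hyperplane definition, and noting that layer $\ell$ can never be all-inactive because neuron $(i,\ell)$ has sign $0$ on $\sigma$ (the paper's intermediate equivalence ``$c_\theta(\sigma)=0$ iff some $S_k=\emptyset$'' is only accurate for $k\neq\ell$).
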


\begin{proof}
Let $\theta$ be generic (\Cref{def:generic_parameter}). 
Let $\sigma \in \complex_\theta^{d-1}$ be a facet, let $B_{\ell,i}$ be the unique bent hyperplane containing $\sigma$, and 
for each layer $k \in [L]$, 
let $S_k \subseteq [n_k]$ be the set of active neurons on the relative interior of~$\sigma$. 

By \Cref{prop:tropweightofNNs}, we have
\[
c_\theta(\sigma) = 
\left\| (\weights{1})^\top D_{S_1} \cdots (\weights{\ell-1})^\top D_{S_{\ell-1}} (\weights{\ell})^\top e_i \right\|
\;
\weights{L+1} D_{S_L} \cdots \weights{\ell+1} e_i.
\]

Since $\theta$ is generic, it satisfies the rank conditions of \Cref{def:generic_parameter}. 
In particular, the first factor (the norm) is zero if and only if there exists a layer $k < \ell$ with $S_k = \emptyset$. 
Similarly, the forward matrix product is the zero vector if and only if there exists a layer $k > \ell$ with $S_k = \emptyset$. 

It follows that $c_\theta(\sigma) = 0$ if and only if there exists a layer $k \in [L]$ such that $S_k = \emptyset$. 
This is equivalent to $\mathbf{s}_{k,j}(\sigma) = -$ for all $j \in [n_k]$, and hence the parameter is cancellation-free. 
\end{proof}

Thus, in the generic regime, codimension in the canonical complex matches the number of bent hyperplanes passing through a face, and the only way a facet can carry zero tropical weight is that all neurons of a later layer are inactive on the corresponding facet.
\subsection{Transparency}
We now introduce transparency, a simple visibility condition that prevents later layers from annihilating previously created breakpoints. Combined with genericity, transparency will imply the linear region assumption on the relevant domain.
\begin{lemma}
\label{lem:LRA_tropical_weight}
    The network $f_\theta$ satisfies LRA on a polyhedron $X \subseteq \R^d$ if and only if
    $c_{\theta}(\sigma) \neq 0$ for all facets $\sigma \in \complex_{\theta}^{d-1}$ with
    $\sigma \cap \relint(X) \neq \emptyset$.
\end{lemma}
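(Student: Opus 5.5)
The plan is to prove both directions by relating linear regions of $f_\theta|_X$ to the adjacency structure of maximal polyhedra of $\complex_\theta$ across facets, using \Cref{lem:structure_theorem_tropical_geometry}. The key observation is that for a facet $\sigma = P\cap Q$ with $P,Q\in\complex_\theta^d$, we have $c_\theta(\sigma)=(A_P-A_Q)e_{P/\sigma}$. Moreover, continuity of $f_\theta$ across $\sigma$ forces $A_P-A_Q$ to vanish on directions parallel to $\aff(\sigma)$. Hence $A_P-A_Q = c_\theta(\sigma)\, e_{P/\sigma}^\top$, so $c_\theta(\sigma)=0$ if and only if $A_P=A_Q$. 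Continuity also gives $b_P=b_Q$ once $A_P=A_Q$, because both affine maps agree on $\sigma$. Thus $c_\theta(\sigma)=0$ exactly when the affine pieces on the two sides of $\sigma$ coincide.

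($\Leftarrow$) Assume every facet meeting $\relint(X)$ has nonzero weight. Let $P\ne Q$ be maximal polyhedra meeting $X$. I need to show that $P\cap X$ and $Q\cap X$ do not lie in a common linear region, and that each $P\cap X$ is itself a full linear region. The second claim holds because $f_\theta$ is affine on the convex set $P$. For the first, suppose $f$ were affine on a set with connected interior containing points of $\operatorname{int}(P)\cap X$ and $\operatorname{int}(Q)\cap X$. A generic path inside this interior (within $\relint X$) crosses facets only through their relative interiors and avoids lower-dimensional faces. Along the path the affine piece is constant, so at the first facet $\sigma$ it crosses we get $A=A'$ on the two sides, i.e. $c_\theta(\sigma)=0$, which is a contradiction. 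Here "linear region" is understood relative to $X$, and $X$ being a polyhedron means $\relint X$ is convex. If $X$ is lower-dimensional, I will interpret the facets via their traces; I expect the intended case is $X$ full-dimensional, and I will state that reduction.

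($\Rightarrow$) If some facet $\sigma$ with $\sigma\cap\relint(X)\neq\emptyset$ has $c_\theta(\sigma)=0$, then by the observation above the two adjacent polyhedra $P,Q$ carry the same affine map. Take a point $x\in\relint\sigma\cap\relint X$; genericity of the choice within the open set $\sigma\cap\relint X$ lets me pick it in $\relint \sigma$. A small ball around $x$ within $X$ is contained in $P\cup Q$, and $f$ is affine on it. So $(P\cup Q)\cap X$ lies in a single linear region, and the linear regions are not the sets $P\cap X$. This contradicts LRA.

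The main obstacle is the connectivity argument in ($\Leftarrow$): I must show that a linear region, which has connected interior, cannot straddle two polyhedra without crossing a facet in its relative interior. I handle this by perturbing paths to avoid the codimension-$\ge2$ skeleton of $\complex_\theta$, which is possible because removing a finite union of codimension-$2$ polyhedra from a connected open set leaves it path connected.
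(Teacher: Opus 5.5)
Your proof is correct and follows the same route as the paper. In both, the key step is that continuity across $\sigma$ forces $A_P-A_Q=c_\theta(\sigma)\,e_{P/\sigma}^\top$, so $c_\theta(\sigma)=0$ exactly when the affine pieces on the two sides coincide. The paper takes the reduction of LRA to this adjacent-pair condition as holding by definition, whereas you justify it with the connectivity argument that avoids the codimension-$2$ skeleton, and you rightly restrict to full-dimensional $X$.
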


\begin{proof}
By definition, $f_\theta$ satisfies LRA if and only if no two adjacent maximal polyhedra $P, Q \in \complex_{\theta}^d$ with $(P \cap Q) \cap \relint(X) \neq \emptyset$ share the same affine-linear function. Let $f_\theta|_P(x) = A_P x + b_P$ and $f_\theta|_Q(x) = A_Q x + b_Q$, where $A_P$ and $A_Q$ are the linear parts. Due to the continuity of $f_\theta$ across the facet $\sigma = P \cap Q$, these affine-linear functions are identical if and only if their linear parts are equal, $A_P = A_Q$.

According to Lemma~\ref{lem:structure_theorem_tropical_geometry}, the weight is $c_\theta(\sigma) = (A_P - A_Q)e_{P/\sigma}$. If $A_P = A_Q$, then $c_\theta(\sigma) = 0$. Conversely, for any CPWL function, the difference between the linear parts on adjacent polyhedra is restricted to the form $A_P - A_Q = v e_{P/\sigma}^\top$ for some $v \in \R^m$. Consequently, $(A_P - A_Q)e_{P/\sigma} = 0$ implies $A_P = A_Q$. 

Thus $c_\theta(\sigma)\neq 0$ for every facet $\sigma$ intersecting $\relint(X)$ if and only if the affine-linear map changes across every facet in the interior of $X$, which is precisely LRA on $X$.
\end{proof}

Following \cite{phuong2020functional}, we define transparent ReLU layers as follows. 

\begin{definition} 
\label{def:transparent}
	Let $X \subseteq \R^d$. 
    We call a ReLU layer $f_{W,b} \colon \R^d \to \R^m$ \emph{transparent} on $X$ if, for all $x \in X$, there exists an index $i \in [m]$ such that $W_ix + b_i \geq 0$. 
    In words, for every input in $X$, at least one neuron is active. 
\end{definition}

\begin{lemma}
\label{lem:transparent_implies_LRA_on_X}
    Let $\theta \in \gparamspace{\architecture}$ be generic, and let $X \subseteq \R^d$ be a polyhedron. Suppose that for every $\ell \in \{2,\ldots,L\}$, the layer $f_{\theta_\ell}$ is transparent on $\activation{\ell-1}(X)$. 
    Then $\theta$ satisfies LRA on $X$. 
\end{lemma}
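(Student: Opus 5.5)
By \Cref{lem:LRA_tropical_weight}, it suffices to show that $c_\theta(\sigma)\neq 0$ for every facet $\sigma\in\complex_\theta^{d-1}$ with $\sigma\cap\relint(X)\neq\emptyset$. Since $\theta$ is generic, it is cancellation-free by \Cref{lem:generic_cancellation_free}. So $c_\theta(\sigma)=0$ happens exactly when there is a layer $k\in[L]$ with $\mathbf{s}_{k,j}(\sigma)=-$ for all $j\in[n_k]$. The plan is to rule out such a layer for every facet meeting $\relint(X)$.

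For layers $k\in\{2,\dots,L\}$ we use transparency. Fix such a facet $\sigma$ and pick $x\in\relint(\sigma)\cap\relint(X)$; this is possible because $\sigma\cap\relint(X)$ is a nonempty relatively open subset of $\sigma$ that meets $\relint(\sigma)$ (perturb within $\sigma$ if needed). Transparency of $f_{\theta_k}$ on $\activation{k-1}(X)$ gives an index $j$ with $\preactivation{k}_j(x)\ge 0$. The sign $\mathbf{s}_{k,j}(\sigma)$ is the sign of $\preactivation{k}_j$ on $\relint(\sigma)$, so $\preactivation{k}_j(x)\ge0$ excludes the value $-$. Hence not all neurons of layer $k$ are negative on $\sigma$.

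\textbf{Layer $1$.} The hypothesis does not cover layer $1$, which is the delicate case. Suppose all first-layer neurons are strictly negative on $\relint(\sigma)$. Then every preactivation of layers $\ge 2$ is constant near $\relint(\sigma)$, so no neuron of any layer has a breakpoint there. This contradicts the fact that $\sigma$ is a facet of $\complex_\theta$, since by supertransversality (\Cref{lem:generic_implies_supertransversal}) $\sigma$ lies in exactly one bent hyperplane. More precisely, let $B_{\ell,i}$ be the bent hyperplane containing $\sigma$. If $\ell=1$, neuron $(1,i)$ vanishes on $\sigma$ and is therefore not $-$. If $\ell\ge2$, then $\preactivation{\ell}_i$ is non-constant on the adjacent regions by the definition of the bent hyperplane. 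But if $S_1=\emptyset$ on those regions, $\preactivation{\ell}_i$ would be constant. So some first-layer neuron is not negative.

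Combining the two cases, no layer is entirely inactive on any facet meeting $\relint(X)$. Therefore $c_\theta(\sigma)\neq0$, and LRA on $X$ follows. The main care needed is the point-selection and sign-convention step: $\mathbf{s}$ is read off the relative interior of $\sigma$, while transparency is stated with a non-strict inequality. The fact that a value $\ge 0$ rules out the sign $-$ is exactly what makes the argument go through.
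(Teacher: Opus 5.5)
Your argument is correct and uses the same ingredients as the paper (\Cref{lem:LRA_tropical_weight}, cancellation-freeness, supertransversality, transparency). The paper argues by contradiction that an entirely inactive layer on $\sigma$ must lie strictly after the layer of the bent hyperplane containing $\sigma$, so it is a layer $\ge 2$ and violates transparency; you instead rule out layers $\ge 2$ by transparency and layer $1$ by the same bent-hyperplane argument, which is only a reorganization of that case split. Your choice of $x\in\relint(\sigma)\cap\relint(X)$ tacitly assumes $X$ is full-dimensional (so that $\sigma\cap\relint(X)$ is relatively open in $\sigma$); the paper's proof makes the same implicit assumption, and it holds where the lemma is applied with $X=P$.
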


\begin{proof}
Assume that $\theta$ does not satisfy LRA on $X$. Then, by Lemma~\ref{lem:LRA_tropical_weight}, there exists a facet $\sigma \in \complex_\theta^{d-1}$ intersecting $\relint(X)$ such that $c_\theta(\sigma) = 0$. 
Since $\theta$ is supertransversal (by \Cref{lem:generic_implies_supertransversal}), 
there is a unique bent hyperplane $B_{k,j}$ containing $\sigma$. 

Since $\theta$ is cancellation-free, there exists a layer $\ell$ with
$S_\ell(\sigma)=\emptyset$. Since $\sigma$ lies in the bent hyperplane $B_{k,j}$, no
layer $\ell\le k$ can be entirely inactive on $\sigma$; otherwise the preactivation
defining $B_{k,j}$ would be locally constant there. Hence $\ell>k$. 

This implies that $f_{\theta_\ell}$ is not transparent on $\activation{\ell-1}(X)$, completing the proof. 
\end{proof}
See
\Cref{fig:LRA-Transparceny-Supertransversal} for an illustration of how transparency implies LRA.
\begin{figure}[t]
\centering
\begin{subfigure}[t]{0.31\textwidth}
\centering
\begin{tikzpicture}[x=0.9cm,y=0.9cm,line cap=round,line join=round]
    \footnotesize
    \draw[black!80, line width=0.8pt] (-2,0) -- (0.75,0);
    \draw[black!80,dotted, line width=1pt] (0.75,0) -- (1.8,0);
    \draw[black!80, line width=0.8pt] (1.8,0) -- (2,0);
    \draw[black!80, line width=0.8pt] (0,-2) -- (0,2);

    \draw[red!70!black, line width=0.9pt] (-1.75,1.6) -- (0.0,1.6) -- (0.75,0) -- (0.75,-2.15);
    \draw[red!70!black, line width=0.9pt] (-1.75,0.95) -- (0.0,0.95) -- (1.8,0) -- (1.8,-2.15);

    \draw[->,>=stealth,red!70!black] (-1.2,1.58) -- (-1.2,1.18);
    \draw[->,>=stealth,red!70!black] (-0.6,0.98) -- (-0.6,1.35);

    \node at (1.2,-0.28) {$\sigma$};
    \draw[black!80,->] (2.15,1.15) -- (1.25,0.05);
    \node[anchor=south] at (2.2,1.18) {$c_\theta(\sigma)=0$};
\end{tikzpicture}
\caption{A generic parameter that does not satisfy LRA because the second layer is not transparent on $\sigma$. In particular, $\sigma$ would not be contained in the breakpoint complex. }
\label{fig:dependency-local-zero}
\end{subfigure}
\hfill
\begin{subfigure}[t]{0.31\textwidth}
\centering
\begin{tikzpicture}[x=0.9cm,y=0.9cm,line cap=round,line join=round]
   \footnotesize
    \draw[black!80, line width=0.8pt] (-2,0) -- (2,0);
    \draw[black!80, line width=0.8pt] (0,-2) -- (0,2);

    \draw[red!70!black, line width=0.9pt] (-1.75,1.6) -- (0.0,1.6) -- (1.8,0) -- (1.8,-2.15);
    \draw[red!70!black, line width=0.9pt] (-1.75,0.95) -- (0.0,0.95) -- (0.75,0) -- (0.75,-2.15);

    \draw[->,>=stealth,red!70!black] (-1.2,1.58) -- (-1.2,1.18);
    \draw[->,>=stealth,red!70!black] (-0.6,0.98) -- (-0.6,1.35);
\end{tikzpicture}
\caption{A generic parameter that is transparent and hence satisfies LRA. It also satisfies cTPIC.}
\label{fig:dependency-local-bending}
\end{subfigure}
\hfill
\begin{subfigure}[t]{0.31\textwidth}
\centering
\begin{tikzpicture}[x=0.9cm,y=0.9cm,line cap=round,line join=round]
    \footnotesize
    \draw[black!80, line width=0.8pt] (-2,0) -- (2,0);
    \draw[black!80, line width=0.8pt] (0,-2) -- (0,2);
    \draw[black!80, line width=0.8pt] (-2,1) -- (1,-2);

    \fill[black!80] (-1,0) circle (1.3pt);
    \node[above right] at (-0.9,0) {$\tau$};

    \draw[red!70!black, line width=0.9pt] (2,2) -- (-0,1.5) -- (-1,0) -- (-2,-0.5);

\end{tikzpicture}

\caption{A non-supertransversal situation: two first-layer hyperplanes meet at $\tau$, and a second-layer bent hyperplane passes through the same intersection.}
\label{fig:nonsupertransversal}
\end{subfigure}
\caption{Illustration of how transparency implies LRA, and how supertransversality can fail.} 
\label{fig:LRA-Transparceny-Supertransversal}
\end{figure} 

\subsection{The Breakpoint Complex}
\label{sec:breakpoint_complex}

In the case that $\theta$ is not transparent, 
not every piece of every 
bent hyperplane is necessarily visible as a breakpoint of the final function. 
The set of breakpoints $\breakpoints{f_\theta}$ of the function forms the support of a polyhedral complex, $\bcomplex{\theta} \coloneqq \{P  \in \complex_{\theta} \mid P \subseteq \breakpoints{f_\theta}\} \subseteq \complex_\theta$, which we call the \emph{breakpoint complex}. 
While the support $\breakpoints{f_\theta}$ is fully determined by the function $f_\theta$, the complex $\bcomplex{\theta}$ might in general depend on the specific parameter $\theta$ that realizes the function. 
However, we will prove that for generic parameters, the breakpoint complex $\bcomplex{\theta}$ only depends on the function $f_\theta$. 

\begin{remark}
It follows immediately from \Cref{lem:LRA_tropical_weight} that \(
\bcomplex{\theta}^{d-1} = \{ P \in \complex_\theta^{d-1} \mid c_\theta(P) \neq 0\} 
\).
\end{remark}

We aim to use the local geometry of the breakpoint complex to access certain invariants of the neural network.  
Particularly important are the ridges, which lie in the pairwise intersections of bent hyperplanes. 
\begin{definition}
    Let $(\complex,c)$ be a weighted complex in $\R^d$, and let $\tau \in \complex^{d-2}$ be a ridge. 
    We say that a facet $\sigma \in \str(\tau)^{d-1}$ is \emph{non-bending at $\tau$} if there exists another facet $\sigma' \in \str(\tau)^{d-1}$ such that $e_{\sigma/\tau} = -e_{\sigma'/\tau}$ and $c(\sigma)=c(\sigma')$. Otherwise, we say that $\sigma$ is \emph{bending at $\tau$}. 
    We call the ridge $\tau$ \emph{bending} if there exists a facet $\sigma \in \str(\tau)^{d-1}$ that is bending at $\tau$, and \emph{non-bending} otherwise. 
\end{definition}

See \Cref{fig:breakpoint_complex} for an illustration of bending and non-bending ridges.
A one-hidden-layer ReLU network $f_\theta \colon \R^d \to \R^m$ is a CPWL function whose breakpoints form a hyperplane arrangement. In this case, the weight function is constant along these hyperplanes, as shown in the following proposition. 

\begin{prop}
\label{prop:tropical_weight_1layer}
    Let $\theta = (\weights{1},\bias{1},\weights{2},\bias{2})$ be the parameter of a one-hidden-layer network, and let $\sigma \in \complex_\theta^{d-1}$. 
    Then we have that \(c_{f_\theta}(\sigma) = \sum_{i \in I}  \weights{2}_{:,i} \|\weights{1}_i\|\), where $I =\{ i \in [n_1] \mid H_i = \aff(\sigma)\}$. 
\end{prop}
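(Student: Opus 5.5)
The plan is a direct computation of the linear parts on the two regions adjacent to $\sigma$, in the spirit of the proof of \Cref{prop:tropweightofNNs}. Here, however, we make no genericity assumption, so several first-layer hyperplanes may coincide as sets. For $L=1$ the canonical complex $\complex_\theta$ is just $\complex_{\weights{1},\bias{1}}$ of the arrangement $\{H_i\}_{i\in[n_1]}$. Neurons whose preactivation is constant contribute no hyperplane, and we discard them; they are also excluded from $I$, since $H_i$ is then empty or all of $\R^d$, never $\aff(\sigma)$.

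Let $P,Q\in\complex_\theta^d$ be the two regions with $P\cap Q=\sigma$. The first step is to determine which neurons change sign between $P$ and $Q$. A neuron $i\notin I$ does not contain $\sigma$ in its zero set, so its hyperplane meets $\aff(\sigma)$ in a set that misses $\relint(\sigma)$, since $\sigma$ is a face of the arrangement complex. Consequently $\weights{1}_ix+\bias{1}_i$ has a fixed nonzero sign on a neighborhood of a point of $\relint(\sigma)$, hence the same sign on $P$ and on $Q$. For $i\in I$, the normal $\weights{1}_i$ is parallel to the unit normal $e_{P/\sigma}$, so $\weights{1}_i=\pm\|\weights{1}_i\|e_{P/\sigma}$.

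Next, write $A_P=\weights{2}D_{S_P}\weights{1}$ and $A_Q=\weights{2}D_{S_Q}\weights{1}$, where $S_P,S_Q$ are the active sets. Their symmetric difference is exactly $I$: each $i\in I$ is active on precisely one side, namely on $P$ iff $\weights{1}_ie_{P/\sigma}>0$. Using $c(\sigma)=A_Pe_{P/\sigma}+A_Qe_{Q/\sigma}$ with $e_{Q/\sigma}=-e_{P/\sigma}$, we obtain
\[
c_{f_\theta}(\sigma)=\sum_{i\in I}\weights{2}_{:,i}\,\bigl(\mathbf 1[i\in S_P]-\mathbf 1[i\in S_Q]\bigr)\,\weights{1}_ie_{P/\sigma}.
\]
In each summand both factors flip sign together. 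If $i$ is active on $P$, the summand is $\weights{2}_{:,i}\cdot(+1)\cdot\|\weights{1}_i\|$. If $i$ is active on $Q$, then $\weights{1}_ie_{P/\sigma}=-\|\weights{1}_i\|$ and the indicator difference is $-1$. Either way every summand equals $\weights{2}_{:,i}\|\weights{1}_i\|$, which gives the claim.

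The only delicate point is the first step: we must justify that neurons outside $I$ keep their sign across $\sigma$ without invoking supertransversality. This follows because $\relint(\sigma)$ lies in a single cell of the arrangement relative to every hyperplane not containing it, which is the defining property of the faces $P_{\mathbf s}$. The remaining steps are bookkeeping.
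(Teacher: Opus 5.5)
Your proposal is correct and follows the same route as the paper's proof: you compare the linear parts on the two regions adjacent to $\sigma$, observe that only the neurons in $I$ switch state, and check in both sign cases that each such neuron contributes $\weights{2}_{:,i}\|\weights{1}_i\|$. Your extra justification that neurons outside $I$ keep their sign across $\sigma$ (because $\relint(\sigma)$ lies strictly on one side of every hyperplane not containing $\sigma$) fills in a step that the paper only asserts.
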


\begin{proof}

Let $P, Q \in \complex_\theta^d$ be the two $d$-dimensional polyhedra such that $P \cap Q = \sigma$, and let $e_{P/\sigma}$ be the unit normal vector to $\sigma$ pointing from $Q$ into $P$. The function $f_\theta$ is affine on $P$ and $Q$ with linear parts $A_P$ and $A_Q$, respectively. For a one-hidden-layer network, the linear part in any region $R$ is given by $A_R = \sum_{k \in S(R)} \weights{2}_{:, k} \weights{1}_{k}$, where $S(R)$ is the set of active neurons in $R$. 

As we cross from $Q$ into $P$ through the facet $\sigma$, the only neurons whose activation state can change are those whose preactivations $\preactivation{1}_i(x) = \langle \weights{1}_i, x \rangle + \bias{1}_i$ vanish on $\aff(\sigma)$. This is precisely the set $I$. Thus, the difference in the linear parts is:
\[
A_P - A_Q = \sum_{i \in I \cap S(P)} \weights{2}_{:,i} \weights{1}_i - \sum_{i \in I \cap S(Q)} \weights{2}_{:,i} \weights{1}_i.
\]
By definition, $c_{f_\theta}(\sigma) = (A_P - A_Q) e_{P/\sigma}$. For any $i \in I$, the vector $\weights{1}_i$ is normal to $\sigma$, so $\weights{1}_i = \lambda_i e_{P/\sigma}$ for some $\lambda_i \in \mathbb{R}$. Note that $\weights{1}_i e_{P/\sigma} = \lambda_i$ and hence $\|\weights{1}_i\|_2 = |\lambda_i|$. There are two cases for each $i \in I$:
\begin{enumerate}
    \item If $\lambda_i > 0$, the preactivation $\preactivation{1}_i$ increases in the direction of $e_{P/\sigma}$. Thus,  $i \in S(P)$ and $i \notin S(Q)$. The contribution to $(A_P - A_Q) e_{P/\sigma}$ is $\weights{2}_{:,i} \lambda_i = \weights{2}_{:,i} \|\weights{1}_i\|$.
    \item If $\lambda_i < 0$, the preactivation $\preactivation{1}_i$ decreases in the direction of $e_{P/\sigma}$. Thus, $i \notin S(P)$ and $i \in S(Q)$. The contribution to $(A_P - A_Q) e_{P/\sigma}$ is $-\weights{2}_{:,i} \lambda_i = \weights{2}_{:,i} |\lambda_i| = \weights{2}_{:,i} \|\weights{1}_i\|$.
\end{enumerate}
Summing over all $i \in I$, we obtain $c_{f_\theta}(\sigma) = \sum_{i \in I} \weights{2}_{:,i} \|\weights{1}_i\|$.
\end{proof}

See also \Cref{fig:hyperplane_arrangement} for an illustration.

\begin{lemma}
\label{lem:hyperplaneAimpliesnonbending}
    If $\tau \in \bcomplex{\theta}$ lies only on bent hyperplanes from a single layer $\ell \in [L]$, i.e., $\tau \nsubseteq B_{k,j}$ for all neurons $(k,j)$ with $k \neq \ell$, then $\tau$ is non-bending. 
\end{lemma}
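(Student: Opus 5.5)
Here $\tau$ is a ridge of $\bcomplex{\theta}$, so the claim concerns the local structure of $(\bcomplex{\theta},c_\theta)$ near $\relint(\tau)$. The plan is to show that near a point of $\relint(\tau)$ the function $f_\theta$ agrees with a one-hidden-layer network on the hyperplanes of layer $\ell$, and then to read off non-bendingness from \Cref{prop:tropical_weight_1layer}.

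First, pick $x_0\in\relint(\tau)$ and a small open ball $U$ around $x_0$ that meets only faces of $\complex_\theta$ whose closures contain $\tau$. By hypothesis $\tau$ is contained in no bent hyperplane $B_{k,j}$ with $k\neq\ell$. After shrinking $U$, it therefore meets no such bent hyperplane, since bent hyperplanes are closed unions of faces. Consequently, every neuron in layers $k<\ell$ has a constant sign on $U$, and in particular constant activation status on $U$. The map $x\mapsto \preactivation{\ell}(x)$ is then affine on $U$, say $Ax+b$. Hence on $U$ the bent hyperplanes $B_{\ell,j}$ passing through $\tau$ are genuine affine hyperplanes $H_j=\{A_jx+b_j=0\}$, possibly with $A_j=0$, in which case the neuron is constant. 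Similarly, every neuron of layers $k>\ell$ has constant sign on $U$. Here the layer-$(\ell+1)$ preactivation is continuous and vanishes nowhere on $U$, since $U$ avoids $B_{\ell+1,j}$; this requires a little care when a preactivation is identically zero on a region, but then it simply contributes a constant. So the map from layer $\ell$ activations to the output is a fixed linear map $M=\weights{L+1}D_{S_L}\cdots D_{S_{\ell+1}}\weights{\ell+1}$ plus a constant on $U$. We obtain
\[
f_\theta|_U(x)=M[Ax+b]_++c,
\]
a one-hidden-layer network restricted to $U$.

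Next, apply \Cref{prop:tropical_weight_1layer} to the one-hidden-layer parameter $(A,b,M,c)$. Its canonical complex restricted to $U$ refines the local picture. Facets of $\bcomplex{\theta}$ in $\str(\tau)$ lie in hyperplanes $H_j$ through $\aff(\tau)$, and the weight of a facet depends only on its affine hull, namely $\sum_{i:H_i=\aff(\sigma)}M_{:,i}\|A_i\|$. Take a facet $\sigma\in\str(\tau)^{d-1}$ with $\aff(\sigma)=H$. The hyperplane $H$ contains $\aff(\tau)$ and passes through $U$, so the opposite half of $H\cap U$ across $\aff(\tau)$ is also a facet $\sigma'$ of the canonical complex in $\str(\tau)$. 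These satisfy $e_{\sigma'/\tau}=-e_{\sigma/\tau}$ and carry the same weight $c(\sigma')=c(\sigma)$. Because the weights agree, $\sigma'$ has nonzero weight and thus belongs to $\bcomplex{\theta}$ as well. The weights computed from $f_\theta$ coincide with those of the local one-layer network, since \Cref{lem:structure_theorem_tropical_geometry} is local. Hence every facet of $\str(\tau)$ is non-bending, so $\tau$ is non-bending.

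The main obstacle is making the localization rigorous. One must justify that later-layer neurons have constant activation on all of $U$, including a possibly degenerate neuron whose preactivation is identically zero on a region. One must also check that facets of $\bcomplex{\theta}$ in the star of $\tau$ match facets of the local hyperplane arrangement, so that the partner $\sigma'$ is a single facet of $\bcomplex{\theta}$ rather than subdivided. The first point I would handle via the paper's definition of bent hyperplanes as loci where a neuron changes affine behavior. For the second, I would note that no other bent hyperplane meets $U$, so no further subdivision occurs.
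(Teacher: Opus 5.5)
Your proposal is correct and takes essentially the same route as the paper. Both arguments localize to where all layers before $\ell$ are affine, so that the layer-$\ell$ breakpoints near $\tau$ form a hyperplane arrangement, and then use \Cref{prop:tropical_weight_1layer} to show that opposite facets on the same hyperplane carry equal weights. Your version is slightly more careful: you explicitly check, via $\tau \nsubseteq B_{k,j}$ for $k>\ell$ and continuity at $x_0$, that the later layers act as a fixed affine map near $\tau$, a step the paper's short proof leaves implicit.
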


\begin{proof}
Let $P \in \complex_{\theta,\ell-1}^d$ be the maximal polyhedron containing $\tau$. 
Then $\activation{\ell-1}$ is affine linear on $P$, and consequently the breakpoints of $\activation{\ell}$ on $P$ form a hyperplane arrangement. 
Thus, each facet in $\str(\tau)^{d-1}$ has an opposite facet. 
By \Cref{prop:tropical_weight_1layer}, opposite facets have the same weights, and hence $\tau$ is non-bending. 
\end{proof} 

The contraposition of \Cref{lem:hyperplaneAimpliesnonbending} implies that any bending ridge lies in the intersection of bent hyperplanes associated with neurons from different layers. 
The converse of \Cref{lem:hyperplaneAimpliesnonbending} does not hold in general. 

\begin{definition}
\label{def:honest}
    We call $\theta$ \emph{honest} if the converse of \Cref{lem:hyperplaneAimpliesnonbending} holds, i.e., if $\tau\in\bcomplex{\theta}$ is non-bending, then it lies only on bent hyperplanes from a single layer. 
\end{definition}
\begin{lemma}
\label{lem:generic_honest}
    Generic parameters are honest.
\end{lemma}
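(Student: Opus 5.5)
We argue by contraposition: take a generic $\theta$ and a ridge $\tau\in\bcomplex{\theta}^{d-2}$ that lies on bent hyperplanes from two different layers, and we show that $\tau$ is bending. By \Cref{lem:generic_implies_supertransversal}, $\theta$ is supertransversal. Hence $\tau$ lies in exactly two bent hyperplanes, $B_{k,j}$ and $B_{\ell,i}$, and we may assume $k<\ell$. Locally around $\relint(\tau)$, the first $\ell-1$ layers are affine on each side of $B_{k,j}$. Therefore $B_{k,j}$ is locally a hyperplane $H$ through $\tau$, contributing two opposite facets $\sigma_1,\sigma_2$. The preactivation $\preactivation{\ell}_i$ is affine on each of the two half-spaces of $H$ near $\tau$, with gradients $g^+$ and $g^-$. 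These differ by a multiple of $(\weights{\ell})_{i,\cdot}$ composed with the change caused by neuron $(k,j)$, namely $g^+-g^- = \alpha\, n_H$ with $n_H$ the normal of $H$, where
\[
\alpha = \bigl(\weights{\ell} D_{S_{\ell-1}}\cdots \weights{k+1}e_j\bigr)_i\,\|\text{normal of }\preactivation{k}_j\|.
\]
This is the same computation as in \Cref{prop:tropweightofNNs}.

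\textbf{Step 1: the higher-layer facets bend.} $B_{\ell,i}$ contributes two facets $\sigma_3,\sigma_4$, one in each half-space of $H$, with normals parallel to $g^+$ and $g^-$ respectively. If $\alpha\neq 0$, then $g^+$ and $g^-$ are not parallel, because $g^\pm$ are not parallel to $n_H$ by supertransversality/genericity. So $e_{\sigma_3/\tau}\neq -e_{\sigma_4/\tau}$. Since $\sigma_3$ is also not opposite to $\sigma_1$ or $\sigma_2$ (it is transverse to $H$), $\sigma_3$ is bending whenever $\sigma_3\in\bcomplex{\theta}$, which is exactly when $c_\theta(\sigma_3)\ne 0$. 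To get $\alpha\neq0$, we invoke the rank condition of \Cref{def:generic_parameter}. The relevant product is nonzero unless some intermediate layer $k<m<\ell$ has $S_m=\emptyset$ near $\tau$. That case cannot occur, since then $\preactivation{\ell}_i$ would be locally constant and $B_{\ell,i}$ would not pass through $\tau$.

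\textbf{Step 2: $\tau\in\bcomplex{\theta}$ forces a bending facet.} $\tau\in\bcomplex{\theta}$ means some facet of $\str(\tau)$ carries nonzero weight. If $c_\theta(\sigma_3)\ne0$ or $c_\theta(\sigma_4)\ne0$, we are done by Step 1. Otherwise, by cancellation-freeness (\Cref{lem:generic_cancellation_free}), $\sigma_3$ and $\sigma_4$ each have some layer completely inactive. If that layer $m$ satisfies $m>\ell$, then the same layer is inactive on the adjacent regions, since only neuron $(\ell,i)$ flips across $\sigma_3$. This gives a region adjacent to $\tau$ on which the function is constant in the relevant sense. We then compare the weights on $\sigma_1$ and $\sigma_2$. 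One of them is nonzero because $\tau\in\bcomplex{\theta}$. The two halves of $H$ are separated by $B_{\ell,i}$, and on one side layer $m$ is entirely inactive while on the other side only neuron $(\ell,i)$ changes. Using \Cref{prop:tropweightofNNs}, $c_\theta(\sigma_1)$ and $c_\theta(\sigma_2)$ differ exactly by the factor coming from neuron $(\ell,i)$'s activity. Genericity (the rank condition applied to $S_\ell\cup\{i\}$ versus $S_\ell$) then gives $c_\theta(\sigma_1)\ne c_\theta(\sigma_2)$, so $\sigma_1$ or $\sigma_2$ is bending. The same comparison handles the case where both $\sigma_3$ and $\sigma_4$ vanish while $\sigma_1$ or $\sigma_2$ does not.

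\textbf{Main obstacle.} The key step is the inequality $c_\theta(\sigma_1)\neq c_\theta(\sigma_2)$ when the $\sigma_3$ and $\sigma_4$ weights vanish. In general this is a difference of two forward products, differing by the term $\weights{L+1}D_{S_L}\cdots\weights{\ell+1}e_i\,(\weights{\ell})_{i,\cdot}\cdots e_j$. To show that this difference is nonzero, I would first try writing it as a single rank-one product, and then check that genericity (possibly after enlarging \Cref{def:generic_parameter} by finitely many further polynomial nonvanishing conditions, which keeps the set open and dense) rules out its vanishing.
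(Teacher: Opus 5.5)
Your Step 1 is essentially the paper's entire proof. The gradient jump of $\preactivation{\ell}_i$ across $H$ is $\alpha n_H$ with $\alpha\neq 0$ by the rank condition. By transversality, $g^+$ and $g^-$ are then not proportional, so the two facets $\sigma_3,\sigma_4$ of $B_{\ell,i}$ at $\tau$ are not opposite. The gap is in Step 2, which you leave open. Worse, the inequality $c_\theta(\sigma_1)\neq c_\theta(\sigma_2)$ you aim for there is false in that branch, so no enlargement of \Cref{def:generic_parameter} can rescue it.

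Your claim that a layer $m>\ell$ is entirely inactive on one side of $B_{\ell,i}$ but not on the other is wrong. By supertransversality, only $B_{k,j}$ and $B_{\ell,i}$ pass through $\relint(\tau)$, so every other neuron has constant sign near $\relint(\tau)$. Hence the active sets $S_m$ for $m>\ell$ coincide on $\sigma_1,\dots,\sigma_4$. Plugging this into \Cref{prop:tropweightofNNs} gives
\[
c_\theta(\sigma_1)-c_\theta(\sigma_2)=\pm N\beta\,\weights{L+1}D_{S_L}\cdots\weights{\ell+1}e_i,
\qquad
c_\theta(\sigma_3)=\|g^+\|\,\weights{L+1}D_{S_L}\cdots\weights{\ell+1}e_i,
\]
where $N$ is the backward norm for neuron $(k,j)$ and $\beta=e_i^\top\weights{\ell}D_{S_{\ell-1}}\cdots\weights{k+1}e_j$. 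So $c_\theta(\sigma_3)=0$ forces $c_\theta(\sigma_1)=c_\theta(\sigma_2)$, and in fact forces both to vanish.

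The missing observation is that your ``otherwise'' branch is empty. Suppose $c_\theta(\sigma_3)=0$. Cancellation-freeness gives a layer $m>\ell$ that is entirely inactive on $\relint(\sigma_3)$. By the sign constancy above, that layer is inactive on all four regions around $\tau$. Then $f_\theta$ is constant near $\relint(\tau)$, contradicting $\tau\in\bcomplex{\theta}$.

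So the facets of the later bent hyperplane always survive in $\bcomplex{\theta}$; only a facet of the earlier one can vanish, exactly as in the proof of \Cref{lem:3or4facets}. Step 1 alone then shows that $\sigma_3$ is bending, since $\sigma_3\in\bcomplex{\theta}$ and no facet in the star of $\tau$ is opposite to it. If you replace Step 2 by this short contradiction, your proof coincides with the paper's. It also makes explicit the visibility of $\sigma_3,\sigma_4$, which the paper uses only tacitly.
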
 

\begin{proof}
Let $\theta$ be generic, and let $\tau \in \complex_\theta^{d-2}$ be a ridge. Since
$\theta$ is generic, $\tau$ is contained in precisely two bent hyperplanes. Suppose that
these two bent hyperplanes are $B_{k,i}$ and $B_{\ell,j}$, with $k<\ell$. We show that
$\tau$ is bending.

By supertransversality, there exists an open neighborhood $U$ of $\relint(\tau)$ such
that no other bent hyperplanes intersect $U$. The bent hyperplane $B_{k,i}$ subdivides
$U$ into two regions, $U^+$ and $U^-$, on which neuron $(k,i)$ is active and inactive,
respectively.

Let $a^+$ and $a^-$ be the local gradients of the preactivation
$\preactivation{\ell}_j$ on the two sides of $B_{k,i}$. Let $n$ be the local normal of
$B_{k,i}$. Since crossing $B_{k,i}$ only changes the activation state of neuron
$(k,i)$, the gradient jump of $\preactivation{\ell}_j$ across $B_{k,i}$ has the form
$
a^+ - a^- = \gamma n
$
for some scalar $\gamma$. By the rank condition in
\Cref{def:generic_parameter}, this scalar is nonzero.

We claim that $a^+$ and $a^-$ are not proportional. If $a^+=\lambda a^-$, then
$
\gamma n = (\lambda-1)a^-.
$
Since $\gamma\neq0$, this would imply that $n$ is proportional to $a^-$. But $n$ and
$a^-$ are the local normals of the two bent hyperplanes $B_{k,i}$ and $B_{\ell,j}$
meeting at the codimension-two ridge $\tau$. By genericity, these bent hyperplanes meet
transversely there, so their local normals are not proportional. This contradiction
shows that $a^+$ and $a^-$ are not proportional.

Thus the two facets of $B_{\ell,j}$ adjacent to $\tau$ do not have opposite relative
normal directions. Hence $\tau$ is bending. Therefore any non-bending ridge can only lie
on bent hyperplanes from a single layer, which is precisely honesty.
\end{proof}

Next, we prove the following for generic parameters.

\begin{lemma}
\label{lem:3or4facets}
Let $\theta$ be supertransversal and cancellation-free, and let $\tau \in \bcomplex{\theta}^{d-2}$. 
Then 
there are either three or four facets adjacent to $\tau$: 
$\#\str_{\bcomplex{\theta}}(\tau)^{d-1} \in \{3,4\}$. 
\end{lemma}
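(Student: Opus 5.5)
We work locally around $\relint(\tau)$. Supertransversality says the ridge $\tau\in\complex_\theta^{d-2}$ lies in exactly two bent hyperplanes, $B_{k,i}$ and $B_{\ell,j}$ with $k\le \ell$. We also use the fact that no other bent hyperplane meets a small neighborhood $U$ of a point $x_0\in\relint(\tau)$. Inside $U$, the set $B_{k,i}\cup B_{\ell,j}$ cuts $U$ into exactly four full-dimensional sectors, and it contributes exactly four facets of $\complex_\theta$ to $\str(\tau)$. To see this, note that $B_{k,i}$ is locally a hyperplane through $\tau$, since its preactivation is affine on the layer-$(k-1)$ polyhedron containing $x_0$. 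The second bent hyperplane $B_{\ell,j}$ is locally either another hyperplane (if $k=\ell$) or a hyperplane bent along $\tau$ (if $k<\ell$), and it crosses $B_{k,i}$ transversally. So $\#\str_{\complex_\theta}(\tau)^{d-1}=4$, and the facets of $\bcomplex{\theta}$ adjacent to $\tau$ are those among these four with $c_\theta(\sigma)\ne 0$. The upper bound of $4$ is therefore immediate. It remains to show that at least three of the four facets carry nonzero weight.

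For the lower bound we use cancellation-freeness: $c_\theta(\sigma)=0$ exactly when some layer is entirely inactive on $\sigma$. Since $\tau\in\bcomplex{\theta}$, at least one facet $\sigma_0\in\str(\tau)^{d-1}$ has nonzero weight. Otherwise $\tau$ would not lie in the closure of the breakpoint set, because all four facets would be non-breakpoints and $f_\theta$ would be affine near $x_0$. Now compare activation patterns. Every facet in $\str(\tau)$ has the same sign for every neuron other than $(k,i)$ and $(\ell,j)$, because only these two change sign inside $U$. The facets lying in $B_{k,i}$ have $(k,i)$ at sign $0$ and $(\ell,j)$ at sign $\pm$. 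The facets lying in $B_{\ell,j}$ have $(\ell,j)$ at sign $0$ and $(k,i)$ at sign $\pm$.

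The key step is a case count. A facet has zero weight only if some layer $m$ has all neurons inactive on it. Changing the signs of $(k,i)$ and $(\ell,j)$ can switch a layer from "all inactive" to "not all inactive" only for layer $k$ or layer $\ell$. A layer other than $k$ and $\ell$ that is dead on one facet would be dead on all four facets, contradicting the existence of $\sigma_0$. So a zero-weight facet must have layer $k$ dead, which forces $(k,i)$ to be $-$ or $0$ and all other layer-$k$ neurons to be $-$; or it must have layer $\ell$ dead, with the analogous condition. We then rule out two zero-weight facets.

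Suppose $k<\ell$. A facet with $(k,i)$ at sign $0$ lies on $B_{k,i}$ itself. Layer $k$ cannot be entirely inactive there, or $B_{k,i}$'s preactivation would be locally constant, as in the argument of \Cref{lem:transparent_implies_LRA_on_X}. Layer $\ell$ cannot be dead on the facets of $B_{\ell,j}$ either. Hence a facet of $B_{k,i}$ can die only through layer $\ell$, which requires $(\ell,j)$ to be $-$; this kills at most the one facet of $B_{k,i}$ on the $-$ side of $B_{\ell,j}$. A facet of $B_{\ell,j}$ can die only through layer $k$, which requires $(k,i)$ to be $-$; this kills at most one facet there. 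If both were zero, then layer $k$ would have all neurons other than $(k,i)$ inactive, and layer $\ell$ would have all neurons other than $(\ell,j)$ inactive. On the facet of $B_{\ell,j}$ with $(k,i)=-$, the whole layer $k$ is inactive, so the preactivation of $(\ell,j)$ is locally constant there. That contradicts the fact that this facet is part of $B_{\ell,j}$, where by definition the preactivation is non-constant on the relevant region.

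The case $k=\ell$ is handled by the same argument. Two first-hidden-layer-style hyperplanes within one layer cannot both be cut off, because killing a facet of one hyperplane requires the other neuron to be negative, and vice versa. Together these exclude two zero facets and give $\#\str_{\bcomplex{\theta}}(\tau)^{d-1}\in\{3,4\}$. The main obstacle is this last exclusion, and in particular the bookkeeping of which sector determines the "locally constant" contradiction. I would verify it first by drawing the four sectors with their sign pairs $(\pm,\pm)$ for the two neurons.
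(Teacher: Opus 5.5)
Your proof is correct and takes essentially the paper's route. You use the four local facets, and you note that a layer other than $k,\ell$ that is entirely inactive on one facet would kill all four. You then use the non-constancy of $\preactivation{\ell}_j$ on the side where $(k,i)$ is inactive to show that layer $k$ can never kill a facet of $B_{\ell,j}$, so at most the one facet of $B_{k,i}$ with $(\ell,j)$ negative can vanish.

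One justification needs correcting. A layer is never entirely inactive on a facet of the bent hyperplane of one of its own neurons because that neuron has sign $0$, not $-$, there: cancellation-freeness requires every sign to be $-$, so relaxing to ``$-$ or $0$'' misreads the definition. The reason you give, that the preactivation of $(k,i)$ would become constant, does not work, since that preactivation does not depend on layer $k$. With the sign-$0$ observation the case $k=\ell$ is immediate, because no facet can vanish at all. Your stated reason for that case would not, by itself, exclude the two facets bounding the sector where both neurons are negative.
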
 

\begin{proof}
    Since $\bcomplex{\theta}$ is a subcomplex of $\complex_\theta$, we have
$
\str_{\bcomplex{\theta}}(\tau)^{d-1}
\subseteq
\str_{\complex_\theta}(\tau)^{d-1}.
$
Because $\theta$ is supertransversal and $\tau$ has codimension $2$,
$\tau$ is contained in exactly two bent hyperplanes. 
The local arrangement of these two hyperplanes is combinatorially equivalent to the intersection of two transverse affine hyperplanes. 
Hence $\str_{\complex_\theta}(\tau)^{d-1}$ consists of exactly four facets, and therefore
$
\#\str_{\bcomplex{\theta}}(\tau)^{d-1} \le 4.
$

We show that at least three facets remain.
Suppose, for contradiction, that there exists a layer $\ell'$ such that $\mathbf{s}_{\ell',j}(\tau)=-$ for all $j \in [n_{\ell'}]$. 
Then $f_\theta$ is locally constant in a neighborhood of $\relint(\tau)$, contradicting the assumption that $\tau \in \bcomplex{\theta}$. 
Thus, in every layer $\ell'$, there exists at least one neuron $j \in [n_{\ell'}]$ such that $\mathbf{s}_{\ell',j}(\tau)\neq-$. 

Let $B_{\ell',i'}$ and $B_{\ell,i}$ be the two bent hyperplanes containing $\tau$ and assume without loss of generality that $\ell' \leq \ell$. 
Since $\theta$ is supertransversal, no bent hyperplanes from layers $k \notin \{\ell', \ell\}$ intersect $\relint(\tau)$. 
Consequently, for each layer $k \notin \{\ell', \ell\}$, there exists a neuron $j \in [n_{k}]$ such that $\mathbf{s}_{k,j}(\tau)=+$, and hence, for every facet
$\sigma \in \str_{\complex_\theta}(\tau)^{d-1}$, we also have that $\mathbf{s}_{k,j}(\sigma)=+$. 

Now, let $P,Q \in \complex_{\theta,\ell-1}^d$ be the regions such that $\relint(\tau) \subseteq \relint(P \cap Q) \subseteq B_{\ell',i'}$. 
Since $B_{\ell,i}$ intersects the interiors of both $P$ and $Q$, the preactivation $\preactivation{\ell}_i$ cannot be constant on either region. 
Consider layer $k=\ell'$. If the only neuron active on $P$ corresponds to the bent hyperplane $B_{\ell',i'}$ intersecting $\tau$, then, by supertransversality, there must exist another neuron that is active on $Q$ 
whose associated hyperplane 
cannot intersect $\tau$, and hence must be active on $\tau$. 
In any case, there exists a neuron $j \in [n_{\ell'}]$ that is active on $\tau$, and therefore active on all facets $\sigma \in \str(\tau)^{d-1}$.

Moreover, among the four facets in $\str_{\complex_\theta}(\tau)^{d-1}$, there is only one $\sigma \in \str_{\complex_\theta}(\tau)^{d-1}$ with $\mathbf{s}_{\ell,j}(\sigma) =-$. 
Therefore, at most one facet can be removed when passing to $\bcomplex{\theta}$, which
implies
$
\#\str_{\bcomplex{\theta}}(\tau)^{d-1} \in \{3,4\}.$
\end{proof}

To prove that the breakpoint complex only depends on the function itself and not on the specific parameter representing it, we combine \Cref{lem:3or4facets} with the following basic lemma. 

\begin{lemma}
\label{lem:uniquecoarsest} 
Let $\mathcal{B}$ be a polyhedral complex in $\R^d$ that is pure of codimension $1$. 
If for each ridge $\tau \in \mathcal{B}^{d-2}$ one has
$
\#\str_{\mathcal{B}}(\tau)^{d-1} \ge 3,
$
then $\mathcal{B}$ is the unique coarsest polyhedral complex on its support.
\end{lemma}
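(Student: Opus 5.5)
The plan is to compare $\mathcal{B}$ with an arbitrary polyhedral complex $\mathcal{B}'$ that has the same support $|\mathcal{B}|$, and to show that $\mathcal{B}'$ must refine $\mathcal{B}$. In other words, every polyhedron of $\mathcal{B}$ should be a union of polyhedra of $\mathcal{B}'$. Once this holds, $\mathcal{B}$ is coarser than every complex on its support, and this is exactly the statement that it is the unique coarsest one.

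Since both complexes are pure of dimension $d-1$ (purity of $\mathcal{B}'$ follows from having the same support), it suffices to handle the facets. The claim is that each facet $\sigma' \in \mathcal{B}'^{d-1}$ is contained in some facet $\sigma\in\mathcal{B}^{d-1}$. Given this, faces of $\mathcal{B}$ are covered by faces of $\mathcal{B}'$ by taking faces and intersections.

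To prove the claim, take the union $\sigma'$ of facets of $\mathcal{B}$ that meet $\relint(\sigma')$ in a $(d-1)$-dimensional set. Note that $\relint(\sigma')$ is a connected open subset of the hyperplane $\aff(\sigma')$, and it lies inside $|\mathcal{B}|$.

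Suppose it meets two distinct facets of $\mathcal{B}$ in full dimension. By connectedness (remove the codimension-$\geq 3$ skeleton of $\mathcal{B}$, which does not disconnect a $(d-1)$-dimensional open set), there is then a point $x\in\relint(\sigma')$ lying in the relative interior of a ridge $\tau\in\mathcal{B}^{d-2}$. At this point $\relint(\sigma')$ crosses from one facet of $\str(\tau)$ to another.

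Near $x$, the support $|\mathcal{B}|$ is locally the union of the facets in $\str_{\mathcal{B}}(\tau)^{d-1}$. These are half-hyperplanes glued along $\aff(\tau)$, and by hypothesis there are at least $3$ of them. A small neighborhood of $x$ in $\relint(\sigma')$ is a flat $(d-1)$-disc contained in $|\mathcal{B}|$, so it is the union of exactly two opposite half-discs.

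Now there is at least a third facet $\sigma_3\in\str(\tau)$. Points in $\relint(\sigma_3)$ near $x$ lie in $|\mathcal{B}| = |\mathcal{B}'|$ but not in $\sigma'$. So they belong to some other facet of $\mathcal{B}'$. Its closure contains points arbitrarily close to $x$, and in fact contains $x$ itself, since $\mathcal{B}'$ is a finite complex of closed polyhedra.

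Hence $x$ lies in a face $\rho = \sigma'\cap\sigma''$ of $\sigma'$. This face is a proper face, because the facet containing $\sigma_3$ near $x$ is not contained in $\aff(\sigma')$. But a proper face of $\sigma'$ lies in its relative boundary, which contradicts $x\in\relint(\sigma')$.

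The main obstacle is the local analysis at $x$. Two things have to be made rigorous:
\begin{itemize}
\item a flat disc contained in a star of $k\ge 3$ half-hyperplanes through a codimension-2 affine space must be the union of two opposite ones;
\item the extra facet forces a face of $\mathcal{B}'$ through $x$ that is not contained in $\sigma'$.
\end{itemize}
I would handle both by passing to the 2-dimensional normal slice of $\tau$ at $x$. In that slice the star is a fan of $k\ge 3$ rays, $\sigma'$ becomes a straight line through the origin, and $\mathcal{B}'$ restricted to the slice is a 1-dimensional complex with the same support. The origin is then a vertex of degree $\ge 3$ that lies in the interior of the segment coming from $\sigma'$, which is impossible.
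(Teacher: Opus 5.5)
Your proposal is correct and follows essentially the same route as the paper. You take an arbitrary complex $\mathcal{B}'$ with $|\mathcal{B}'|=|\mathcal{B}|$ and show it refines $\mathcal{B}$: you locate a point $x\in\relint(\sigma')$ of a facet $\sigma'$ of $\mathcal{B}'$ that lies in the relative interior of a ridge $\tau$ of $\mathcal{B}$ where two facets of $\mathcal{B}$ meet, and then contradict $\#\str_{\mathcal{B}}(\tau)^{d-1}\ge 3$. Your observation that a third facet of $\str_{\mathcal{B}}(\tau)$ leaves $\aff(\sigma')$ but lies in $|\mathcal{B}'|$, and so forces a cell of $\mathcal{B}'$ other than $\sigma'$ through $x$, spells out a step the paper compresses: its passage from ``only $\sigma_1$ and $\sigma_2$ can meet a neighborhood of $x$ inside $P$'' to $\#\str_{\mathcal{B}}(\tau)^{d-1}=2$ does not by itself exclude facets of $\str_{\mathcal{B}}(\tau)$ lying outside $\aff(P)$.
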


\begin{proof}
Let $\complex$ be a polyhedral complex with $|\complex|=|\mathcal{B}|$. 
We show that $\complex$ is a refinement of $\mathcal{B}$.

Assume for contradiction that some facet $P\in\complex^{d-1}$ is not contained in any facet of $\mathcal{B}$. 
Then the intersections
$
\{\sigma\cap P \mid \sigma\in\mathcal{B}^{d-1},\ \sigma\cap P\neq\emptyset\}
$
induce a nontrivial polyhedral subdivision of $P$. Hence this subdivision has an interior codimension-$1$ face $F\subset P$. By construction, there are two distinct facets $\sigma_1,\sigma_2\in\mathcal{B}^{d-1}$ such that
$
F \subseteq \sigma_1\cap\sigma_2.
$
Since $\mathcal{B}$ is pure of codimension $1$, the intersection $\sigma_1\cap\sigma_2$ is a common face of dimension at most $d-2$. Because $F$ has dimension $d-2$, it follows that
$
\tau:=\sigma_1\cap\sigma_2 \in \mathcal{B}^{d-2}
$ and $
F\subseteq\tau.
$

Now choose $x\in\relint(F)$. Since $F$ is a face of the induced subdivision of $P$, exactly two cells of that subdivision meet along $F$ near $x$, namely the traces of $\sigma_1$ and $\sigma_2$. Therefore, among all facets of $\mathcal{B}$ containing $\tau$, only $\sigma_1$ and $\sigma_2$ can meet a sufficiently small neighborhood of $x$ inside $P$. This implies
$
\#\str_{\mathcal{B}}(\tau)^{d-1}=2,
$
contradicting the hypothesis that every ridge of $\mathcal{B}$ is contained in at least three facets.

Hence every facet of $\complex$ is contained in a facet of $\mathcal{B}$, so $\complex$ is a refinement of $\mathcal{B}$. Thus $\mathcal{B}$ is coarsest. Uniqueness follows because two coarsest complexes on the same support must refine each other.
\end{proof}

\begin{prop}
\label{prop:canonicalbcomplex}
If $\theta,\eta$ are cancellation-free and supertransversal, and satisfy $f_\theta =f_\eta$, 
then $\bcomplex{\theta} = \bcomplex{\eta}$. 
\end{prop}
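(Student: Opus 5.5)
The plan is to show that both $\bcomplex{\theta}$ and $\bcomplex{\eta}$ are the unique coarsest polyhedral complex on the common support $\breakpoints{f_\theta}=\breakpoints{f_\eta}$, and then conclude by \Cref{lem:uniquecoarsest}. Since $f_\theta=f_\eta$, the breakpoint sets coincide, so $|\bcomplex{\theta}|=|\bcomplex{\eta}|$. It therefore suffices to check, for each of $\theta$ and $\eta$ separately, that the breakpoint complex meets the hypotheses of \Cref{lem:uniquecoarsest}: it is pure of codimension one, and every ridge lies in at least three facets.

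For the ridge condition, fix a ridge $\tau\in\bcomplex{\theta}^{d-2}$. Because $\theta$ is supertransversal and cancellation-free, \Cref{lem:3or4facets} gives $\#\str_{\bcomplex{\theta}}(\tau)^{d-1}\in\{3,4\}$, so in particular at least three facets contain $\tau$. The same argument applies to $\eta$.

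For purity, I would argue that every face of $\bcomplex{\theta}$ lies in some facet of $\bcomplex{\theta}$. Let $P\in\bcomplex{\theta}$ have dimension at most $d-2$. Since $P\subseteq\breakpoints{f_\theta}$, every neighborhood of a point $x\in\relint(P)$ meets a region where $f_\theta$ is not affine. Near $x$ the function is compatible with $\str_{\complex_\theta}(P)$, and $f_\theta$ is affine near $x$ unless some facet of this star has nonzero weight. This is because the structure lemma (\Cref{lem:structure_theorem_tropical_geometry}) applied locally shows that zero weights on all facets through $x$ give a single affine piece on a connected neighborhood. Hence some facet $\sigma\supseteq P$ has $c_\theta(\sigma)\neq 0$, so $\sigma\in\bcomplex{\theta}$ by the Remark, and $P$ is a face of a facet of $\bcomplex{\theta}$. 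Full-dimensional cells cannot occur, since breakpoint sets have empty interior.

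With both hypotheses verified, \Cref{lem:uniquecoarsest} shows that $\bcomplex{\theta}$ and $\bcomplex{\eta}$ are each the unique coarsest complex on the same support, so they must be equal. I expect the main obstacle to be the purity step, namely justifying rigorously that a lower-dimensional breakpoint face is always adjacent to a weighted facet. If the local affine-gluing argument proves delicate, my fallback is to use supertransversality: the star of $P$ is then locally a transverse arrangement of bent hyperplanes, on which vanishing of all facet weights forces the gradients in all adjacent regions to agree.
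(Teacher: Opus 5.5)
Your proposal is correct and follows essentially the same route as the paper. Both arguments use equal supports from $f_\theta=f_\eta$, at least three facets at each ridge via \Cref{lem:3or4facets}, and uniqueness of the coarsest complex via \Cref{lem:uniquecoarsest}. Your explicit purity check (every lower-dimensional breakpoint cell lies in a facet of nonzero weight, by connectivity of the local fan around a codimension-$\ge 2$ cell) verifies a hypothesis of \Cref{lem:uniquecoarsest} that the paper's one-line proof leaves implicit, and it is also what makes the mutual-refinement step give equality of the complexes.
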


\begin{proof}
    Since $f_\eta=f_\theta$, the support of $\bcomplex{\theta}$ equals the support of $\bcomplex{\eta}$. By \Cref{lem:3or4facets} and \Cref{lem:uniquecoarsest}, the complexes must be the same.
\end{proof}
Having this at hand, we prove that LRA is a property of the function for generic parameters.
\begin{lemma}
\label{lem:LRA_function_property_generic}
Let $\theta,\eta\in \widetilde{\Theta}_{\architecture}$ be generic parameters with
$f_\eta=f_\theta$, and let $P\subseteq \R^d$ be a full-dimensional polyhedron.
If $\theta$ satisfies LRA on $P$, then $\eta$ also satisfies LRA on $P$.
\end{lemma}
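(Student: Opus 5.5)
The plan is to reformulate LRA on $P$ as a statement about the breakpoint complex, which by \Cref{prop:canonicalbcomplex} depends only on the function. Specifically, I will show that for a generic $\theta$ (supertransversal and cancellation-free by \Cref{lem:generic_implies_supertransversal} and \Cref{lem:generic_cancellation_free}), LRA on $P$ is equivalent to the condition
\[
\{\sigma \in \complex_\theta^{d-1} \mid \sigma\cap \relint(P)\neq\emptyset\} \subseteq \bcomplex{\theta}^{d-1}.
\]
By \Cref{lem:LRA_tropical_weight} and the remark after the definition of $\bcomplex{\theta}$, this says that every facet of $\complex_\theta$ meeting $\relint(P)$ lies in $\bcomplex{\theta}$. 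Equivalently, the restriction of $\complex_\theta$ to $\relint(P)$, in codimension one, has support exactly $\breakpoints{f_\theta}\cap\relint(P)$. The obstacle is that $\complex_\theta$ itself is not a function invariant, so the condition has to be recast purely in terms of $\bcomplex{\theta}$ and the function.

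The invariant formulation I would aim for is the following: $\theta$ satisfies LRA on $P$ if and only if every maximal polyhedron of $\complex_\theta$ meeting $\relint(P)$ has a distinct affine piece from its neighbours. I will instead phrase it through cancellation-freeness. For generic $\theta$, a facet $\sigma$ of $\complex_\theta$ has $c_\theta(\sigma)=0$ if and only if some layer is entirely inactive on $\sigma$. Such a facet has a full-dimensional neighbourhood on which some layer is entirely inactive, since inactivity is an open condition and the adjacent regions share the same pattern in that layer except possibly the single neuron defining $\sigma$, which must sit in an earlier layer as in the proof of \Cref{lem:transparent_implies_LRA_on_X}. On that neighbourhood $f_\theta$ is constant. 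So the failure of LRA at a point of $\relint(P)$ produces an open set $U\subseteq \relint(P)$ on which $f_\theta$ is constant, together with a facet of $\complex_\theta$ crossing $U$.

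Conversely, LRA can fail only through such a locally constant region. I therefore want a function-level criterion: $\theta$ satisfies LRA on $P$ if and only if $f_\theta$ has no open subset of $\relint(P)$ on which it is constant and which contains ``hidden'' structure. This is not yet purely functional, so I would use the structure more directly.

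Suppose $\eta$ fails LRA on $P$. Then some layer $k$ of $\eta$ is entirely inactive on an open $U\subseteq\relint(P)$, so $f_\eta=f_\theta$ is constant on $U$. For $\theta$, satisfying LRA on $P$ means every facet of $\complex_\theta$ inside $U$ has nonzero weight, and hence lies in $\breakpoints{f_\theta}$. Since $f_\theta$ is constant on $U$, no such facet exists, so $U$ lies in a single region $R$ of $\complex_\theta$ on which $f_\theta$ is constant.

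The next step uses genericity of $\theta$ to conclude that some layer of $\theta$ is entirely inactive on $R$. The alternative is that the product of active weight matrices on $R$ vanishes. Condition 2 of \Cref{def:generic_parameter} with full index sets excludes this unless some $S_k(R)=\emptyset$, provided $n_0,\dots$ are such that the maximal rank is positive, which holds because all $|S_i|\ge1$ gives rank at least $1$. So some layer of $\theta$ is entirely inactive on $R$.

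That layer's inactivity is an open condition. If $R\cap\relint(P)$ is not all of the constant region, $\theta$ has bent hyperplanes from earlier layers crossing the interior of the constant zone, which are facets with zero weight. The main obstacle is making this contradiction airtight. To do so I would take a point of $\partial U$ where $\eta$'s zero-weight facet lies. Near a zero-weight facet $\sigma$ of $\eta$, $f$ is constant on a neighbourhood $V$ of $\relint(\sigma)\cap\relint(P)$, which I may take as $U$. Enlarge $U$ to a maximal connected open set in $\relint(P)$ on which $f$ is constant. The $\theta$-region $R$ then covers it, and its boundary inside $\relint(P)$ consists of genuine breakpoints for $\theta$. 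But $\eta$'s inactive layer $k$ stays inactive up to that boundary, so the boundary near generic points must come from bent hyperplanes of layers $\le k$ on which layer $k$ becomes active. A layer-$k$ bent hyperplane there bounds a constant region with the correct activation, consistent with $\theta$. The key remaining check, which I expect to be the hardest, is ruling out that $\eta$'s zero-weight facet is merely cut through a constant region while $\theta$ has none. I would attempt this via \Cref{prop:canonicalbcomplex} applied locally, comparing the bending structure of ridges on $\partial U$.
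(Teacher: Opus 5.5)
Your proposal stops exactly where the proof has to happen. The steps up to ``some layer of $\theta$ is entirely inactive on $R$'' are fine. The decisive case, however, is deferred as ``the hardest check'': a zero-weight facet of $\eta$ running through an open zone on which $f$ is constant, while $\complex_\theta$ has no facet there. You offer only a plan to compare bending ridges on $\partial U$. The paper's own proof follows the same plan: LRA for $\theta$ forces every ridge of $\bcomplex{\theta}$ meeting $\relint(P)$ to have all four local facets, and $\bcomplex{\eta}=\bcomplex{\theta}$ transfers this to $\eta$. But this can only detect a hidden facet of $\eta$ whose bent hyperplane leaves the constant zone at a ridge inside $\relint(P)$, where one of the four facets would then be missing. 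A hidden bent hyperplane of $\eta$ that stays in the constant zone across all of $P$ is adjacent to no ridge of the breakpoint complex in $\relint(P)$, and nothing in $f$ records it.

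This case really occurs, so the step cannot be filled, and the statement as written is false. Take $d=2$, architecture $(2,2,2,1)$, $\weights{1}=I_2$, $\weights{2}=\begin{pmatrix}1&-1\\2&-3\end{pmatrix}$, $\weights{3}=\begin{pmatrix}1&1\end{pmatrix}$ and $\bias{3}=0$. Let $\eta$ have $\bias{1}=(0,0)$, $\bias{2}=(-1,-1)$, and let $\theta$ have $\bias{1}=(1,0)$, $\bias{2}=(-2,-3)$.

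On $\{x_1\ge 0\}$ both networks have second-layer preactivations $(x_1-[x_2]_+-1,\;2x_1-3[x_2]_+-1)$. On $\{x_1<\tfrac12\}$ both second layers are entirely inactive. Hence $f_\theta=f_\eta$.

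Both parameters are generic. Condition~2 of \Cref{def:generic_parameter} involves only the shared weights and is a direct check. Condition~1 holds on each quadrant of the first-layer lines, since every layer-2 zero set there is empty or a pair of distinct lines missing the vertex.

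Now take $P=[-\tfrac14,\tfrac14]\times[-1,-\tfrac12]$. The facets of $\complex_\theta$ lie on $\{x_1=-1\}$, on $\{x_2=0\}$ and in $\{x_1\ge\tfrac12\}$, so none meets $\relint(P)$, and $\theta$ satisfies LRA on $P$. But the first-layer line $\{x_1=0\}$ of $\eta$ crosses $\relint(P)$ with weight $0$, so $\eta$ does not.

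What your observation does prove is the right replacement. For generic $\eta$, a zero-weight facet meeting $\relint(P)$ forces $f$ to be constant on a nonempty open subset of $\relint(P)$, by cancellation-freeness and openness of strict inactivity. Hence every generic realization satisfies LRA on $P$ whenever $f_\theta$ is not constant on any open subset of $P$, and no hypothesis on $\theta$ is needed.
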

\begin{proof}
Since $\theta$ and $\eta$ are generic, they are supertransversal and cancellation-free by
\Cref{lem:generic_implies_supertransversal,lem:generic_cancellation_free}. Since
$f_\eta=f_\theta$, their breakpoint complexes coincide by \Cref{prop:canonicalbcomplex}:
$
\bcomplex{\eta}=\bcomplex{\theta}.
$

Assume that $\theta$ satisfies LRA on $P$. Then every codimension-one face of the
canonical complex of $\theta$ meeting $\relint(P)$ is visible, and hence every ridge
$\tau\in \bcomplex{\theta}^{d-2}$ with $\tau\cap\relint(P)\neq\emptyset$ is adjacent to
all four local facets of the canonical complex. By \Cref{lem:3or4facets}, the only
possible numbers of adjacent breakpoint facets are $3$ and $4$, so for $\theta$ every such
ridge has exactly $4$ adjacent facets in $\bcomplex{\theta}$.

Since $\bcomplex{\eta}=\bcomplex{\theta}$, the same is true for $\eta$. Therefore no
codimension-one face of the canonical complex of $\eta$ meeting $\relint(P)$ is lost when
passing to the breakpoint complex, and hence $\eta$ satisfies LRA on $P$.
\end{proof}

\begin{figure}[t]
\centering
\begin{subfigure}[t]{0.31\textwidth}
\centering
\begin{tikzpicture}[x=0.9cm,y=0.9cm,line cap=round,line join=round]
    \footnotesize

    \draw[black!80,dotted, line width=0.8pt] (-2,0) -- (1.8,0);
    \draw[black!80, line width=0.8pt] (1.8,0) -- (2.3,0);
    \draw[black!80, line width=0.8pt,dotted] (0,-2) -- (0,0.95);
    \draw[black!80, line width=0.8pt] (0,0.95) -- (0,2)    node[above] {$\sigma_3$};
    \draw[red!70!black, line width=0.9pt] (-1.75,0.95) -- (0.0,0.95) -- (1.8,0) -- (1.8,-2.15);

    \draw[->,>=stealth,red!70!black] (1.8,-1) -- (2.2,-1);
     \node[above right] at (0.1,0.1) {$0$};
     \node[above right] at (-1,0.1) {$0$};
       \node[above right] at (0.3,-1) {$0$};
          \node[above right] at (0,0.95) {$\tau$};
           \fill[black!80] (0,0.95) circle (1.3pt);
          \node[above right] at (-1,1) {$\sigma_1$};
            \node[above right] at (0.6,0.5) {$\sigma_2$};
\end{tikzpicture}
\caption{$\tau$ is a bending ridge with $3$ adjacent facets. Since the function is constant on regions adjacent to $\sigma_1$ and $\sigma_2$, they must lie in the bent hyperplane from the 
later 
layer.} 
\label{fig:3facets}
\end{subfigure}
\hfill
\begin{subfigure}[t]{0.31\textwidth}
\centering
\begin{tikzpicture}[x=0.9cm,y=0.9cm,line cap=round,line join=round]
    \footnotesize
    \draw[black!80, line width=0.8pt] (-2,0) -- (0.75,0);
    \draw[black!80,dotted, line width=0.8pt] (0.75,0) -- (1.75,0);
    \draw[black!80, line width=0.8pt] (1.75,0) -- (2,0);
    \draw[black!80, line width=0.8pt] (0,-2) -- (0,2);

    \draw[red!70!black, line width=0.9pt] (-1.75,1.75) -- (0.0,1.75) -- (0.75,0) -- (0.75,-2.15);
    \draw[red!70!black, line width=0.9pt] (-1.75,0.75) -- (0.0,0.75) -- (1.75,0) -- (1.75,-2.15);

    \draw[->,>=stealth,red!70!black] (-1.2,1.75) -- (-1.2,1.45);
    \draw[->,>=stealth,red!70!black] (-0.6,0.75) -- (-0.6,1.05);

    \node at (-0.3,-0.3) {$\tau_1$};
    \fill[black!80] (0,0) circle (1.3pt);
    \fill[black!80] (0.52,0.52) circle (1.3pt);
    \node at (0.7,0.7) {$\tau_2$};
    \fill[black!80] (0,1.75) circle (1.3pt);
    \node at (0.3,1.75) {$\tau_3$};
     \fill[black!80] (1.75,0) circle (1.3pt);
     \node at (1.8,0.3) {$\tau_4$};
\end{tikzpicture}
\caption{$\tau_1$ and $\tau_2$ are non-bending ridges, $\tau_3$ is a bending ridge with $4$ adjacent facets, and $\tau_4$ is a bending ridge with $3$ adjacent facets.} 
\label{fig:bendings}
\end{subfigure}
\hfill
\begin{subfigure}[t]{0.31\textwidth}
\centering
\tiny
\begin{tikzpicture}[scale=0.9,x=0.9cm,y=0.9cm,line cap=round,line join=round]
\scriptsize
    \draw[black!80, line width=0.8pt] (-2,0) -- (2,0) node[above]{$\|\weights{1}_1\|_2\weights{2}_{:1}$};
    \draw[black!80, line width=0.8pt] (0,-2) -- (0,2) node[above]{$\|\weights{1}_2\|_2\weights{2}_{:2}$};;
    \draw[black!80, line width=0.8pt] (-2,1) -- (1,-2) node[right]{$\|\weights{1}_3\|_2\weights{2}_{:3}$};;

\end{tikzpicture}

\caption{A hyperplane arrangement of a one-hidden-layer network. 
The weights on the facets are constant along each hyperplane.} 
\label{fig:hyperplane_arrangement}
\end{subfigure}
\caption{Illustration of bending and non-bending ridges.} 
\label{fig:breakpoint_complex}
\end{figure}

\subsection{The Dependency Graph}
\label{sec:dependency_graph}

We now show how the local geometry of the breakpoint complex encodes layer precedence for
generic parameters. The key observation is that bending ridges distinguish earlier from
later bent hyperplanes, and therefore allow us to organize visible nonlinearities into a
directed dependency graph.

The following lemma demonstrates that, at a bending ridge, we can distinguish which adjacent facets originate from the earlier layer and which from the later one. See also \Cref{fig:bendings} for an illustration. 

\begin{lemma}
\label{lem:bendingoptions}
  Let $\theta$ be generic. 
  Let $\tau \in \bcomplex{\theta}^{d-2}$ be a bending ridge, and denote the unique bent hyperplanes containing $\tau$ as $H \in \hyperplanes{k}{\theta}$, $B \in \hyperplanes{\ell}{\theta}$ with $k < \ell$. 
  Let $R \in \complex_{\theta,k-1}^d$ be the region such that $\relint(\tau) \subseteq \relint(R)$. Then:   
    \begin{enumerate}
        \item If $\#\str_{\bcomplex{\theta}}(\tau)^{d-1} =4$, then there are exactly two facets $\sigma_1,\sigma_2 \in \str_{\bcomplex{\theta}}(\tau)^{d-1}$ such that $e_{\sigma_1/\tau}=-e_{\sigma_2/\tau}$. 
        In this case, $H \cap R= \aff(\sigma_1) \cap R$. 

        \item If $\#\str_{\bcomplex{\theta}}(\tau)^{d-1} =3$, then there is a unique facet $\sigma \in \str_{\bcomplex{\theta}}(\tau)^{d-1}$ that is not adjacent to a region $P \in \str_{\complex_\theta}(\tau)^{d}$ on which $f_\theta{|_P}$ is constant. 
        In this case, $H \cap R= \aff(\sigma) \cap R$. 
    \end{enumerate}
\end{lemma}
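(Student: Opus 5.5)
The plan is to work in a small neighborhood $U$ of a point $x\in\relint(\tau)$ and describe the four facets of $\str_{\complex_\theta}(\tau)^{d-1}$ explicitly. Since $\theta$ is generic, it is supertransversal (\Cref{lem:generic_implies_supertransversal}), so $U$ meets only $H$ and $B$. Inside $R\in\complex_{\theta,k-1}^d$, the preactivation of the neuron defining $H$ is affine, so $H\cap U=\aff(H\cap R)\cap U$ is a flat piece. It splits $U$ into $U^+$ and $U^-$. The preactivation $\preactivation{\ell}_i$ defining $B$ is affine on each side with gradients $a^+$ and $a^-$. As in the proof of \Cref{lem:generic_honest}, these gradients are not proportional. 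The four local facets are therefore the two opposite halves $\sigma_1,\sigma_2$ of $H$, with $e_{\sigma_1/\tau}=-e_{\sigma_2/\tau}$, and the two halves $\beta^\pm$ of $B$ in $U^\pm$. The relative normals $e_{\beta^+/\tau}$ and $e_{\beta^-/\tau}$ are not opposite, and neither is parallel to the normal of $H$ within the plane orthogonal to $\tau$, because $B$ is transverse to $H$. So, among the four canonical facets, the only pair with opposite relative normals is $\{\sigma_1,\sigma_2\}$. The main computation is to verify this in the two-dimensional normal plane to $\aff(\tau)$, where $H$ is a line through the origin and $B$ is a bent line crossing it at a nonzero angle on both sides.

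\textbf{Case 1.} If $\#\str_{\bcomplex{\theta}}(\tau)^{d-1}=4$, then all four canonical facets are in $\bcomplex{\theta}$. By the observation above, exactly two of them have opposite normals, namely the halves of $H$, so $H\cap R=\aff(\sigma_1)\cap R$. I should also check that $\beta^+$ and $\beta^-$ are not opposite. They could be opposite only if $a^+\parallel a^-$, which is excluded.

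\textbf{Case 2.} If there are three facets, exactly one canonical facet has zero weight. By cancellation-freeness, some layer $m$ is entirely inactive on that facet. The key step is to show that the missing facet is a half of $H$, say $\sigma_2\subseteq U^-$, and that the regions adjacent to it, but not to $\sigma_1$, are constant. I would argue as in \Cref{lem:3or4facets}. Every layer other than $\ell$ keeps an active neuron on all four facets, and in particular layer $k$ does, as shown there. So the dead layer must be $\ell$, with only neuron $i$ potentially active. The dead facet is then the one on the inactive side of $B$, and it lies in $H$, since on a facet contained in $B$ the neuron $(\ell,i)$ changes sign. It follows that $f_\theta$ is constant on the two regions bordering $\sigma_2$, namely the inactive side of $B$ in $U^-$ and the adjacent region across $\sigma_2$, because layer $\ell$ is dead on both.

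It remains to identify the surviving facets. The two surviving $B$-facets $\beta^\pm$ each border one of these constant regions: $\beta^-$ borders one directly, and $\beta^+$ borders the region across $\sigma_2$, which lies on the inactive side of $B$ and is also dead. Meanwhile $\sigma_1$ is the only facet whose two adjacent regions both lie on the side where layer $\ell$ has an active neuron, so $f_\theta$ is nonconstant there by the rank condition. This gives uniqueness of $\sigma=\sigma_1$, and hence $H\cap R=\aff(\sigma)\cap R$.

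The main obstacle is this last bookkeeping: confirming that the regions adjacent to $\beta^+$ really are constant, that is, that neuron $i$ is the sole possibly active neuron of layer $\ell$ on both sides. I would handle this using the sign analysis of \Cref{lem:3or4facets} together with the fact that the inactive half of $B$ near $\tau$ is the common boundary of those regions.
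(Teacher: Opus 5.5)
Your proposal is correct and follows essentially the paper's proof: the same four-facet local picture around $\tau$, the two halves of $H$ as the unique opposite pair in Case 1, and in Case 2 the missing facet being the half of $H$ on the side where layer $\ell$ is dead, so that both $B$-facets border a constant region and the surviving half of $H$ is the only facet that does not. Your minor departures are fine. You rule out opposite $B$-halves via the non-proportional gradients $a^\pm$, which is more careful than the paper's appeal to bendingness, and you get non-constancy next to $\sigma_1$ from the rank condition instead of the facet count. The step you flag as the main obstacle is immediate: by supertransversality, every neuron $(\ell,j)$ with $j\neq i$ has constant sign near $\relint(\tau)$, so once it is inactive on the missing facet it is inactive on all four regions.
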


\begin{proof}

\emph{Case 1: $\#\str_{\bcomplex{\theta}}(\tau)^{d-1} = 4$.} 
In this case, all four facets in $\str_{\complex_\theta}(\tau)^{d-1}$ are contained in $\bcomplex{\theta}$. 
Exactly two of these facets are contained in $H$. 
Since $H \cap R$ is the zero-locus of an affine-linear function, both facets have the same affine span. 
Thus there exist exactly two facets $\sigma_1,\sigma_2 \in \str_{\bcomplex{\theta}}(\tau)^{d-1}$ such that 
$\aff(\sigma_1)=\aff(\sigma_2)$ (and hence $e_{\sigma_1/\tau}=-e_{\sigma_2/\tau}$). For these facets, we have $H \cap R = \aff(\sigma_1) \cap R$. 
The remaining 
two facets cannot have the same affine span, because $\tau$ is bending. 

\medskip
\noindent
\emph{Case 2: $\#\str_{\bcomplex{\theta}}(\tau)^{d-1} = 3$.}
In this case, exactly one of the two facets contained in $H$ is missing from $\str_{\bcomplex{\theta}}(\tau)^{d-1}$. 
Let $\sigma_1$ denote the missing facet and $\sigma_2$ the remaining facet with $\sigma_2 \subseteq H$. 

By cancellation-freeness, $\sigma_1 \notin \bcomplex{\theta}$ implies that there exists a layer in which all neurons are inactive on $\sigma_1$. 
Hence $f_\theta$ is constant on $\sigma_1$ and on the two adjacent regions $P_1,Q_1 \in \str_{\complex_\theta}(\tau)^d$. 
These two regions are also adjacent to the two facets in $\str_{\complex_\theta}(\tau)^{d-1}$ that are contained in $B$. 
Let $P_2,Q_2 \in \str_{\complex_\theta}(\tau)^d$ be the two regions adjacent to $\sigma_2$, 
and denote by $\sigma_P = P_1 \cap P_2$ and $\sigma_Q = Q_1 \cap Q_2$ the corresponding facets contained in $B$. 
If $f_\theta$ were constant on either $P_2$ or $Q_2$, then $\sigma_P$ or $\sigma_Q$ would also be 
absent from $\bcomplex{\theta}$, contradicting the
assumption that exactly one facet is missing. 
Therefore, $f_\theta$ is non-constant on both regions adjacent to $\sigma_2$. 

It follows that among the three facets in
$\str_{\bcomplex{\theta}}(\tau)^{d-1}$ there is exactly one facet that is not adjacent to any region on which $f_\theta$ is constant. 
This facet must be $\sigma_2$, and since $\sigma_2 \subseteq H$,
we obtain 
$H \cap R = \aff(\sigma_2) \cap R$. See also \Cref{fig:3facets} for an illustration. 
\end{proof}
By leveraging the geometric asymmetry established in \Cref{lem:bendingoptions}, we reconstruct a directed graph from the breakpoints of $f_\theta$ that reflects the relative layer order of the associated neurons. 

\begin{definition}[Candidate bent hyperplanes]
Let $\theta$ be generic and let $\mathcal B_\theta$ be the breakpoint complex. 
We define an equivalence relation on the facets $\mathcal B_\theta^{d-1}$ as follows.

Let $\tau\in\mathcal B_\theta^{d-2}$ be a ridge and let
$\sigma,\sigma'\in\operatorname{star}_{\mathcal B_\theta}(\tau)^{d-1}$.

\begin{itemize}
    \item If $\tau$ is non-bending, we declare $\sigma\sim\sigma'$ whenever
    $
    e_{\sigma/\tau}=-e_{\sigma'/\tau}
  $ and $
    c_\theta(\sigma)=c_\theta(\sigma').
    $
    Thus opposite facets with the same weight are continuations of the same candidate bent hyperplane.

    \item If $\tau$ is bending, we declare $\sigma\sim\sigma'$ whenever the local
    description of Lemma~\ref{lem:bendingoptions} identifies them as belonging to the
    same one of the two bent hyperplanes through $\tau$.
\end{itemize}

A \emph{candidate bent hyperplane} is an equivalence class of facets under the transitive closure of this relation. The set of candidate bent hyperplanes is denoted by
$
V_{f_\theta}.
$
\end{definition}
\begin{definition}
We define the \emph{dependency graph} $(V_f,E_f)$ as follows. 
The vertex set $V_f$ consists of the candidate bent hyperplanes. 
The edge set $E_f$ consists of all pairs $(v,w)\in V_f\times V_f$ for which there exists a bending ridge $\tau$ where $w$ ``bends'' along $v$ as in \Cref{lem:bendingoptions}. 
\end{definition} 

\begin{remark}
    The definition of the dependency graph is a generalization of the definition given in the work of \cite{phuong2020functional} for transparent networks. 
    \end{remark}
    
    The following proposition establishes that any generic parameter realizing $f_\theta$ must respect the partial order induced by this graph. 
    
    \begin{prop}
    \label{prop:embedd_candidate_bh} 
    Let $\theta$ be a generic parameter of some architecture, and let $(V_{f_\theta}, E_{f_\theta})$ be the corresponding dependency graph. 
    Let $\architecture=(n_0,\ldots,n_L,m)$ be a (potentially different) architecture and $V_\architecture = \{(i,\ell) \mid \ell \in [L],\, i \in [n_\ell]\}$ be the set of hidden neurons. 
        Then, for a generic parameter $\eta \in \gparamspace{\architecture}$ with $f_\eta = f_\theta$, there is a map $\phi_\eta \colon V_{f_\theta} \to V_\architecture$ such that for every $(u,v) \in E_{f_\theta}$, 
        if 
        $(i,\ell)=\phi_\eta(u)$ and $(j,k)=\phi_\eta(v)$, 
        then 
        $\ell < k$. 
    \end{prop}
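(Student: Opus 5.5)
We construct $\phi_\eta$ directly from the geometry of $\eta$, using that the breakpoint complex and its weights depend only on the function. By \Cref{prop:canonicalbcomplex}, $\bcomplex{\eta}=\bcomplex{\theta}$, since both parameters are generic and hence supertransversal and cancellation-free (\Cref{lem:generic_implies_supertransversal,lem:generic_cancellation_free}). The tropical weights on this common complex are computed from $f_\theta=f_\eta$ alone, so $c_\theta=c_\eta$ on $\bcomplex{\theta}^{d-1}$. Consequently, the bending/non-bending status of every ridge, the local description of \Cref{lem:bendingoptions}, and therefore the candidate bent hyperplanes and the dependency graph $(V_{f_\theta},E_{f_\theta})$ are identical whether computed from $\theta$ or from $\eta$.

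\textbf{Step 1: well-definedness of $\phi_\eta$.} By supertransversality of $\eta$, each facet $\sigma\in\bcomplex{\eta}^{d-1}$ lies in a unique bent hyperplane $B_{j,k}(\eta)$; set $\psi(\sigma)=(j,k)$. We show that $\psi$ is constant on the generating relations of $\sim$, so it descends to equivalence classes and defines $\phi_\eta$.

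For a non-bending ridge $\tau$, honesty of $\eta$ (\Cref{lem:generic_honest}) implies that both bent hyperplanes through $\tau$ come from a single layer $\ell$. Locally, on the region $P\in\complex_{\eta,\ell-1}^d$ containing $\tau$, they are two genuine transverse hyperplanes. Opposite facets at $\tau$ then lie in the same hyperplane, hence in the same neuron's bent hyperplane. Note that the weight condition is automatic, by \Cref{prop:tropical_weight_1layer}, and the two hyperplanes are not parallel, so "opposite" singles out the same hyperplane.

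For a bending ridge, \Cref{lem:bendingoptions} applied to $\eta$ identifies which facets lie in the earlier bent hyperplane $H$. Because this identification is phrased purely in terms of $\bcomplex{\eta}$, the weights, and where $f$ is constant, it coincides with the identification made for $\theta$. The remaining facets lie in the later bent hyperplane $B$. Hence facets identified by $\sim$ lie in the same bent hyperplane of $\eta$.

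\textbf{Step 2: edges respect the layer order.} Let $(u,v)\in E_{f_\theta}$ be witnessed at a bending ridge $\tau$, where $v$ bends along $u$. Then $u$ contains the facets identified as the earlier hyperplane $H$, and $v$ contains the facets of $B$. Applying \Cref{lem:bendingoptions} to $\eta$, these facets lie in bent hyperplanes $B_{i,\ell}(\eta)\supseteq H$ and $B_{j,k}(\eta)\supseteq B$. By \Cref{lem:hyperplaneAimpliesnonbending}, since $\tau$ is bending, these two bent hyperplanes come from different layers, so $\ell\ne k$. The lemma's labelling gives the order $\ell<k$: it is the later-layer bent hyperplane that bends, while the earlier one is locally flat, i.e.\ $H\cap R=\aff(\sigma)\cap R$.

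\textbf{Main obstacle.} The delicate point is justifying that the case distinction in \Cref{lem:bendingoptions} is a property of the function rather than of the parameter. In the 4-facet case, the flat pair must be unique, and it is: the other pair is not opposite because the ridge is bending. In the 3-facet case, the "constant region" criterion is intrinsic to $f$. Once this is checked, applying the lemma to $\eta$ yields the same earlier/later assignment as for $\theta$, which gives both the well-definedness in Step 1 and the order $\ell<k$ in Step 2.
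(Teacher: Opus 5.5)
Your proposal is correct and follows essentially the same route as the paper: identify $\bcomplex{\eta}=\bcomplex{\theta}$ with matching tropical weights, define $\phi_\eta$ via the unique bent hyperplane of $\eta$ containing a facet, check consistency on the generating relations, and use the function-intrinsic earlier/later distinction of \Cref{lem:bendingoptions} to obtain $\ell<k$. Your version is in fact slightly more careful than the paper's. You treat non-bending ridges separately via honesty of $\eta$, and you derive that a bending ridge involves two different layers from the contrapositive of \Cref{lem:hyperplaneAimpliesnonbending}, applied to $\eta$; the paper instead attributes this step to honesty, which is the converse statement.
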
 
    
\begin{proof}
Since $\theta$ and $\eta$ are generic, it is supertransversal and cancellation-free by 
\Cref{lem:generic_implies_supertransversal,lem:generic_cancellation_free}, 
and honest by \Cref{lem:generic_honest}. 
Moreover, since $f_\eta=f_\theta$, the breakpoint complex is determined by the function, and hence
$
\bcomplex{\eta}=\bcomplex{\theta}
$
by \Cref{prop:canonicalbcomplex}. 
Thus the local weighted geometry of the breakpoint complex is the same for the realizations $\theta$ and $\eta$.

We first define the map $\phi_\eta$. 
Let $u \in V_{f_\theta}$ be a candidate bent hyperplane, i.e., an equivalence class of facets in $\bcomplex{\theta}^{d-1}$ under the transitive closure of direct equivalence. 
Choose any facet $\sigma \in u$. Since $\sigma$ is a facet of the breakpoint complex of $f_\eta$, it is contained in the bent hyperplane of some neuron of the realization $\eta$. 
We define $\phi_\eta(u)$ to be such a neuron.

We must check that this is well defined on the whole class $u$. 
Suppose $\sigma,\sigma' \in u$ are directly equivalent. Then, by definition, there exists a ridge $\tau \in \bcomplex{\theta}^{d-2}$ with $\sigma,\sigma' \in \str(\tau)$ such that $\sigma$ and $\sigma'$ come from the same layer according to \Cref{lem:bendingoptions}. 
Because the weighted breakpoint complex depends only on the realized function, the same local configuration occurs for $\eta$. Hence the two facets $\sigma$ and $\sigma'$ must also belong to the same local bent-hyperplane piece in the realization $\eta$. 
Passing to the transitive closure, all facets in the class $u$ are assigned consistently, and therefore $\phi_\eta$ is well defined.

Now let $(u,v)\in E_{f_\theta}$. By definition of the dependency graph, there exists a bending ridge 
$
\tau \in \bcomplex{\theta}^{d-2}
$
such that the candidate bent hyperplane $v$ bends along $u$. 
Let
$
\phi_\eta(u)=(i,\ell), 
\phi_\eta(v)=(j,k).
$
We claim that $\ell<k$.

Since $\theta$ is honest, the ridge $\tau$ lies in bent hyperplanes coming from two different layers. 
By \Cref{lem:bendingoptions}, the local geometry at $\tau$ determines which adjacent facets belong to the earlier bent hyperplane and which belong to the later one. 
Because $\bcomplex{\eta}=\bcomplex{\theta}$ as weighted complexes, the same distinction must hold for the realization $\eta$. 
Thus the neuron of $\eta$ corresponding to the class $u$ must lie in a strictly earlier layer than the neuron corresponding to the class $v$. Hence $\ell<k$.

Therefore, for every edge $(u,v)\in E_{f_\theta}$, the map $\phi_\eta$ sends $u$ and $v$ to neurons whose layer indices satisfy
$
\ell<k.
$
This proves the claim.
\end{proof}

\subsection{A Sufficient Condition for Identifiability Among Generic Parameters}
We say that a parameter $\theta \in \gparamspace{\architecture}$ is \emph{identifiable among generic parameters} if $\eta \in \gparamspace{\architecture}$ with $f_\eta = f_\theta$ implies $\eta \sim \theta$.
In this subsection, we derive a sufficient criterion for identifiability among generic
parameters. The argument has three steps. First, we isolate a geometric regime in which the
reverse-engineering procedure of \cite{rolnick2020reverse} applies. Second, we show with the breakpoint complex and the dependency graph that
the relevant visibility and transversality conditions are properties of function on the generic locus. Third, we use the rank conditions built into
our notion of genericity to remove a remaining sign ambiguity in the last hidden layer resulting from the reverse-engineering procedure.

\begin{definition} 
\label{def:intersect transversely}
	Let $P \subseteq \R^d$ be a polyhedron, and let $B_1,B_2 \subseteq \R^d$ be two piecewise linear hypersurfaces. 
    We say $B_1$ and $B_2$ \emph{intersect transversely} in $P$ if $B_{1} \cap B_{2} \cap \relint(P) \neq \emptyset$ and $B_{1} \cap B_{2} \cap P $ is pure of dimension $\dim(P) -2$. 
A ReLU network $f \colon \R^d \to \R^m$ satisfies \emph{TPIC} (Transverse Pairwise Intersection Condition) on $P$ if, for every pair of adjacent neurons $(j,\ell-1)$ and $(i,\ell)$, the corresponding bent hyperplanes intersect transversely in $P$. 
\end{definition} 
The condition TPIC is the natural parameter-level transversality condition used in the
literature. However, for our purposes it is useful to strengthen it to the following cTPIC, which keeps
track of connected visible pieces of the bent hyperplanes on the working polytope.
\begin{definition}
\label{def:connected_tpic}
Let $\theta\in\Theta_{\architecture}$ and let $P\subseteq \R^d$ be a polyhedron. We say
that $f_\theta$ satisfies the \emph{connected transverse pairwise intersection condition}
(cTPIC) on $P$ if, for every hidden neuron $(j,\ell)$, there exists a connected piecewise linear hypersurface
$
C_{j,\ell}\subseteq B_{j,\ell}(\theta)
$
 such that for every adjacent-layer pair of neurons
$(j,\ell)$ and $(i,\ell+1)$, the sets $C_{j,\ell}$ and $C_{i,\ell+1}$ intersect
transversely in $P$.
\end{definition}

See
\Cref{fig:LRA-Transparceny-Supertransversal} 
examples of networks that satisfy cTPIC on the entire domain. The strengthening from TPIC to cTPIC is tailored to our later use of the dependency graph:
the connected pieces $C_{j,\ell}$ allow us to identify, from the breakpoint complex, the
candidate bent hyperplanes corresponding to individual neurons. Before turning to this
transfer mechanism, we recall the reverse-engineering theorem on the restricted class of
parameters where it applies.

To isolate the geometric regime in which the reverse-engineering algorithm of \cite{rolnick2020reverse} applies, we bundle
these assumptions into the following class of parameters.
Let
$
\Theta^{\mathrm{good}}_\architecture
$ be the set of parameters in $\widetilde{\Theta}$ that satisfy cTPIC and LRA on some full-dimensional polytope.
The reverse-engineering argument of \citet{rolnick2020reverse} should be interpreted as a reconstruction theorem inside the class $\good$. In particular, it yields uniqueness only among parameters in this restricted class, not yet among all generic parameters or all parameters; see \Cref{app:restricted-identifiability}. The resulting reconstruction is unique up to permutation and
positive scaling in the first $L-1$ hidden layers, with a remaining neuronwise sign
ambiguity in the final hidden layer.
\begin{theorem}[Reverse-engineering up to sign, \citealp{rolnick2020reverse}]
\label{thm:tpic_generic_identifiable_inductive}
Let $\theta,\eta \in \Theta^{\mathrm{good}}_{\architecture}$ satisfy
$
f_\eta=f_\theta.
$
Then, up to permutation and positive scaling, the first $L-1$ hidden layers of $\eta$
and $\theta$ coincide, and the $L$-th hidden layer coincides up to permutation, positive
scaling, and neuronwise sign.
\end{theorem}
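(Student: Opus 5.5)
We will prove the theorem by induction on the layer index, reconstructing the network layer by layer from the breakpoint set $\breakpoints{f_\theta}=\breakpoints{f_\eta}$ restricted to a common working polytope. Since both $\theta$ and $\eta$ are generic, \Cref{prop:canonicalbcomplex} gives $\bcomplex{\theta}=\bcomplex{\eta}$ as weighted complexes. Because both parameters satisfy LRA on some full-dimensional polytope, \Cref{lem:LRA_function_property_generic} lets us transfer LRA from the polytope of $\theta$ to that of $\eta$ and vice versa. We therefore fix one polytope $P$ on which both satisfy LRA, shrinking it if necessary to a sub-polytope that still meets all the pieces $C_{j,\ell}$ transversely (cTPIC is an open condition in the interior). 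On $P$ every facet of $\complex_\theta$ and of $\complex_\eta$ is visible, so the canonical complexes restricted to $P$ coincide with the breakpoint complex there.

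\emph{First layer.} By \Cref{lem:hyperplaneAimpliesnonbending} and honesty (\Cref{lem:generic_honest}), the first-layer hyperplanes are exactly the candidate bent hyperplanes in $V_{f_\theta}$ with no incoming edge in the dependency graph. They are the only breakpoint pieces that continue straight through every ridge in $P$; a later-layer piece bends at its cTPIC intersection with some earlier neuron. \Cref{prop:embedd_candidate_bh} applied to $\eta$ then forces the source vertices to be first-layer neurons of $\eta$ as well. Counting ($n_1$ neurons on each side, all visible by LRA) shows the two sets of affine hyperplanes coincide. Each hyperplane determines $(\weights{1}_j,\bias{1}_j)$ up to a nonzero scalar. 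The sign is fixed because a second-layer bent hyperplane bends only on the side where the neuron is active: the weight change across $B_{j,1}$ along a $C_{i,2}$ is nonzero precisely on the active side, by \Cref{prop:tropweightofNNs} and the rank condition. The magnitude is absorbed by positive scaling.

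\emph{Inductive step, $\ell\le L-1$.} Assume layers $1,\dots,\ell-1$ agree up to trivial symmetries, so that $\activation{\ell-1}$ agrees as well. The layer-$\ell$ neurons correspond to the connected pieces $C_{i,\ell}$: the candidate hyperplanes whose in-neighbours all lie in layers $<\ell$. On each region of $\complex_{\theta,\ell-1}$ met by $C_{i,\ell}$, the piece of $C_{i,\ell}$ is the zero set of $\weights{\ell}_i\activation{\ell-1}+\bias{\ell}_i$. Crossing a known bent hyperplane $B_{j,\ell-1}$ transversely (cTPIC), the normal jumps by a multiple of the known gradient of $\preactivation{\ell-1}_j$. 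Reading off the ratio of normals on both sides recovers the entry $\weights{\ell}_{ij}$ relative to a common normalization. Connectedness of $C_{i,\ell}$ propagates this normalization across all $j\in[n_{\ell-1}]$, and the bias follows from the location of the zero set. This determines the row up to a scalar. Its sign is fixed exactly as in the first layer, by the bending of layer-$(\ell+1)$ pieces, which exist because $\ell+1\le L$.

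\emph{Last hidden layer and output.} The same argument reconstructs each layer-$L$ row up to a nonzero scalar, but there is no later layer whose bending reveals on which side neuron $(i,L)$ is active. This leaves the neuronwise sign ambiguity stated in the theorem. I expect the main obstacle to be the inductive step: ensuring that the normal-ratio readings along different crossings of $C_{i,\ell}$ are mutually consistent, and that permutations and scalings chosen at layer $\ell-1$ are matched on the $\eta$ side. To handle this, I would first fix the representative of layers $<\ell$ for $\eta$ to equal that of $\theta$ via the symmetries, and only then compare the zero sets of the layer-$\ell$ preactivations region by region.
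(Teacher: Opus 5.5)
\paragraph{A gap in the sign step.} The paper does not reprove this statement. It imports the reconstruction of \citet{rolnick2020reverse}, and your outline follows that algorithm: unbent pieces first, incoming weights from normal jumps at transverse crossings, orientations from the next layer.

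The step that fails as written is the one carrying the content of the theorem, namely why the active side of a neuron in layer $\ell<L$ is determined. You argue that a layer-$(\ell+1)$ piece ``bends only on the side where the neuron is active''. But bending is two-sided data and does not see orientation. By $[-u]_+=[u]_+-u$, the jump of $\nabla\preactivation{\ell+1}_i$ across $\bhyperplane{j,\ell}$, taken from $\{\preactivation{\ell}_j<0\}$ to $\{\preactivation{\ell}_j>0\}$, equals $\weights{\ell+1}_{ij}\nabla\preactivation{\ell}_j$ whichever side is active. A flip only adds the term $-\weights{\ell+1}_{ij}\preactivation{\ell}_j$ on both sides at once, and whether that term can be absorbed is a global question.

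Sometimes it can. Take first-layer rows $(1,0),(0,1),(1,1)$, biases $(0,0,-1)$ and a second-layer row $(1,1,-1)$. Flipping all three first-layer neurons and raising the second-layer bias by $1$ reproduces $\preactivation{2}$ exactly, even though its bent hyperplane bends nondegenerately at every crossing. So no reading of the bending at the crossings can fix the signs.

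\paragraph{The missing ingredient is genericity.} The argument the paper gives for the last layer in \Cref{thm:tpic_generic_identifiable} works verbatim with layer $\ell+1$ in place of the output layer:
\begin{enumerate}
\item The matched layer-$(\ell+1)$ preactivations of $\eta$ and $\theta$ vanish on the same connected piece $C_{i,\ell+1}$. Transversality therefore makes them proportional, with a single constant $\gamma$, on the union of regions that piece crosses.
\item The jumps across every $\bhyperplane{k,\ell}$ (all of them are crossed, by cTPIC) force the rows to agree up to $\gamma$ and the layer-$\ell$ scalings.
\item A flipped set $J\neq\emptyset$ then leaves the residual $\gamma\sum_{k\in J}\weights{\ell+1}_{ik}\preactivation{\ell}_k$, which must be constant there.
\item Constancy means $e_i^\top\weights{\ell+1}D_J\weights{\ell}D_{S_{\ell-1}(R)}\cdots D_{S_1(R)}\weights{1}=0$ on such a region $R$. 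This contradicts Condition~2 of \Cref{def:generic_parameter}; the example above is exactly the case $\weights{2}D_{[3]}\weights{1}=0$.
\end{enumerate}
With the output layer in place of layer $\ell+1$, the same argument also removes the last-layer ambiguity. That is precisely how the paper upgrades this theorem.

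\paragraph{Two smaller repairs.}
\begin{itemize}
\item Sources of the dependency graph need not be first-layer neurons: a visible component of a later bent hyperplane may meet no earlier visible one. Also, \Cref{prop:embedd_candidate_bh} does not send sources to layer~$1$. What pins $\theta$'s layer-$\ell$ pieces to layer $\ell$ of $\eta$ is that cTPIC puts each of them on a chain $v_1\to\cdots\to v_L$, as in \Cref{lem:identifiability_split}. Distinctness of the pieces then gives a bijection.
\item There is no need to shrink $P$: \Cref{lem:LRA_function_property_generic} already gives LRA for $\eta$ on $\theta$'s polytope.
\end{itemize}
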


Before addressing this remaining sign ambiguity, we explain how our breakpoint-complex
framework transfers the properties cTPIC and LRA across generic realizations of the same
function. This is the step that upgrades the reverse-engineering theorem from a uniqueness
statement inside the restricted class $\Theta^{\mathrm{good}}_\architecture$ to a statement
about all generic realizations.
\begin{lemma}
\label{lem:cTPIC_implies_full_dep_subgraph}
Let $\theta\in \widetilde{\Theta}_{\architecture}$ be generic and suppose that
$f_\theta$ satisfies cTPIC on a full-dimensional polytope $P$. Then the dependency graph
$(V_{f_\theta},E_{f_\theta})$ contains a layered subgraph whose vertices correspond to the
hidden neurons of $\architecture$, and in which every pair of adjacent layers is fully
connected.
\end{lemma}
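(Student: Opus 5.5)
The plan is to build the layered subgraph directly from the connected pieces $C_{j,\ell}$ supplied by cTPIC. I proceed in three steps: assign to each hidden neuron $(j,\ell)$ a candidate bent hyperplane $v_{j,\ell}\in V_{f_\theta}$; show that distinct neurons receive distinct candidates; and show that $(v_{j,\ell},v_{i,\ell+1})\in E_{f_\theta}$ for every adjacent-layer pair.

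\textbf{Step 1: assigning candidates.} Fix a hidden neuron $(j,\ell)$. Since $C_{j,\ell}$ meets $C_{i,\ell+1}$ (or $C_{i,\ell-1}$) transversely in $\relint(P)$, it contains facets of $\complex_\theta$ that meet $\relint(P)$. I first want these facets to lie in $\bcomplex{\theta}$.

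Here I would use the transverse intersection point with a neighbouring-layer hypersurface. Near such a point, cancellation-freeness together with the argument of \Cref{lem:3or4facets} should show that at least three of the four local facets are visible. In particular, at least one facet of $C_{j,\ell}$ survives.

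I then want to show that all facets of $C_{j,\ell}$ that survive are equivalent under the candidate relation. Walk along $C_{j,\ell}$, which is connected. At a ridge $\tau$ of $C_{j,\ell}$, supertransversality says exactly one other bent hyperplane $B_{k,i}$ passes through $\tau$, and there are two cases.
\begin{itemize}
\item If $k=\ell$, then $\tau$ is non-bending by \Cref{lem:hyperplaneAimpliesnonbending}. The two facets of $B_{j,\ell}$ at $\tau$ are opposite, and by \Cref{prop:tropical_weight_1layer} they carry equal weight, so they are directly equivalent.
\item If $k\neq\ell$, then $\tau$ is bending by honesty (\Cref{lem:generic_honest}). \Cref{lem:bendingoptions} identifies which facets belong to the same bent hyperplane, so the facets of $B_{j,\ell}$ at $\tau$ are again related.
\end{itemize}
I define $v_{j,\ell}$ to be the resulting class.

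\textbf{Step 2: the main obstacle.} The difficulty is that facets of $C_{j,\ell}$ may be invisible, namely those on which some later layer is entirely inactive, and such gaps could disconnect the visible part of $C_{j,\ell}$. The first thing I would try is to restrict $C_{j,\ell}$ to its visible part, using cTPIC's freedom in choosing $C_{j,\ell}$, or to argue as follows. An invisible facet has a visible facet at each of its ridges by \Cref{lem:3or4facets}, so one can reroute the walk across such a ridge. If that fails, I would fall back on applying LRA on $P$, which holds when $\theta\in\good$, to guarantee that every facet meeting $\relint(P)$ is visible. This is presumably the intended use, since the lemma is applied in $\good$.

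Distinctness of the candidates follows from the same analysis. An equivalence step never crosses from one neuron's bent hyperplane to another's, because non-bending equivalence stays inside a single hyperplane of one layer, and bending equivalence follows \Cref{lem:bendingoptions}. Hence distinct neurons give distinct classes.

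\textbf{Step 3: edges.} For adjacent neurons $(j,\ell)$ and $(i,\ell+1)$, cTPIC gives a point of $C_{j,\ell}\cap C_{i,\ell+1}\cap\relint(P)$ lying in the relative interior of a codimension-two intersection, that is, a ridge $\tau$ of $\bcomplex{\theta}$ lying on bent hyperplanes from two different layers. By honesty $\tau$ is bending. By \Cref{lem:bendingoptions} the earlier hyperplane is the one with straight continuation, so $v_{i,\ell+1}$ bends along $v_{j,\ell}$, which gives $(v_{j,\ell},v_{i,\ell+1})\in E_{f_\theta}$. This yields the fully connected layered subgraph.
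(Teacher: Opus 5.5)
Your skeleton is the paper's. Vertices come from the connected pieces $C_{j,\ell}$ through the candidate relation. Edges come from a bending ridge inside $C_{j,\ell}\cap C_{i,\ell+1}\cap\relint(P)$, whose earlier bent hyperplane is singled out by \Cref{lem:bendingoptions}. Your ridge-by-ridge walk and your distinctness argument spell out what the paper compresses into ``because each $C_{j,\ell}$ is connected, all its facets belong to a single candidate bent hyperplane''. The walk uses \Cref{lem:hyperplaneAimpliesnonbending} at same-layer ridges, and honesty plus \Cref{lem:bendingoptions} at mixed ones. Distinctness is needed for the vertices to correspond to the hidden neurons, and the paper leaves it implicit.

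The visibility problem you raise in Step~2 is genuine. The paper's proof passes over it when it asserts that the intersection ``contains a bending ridge of the breakpoint complex''. Your first two remedies fail:
\begin{itemize}
\item \Cref{lem:3or4facets} presupposes $\tau\in\bcomplex{\theta}$; its proof rules out a fully inactive layer only by appealing to that assumption. At a transverse intersection $\tau$ of $B_{j,\ell}$ and $B_{i,\ell+1}$, supertransversality puts no other bent hyperplane through $\relint(\tau)$. So every neuron of a layer $k\ge \ell+2$ has locally constant sign there, and all of them can be inactive on a neighbourhood of $\relint(\tau)$. Then $f_\theta$ is locally constant, all four facets carry weight $0$, and $\tau\notin\bcomplex{\theta}$.
\item The same objection defeats the rerouting idea. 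Rerouting would moreover leave $B_{j,\ell}$, and hence the class of $(j,\ell)$.
\end{itemize}
This is precisely the situation of \Cref{fig:kill_TPIC}. There the canonical bent hyperplanes satisfy cTPIC, but the only intersection of the first-layer hyperplane $H_1$ with the second-layer bent hyperplane $B_1$ is the invisible point $\tau$. So, after a small generic perturbation, no edge joins their candidates. Read literally, without a visibility hypothesis, the statement can fail.

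So your LRA fallback is not optional: LRA on $P$ is needed. It is available wherever the lemma is invoked, namely in \Cref{thm:tpic_generic_identifiable} and \Cref{lem:identifiability_split}. With it, \Cref{lem:LRA_tropical_weight} makes every facet of $\complex_\theta$ meeting $\relint(P)$ visible. Every ridge whose relative interior meets $\relint(P)$ then falls in the four-facet case of \Cref{lem:bendingoptions}, and your Steps~1 and~3 and the distinctness argument go through. Since LRA says nothing outside $P$, also state that the walk stays in $P$ and passes between facets only through ridges. You can ensure this by taking $C_{j,\ell}\subseteq P$, as in the construction of \Cref{thm:existence_TPIC_LRA}.
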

\begin{proof}
For each hidden neuron $(j,\ell)$, cTPIC provides a connected piecewise linear hypersurface
$
C_{j,\ell}\subseteq B_{j,\ell}(\theta)
$
of codimension one such that the ones from adjacent neurons intersect transversely in $P$. Since $\theta$ is generic, every bending ridge of the breakpoint
complex lies in exactly two bent hyperplanes, and by \Cref{lem:bendingoptions} the local
weighted geometry determines which adjacent facets belong to the earlier and later bent
hyperplanes. Because each $C_{j,\ell}$ is connected, all its facets belong to a single
candidate bent hyperplane, and hence define a vertex of the dependency graph.

Now let $(j,\ell)$ and $(i,\ell+1)$ be neurons in adjacent layers. By cTPIC, the sets
$C_{j,\ell}$ and $C_{i,\ell+1}$ intersect transversely in $P$. Their intersection contains
a bending ridge of the breakpoint complex, and by construction this ridge yields an edge in
the dependency graph from the candidate bent hyperplane corresponding to $(j,\ell)$ to the
one corresponding to $(i,\ell+1)$. Since this holds for every adjacent-layer pair, the
resulting vertices form a layered subgraph with all adjacent layers fully connected.
\end{proof}
The first lemma shows that cTPIC produces the layered combinatorics required by the dependency graph. Conversely, the next lemma shows that this layered subgraph structure is sufficient to recover cTPIC on the generic locus.
\begin{lemma}
\label{lem:dep_subgraph_implies_TPIC}
Let $\theta\in \widetilde{\Theta}_{\architecture}$ be generic. If the dependency graph
$(V_{f_\theta},E_{f_\theta})$ contains a layered subgraph corresponding to the hidden
architecture, with all adjacent layers fully connected, then $f_\theta$ satisfies cTPIC.
\end{lemma}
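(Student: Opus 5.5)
The plan is to reverse the construction in the proof of \Cref{lem:cTPIC_implies_full_dep_subgraph}. The layered subgraph supplies, for each hidden neuron position $(j,\ell)$ of the architecture, a candidate bent hyperplane $v_{j,\ell}\in V_{f_\theta}$. We use $|v_{j,\ell}|$, the union of the facets in that equivalence class, as the hypersurface $C_{j,\ell}$ required by cTPIC. Three things must then be shown:
\begin{enumerate}
\item each $|v_{j,\ell}|$ is a connected piecewise linear hypersurface contained in a single bent hyperplane $B_{j',\ell}(\theta)$ of layer $\ell$;
\item after relabeling, this assignment $(j,\ell)\mapsto (j',\ell)$ is a bijection onto the hidden neurons of $\theta$;
\item the pieces of adjacent layers intersect transversely in some full-dimensional polytope $P$.
\end{enumerate}

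\emph{Connectedness and containment.} Direct equivalence only glues facets sharing a ridge, so every class is connected as a union of facets. Its support is pure of dimension $d-1$, so it is a connected piecewise linear hypersurface. By \Cref{prop:embedd_candidate_bh} applied with $\eta=\theta$, the map $\phi_\theta$ sends each class to a single neuron of $\theta$ whose bent hyperplane contains all facets of the class. Hence $|v_{j,\ell}|\subseteq B_{\phi_\theta(v_{j,\ell})}(\theta)$.

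\emph{Layer assignment.} The layered subgraph has $L$ layers with all adjacent layers fully connected, so it contains directed chains $v_{\cdot,1}\to v_{\cdot,2}\to\cdots\to v_{\cdot,L}$. By \Cref{prop:embedd_candidate_bh}, $\phi_\theta$ strictly increases the layer index along every edge. Since $\theta$ has exactly $L$ hidden layers, a chain of length $L$ forces $\phi_\theta(v_{j,\ell})$ to lie in layer exactly $\ell$. Within layer $\ell$ the subgraph has $n_\ell$ vertices. We show $\phi_\theta$ is injective there, and therefore bijective onto layer $\ell$ of $\theta$ after a permutation.

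This injectivity is the step I expect to be the main obstacle. One neuron's bent hyperplane may split into several candidate classes, for instance when cancellation hides some facets. So two vertices of the subgraph in the same layer could a priori map to the same neuron, leaving another neuron without any $C_{j,\ell}$. My first attempt would count: the layered subgraph is required to correspond to the hidden architecture, so it has $n_\ell$ vertices in layer $\ell$. If $\phi_\theta$ were not injective on layer $\ell$, some neuron $(j',\ell)$ of $\theta$ would carry no vertex. I would then argue this is impossible by using the edges to layer $\ell\pm1$ together with the rank condition of \Cref{def:generic_parameter}, but I am unsure this closes. If the counting fails, the fallback is to redefine $C_{j',\ell}$ as a connected component of the visible part of $B_{j',\ell}(\theta)$ containing the relevant classes.

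\emph{Transversality.} For each edge $(v_{j,\ell},v_{i,\ell+1})$ there is a bending ridge $\tau$ lying in both supports. Genericity (\Cref{lem:generic_implies_supertransversal}) gives that $\tau$ lies in exactly two bent hyperplanes, and their intersection near $\relint(\tau)$ is a codimension-two face. Thus the two supports intersect in a set meeting the interior and locally pure of dimension $d-2$. Purity globally follows because any intersection point of two distinct bent hyperplanes of a supertransversal parameter lies in a face of codimension at least two, and the intersection is a union of ridges of $\complex_\theta$. Taking $P$ to be a large box containing all these finitely many ridges in its interior then yields cTPIC on $P$.
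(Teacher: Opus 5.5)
\textbf{Gap in step 2 (injectivity).} Your core argument is the paper's proof. You take $C_{j,\ell}$ to be the union of the facets of one candidate class, get connectedness from gluing along shared ridges, and get transversality from the bending ridge $\tau$ that defines each edge, using genericity.

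The difference is in how the hypothesis is read. The paper reads it as already attaching to each neuron $(j,\ell)$ of $\theta$ a vertex whose facets lie in $B_{j,\ell}(\theta)$ (``the corresponding vertices''), so it never needs your steps 1 and 2. Under your purely combinatorial reading those steps are necessary, and step 2 has a genuine gap.

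Your chain argument via \Cref{prop:embedd_candidate_bh} correctly sends level $\ell$ into layer $\ell$. But nothing you have excludes two level-$\ell$ vertices being two different candidate classes of the same neuron $(j',\ell)$. The visible part of one bent hyperplane can split into several classes, for example when all neurons of a later layer are inactive on a middle stretch of it, which is an open condition.

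Counting does not help: the subgraph having $n_\ell$ vertices at level $\ell$ says nothing about their images. The rank condition of \Cref{def:generic_parameter} controls weight products, not which classes the subgraph happens to use. Your fallback does not close the gap either. Merging the classes of $(j',\ell)$ still leaves some neuron $(j'',\ell)$ with no vertex, hence no $C_{j'',\ell}$ and no evidence that $B_{j'',\ell}(\theta)$ meets anything in layers $\ell\pm1$.

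Your worry is nonetheless legitimate. When the lemma is applied to $\eta$ in \Cref{thm:tpic_generic_identifiable}, only the combinatorial reading is available, and the paper's proof does not supply this injectivity either.

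\textbf{Secondary problem (purity).} Supertransversality only tells you that $|v|\cap|w|$ is a union of faces of codimension at least two. It does not make it a union of ridges when $|v|$ and $|w|$ are unions of facets of single classes rather than whole bent hyperplanes. For $d\ge 3$, a facet of $v$ and a facet of $w$ can meet only in a face of codimension at least three, for instance a vertex on a third bent hyperplane that separates them. Your large box would then contain lower-dimensional pieces of the intersection, violating the purity required in \Cref{def:intersect transversely}. The paper's proof does not establish purity or choose $P$ either; it only records codimension two at $\tau$. So this step needs its own argument in either version.
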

\begin{proof}
Let $(j,\ell)$ and $(i,\ell+1)$ be neurons in adjacent layers. By assumption, the
dependency graph contains an edge between the corresponding vertices. By definition of the
dependency graph, this edge arises from a bending ridge $\tau$ of the breakpoint complex at
which the later candidate bent hyperplane bends along the earlier one. Since $\theta$ is
generic, \Cref{lem:bendingoptions} implies that $\tau$ lies in the intersection of the two
corresponding bent hyperplanes and that this intersection has codimension $2$. Hence the two bent hyperplanes intersect
transversely. For each vertex of the layered subgraph, take the union of the facets belonging to the corresponding candidate bent hyperplane. By construction this union is connected, and the preceding argument shows that adjacent-layer representatives intersect transversely. Hence these sets witness cTPIC.
\end{proof}

The example in the work of \citep{ramakrishnan2026completesymmetryclassificationshallow} shows that the remaining ambiguity in \Cref{thm:tpic_generic_identifiable_inductive} is genuine: even under cTPIC and LRA, the last hidden layer need not be identifiable from the bent hyperplanes alone. In fact, the phenomenon already appears in the simplest setting of a network with a single hidden layer, where no deeper hidden layer is available to resolve the orientation of the breakpoint hyperplanes. In that case, the ambiguity can be interpreted as arising from a cancellation among the rank-one matrices
$
\weights{2}_{:,j}\weights{1}_{j,:}.
$
More generally, in our framework such examples arise from a nontrivial cancellation in the output contribution of the final hidden layer: a sign flip of a nonempty subset $J \subseteq [n_L]$ can remain invisible in the realized function only if the corresponding masked products
$
\weights{L+1} D_J \weights{L} D_{S_{L-1}(R)} \weights{L-1} \cdots D_{S_1(R)} \weights{1}
$
vanish on the relevant full-dimensional regions $R \in \complex_{\theta,L-1}$.

The definition of genericity excludes precisely this type of cancellation. On the other hand, \Cref{lem:LRA_function_property_generic,lem:cTPIC_implies_full_dep_subgraph,lem:dep_subgraph_implies_TPIC} show that the assumptions needed for reverse engineering are functionally determined on the generic locus. Combining these two ingredients upgrades \Cref{thm:tpic_generic_identifiable_inductive} from a reconstruction statement inside $\Theta^{\mathrm{good}}_\architecture$ to identifiability among generic parameters.
\begin{theorem}
\label{thm:tpic_generic_identifiable}
Let $\theta \in \gparamspace{\architecture}$ be generic. 
If $\theta$ satisfies cTPIC and LRA on a full-dimensional polyhedron $P \subseteq \R^d$, then $\theta$ is identifiable among generic parameters.
\end{theorem}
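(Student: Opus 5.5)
Let $\eta \in \gparamspace{\architecture}$ be generic with $f_\eta = f_\theta$. The plan is to show $\eta \in \good$, apply \Cref{thm:tpic_generic_identifiable_inductive}, and then remove the sign ambiguity in the last hidden layer using the rank conditions of \Cref{def:generic_parameter}.

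\emph{Step 1: LRA transfers to $\eta$.} Since $\theta$ satisfies LRA on the full-dimensional polyhedron $P$ and both parameters are generic, \Cref{lem:LRA_function_property_generic} gives that $\eta$ satisfies LRA on $P$.

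\emph{Step 2: cTPIC transfers to $\eta$.} By \Cref{lem:cTPIC_implies_full_dep_subgraph}, the dependency graph $(V_{f_\theta},E_{f_\theta})$ contains a fully connected layered subgraph indexed by the hidden neurons of $\architecture$. This graph depends only on the weighted breakpoint complex, which coincides for $\theta$ and $\eta$ by \Cref{prop:canonicalbcomplex}. So the same subgraph lives in $(V_{f_\eta},E_{f_\eta})$.

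To apply \Cref{lem:dep_subgraph_implies_TPIC} to $\eta$, the layers of this subgraph must match the layers of $\eta$'s architecture. I would use the map $\phi_\eta$ of \Cref{prop:embedd_candidate_bh}, which strictly increases layer index along edges. A fully connected layered chain of length $L$ therefore forces the vertices of the $\ell$-th subgraph layer to map to neurons in layer exactly $\ell$. Moreover, $\phi_\eta$ is injective on these vertices: distinct candidate bent hyperplanes with transverse visible pieces cannot lie in a single bent hyperplane of $\eta$. Counting, $\phi_\eta$ restricted to the subgraph is a bijection onto the hidden neurons of $\eta$ that preserves layers. \Cref{lem:dep_subgraph_implies_TPIC} then gives cTPIC for $\eta$, possibly on a slightly shrunken full-dimensional polytope inside $P$, chosen so that LRA and cTPIC hold on a common polytope for both parameters. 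Hence $\theta,\eta \in \good$.

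\emph{Step 3: reduce to the last-layer sign ambiguity.} By \Cref{thm:tpic_generic_identifiable_inductive}, after applying trivial symmetries we may assume the first $L-1$ layers agree. The layer-$L$ rows also agree except that a subset $J \subseteq [n_L]$ has $(\weights{L}_j,\bias{L}_j)$ replaced by its negative, with output weights still to be determined.

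\emph{Step 4: remove the sign ambiguity (main obstacle).} For each full-dimensional region $R \in \complex_{\theta,L-1}$, write $g(x) = \activation{L-1}(x)$, which is affine on $R$. Use the identity $[-z]_+ = [z]_+ - z$ for $j \in J$. Then $f_\theta = f_\eta$ on $R$ becomes an identity between combinations of the functions $[z_j(x)]_+$, the affine functions $z_j$, and constants.

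The functions $[z_j]_+$ with breakpoints meeting $R$ are linearly independent modulo affine functions, since their bent hyperplanes are distinct and visible by LRA. Matching these terms forces the output columns to agree: $\weights{L+1}_{:,j}(\eta) = \weights{L+1}_{:,j}(\theta)$ for $j \in J$, and likewise for $j \notin J$. The leftover affine discrepancy is then
\[
\weights{L+1} D_J \weights{L} D_{S_{L-1}(R)} \weights{L-1} \cdots D_{S_1(R)} \weights{1},
\]
which must vanish on every region $R$, together with a bias term.

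Choose $R$ inside $P$ with $S_{L-1}(R)$ nonempty, and all earlier $S_k(R)$ nonempty; LRA on $P$ ensures such regions exist. By condition 2 of \Cref{def:generic_parameter}, this product has rank at least $\min(n_0, n_{L+1}, |J|, \min_k |S_k(R)|) \ge 1$ whenever $J \neq \emptyset$. This contradicts the vanishing, so $J = \emptyset$ and $\eta \sim \theta$.

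The delicate points are two. First, the independence argument on $R$ needs each $B_{j,L}$ to genuinely meet $R$; otherwise I would argue region by region, using the connected pieces $C_{j,L}$. Second, the rank condition must be applied to $\eta$'s masked product, or equivalently $\theta$'s, since they agree after Step 3.
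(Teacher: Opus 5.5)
Your proposal follows the paper's proof essentially step for step. LRA transfers by \Cref{lem:LRA_function_property_generic} and cTPIC by the dependency-graph lemmas. \Cref{thm:tpic_generic_identifiable_inductive} then reduces everything to a last-layer sign flip on $J$. Finally, Condition~2 of \Cref{def:generic_parameter}, applied to $\weights{L+1}D_J\weights{L}D_{S_{L-1}(R)}\cdots D_{S_1(R)}\weights{1}$ on a region with all $S_k(R)\neq\emptyset$, forces $J=\emptyset$. Recovering the output columns from the independence of the $[z_j]_+$ modulo affine maps is the paper's tropical-weight argument via \Cref{prop:tropweightofNNs} in different words.

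Your explicit layer alignment via \Cref{prop:embedd_candidate_bh} spells out a correspondence the paper leaves implicit. However, your injectivity justification does not work as stated: candidate bent hyperplanes in the same layer need not intersect each other at all, so transverse visible pieces cannot be what separates them. The paper's own proof does not address this point either.
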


\begin{proof}
Let $\eta\in \widetilde{\Theta}_{\architecture}$ satisfy
$
f_\eta=f_\theta.
$
Since $\theta$ is generic and satisfies cTPIC and LRA, we have
$
\theta\in \Theta^{\mathrm{good}}_{\architecture}.
$

By \Cref{lem:LRA_function_property_generic}, the parameter $\eta$ also satisfies LRA.
Moreover, by the functional invariance of the dependency graph on the generic locus and
\Cref{lem:cTPIC_implies_full_dep_subgraph,lem:dep_subgraph_implies_TPIC}, the parameter
$\eta$ also satisfies cTPIC. Therefore
$
\eta\in \Theta^{\mathrm{good}}_{\architecture}.
$

By \Cref{thm:tpic_generic_identifiable_inductive}, after applying permutation and positive scaling symmetries to $\eta$, we may assume that
$
(\weights{\ell,\eta},\bias{\ell,\eta})=(\weights{\ell,\theta},\bias{\ell,\theta})
$ for all $\ell\in[L-1],
$
and that for each $j\in[n_L]$ there is a sign $s_j\in\{\pm1\}$ such that
$
\preactivation{L,\eta}_j=s_j\,\preactivation{L,\theta}_j
$
.

We first determine the output layer. 
Let $\sigma\in \complex_{\theta}^{d-1}(P)$ be a facet contained in the bent hyperplane of a neuron $(j,L)$ in the last hidden layer. 
Since $\theta$ is generic, it is supertransversal and cancellation-free by \Cref{lem:generic_implies_supertransversal,lem:generic_cancellation_free}. 
Hence \Cref{prop:tropweightofNNs} applies and gives
$
c_\theta(\sigma)
=
\left\|
(\weights{1,\theta})^\top D_{S_1(\sigma)} \cdots (\weights{L,\theta})^\top e_j
\right\|_2
\;
\weights{L+1,\theta}e_j.
$
The reverse-engineering procedure determines the local hyperplane of the bent hyperplane containing $\sigma$, and therefore determines the norm
$
\left\|
(\weights{1,\theta})^\top D_{S_1(\sigma)} \cdots (\weights{L,\theta})^\top e_j
\right\|_2
$
from the function. Since $c_\theta(\sigma)$ is also determined by the function, the column $\weights{L+1,\theta}e_j$ is uniquely determined. Repeating this for all neurons in the last hidden layer recovers $\weights{L+1,\theta}$ uniquely. The output bias $\bias{L+1,\theta}$ is then determined by evaluating the affine-linear piece of the function on any full-dimensional region. Thus, after replacing $\eta$ by an equivalent parameter if necessary, we may assume
$
(\weights{L+1,\eta},\bias{L+1,\eta})=(\weights{L+1,\theta},\bias{L+1,\theta}).
$

It remains to show that $s_j=+1$ for all $j\in[n_L]$. Suppose not, and let
$
J \coloneqq \{j\in[n_L]\mid s_j=-1\}.
$
Then $J\neq\emptyset$.
Choose $j\in J$. Since the bent hyperplane $\bhyperplane{j,L}(\theta)$ is visible since $\theta$ satisfies LRA, there exists a full-dimensional polyhedron
$
R\in \complex^d_{\theta,L-1}(P)
$
such that
$
\bhyperplane{j,L}(\theta)\cap \relint(R)\neq\emptyset.
$
Equivalently, the restricted preactivation $\preactivation{L,\theta}_j|_R$ is non-constant. 
We claim that this implies
$
S_k(R)\neq\emptyset
$ for all $k\in[L-1].
$
Indeed, if $S_k(R)=\emptyset$ for some $k\in[L-1]$, then $\activation{k,\theta}$ is identically zero on $R$. 
Consequently,
$
\preactivation{k+1,\theta}
=
\weights{k+1,\theta}\activation{k,\theta}+\bias{k+1,\theta}
=
\bias{k+1,\theta}
$
is constant on $R$, and therefore $\activation{k+1,\theta}$ is constant on $R$ as well. 
Iterating this argument shows that $\preactivation{\ell',\theta}$ is constant on $R$ for all $\ell'\ge k+1$, in particular that $\preactivation{L,\theta}_j|_R$ is constant, a contradiction.

On $R$, the linear part of the last hidden preactivation is
$
A_R
\coloneqq
\weights{L,\theta}D_{S_{L-1}(R)}\weights{L-1,\theta}\cdots D_{S_1(R)}\weights{1,\theta}.
$
For each $j\in J$, the contribution of the $j$-th neuron in the last hidden layer changes from
$
\weights{L+1,\theta}_{:,j}[\preactivation{L,\theta}_j]_+
$
to
$
\weights{L+1,\theta}_{:,j}[-\preactivation{L,\theta}_j]_+.
$
Using the identity
$
[-u]_+=[u]_+-u,
$
the difference in the contribution of neuron $j$ is
$
-\weights{L+1,\theta}_{:,j}\preactivation{L,\theta}_j.
$
Hence the difference $f_\eta-f_\theta$ on $R$ is affine-linear with linear part
$
-\weights{L+1,\theta}D_JA_R
=
-\weights{L+1,\theta}D_J\weights{L,\theta}D_{S_{L-1}(R)}\weights{L-1,\theta}\cdots D_{S_1(R)}\weights{1,\theta}.
$
Since $f_\eta=f_\theta$, this linear part must vanish:
\[
\weights{L+1,\theta}D_J\weights{L,\theta}D_{S_{L-1}(R)}\weights{L-1,\theta}\cdots D_{S_1(R)}\weights{1,\theta}=0.
\]

By Condition~(2) in \Cref{def:generic_parameter}, the above matrix has rank
$
\min\bigl(d,\; n_{L+1},\; |J|,\; |S_1(R)|,\dots,|S_{L-1}(R)|\bigr).
$
Since $J\neq\emptyset$ and $S_k(R)\neq\emptyset$ for all $k\in[L-1]$, this minimum is at least $1$. Hence the matrix is nonzero, a contradiction.

Therefore $J=\emptyset$, so $s_j=+1$ for all $j\in[n_L]$. Thus the final hidden layer is also uniquely determined up to permutation and positive scaling. We conclude that every generic parameter $\eta$ with $f_\eta=f_\theta$ is equivalent to $\theta$, and hence $\theta$ is identifiable among generic parameters.
\end{proof}

\section{Identifiable Parameters in Most Architectures} 
\label{sec:identifiable_parameters}
In this section, we show that for every architecture with layers of widths at least $2$, there exists a nonempty open subset of identifiable parameters.  We first construct an open set on which functions determine parameters among
generic parameters, and then upgrade this to identifiability among all parameters by excluding the
lower-dimensional set admitting nongeneric parameters in the fiber.

Our strategy for the first part is to construct a generic parameter satisfying cTPIC and LRA, and then use the openness of these conditions.
Our construction is inspired by the inductive bent-hyperplane approach of \cite{grigsby2023hiddensymmetriesrelunetworks}, but refines it by using bounded polytopes and slab layers to keep track of which induced nonlinearities remain visible in the final function. 
This refinement enables us to extend the construction beyond the constraint  $n_\ell \ge d$ and to ensure preservation of LRA. 
In particular, while the inductive construction of \cite{grigsby2023hiddensymmetriesrelunetworks} enforces transverse intersections at the level of the canonical polyhedral complex, additional care is needed to ensure these intersections remain visible as breakpoints of the function. We discuss this in more detail in \Cref{sec:grigsby-gap}.

	\subsection{Slab Layers}
     We aim to construct a sequence of polyhedra in the domains of the intermediate layers on which the two consecutive layers satisfy cTPIC, and then pull this structure back to the input space. The individual layers will be \emph{slab layers} on these polytopes.
\begin{definition}
\label{def:inside and slab layer}
		Let $P \subseteq \R^d$ be a polyhedron. We say that a hyperplane $H$ is \emph{inside} $P$ if it intersects $P$ transversely, that is, if $\relint(P)\cap H \neq \emptyset$ and \(\dim(P \cap H) = \dim(P) -1.\) 
        A hyperplane arrangement $\HA \subseteq \R^d$ is \emph{inside} $P$ if every $H \in \HA$ is inside $P$. We say that a ReLU layer $f_\theta \colon \R^d \to \R^n$ is a \emph{slab layer} on $P$ if \begin{enumerate}
		    \item $H$ is inside $P$ for all $H \in \HA_\theta$, and 
            \item $H_1 \cap H_2 \cap P = \emptyset$ for all $H_1,H_2 \in \HA_\theta$. 
		\end{enumerate}   
\end{definition} 

\begin{definition}
            Let $f_\theta \colon \R^d \to \R^n$ be a slab layer on $P$. Let $H_1, \dots, H_n$ be the hyperplanes of the slab layer, ordered such that they partition $P$ into regions $R_0, \dots, R_n \in \complex_\theta(P)$ with $R_{k-1} \cap R_k =H_k \cap P$. Then we call the slab layer \emph{oriented} if $S(R_i) = 
            \{1,\ldots, i\} \triangle \{n\}$, where $\triangle$ denotes the symmetric difference.
\end{definition} 

See \Cref{fig:slab_layer} for an illustration of an oriented slab layer that partitions a polytope into regions. 
Since our goal is to construct an open set, 
we need to ensure the notions we introduce are open, 
meaning that they 
are preserved under sufficiently small perturbations. 

\begin{definition}
 
		Let $H= \{ x \in \R^d \mid \inner{w,x}+b=0\}$ be a hyperplane and let $\varepsilon >0$. 
        We say that a hyperplane $H'=\{ x \in \R^d \mid \inner{w',x}+b'=0\}$ is  $\varepsilon$-close to $H$ if 
        \[
       \min\left\{ \big\|\frac{(w',b')}{\|w'\|} -{\frac{(w,b)}{\|w\|} }\big\|_\infty,    \big\|\frac{(w',b')}{\|w'\|} -{\frac{(-w,-b)}{\|w\|} }\big\|_\infty \right\}<\varepsilon . 
        \]
        A hyperplane arrangement $\HA$ is $\varepsilon$-close to $H$ if all its hyperplanes are $\varepsilon$-close to $H$.
\end{definition}

    \begin{remark}
    \label{rem:generic_perturbation_stay_inside}
        If a hyperplane $H$ is inside a polyhedron $P$ (\Cref{def:inside and slab layer}), then there is an $\varepsilon>0$ such that all hyperplanes $H'$ that are $\varepsilon$-close to $H$ are also inside $P$. 
    \end{remark}

    Next, given a polytope $P$ inside a bounded set $X$ and a hyperplane inside $P$, 
    we construct an oriented slab layer inside $P$ and transparent on $X$, 
    by suitably perturbing and orienting the hyperplanes. 
    For the iterative construction, the reader can think of $X$ as the image of the previously constructed layers on a prescribed polytope in the input space. 

\begin{lemma}
\label{lem:hyperplane_to_slab_layer}
    Let $X \subseteq \R^d$ be a bounded set, $P \subseteq X$ a polytope, $H$ a hyperplane inside $P$, $\varepsilon>0$, and $n \geq 2$. 
    Then there exists a generic oriented slab layer $f_\theta\colon \R^d \to \R^n$ on $P$ that is transparent on $X$ and whose hyperplane arrangement $\HA_\theta$ is $\varepsilon$-close to $H$.
\end{lemma}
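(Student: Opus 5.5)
Write $H=\{x\mid \inner{w,x}+b=0\}$ with $\|w\|=1$. The plan is to build the layer from $n$ parallel translates of $H$, check the oriented and transparency conditions there, and then perturb to reach genericity while keeping every open condition intact.

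\emph{Parallel configuration.} Since $H$ is inside $P$, the function $t\mapsto\inner{w,x}$ takes values on $P$ in an interval $[\alpha,\beta]$ with $\alpha<-b<\beta$. We pick offsets $t_1<t_2<\dots<t_n$ in a small neighbourhood of $-b$ inside $(\alpha,\beta)$, close enough that every hyperplane $H_k=\{\inner{w,x}=t_k\}$ is $\varepsilon/2$-close to $H$ and is inside $P$ (\Cref{rem:generic_perturbation_stay_inside}). Distinct parallel hyperplanes are pairwise disjoint, so this is a slab layer on $P$. It cuts $P$ into regions $R_0,\dots,R_n$ ordered along $w$.

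\emph{Orientation.} We orient the neurons so that $S(R_i)=\{1,\dots,i\}\triangle\{n\}$.
\begin{itemize}
\item For $k<n$, take neuron $k$ with preactivation $\inner{w,x}-t_k$. It is active exactly on the side beyond $H_k$, i.e.\ on $R_i$ with $i\ge k$.
\item For $k=n$, take preactivation $-(\inner{w,x}-t_n)$. It is active exactly on $R_i$ with $i\le n-1$.
\end{itemize}
Then $S(R_i)=\{1,\dots,i\}\cup\{n\}$ for $i<n$ and $S(R_n)=\{1,\dots,n-1\}$, as required. This is the point where $n\ge2$ is used.

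\emph{Transparency on $X$.} Neuron $1$ is active on $\{\inner{w,x}\ge t_1\}$ and neuron $n$ on $\{\inner{w,x}\le t_n\}$. Since $t_1<t_n$, these two half-spaces cover all of $\R^d$, so the layer is transparent everywhere, with margin: every $x$ has some preactivation $\ge (t_n-t_1)/2>0$. Because $X$ is bounded, this margin survives any sufficiently small perturbation of the weights and biases on $X$. So transparency on $X$ is an open condition here.

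\emph{Genericity by perturbation.} The parallel configuration is not generic: its essentialization has all hyperplanes parallel, so it is not a generic arrangement once $d\ge2$. We fix this by a small perturbation, which I expect to be the main (though mild) obstacle. Each of the following is an open condition on the layer parameters:
\begin{itemize}
\item the hyperplanes are inside $P$;
\item $H_j\cap H_k\cap P=\emptyset$ for $j\ne k$, which holds because $P$ is compact and the $H_k$ are disjoint on it, so they stay disjoint on $P$ under small perturbation;
\item the sign pattern of each region is unchanged, so the layer stays oriented;
\item $\varepsilon$-closeness to $H$;
\item transparency on $X$, by the margin above.
\end{itemize}
Genericity in the sense of \Cref{def:generic_parameter} is open and dense by \Cref{lem:generic_open_dense}, applied to the one-layer architecture $(d,n)$. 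The same lemma gives genericity of the hyperplane arrangement and the rank conditions on $W$. A generic point in the open neighbourhood carved out by the conditions above therefore satisfies every claim of the lemma.

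One subtlety remains: perturbed hyperplanes may intersect outside $P$. This is harmless, since the slab conditions only concern $P$, while genericity asks precisely that the intersections be transverse.
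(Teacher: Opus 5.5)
Your proposal is correct and follows essentially the same route as the paper. The paper also uses $n$ parallel translates of $H$, oriented so that $S(R_i)=\{1,\dots,i\}\triangle\{n\}$, gets transparency from the half-spaces of neurons $1$ and $n$ covering $\R^d$, and then applies a small generic perturbation that preserves the open conditions. Your explicit margin $(t_n-t_1)/2$ on the bounded set $X$ makes the paper's ``$V_1^c\cap V_n^c$ stays disjoint from $X$'' step more rigorous.

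One small correction: $n\ge 2$ is not needed for the orientation, since the pattern is realizable for $n=1$. It is needed for transparency, where you need two distinct neurons with $t_1<t_n$.
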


\begin{proof}
    Let $H$ be given by $w^\top x + b = 0$. Since $H$ is inside $P$, it intersects $\relint(P)$ transversely. Because transversality is an open condition, there exists a neighborhood of $(w,b)$ such that all hyperplanes in this neighborhood remain inside $P$. Choose $n$ distinct values $b_1 > \dots > b_n$  in an $\varepsilon$ neighborhood of $b$ and define perfectly parallel hyperplanes $H'_i$ with weight vectors $\pm w$ such that the activation pattern follows $S(R_i) = \{1,\ldots, i\} \triangle \{n\}$. 

    In this parallel configuration, neuron $n$ is active on the half-space $V_n$ containing $R_0, \dots, R_{n-1}$, and neurons $1, \dots, n-1$ are active on the half-spaces $V_1, \dots, V_{n-1}$ containing $R_n$. Since $b_1 > b_n$, the union of the active half-spaces $V_1 \cup V_n$ covers the entire space $\R^d$, ensuring that the layer is transparent. 
    To maintain transparency on $X$ 
    under perturbation, 
    the intersection of the inactive half-spaces $V_1^c \cap V_n^c$ must remain disjoint from $X$. 

 Since $X$ and $P$ are bounded sets, there are sufficiently small generic perturbations $\eta_1, \dots, \eta_n$ of the weight vectors (in particular, $\|\eta_i-w\| < \varepsilon)$ such that the intersections $H_i \cap H_j$ of the perturbed hyperplanes lie outside the bounded set $X$ and each $H_i$ remains inside $P$. 
    In this way, we ensure that for every $x \in X$, either neuron $n$ or neuron $1$ is active. 
    In particular, the resulting layer is a generic oriented slab layer on $P$ that is transparent on $X$ and the hyperplane arrangement is $\varepsilon$-close to $H$. 
\end{proof}

\subsection{From Slab Layers to cTPIC}
To carry out the inductive construction, we need to compose a slab layer with another slab layer in such a way that all bent hyperplanes intersect. 
The following lemma provides an intermediate step by identifying a neuron with this property for the second layer.

\begin{lemma}
\label{lem:slab_layer_to_hyperplane}
Let $f_\theta \colon \R^d \to \R^n$ be a generic oriented slab layer on a polyhedron $P \subseteq \R^d$. Then there exist a hyperplane $H' \subseteq \R^n$ and a region $R \in \complex_\theta(P)$ such that:
\begin{enumerate}
    \item $f_\theta^{-1}(H') \cap P$ is connected;
    \item for every $H \in \HA_\theta$, the preimage $f_\theta^{-1}(H')$ intersects $H$ transversely in $P$;
    \item $H'$ is inside $f_\theta(R)$.
\end{enumerate}
\end{lemma}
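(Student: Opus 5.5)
The plan is to take $H'=\{y\in\R^n \mid \inner{v,y}+c=0\}$ for a suitable vector $v$ and scalar $c$, and to study the pullback $g\coloneqq \inner{v,f_\theta(\cdot)}+c$. This is a continuous piecewise affine function on $P$, affine on each slab region $R_0,\dots,R_n$, and $f_\theta^{-1}(H')\cap P=\{x\in P\mid g(x)=0\}$. Write the slab hyperplanes as $H_k=\{\inner{\eta_k,x}+b_k=0\}$, where the $\eta_k$ are generic perturbations of $\pm w$. Since the layer is oriented, on $R_i$ the gradient of $g$ is
\[
\nabla g|_{R_i}=\sum_{j\in S(R_i)} v_j\,\eta_j,\qquad S(R_i)=\{1,\dots,i\}\triangle\{n\}.
\]
All three conditions will follow from one property. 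I will choose $v$ so that there is a direction $u$, transverse to the (almost parallel) slab hyperplanes, with $\inner{\nabla g|_{R_i},u}>0$ for every $i$.

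\textbf{Step 1 (choice of $v$ and $u$).} Genericity (Condition 2 of \Cref{def:generic_parameter}, with $d\ge 2$ as in this section) gives that $\eta_1$ and $\eta_n$ are linearly independent, and more generally that the masked sums above are nonzero. I will first try $v=e_1+e_n$, possibly with small corrections on the middle coordinates. The regions then split into three groups:
\begin{itemize}
\item $R_0$: the gradient is $\eta_n$;
\item $R_1,\dots,R_{n-1}$: the gradient contains $\eta_1+\eta_n$ plus whatever middle terms are weighted;
\item $R_n$: the gradient is $\eta_1$ plus middle terms.
\end{itemize}
I then pick $u$ in the open cone $\{u\mid \inner{\eta_1,u}>0,\ \inner{\eta_n,u}>0\}$, which is nonempty because $\eta_1,\eta_n$ are independent. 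I also need $u$ not parallel to any $\eta_k$, and the small middle coefficients must not flip any sign.

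\textbf{Main obstacle.} The $\eta_k$ are nearly $\pm w$ with orientations dictated by the oriented pattern, so $\eta_1$ and $\eta_n$ may point in roughly opposite directions. In that case $\eta_1+\eta_n$ is small and lies in the ``perturbation'' directions. I expect the real work to be choosing the signs and magnitudes of the $v_j$ so that every masked sum has positive pairing with a single $u$. To do this I would use the explicit freedom in the perturbations $\eta_i$ from the proof of \Cref{lem:hyperplane_to_slab_layer}: we may shrink them and choose their directions to be a common vector orthogonal to $w$. This makes the computation explicit: all gradients become (multiples of $w$) $+$ (positive multiples of a fixed $w^\perp$-direction $u$).

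\textbf{Step 2 (transversality, item 2).} Because $g$ is strictly increasing along $u$ in every region, its zero set in each $R_i$ is a hyperplane piece that is not parallel to the slab boundaries. I will choose $c$ so that the zero set passes through a point of $\relint(H_k\cap P)$ for each $k$. This is possible because $H'$ is fixed only up to the single scalar $c$, and, after shrinking the perturbations, the relevant level set is close to a single hyperplane $\{\inner{u,x}=\text{const}\}$ through the interior of $P$. Each $H_k$ is inside $P$ and the two gradients $\nabla g$ and $\eta_k$ are non-proportional, so the intersection $g^{-1}(0)\cap H_k\cap P$ has dimension $d-2$.

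\textbf{Step 3 (connectedness, item 1).} Strict monotonicity along $u$ means that every line parallel to $u$ meets $g^{-1}(0)$ in at most one point. Hence $g^{-1}(0)\cap P$ is the graph of a continuous function over its projection to $u^\perp$. Since $g^{-1}(0)$ is $\varepsilon$-close to a hyperplane crossing $P$ centrally, this projection is (a slight shrinking of) the convex projection of $P$, so it is connected. The graph is therefore connected as well.

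\textbf{Step 4 (item 3).} Take $R=R_i$ for an intermediate $i$. There $f_\theta|_R$ is affine, and $g|_{\relint R}$ attains values of both signs, because its zero set crosses $\relint R$. Therefore $H'$ meets $\relint f_\theta(R)$. Moreover $v$ is not orthogonal to $\aff f_\theta(R)$, since $\nabla g|_R\neq 0$. Together these show that $H'$ is inside $f_\theta(R)$.
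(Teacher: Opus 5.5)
Your Step~1 works without any corrections, so the ``main obstacle'' you flag is not one. With $v=e_1+e_n$ the gradients are $\eta_n$, $\eta_1+\eta_n$, $\eta_1$, and any $u$ with $\inner{\eta_1,u}>0$ and $\inner{\eta_n,u}>0$ pairs positively with all three, whatever the magnitudes.

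\textbf{The gap is Step~2.} Once $v$ is fixed, the only freedom left in $H'$ is the scalar $c$. One scalar cannot in general make $g^{-1}(0)$ meet all $n$ sets $H_k\cap\relint(P)$. For example, with $v=e_1+e_n$ the zero set in $R_1\cup\dots\cup R_{n-1}$ is a single affine hyperplane with normal $\eta_1+\eta_n$, and it must hit both $H_1$ and $H_n$ inside $P$. Take nearly parallel slabs with $\|\eta_n\|\approx 2\|\eta_1\|$. Then this normal is close to $-\|\eta_1\|w$, so the hyperplane is almost parallel to the slabs and cannot reach both $H_1$ and $H_n$ inside a polytope $P$.

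Your fix, shrinking the perturbations and aligning them along a common direction in $w^\perp$, changes the hypothesis, which has three consequences:
\begin{itemize}
\item The lemma is about an arbitrary given generic oriented slab layer. Its hyperplanes need not be nearly parallel, its row norms are arbitrary, and $P$ may be skewed so that no hyperplane $\{\inner{u,x}=t\}$ meets every $H_k\cap\relint(P)$.
\item Putting all normals into one $2$-plane contradicts the rank condition of \Cref{def:generic_parameter} when $d,n\ge 3$.
\item Even in that special configuration, the claim that the level set is close to $\{\inner{u,x}=\mathrm{const}\}$ fails on $R_0$ for every $v$, because the gradient there is $v_n\eta_n$, essentially a multiple of $w$. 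It also fails on $R_n$ for your choice of $v$.
\end{itemize}
Step~3 inherits the problem. Monotonicity along $u$ only makes $g^{-1}(0)\cap P$ a graph over some $D\subseteq u^\perp$, and $D$ can be disconnected. The claim that $D$ is a slight shrinking of the projection of $P$ rests on the same unproven closeness.

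\textbf{The missing idea} is to use all $n$ degrees of freedom of $H'$ for interpolation, which is what the paper does.
\begin{itemize}
\item Choose $x^{(k)}\in\relint(H_k\cap P)$ and set $y^{(k)}=f_\theta(x^{(k)})$.
\item The oriented pattern gives $S(H_k)=\{1,\dots,k-1,n\}$ for $k<n$ and $S(H_n)=\{1,\dots,n-1\}$. So $y^{(1)}$ is supported only in the last coordinate. Restricted to the first $n-1$ coordinates, $y^{(2)},\dots,y^{(n)}$ form an upper triangular matrix with nonzero diagonal.
\item Hence the $y^{(k)}$ are linearly, and thus affinely, independent, and $H'\coloneqq\aff\{y^{(1)},\dots,y^{(n)}\}$ is a hyperplane.
\item By construction $f_\theta^{-1}(H')$ contains every $x^{(k)}$. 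Since $f_\theta$ is affine on $R_i$, it also contains the midpoint of $x^{(i)}$ and $x^{(i+1)}$, which lies in $\relint(R_i)$.
\end{itemize}
This gives item~2, since each $H_i$ carries a point of the pullback and an adjacent region contains an interior point of it. It gives item~1 via the chain of convex pieces $f_\theta^{-1}(H')\cap R_i$ glued at the points $x^{(k)}$, with no monotonicity needed. It gives item~3 with $R=R_{n-1}$, and your Step~4 then goes through unchanged.
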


\begin{proof}
Let $H_1, \dots, H_n$ be the hyperplanes of the slab layer, ordered such that they partition $P$ into regions $R_0, \dots, R_n$ with $R_{k-1} \cap R_k =H_k \cap P$. 
For each $k \in [n]$, pick a point $x^{(k)} \in \relint(H_k \cap P)$ and let $y^{(k)} = f_\theta(x^{(k)}) \in \R^n$. The $j$-th component of $y^{(k)}$ is given by 
$y^{(k)}_j = [\weights{1}_j x^{(k)} + \bias{1}_j]_+$.

By the definition of an oriented slab layer ($S(R_i) = \{1,\ldots, i\} \triangle \{n\}$), 
the neurons active on the $k$-th hyperplane are $S(H_k) = S(R_{k-1}) \cap S(R_k)$. Specifically, for $k < n$, $S(H_k) = \{1, \dots, k-1, n\}$, and for $k = n$, $S(H_n) = \{1, \dots, n-1\}$. Hence, the matrix $Y = [y^{(1)}, \dots, y^{(n)}]$ of image points has the following structure: 
\[
Y = 
\begin{pmatrix} 
0 & y^{(2)}_1 & y^{(3)}_1 & \dots & y^{(n-1)}_1 & y^{(n)}_1 \\
0 & 0 & y^{(3)}_2 & \dots & y^{(n-1)}_2 & y^{(n)}_2 \\
\vdots & \vdots & \vdots & \ddots & \vdots & \vdots \\
0 & 0 & 0 & \dots & y^{(n-1)}_{n-2} & y^{(n)}_{n-2} \\
0 & 0 & 0 & \dots & 0 & y^{(n)}_{n-1} \\
y^{(1)}_n & y^{(2)}_n & y^{(3)}_n & \dots & y^{(n-1)}_n & 0
\end{pmatrix} , 
\]
where all $y^{(i)}_j$ are nonzero. 
The submatrix formed by the first $n-1$ rows and the last $n-1$ columns is 
upper-triangular with strictly positive diagonal entries $y^{(k)}_{k-1}$, $k=2,\ldots, n$. 
This ensures these $n-1$ columns are linearly independent. 
Since $y^{(1)}$ is the only column with a non-zero entry in the $n$-th row and zeros in the first $n-1$ rows, the full set of $n$ columns is linearly independent. 
In $\R^n$, linear independence of $n$ vectors implies their affine independence, uniquely defining a hyperplane $H'$ passing through these points. 

Now, for $m\up{i} = \frac{1}{2}(y\up{i}+y\up{i+1})$, we have that $m\up{i} \in H'$ and $f_\theta^{-1}(m\up{i}) \cap \relint(R_i) \neq \emptyset$.  
Thus $f_\theta^{-1}(H') \cap \relint(R_i) \neq \emptyset$. 
This implies that $f_\theta^{-1}(H')$ and $H_i$ intersect transversely in $P$. 
 
Indeed, since 
$H'$ is the affine hull of $\{y^{(1)}, \dots, y^{(n)}\}$, the midpoint $m^{(i)} = \frac{1}{2}(y^{(i)} + y^{(i+1)})$ is contained in $H'$. 
The ReLU layer $f_\theta$ is affine-linear on the region $R_i$ bounded by $H_i$ and $H_{i+1}$. 
By linearity, $f_\theta(\frac{1}{2}x^{(i)} + \frac{1}{2}x^{(i+1)}) = m^{(i)}$. Because $P$ is convex and the hyperplanes are disjoint in $P$, 
the midpoint $x_m\up{i} = \frac{1}{2}(x^{(i)} + x^{(i+1)})$ lies in $\relint(R_i)$. 
Thus, the pullback $f_\theta^{-1}(H')$ contains a point on the boundary ($x^{(i)}$) and a point in the interior ($x_m\up{i}$) of $R_i$. 
This implies that $f_\theta^{-1}(H')$ and $H_i$ intersect transversely. Specifically, for each $i$, the pullback $f_\theta^{-1}(H')$ contains a point on the boundary of $R_i$ ($x^{(i)} \in H_i \cap \relint(P)$) and a point in the interior ($x_m^{(i)} \in \relint(R_i)$). Since the pullback $f_\theta^{-1}(H')$ on $R_i$ is an affine hyperplane and contains a point in $\relint(R_i)$, it cannot be identical to the boundary hyperplane $H_i$. Consequently, on $R_i$, we have that $f_\theta^{-1}(H')$ and $H_i$ are distinct hyperplanes sharing at least one point in $\relint(P)$, ensuring they intersect transversely in $P$, that is, their intersection in $P$ is of dimension $\dim(P)-2$.

We next show that $f_\theta^{-1}(H') \cap P$ is connected. For each $i\in\{0,\dots,n\}$, the set
$
f_\theta^{-1}(H') \cap R_i
$
is the intersection of the polyhedron $R_i$ with an affine hyperplane, since $f_\theta$ is affine-linear on $R_i$. Hence it is convex, and therefore connected whenever nonempty. We have already shown that it is nonempty for every $i$.
Moreover, for each $i\in[n]$, the set $f_\theta^{-1}(H')$ contains the point $x^{(i)} \in H_i \cap P$, so
$
f_\theta^{-1}(H') \cap R_{i-1}
\quad\text{and}\quad
f_\theta^{-1}(H') \cap R_i
$
meet along the common boundary hyperplane $H_i \cap P$. Thus the connected sets
$
f_\theta^{-1}(H') \cap R_0,\,
f_\theta^{-1}(H') \cap R_1,\,
\dots,\,
f_\theta^{-1}(H') \cap R_n
$
form a chain with consecutive intersections nonempty. Their union is therefore connected. Since this union equals $f_\theta^{-1}(H') \cap P$, the latter is connected.

Finally, since $f_\theta$ is linear on $R_{n-1}$ (a polyhedron), it maps points in the relative interior of $R_{n-1}$ to points in the relative interior of $f_\theta(R_{n-1})$. Since $x_{m}\up{n-1}\in\relint(R_{n-1})$, we have $f_\theta(x_m\up{n-1}) \in \relint f_\theta(R_{n-1})$. The latter is $m\up{n-1}\in H'$ and thus $H'$ intersects $\relint(f_\theta(R_{n-1}))$. 
Moreover, since $f_\theta^{-1}(H')$ and $H_{n-1}$ intersect transversely in $P$, we have $f_\theta(R_{n-1})\not\subseteq H'$. Thus $\dim (f_\theta(R_{n-1})\cap H') = \dim (f_\theta(R_{n-1}))-1$ and hence $H'$ is inside $f_\theta(R_{n-1})$ in the sense of \Cref{def:inside and slab layer}.

\end{proof}

\begin{lemma}
\label{lem:generic_perturbation_intersection}
Let $P \subseteq \R^d$ be a polyhedron, let $f_\theta \colon \R^d \to \R^n$ be a ReLU layer, and let $H' \subseteq \R^n$ be a hyperplane such that $f_\theta^{-1}(H') \cap P$ is connected and, for all $H \in \HA_\theta$, the hypersurfaces $f_\theta^{-1}(H')$ and $H$ intersect transversely in $P$. Then there exists $\varepsilon >0$ such that for every hyperplane $H''$ that is $\varepsilon$-close to $H'$,
\begin{enumerate}
    \item $f_\theta^{-1}(H'') \cap P$ is connected, and
    \item $f_\theta^{-1}(H'')$ and $H$ intersect transversely in $P$ for all $H \in \HA_\theta$.
\end{enumerate}
\end{lemma}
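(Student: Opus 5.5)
The plan is to reduce everything to finitely many affine pieces and use that each required property is an open condition on the coefficients of $H'$. Let $\complex_\theta(P)$ be the local canonical complex of the layer on $P$, with maximal cells $R_1,\dots,R_N$. On each $R_i$ the layer is affine, $f_\theta|_{R_i}(x)=A_ix+c_i$. Write $H'=\{y\mid \inner{w',y}+b'=0\}$. Then
\[
f_\theta^{-1}(H')\cap R_i=\{x\in R_i\mid \inner{A_i^\top w',x}+\inner{w',c_i}+b'=0\},
\]
which is the trace on $R_i$ of an affine function whose coefficients depend continuously, in fact linearly, on $(w',b')$. The same holds for every lower-dimensional cell of $\complex_\theta(P)$. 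The statement is invariant under rescaling and sign flip of $(w',b')$, so $\varepsilon$-closeness is simply closeness of the normalized coefficient vector to one of $\pm (w',b')/\|w'\|$.

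\textbf{Step 1: transversality.} Fix $H\in\HA_\theta$. By hypothesis, $f_\theta^{-1}(H')$ and $H$ meet in a point of $\relint(P)$, and their intersection in $P$ is pure of dimension $\dim P-2$. I would choose a witness: a point $x_0\in \relint(P)\cap H\cap f_\theta^{-1}(H')$. Perturbing $x_0$ slightly inside this codimension-two intersection, I may assume $x_0$ lies in the relative interior of a face $F=R_i\cap H$, with $R_i$ adjacent to $H$, and that there the two local hyperplanes $\aff(F)$ and the pullback hyperplane on $R_i$ are distinct and not parallel. Non-parallelism means their normals are linearly independent, which is an open condition in $(w',b')$. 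An implicit-function argument then shows that for $H''$ close to $H'$ the intersection still has a point near $x_0$, hence in $\relint(F)\cap\relint(P)$.

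For purity, I would argue cell by cell. On each cell $G$ of $\complex_\theta(P)$ contained in $H$, the pullback restricted to $\aff(G)$ is either a transverse hyperplane, or it misses $G$. The exceptional cases are that it contains $G$, or that it touches $G$ only in a low-dimensional face. Both of these are closed, non-generic coincidences, and the hypothesis excludes them at $H'$. I would show they remain excluded under perturbation by continuity of the finitely many affine functions $\inner{A_i^\top w',x}+\inner{w',c_i}+b'$ evaluated at the finitely many vertices of the cells. Here I would use that $P$ is a polytope; otherwise, restrict to a bounded box. Concretely, strict sign conditions at vertices persist under small perturbation.

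\textbf{Step 2: connectedness.} This is the main obstacle, because connectedness is not open in general: a piece of the pullback that touches a cell only at its boundary could detach after perturbation. My approach is to record the combinatorial chain witnessing connectedness of $f_\theta^{-1}(H')\cap P$. The connected components of the pieces $f_\theta^{-1}(H')\cap R_i$ are convex, and they are glued along shared faces. The first thing I would try is to strengthen this: show that the gluing can be witnessed by points in relative interiors of common facets where the pullback crosses transversely, which is exactly what Step 1 provides. Such gluing points persist by the implicit-function argument.

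The difficulty is that the pullback may pass through a cell $R_j$ only through a lower-dimensional face, or may be tangent to a face. If it does, I would use the transversality hypothesis to argue that this forces a contradiction with purity of dimension $\dim P-2$. Failing that, I would shrink $\varepsilon$ so that the set of cells met in their relative interior is unchanged. The graph of cells meeting the pullback, with edges given by transverse crossings of common facets, is then constant near $H'$, and it is connected at $H'$. So the perturbed pullback is a union of nonempty convex pieces glued along that same connected graph, and is therefore connected. Taking $\varepsilon$ as the minimum over the finitely many hyperplanes $H\in\HA_\theta$, cells, and vertices concludes the argument.
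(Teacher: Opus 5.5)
Your Step~2 has a genuine gap, and the same gap breaks the relocation of $x_0$ in Step~1. You assume that connectedness of $f_\theta^{-1}(H')\cap P$ is witnessed by transverse crossings through relative interiors of common facets, i.e.\ that your crossing graph is connected at $H'$. Nothing in the hypotheses gives this. The convex pieces $f_\theta^{-1}(H')\cap R_i$ may be glued only along a cell of $\complex_\theta(P)$ of codimension at least two, and arbitrarily small perturbations break such a contact. Neither of your fallbacks helps: this configuration is compatible with transversality and purity, and shrinking $\varepsilon$ cannot protect a contact that is destroyed in every neighborhood of $H'$. Likewise, the codimension-two intersection with $H$ can lie entirely inside lower-dimensional cells, so $x_0$ cannot be moved into any $\relint(F)$, and the hypotheses give no strict sign conditions at vertices.

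Concretely, let $P=[-1,1]^2$, $f_\theta(x)=([x_1]_+,[-x_1]_+,[x_2]_+,[-x_2]_+)$ and $H'=\{y\mid y_1+y_2-y_3-y_4=0\}$. Then $f_\theta^{-1}(H')=\{x\mid |x_1|=|x_2|\}$ is connected in $P$ and meets each hyperplane of $\HA_\theta$ exactly in the origin. So all hypotheses hold (here $\dim P-2=0$), while your crossing graph at $H'$ has no edges at all. For $H''=\{y\mid y_1+y_2-y_3-y_4+\delta=0\}$ with small $\delta>0$, the pullback $\{x\mid |x_2|=|x_1|+\delta\}$ still meets all four quadrants in their interiors. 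Yet it has two components in $P$ and misses $\{x_2=0\}$. So for an arbitrary ReLU layer the statement as written is false, and no refinement of a cell-by-cell continuity argument can close the gap.

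The missing ingredient is the slab structure. The lemma is only applied to generic oriented slab layers ($f_{\theta_1}$ on $P$ and $f_{\theta_\ell}$ on $Q^{(\ell-1)}$ in \Cref{thm:existence_TPIC_LRA}), and the paper's proof relies on this implicitly. It handles transversality by the same witness-point and openness argument as your Step~1, and connectedness by ``the same argument as in'' \Cref{lem:slab_layer_to_hyperplane}.

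Suppose the hyperplanes $H_1,\dots,H_n$ of $\HA_\theta$ are pairwise disjoint in $P$. Then $H_i\cap\relint(P)$ is the relative interior of the single cell $F_i=H_i\cap P$ of $\complex_\theta(P)$, on which $f_\theta$ is affine. Hence the trace of the pullback on $F_i$ is the zero set of one affine function $\ell_i$ depending linearly on $(w',b')$. Transversality at $H'$ says exactly that $\ell_i$ is non-constant on $\aff(F_i)$ and vanishes somewhere in $\relint(F_i)$, so it takes both signs there. This persists for nearby $(w'',b'')$ and yields nonemptiness and purity for $H''$, with no relocation and no vertex conditions.

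Connectedness then follows because $R_0,\dots,R_n$ form a chain with $R_{i-1}\cap R_i=H_i\cap P$. Each piece $f_\theta^{-1}(H'')\cap R_i$ is convex, and consecutive pieces share the points just found on $H_i\cap\relint(P)$. In other words, for a slab layer your crossing graph is automatically the path through $R_0,R_1,\dots,R_n$, which is exactly what cannot be guaranteed for a general layer.
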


\begin{proof}
By the definition of transversality in $P$ (\Cref{def:intersect transversely}), for each $H \in \HA_\theta$, there exists an intersection point $x \in f_\theta^{-1}(H') \cap H \cap \relint(P)$, where each local normal of $f_\theta^{-1}(H')$ 
and the normal of $H$ are linearly independent. 
Because linear independence is an open condition and the intersection point $x$ varies continuously with the parameters $(w', b')$ corresponding to the hyperplane  $H'$, there exists a neighborhood in the parameter space such that these properties are preserved. 

According to our definition of $\varepsilon$-closeness, this neighborhood corresponds to a ball $B_\varepsilon$ around $(w', b')/\|w'\|$ or $(-w', -b')/\|w'\|$. Since $\HA_\theta$ is finite, we can choose $\varepsilon > 0$ small enough to satisfy these conditions for all $H$ simultaneously. For any $H''$ that is $\varepsilon$-close to $H'$, the pullback $f_\theta^{-1}(H'')$ maintains a non-empty intersection with each $H \cap \relint(P)$ and preserves the linear independence of local normal vectors, ensuring the intersection remains transverse in $P$. By the same argument as in the proof of \Cref{lem:slab_layer_to_hyperplane}, $f_\theta^{-1}(H'')\cap P$ is connected.
\end{proof}

\subsection{Pulling cTPIC Back to Input Space}
\Cref{lem:slab_layer_to_hyperplane} and \Cref{lem:hyperplane_to_slab_layer} give us a recipe to iteratively construct slab layers 
$f_{\theta_1} , \ldots , f_{\theta_L}$ 
such that $f_{\theta_{\ell+1}} \circ f_{\theta_{\ell}}$ satisfy cTPIC on a (possibly) lower-dimensional polytope $Q\up{\ell}$ in $\R^{n_{\ell-1}}$. Next, we aim to pull the transverse intersections back to the input space, for which we need the following definition.

\begin{definition}
\label{def:transverse linear map}
    Let $T \colon \R^d \to \R^n$ an affine linear map given by $x \mapsto Ax +b$, and let $B \subseteq \R^n$ be a piecewise linear hypersurface. 
    Then we say that $T$ is \emph{transverse} to $B$ if for every local normal $w$ of $B$ we have that $A^\top w \neq 0$ and hence $T^{-1}(B)$ is a piecewise linear hypersurface.
\end{definition} 

With this definition at hand, we show that, generically, these intersections can be pulled back to the input space. 

\begin{lemma}
\label{lem:pullback_transversality}
    Let $P \subseteq \R^d$ be a full-dimensional polyhedron, let $T \colon \R^d \to \R^n$ be an affine map, and let $f_\theta \colon \R^n \to \R^m$ be an oriented slab layer on $T(P)$. 
    Let $H' \subseteq \R^m$ be a hyperplane such that, for every $H \in \HA_\theta$, $f_\theta^{-1}(H')$ and $H$ intersect transversely in $T(P)$. 
    Furthermore, assume that $T$ is transverse to both $f_\theta^{-1}(H')$ and $H$. 
    Then $(f_\theta \circ T)^{-1}(H')$ and $T^{-1}(H)$ intersect transversely in $P$. 
\end{lemma}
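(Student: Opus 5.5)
Write $B_1 := f_\theta^{-1}(H')$ and $B_2 := H$, both piecewise linear hypersurfaces in $\R^n$. The relevant pullbacks are $T^{-1}(B_1) = (f_\theta\circ T)^{-1}(H')$ and $T^{-1}(B_2)$. The plan is to check the two clauses of \Cref{def:intersect transversely} separately: nonemptiness inside $\relint(P)$, and purity of codimension two of the intersection in $P$. Both should follow from the transversality of $T$ (\Cref{def:transverse linear map}) together with the transverse intersection of $B_1$ and $B_2$ in $T(P)$.

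For nonemptiness, start from a point $y \in B_1 \cap B_2 \cap \relint(T(P))$. Since $P$ is full-dimensional and $T$ is affine, $T(\relint P) = \relint(T(P))$. So there is some $x \in \relint(P)$ with $T(x)=y$, and this $x$ lies in $T^{-1}(B_1)\cap T^{-1}(B_2)\cap\relint(P)$. The same argument works for every point of $B_1\cap B_2\cap \relint T(P)$.

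For the dimension count, work locally. Near a point $y$ of $B_1 \cap B_2 \cap T(P)$, restrict to a cell of the common refinement on which $B_1$ and $B_2$ are given by affine hyperplanes with normals $w_1, w_2$. These normals are linearly independent, because the intersection has dimension $\dim T(P)-2$ there. The pulled-back pieces are then the hyperplanes with normals $A^\top w_1$ and $A^\top w_2$, which are nonzero by the transversality of $T$. The intersection of the pullbacks has codimension two exactly when $A^\top w_1$ and $A^\top w_2$ are independent. I would argue this by contradiction: a relation $A^\top(\alpha w_1+\beta w_2)=0$ with $(\alpha,\beta)\neq 0$ means that $\alpha w_1+\beta w_2$, which is a nonzero vector, is orthogonal to $\operatorname{im}A$, the linear span of $T(P)-T(P)$. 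In that case $B_1\cap B_2\cap \aff T(P)$ would locally be only a codimension-one subset of $\aff T(P)$, contradicting purity of dimension $\dim T(P)-2$ in $T(P)$. (If $\dim T(P) = n$, then $A$ is surjective and independence is immediate.) The transverse intersection in $T(P)$ should be read intrinsically in $\aff(T(P))$; the paper's definition uses $\dim(P)$, which supports this reading. Since $T$ maps $P$ onto $T(P)$ with $T^{-1}(\aff T(P))=\R^d$, preimages of faces of dimension $\dim T(P)-2$ have dimension $d-2$, fibered by $\ker A$.

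It remains to show purity in $P$, i.e.\ that there are no lower-dimensional stray pieces. Every point of $T^{-1}(B_1)\cap T^{-1}(B_2)\cap P$ maps into $B_1\cap B_2\cap T(P)$. That set is a union of $(\dim T(P)-2)$-dimensional polyhedra, and the preimage in $P$ of each of them has dimension $\dim+\dim\ker A = d-2$. This is the step I expect to be the main obstacle. The preimage of a polyhedron $F\subseteq T(P)$ intersected with $P$ need not have full fiber dimension at boundary points of $P$. I would handle it by noting that $T^{-1}(F)\cap P$ is a polyhedron whose image is $F$. Each point of it lies in the closure of $T^{-1}(\relint F)\cap P$, which has dimension $\dim F+\dim\ker A$ because the fibers through relative-interior points meet $\relint P$; this uses convexity of $P$ and surjectivity onto $T(P)$. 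Hence every maximal cell has dimension $d-2$.
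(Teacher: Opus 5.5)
Your nonemptiness step and your independence argument for $A^\top w_1,A^\top w_2$ (via $\ker A^\top=(\operatorname{im}A)^\perp$) are the paper's proof. The paper phrases the latter as independence of the projections $\pi_V(w_H),\pi_V(w_B)$, and then simply asserts purity from nonemptiness plus independence. You are right that purity needs more, but your final step does not supply it.

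The fibers of $T$ over $\relint F$ meet $\relint P$ only if $\relint F$ meets $\relint T(P)$. Nothing in your argument prevents a $(k-2)$-cell $F$ of $B_1\cap B_2\cap T(P)$, with $k=\dim T(P)$, from lying entirely in the relative boundary of $T(P)$. Convexity of $P$ and surjectivity onto $T(P)$ do not rule this out. In fact, for arbitrary piecewise linear hypersurfaces, which is all your argument uses, the conclusion is false. Take $d=3$, $T(x)=(x_1,x_2)$, and $P=\operatorname{conv}\{0,(4,-4,1),(4,-4,-1),(4,4,0)\}$, so that the vertex $0$ of the triangle $T(P)$ has the single preimage $0\in P$. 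Let $B_2=\{x_2=0\}$ and $B_1=\{x_2=|x_1-1|-1\}$. Then $B_1\cap B_2\cap T(P)=\{(0,0),(2,0)\}$ is pure of dimension $0$ and meets $\relint T(P)$, and $A^\top$ is injective. Yet $T^{-1}(B_1)\cap T^{-1}(B_2)\cap P$ is a segment over $(2,0)$ together with the isolated point $0$, so it is not pure of dimension $1$.

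The missing ingredient is the slab structure, which you never use. Let $H=H_k\in\HA_\theta$. The slab hyperplanes are pairwise disjoint on $T(P)$ and $H\cap T(P)$ is convex, so every neuron $j\neq k$ has constant nonzero sign on $H\cap T(P)$. Hence $f_\theta$ agrees there with a single affine map $\Phi$, and
$F:=f_\theta^{-1}(H')\cap H\cap T(P)=\Phi^{-1}(H')\cap H\cap T(P)$
is one convex polyhedron. By hypothesis it has dimension $k-2$ and meets $\relint T(P)$; by convexity, $\relint F\cap\relint T(P)\neq\emptyset$. Over a point of this set the fiber of $T$ meets $\relint P$. Therefore the convex set $T^{-1}(F)\cap P$, which is the entire intersection, has dimension $\dim F+\dim\ker A=(k-2)+(d-k)=d-2$, and is therefore pure.

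The same observation also handles boundary points in your normal-independence step. Along $F$, the local normals of $f_\theta^{-1}(H')$ are the two fixed normals coming from the regions on either side of $H$, so checking independence at a single interior point of $F$ suffices.
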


\begin{proof}
    Let $H \in \HA_\theta$ be a hyperplane in $\R^n$ and let $B = f_\theta^{-1}(H')$ be the piecewise linear hypersurface in $\R^n$. By the first assumption, there exists a point $y \in B \cap H \cap \relint(T(P))$. Since affine maps send relative interiors to relative interiors of their images, and
$y \in \relint(T(P))$, there exists a point $x \in \relint(P)$ such that $T(x)=y$. By the definition of the pullback, $x \in (f_\theta \circ T)^{-1}(H')$ and $x \in T^{-1}(H)$, which establishes that the intersection in the relative interior of $P$ is non-empty.

    To show that the intersection is transverse in $P$, we analyze the local linear parts. Let $w_H$ be the normal vector of $H$ and $w_B$ be any local normal vector of $B$ at $y$. 
    Let $A$ be the linear part of the affine map $T$ and $V = \text{im}(A)$ be its image. The affine hull $\aff(T(P))$ is a translate of $V$. The assumption that $H$ and $B$ intersect transversely in $T(P)$ implies that their projections onto $V$, denoted $\pi_V(w_H)$ and $\pi_V(w_B)$, are linearly independent.

    The local normal vectors of the pulled-back hypersurfaces at $x$ are $u = A^\top w_H$ and $v = A^\top w_B$. Suppose there exist $\alpha, \beta \in \R$ such that $\alpha u + \beta v = 0$. This is equivalent to $A^\top (\alpha w_H + \beta w_B) = 0$. Let $z = \alpha w_H + \beta w_B$. The condition $A^\top z = 0$ implies that $z \in V^\perp$, or equivalently, $\pi_V(z) = 0$. By the linearity of the projection, we have:
    \[
    \alpha \pi_V(w_H) + \beta \pi_V(w_B) = 0.
    \]
    Since $\pi_V(w_H)$ and $\pi_V(w_B)$ are linearly independent in $V$, we must have $\alpha = \beta = 0$. Consequently, the pulled-back normals $u$ and $v$ are linearly independent in $\R^d$. 

    Since the intersection is non-empty in $\relint(P)$ and the local normal vectors are linearly independent, the intersection $(f_\theta \circ T)^{-1}(H') \cap T^{-1}(H) \cap P$ is pure of dimension $\dim(P)-2$. This satisfies the definition of transversality in $P$.
\end{proof}
\begin{figure}[t]
\centering
\begin{subfigure}[t]{0.31\textwidth}
\centering
\begin{tikzpicture}[scale=0.3,x=1cm,y=1cm,line cap=butt,line join=miter]
    \footnotesize
    \draw[black!80] (0,0) -- (0,12) -- (12.0,12.0) -- (12,0) -- cycle;

    \draw[black,line width=0.8pt] (3,0) -- (1,12);
    \draw[black,line width=0.8pt] (6,0) -- (5,12);
    \draw[black,line width=0.8pt] (9,0) -- (11,12);

    \node at (1,3) {$R_0$};
    \node at (4,3) {$R_1$};
    \node at (7,3) {$R_2$};
    \node at (11,3) {$R_3$};
    \draw[->,>=stealth,black!80] (2,6) --(3,6+1/6);
    \draw[->,>=stealth,black!80] (5.5,6) --(6.5,6+1/12);
    \draw[->,>=stealth,black!80] (10,6) --(9,6+1/6);

\end{tikzpicture}
\caption{A generic oriented slab layer $f_{\theta_1}$ with $3$ neurons on a polytope (the square). 
The layer's hyperplanes partition the polytope into $4$ regions. 
}
\label{fig:slab_layer}
\end{subfigure}\hfill
\begin{subfigure}[t]{0.31\textwidth}
\centering
\begin{tikzpicture}[scale=0.3,x=1cm,y=1cm,line cap=butt,line join=miter]
    \footnotesize
    \draw[black!80] (0,0) -- (0,12) -- (12.0,12.0) -- (12,0) -- cycle;

    \draw[black,line width=0.8pt] (3,0) -- (1,12);
    \draw[black,line width=0.8pt] (6,0) -- (5,12);
    \draw[black,line width=0.8pt] (9,0) -- (11,12);
    \draw[red!70!black,line width=0.9pt]
        (2,0) -- (2.5,3) -- (5.75,3) -- (10,6) -- (12,5);
        \draw[red!70!black,line width=0.9pt]
        (1,0) -- (2,6) -- (5.25,9) -- (10.3333,8) -- (12,7);
         \draw[red!70!black,line width=0.9pt]
        (0,0) -- (1.5,9) -- (5+1/12,11) -- (10.5,9) -- (12,11);
        \draw[->,>=stealth,red!70!black] (4,3) --(4,4);
    \draw[->,>=stealth,red!70!black,] (3.65,7.5) --(4.35,6.5);
        \draw[->,>=stealth,red!70!black] (3,9.9) --(3.5,8.9);
          \filldraw[gray,opacity=0.15] (6,0) -- (9,0) -- (11,12) -- (5,12) -- cycle;
\end{tikzpicture}
\caption{The 
gray area is $P\up{1}$. The red bent hyperplanes are the pullbacks of the hyperplanes of the slab layer $f_{\theta_2}$ on $Q\up{1}=f_{\theta_1}(P\up{1})$ along $f_{\theta_1}$.}
\label{fig:slab_layer_1}
\end{subfigure}\hfill
\begin{subfigure}[t]{0.31\textwidth}
\centering
\begin{tikzpicture}[scale=0.3,x=1cm,y=1cm,line cap=butt,line join=miter]
    \footnotesize
    \draw[black!80] (0,0) -- (0,12) -- (12.0,12.0) -- (12,0) -- cycle;

    \draw[black,line width=0.8pt] (3,0) -- (1,12);
    \draw[black,line width=0.8pt] (6,0) -- (5,12);
    \draw[black,line width=0.8pt] (9,0) -- (11,12);
  
    \draw[red!70!black,line width=0.9pt]
        (2,0) -- (2.5,3) -- (5.75,3) -- (10,6) -- (12,5);
        \draw[red!70!black,line width=0.9pt]
        (1,0) -- (2,6) -- (5.25,9) -- (10.3333,8) -- (12,7);
         \draw[red!70!black,line width=0.9pt]
        (0,0) -- (1.5,9) -- (5+1/12,11) -- (10.5,9) -- (12,11);
          \draw[blue!70!black,line width=0.9pt]
          (7,0) -- (6.5,3.6)--  (7,8.7)--(6.5, 10.5) --(6.5+4.24/1.5, 12);

              \draw[blue!70!black,line width=0.9pt]
          (8,0) -- (8.5,4.9)--  (8.75,8.25)--(7.75, 10) --(7.75+3, 10+2);
    \filldraw[gray,opacity=0.4] (5.75,3) -- (10,6) 
        -- (10.3333,8) -- (5.25,9)-- cycle;
       \filldraw[gray,opacity=0.15] (6,0) -- (9,0) -- (11,12) -- (5,12) -- cycle;
\end{tikzpicture}
\caption{The dark gray area is $P\up{2}$. The blue bent hyperplanes are the pullbacks of the hyperplanes of the slab layer $f_{\theta_3}$ on  $Q\up{2}$ along $f_{\theta_2} \circ f_{\theta_1}$.}
\label{fig:slab_layer_2}
\end{subfigure}
\caption{Illustration of the inductive construction in \Cref{thm:existence_TPIC_LRA} that satisfies cTPIC and LRA.} 
\label{fig:induction_TPIC_LRA}
\end{figure}

\subsection{The Inductive Construction}

Combining \Cref{lem:hyperplane_to_slab_layer}, \Cref{lem:slab_layer_to_hyperplane} and \Cref{lem:pullback_transversality}, 
we establish in Theorem~\ref{thm:existence_TPIC_LRA} the existence of an open set of parameters satisfying both cTPIC and LRA. 
Our construction proceeds inductively, layer by layer, as illustrated in 
\Cref{fig:induction_TPIC_LRA}. 

	\begin{theorem}
    \label{thm:existence_TPIC_LRA}
		Let $\architecture= (n_0,\ldots,n_{L+1})$ be any architecture with $n_i \geq 2$ for all $i \leq L$ and $P \subseteq \R^{n_0}$ a full-dimensional polytope. Then there is an open set $U \subseteq \paramspace{\architecture}$ such that $f_\theta$ satisfies cTPIC and LRA on $P$ for all $\theta \in U$.
	\end{theorem}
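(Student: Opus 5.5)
We build a single parameter $\theta^*$ by induction on the layers, following the picture in \Cref{fig:induction_TPIC_LRA}. We then show that cTPIC and LRA on $P$ hold at $\theta^*$ and stay true under small perturbations, which gives the open set $U$.

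\emph{Base step.} Fix a hyperplane $H^{(1)}$ inside $P$. Take $X^{(0)}=P$ and apply \Cref{lem:hyperplane_to_slab_layer} to get a generic oriented slab layer $f_{\theta_1}$ on $P^{(0)}:=P$ whose arrangement is $\varepsilon_1$-close to $H^{(1)}$. Transparency of the first layer is not needed.

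\emph{Inductive step.} Suppose we have layers $\theta_1,\dots,\theta_\ell$ and a polytope $P^{(\ell-1)}\subseteq P$, full-dimensional in $\R^{n_0}$ and contained in a single region of $\complex_{\theta,\ell-1}$. On it, $g_{\ell-1}:=f_{\theta_{\ell-1}}\circ\cdots\circ f_{\theta_1}$ is affine, and $f_{\theta_\ell}$ is a generic oriented slab layer on $Q^{(\ell-1)}:=g_{\ell-1}(P^{(\ell-1)})$.
\begin{enumerate}
\item Apply \Cref{lem:slab_layer_to_hyperplane} to $f_{\theta_\ell}$ on $Q^{(\ell-1)}$. This gives a hyperplane $H'\subseteq\R^{n_\ell}$ with connected transverse preimage and a region $R$ such that $H'$ is inside $f_{\theta_\ell}(R)$.
\item By \Cref{lem:generic_perturbation_intersection}, these properties persist for every hyperplane $\varepsilon$-close to $H'$.
\item Apply \Cref{lem:hyperplane_to_slab_layer} with polytope $f_{\theta_\ell}(R)$ and bounded set $X=\activation{\ell}(P)$. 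This gives a generic oriented slab layer $f_{\theta_{\ell+1}}$ on $f_{\theta_\ell}(R)$ that is transparent on $\activation{\ell}(P)$ and whose hyperplanes are $\varepsilon$-close to $H'$.
\item Set $P^{(\ell)}:=P^{(\ell-1)}\cap g_{\ell-1}^{-1}(R)$. It is full-dimensional because $R$ is a full-dimensional cell of the slab subdivision of $Q^{(\ell-1)}$ and $g_{\ell-1}$ is affine on $P^{(\ell-1)}$.
\end{enumerate}
With $T=g_{\ell-1}|_{P^{(\ell-1)}}$, \Cref{lem:pullback_transversality} shows that every neuron of layer $\ell+1$ meets every neuron of layer $\ell$ transversely in $P^{(\ell-1)}\subseteq P$. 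It needs $T$ transverse to the relevant hypersurfaces, which holds because $\theta$ is generic: condition~(2) of \Cref{def:generic_parameter} makes the pulled-back normals nonzero. If it fails, we perturb slightly inside the open conditions. The connected hypersurfaces $C_{j,\ell+1}$ are the closures of the preimages in $P^{(\ell-1)}$. This is consistent across layers because each bent hyperplane of layer $\ell$ is only used once as the "later" partner (in $P^{(\ell-2)}$) and once as the "earlier" partner (in $P^{(\ell-1)}\subseteq P^{(\ell-2)}$). So we take $C_{j,\ell}$ to be the connected piece over $P^{(\ell-2)}$, which contains the piece used in $P^{(\ell-1)}$. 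The output layer is any generic choice.

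\emph{LRA.} Every layer $\ell\ge2$ is transparent on $\activation{\ell-1}(P)$, and the final parameter is generic, so \Cref{lem:transparent_implies_LRA_on_X} gives LRA on $P$.

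\emph{Openness.} Genericity is open by \Cref{lem:generic_open_dense}. Transparency on the compact set $\activation{\ell-1}(P)$ is open: at $\theta^*$ the maximum preactivation is strictly positive on it, by the proof of \Cref{lem:hyperplane_to_slab_layer}, since the pairwise intersections of the slab hyperplanes lie outside $X$. So LRA is open. The cTPIC conditions are nonempty intersections in relative interiors together with linear independence of local normals at finitely many witness points. These depend continuously on $\theta$ near a generic parameter, as in \Cref{lem:generic_perturbation_intersection}, so they are open too. Connectedness of the chosen pieces persists by the chain-of-convex-pieces argument of \Cref{lem:slab_layer_to_hyperplane}. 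Taking $U$ to be a small ball around $\theta^*$ completes the proof.

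\emph{Main obstacle.} The hardest part is the bookkeeping in the induction. Each step must use a genuine nested full-dimensional $P^{(\ell)}$, and transparency must be required on the whole image $\activation{\ell}(P)$, not only on $Q^{(\ell)}$. This is what keeps the transverse intersections visible and avoids the gap described in \Cref{sec:grigsby-gap}. A second concern is that genericity might be broken by the exactly parallel initial configuration; we handle this by the perturbations already built into \Cref{lem:hyperplane_to_slab_layer}.
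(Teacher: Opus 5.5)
Your proposal is correct and follows essentially the same route as the paper's proof. Both arguments build nested full-dimensional polytopes $P^{(\ell)}$ from generic oriented slab layers via \Cref{lem:slab_layer_to_hyperplane}, \Cref{lem:generic_perturbation_intersection} and \Cref{lem:hyperplane_to_slab_layer}, pull transversality back with \Cref{lem:pullback_transversality} using the rank condition for transversality of $T$, impose transparency on the whole image $\activation{\ell}(P)$ to get LRA from \Cref{lem:transparent_implies_LRA_on_X}, and conclude by openness; your differences (absorbing $L=1$ into the induction, dropping the unneeded transparency of the first layer, and justifying openness of transparency by strict positivity on a compact set) are cosmetic.
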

	
	\begin{proof}
If $L=1$, there is no adjacent pair of hidden layers, so cTPIC is vacuous. Choose the
first hidden layer to be a generic oriented slab layer on $P$, and choose the output
weights generically so that every first-layer hyperplane carries nonzero tropical
weight. Then LRA holds on $P$ by \Cref{lem:LRA_tropical_weight}. These conditions are
open, so the desired open set exists.

Assume now that $L\ge2$.
     We iteratively construct a sequence of full-dimensional polytopes $P = P\up{0} \supseteq P\up{1} \supseteq \cdots \supseteq P\up{L-2}$ and a generic parameter $\theta$ satisfying LRA on $P$ and such that the connected pieces of the bent hyperplanes from layers $\ell$ and $\ell+1$ intersect transversely in $P\up{\ell-1}$. 
    The construction is illustrated in \Cref{fig:induction_TPIC_LRA}.      

    \paragraph{Inductive construction}  
     We show the following by induction.  
     For any $\ell \geq 2$, there are full-dimensional polytopes $P = P\up{0} \supseteq \cdots \supseteq P\up{\ell-1}$ and a generic parameter $\theta\up{\ell} =(\theta_1,\ldots,\theta_\ell)\in \Theta\up{\ell}$ such that:  
     \begin{enumerate}
         
         \item $f_{\theta\up{\ell-1}}$ is affine linear on $P\up{\ell-1}$ 
         \item $f_{\theta_\ell}$ is a generic oriented slab layer on $Q\up{\ell-1} \coloneqq f_{\theta\up{\ell-1}}(P\up{\ell-1})$ (\Cref{def:inside and slab layer}) 
         and transparent on $X\up{\ell-1}=f_{\theta\up{\ell-1}}(P)$ (\Cref{def:transparent})
         
         \item $f_{\theta_\ell} \circ f_{\theta_{\ell-1}}$ satisfies cTPIC on $Q\up{\ell-2}$ (\Cref{def:intersect transversely})
         
         \item The affine linear map $T\up{\ell-2} \coloneqq f_{\theta\up{\ell-2}}|_{P\up{\ell-2}}$ 
       (with $T^{(0)} \coloneqq \id|_P$ if $\ell=2$)
         is transverse to the bent hyperplanes of $f_{\theta_\ell} \circ f_{\theta_{\ell-1}}$ 
         (\Cref{def:transverse linear map})

     \end{enumerate}

     Moreover, $f_{\theta_1}$ is a generic oriented slab layer on $Q\up{0} \coloneqq P\up{0} =P$ transparent on $X\up{0} \coloneqq P\up{0} =P$. 
     
\medskip 

Before we start the induction, we note that the fourth point is true for generic parameters. 
     The linear part of the map $T^{(\ell-2)} = f_{\theta^{(\ell-2)}}|_{P^{(\ell-2)}}$ 
     is given by the matrix product $A = D_{S_{\ell-2}}W^{(\ell-2)} D_{S_{\ell-3}} \cdots D_{S_1} W^{(1)}$. 
     The (local) normals of the (bent) hyperplanes of $f_{\theta_\ell} \circ f_{\theta_{\ell-1}}$ are given by the vectors $v_j=\weights{\ell-1}_j$ 
     and vectors  
     $u_{S,i}=(W^{(\ell)} D_{S} W^{(\ell-1)})_i$ for $j \in [n_{\ell-1}], i \in [n_\ell]$ and $S\subseteq [n_{\ell-1}]$.
     By definition, $T^{(\ell-2)}$ is transverse to these bent hyperplanes if $v_j A \neq 0$ and $u_{S,i} A \neq 0$ for all possible choices $j \in [n_{\ell-1}], i \in [n_\ell], S \subseteq [n_{\ell-1}]$. 
If $v_j A = 0$, then  $D_{\{j\}}W^{(\ell-1)} D_{S_{\ell-2}} W^{(\ell-2)} \cdots 
W^{(1)}=0$, which would contradict Condition 2 of \Cref{def:generic_parameter}, that states that the matrix product $D_{\{j\}}W^{(\ell-1)} D_{S_{\ell-2}} W^{(\ell-2)} \cdots 
W^{(1)}$ must achieve the maximum possible rank.
In the same manner, if $u_{S,i} A = 0$, then  $D_{\{i\}}W^{(\ell)} D_{S_{\ell-1}} W^{(\ell-1)} \cdots 
W^{(1)}=0$, which again would contradict \Cref{def:generic_parameter}. 
Consequently, for any generic parameter, $T^{(\ell-2)}$ is necessarily transverse to the bent hyperplanes of the subsequent layer. 

\paragraph{Base case of the induction}  

     For the base case of the induction, let $\ell=2$. 
     The full-dimensional polytope $P$ has dimension $n_0\geq 2$, and we can pick a hyperplane $H$ inside $P$ (\Cref{def:inside and slab layer}). 
     Now by \Cref{lem:hyperplane_to_slab_layer}, there is an oriented generic slab layer $f_{\theta_1} \colon \R^{n_0} \to \R^{n_1}$ on $P$ that is transparent on $P$. 
     By \Cref{lem:slab_layer_to_hyperplane}, there is a hyperplane $H' \subseteq \R^{n_1}$ such that $f_{\theta_1}^{-1}(H')$ intersects every $H \in \HA_{\theta_1}$ transversely and $f_{\theta_1}^{-1}(H') \cap P$ is connected. 
     By \Cref{lem:generic_perturbation_intersection}, there is a $\varepsilon_1 > 0$ such that for every hyperplane $H''$ that is $\varepsilon_1$-close to $H'$, the bent hyperplane $f_{\theta_1}^{-1}(H'')$ still has transverse intersection with each $H \in \HA_{\theta_1}$ and is still connected in $P$. 
     Moreover, again by \Cref{lem:slab_layer_to_hyperplane}, there is a region $P\up{1} \in \complex_{\theta_1}(P)$ such that $H'$ is inside $Q\up{1} = f_{\theta_1}(P\up{1})$. 
     By \Cref{rem:generic_perturbation_stay_inside}, there is an $\varepsilon_2$ such that every hyperplane $H''$ that is $\varepsilon_2$-close to $H'$ is also inside $Q\up{1}$. 
     Let $\varepsilon = \min\{\varepsilon_1,\varepsilon_2\}$. 
     Now, by \Cref{lem:hyperplane_to_slab_layer}, there is a generic oriented slab layer $f_{\theta_2}$ on $Q\up{1}$ that is transparent on the bounded set $X\up{1}= f_{\theta_1}(P)$ and $\varepsilon$-close to $H'$. 
     Finally, 
     a small perturbation of the construction ensures that the parameter is generic, implying that the fourth point holds. 
     This completes the base case. 

\paragraph{Induction step} 

     Now for the induction step, let $\ell \geq 2$ and assume that the induction hypothesis holds. 
     We proceed to construct layer $\ell+1$ analogously to the base case. 

By the induction hypothesis, $f_{\theta_\ell}$ is a generic oriented slab layer on $Q^{(\ell-1)}$. 
Thus, by \Cref{lem:slab_layer_to_hyperplane}, there exists a hyperplane $H' \subseteq \R^{n_\ell}$ 
and a region $R \in \complex_{\theta_\ell}(Q^{(\ell-1)})$ 
such that:  
$f_{\theta_\ell}^{-1}(H')$ is connected in $Q^{(\ell-1)}$,  intersects every $H \in \HA_{\theta_\ell}$ transversely in $Q^{(\ell-1)}$, 
and such that $H'$ is inside $f_{\theta_\ell}(R)$. 

We define $P^{(\ell)} \coloneqq f_{\theta^{(\ell-1)}}^{-1}(R)$, which is a region in $\complex_{\theta^{(\ell)}}(P^{(\ell-1)})$ and hence has dimension $\dim(P^{(\ell-1)})=\dim(P)$. 
We have $Q^{(\ell)} \coloneqq f_{\theta^{(\ell)}}(P^{(\ell)}) = f_{\theta_\ell}(R)$. 
By \Cref{lem:hyperplane_to_slab_layer}, there exists a generic oriented slab layer $f_{\theta_{\ell+1}}$ on $Q^{(\ell)}$ that is transparent on the bounded set $X^{(\ell)} \coloneqq f_{\theta_\ell}(X^{(\ell-1)}) = f_{\theta^{(\ell)}}(P)$ and whose hyperplane arrangement $\mathcal{H}_{\theta_{\ell+1}}$ is $\varepsilon$-close to $H'$, for any choice of $\varepsilon>0$. 
We set $\varepsilon = \min\{\varepsilon_1, \varepsilon_2\}$, where $\varepsilon_1 > 0$ is given by \Cref{lem:generic_perturbation_intersection} and ensures that, for every $H''$ that is $\varepsilon_1$-close to $H'$, $f_{\theta_\ell}^{-1}(H'')$ is still connected in $Q^{(\ell-1)}$, intersects every $H \in \HA_{\theta_\ell}$ transversely in $Q^{(\ell-1)}$, 
and $\varepsilon_2 > 0$ is given by \Cref{rem:generic_perturbation_stay_inside} and ensures that every $H''$ that is $\varepsilon_2$-close to $H'$ is inside $Q^{(\ell)}$. 
By this construction, we have:  
\begin{itemize}
    \item $f_{\theta^{(\ell)}}$ is affine-linear on $P^{(\ell)}$; 
    \item $f_{\theta_{\ell+1}}$ satisfies the generic oriented slab and transparency requirements on $Q^{(\ell)}$ and $X^{(\ell)}$, respectively;  
    \item The $\varepsilon$-closeness of $\mathcal{H}_{\theta_{\ell+1}}$ to $H'$ ensures that $f_{\theta_{\ell+1}} \circ f_{\theta_\ell}$ satisfies cTPIC on $Q^{(\ell-1)}$;     
    \item If necessary, we can slightly perturb the construction to ensure that the resulting parameter is generic, which guarantees that  $T^{(\ell-1)}=f_{\theta^{(\ell-1)}}|_{P^{(\ell-1)}}$ is transverse to the bent hyperplanes of $f_{\theta_{\ell+1}} \circ f_{\theta_\ell}$. 
\end{itemize} 
This concludes the induction step. 
\paragraph{The constructed parameters satisfy cTPIC and LRA}

We first verify cTPIC on the input polytope $P$. Fix a hidden neuron $(i,\ell)$ with $\ell\in[L]$.
If $\ell=1$, we set
$
C_{i,1}\coloneqq B_{i,1}(\theta)\cap P.
$
Since $f_{\theta_1}$ is a slab layer on $P$, this is the intersection of $P$ with an affine hyperplane, hence a connected piecewise linear hypersurface.

Now let $\ell\ge 2$. By Property~3, the two-layer network
$
f_{\theta_\ell}\circ f_{\theta_{\ell-1}}
$
satisfies cTPIC on $Q^{(\ell-2)}$. Hence, for the neuron $(i,\ell)$, there exists a connected piecewise linear hypersurface
$
\widetilde{C}_{i,\ell}\subseteq B_{i,\ell}(\theta)\cap Q^{(\ell-2)}.
$
We define
$
C_{i,\ell}\coloneqq \bigl(f_{\theta^{(\ell-2)}}|_{P^{(\ell-2)}}\bigr)^{-1}(\widetilde{C}_{i,\ell}) \subseteq B_{i,\ell}(\theta)\cap P.
$
By Property~1, the map $f_{\theta^{(\ell-2)}}|_{P^{(\ell-2)}}$ is affine-linear. Since $P^{(\ell-2)}$ is convex, the preimage of a connected subset of its image under this affine map is connected. Hence $C_{i,\ell}$ is connected.

Now let $(j,\ell)$ and $(i,\ell+1)$ be neurons in adjacent layers. By Property~3, the two-layer network
$
f_{\theta_{\ell+1}}\circ f_{\theta_\ell}
$
satisfies cTPIC on $Q^{(\ell-1)}$, so the corresponding connected hypersurface pieces intersect transversely there. By Property~4 and \Cref{lem:pullback_transversality}, these intersections pull back along
$
f_{\theta^{(\ell-1)}}|_{P^{(\ell-1)}}
$
to transverse intersections between $C_{j,\ell}$ and $C_{i,\ell+1}$ in $P^{(\ell-1)}\subseteq P$.
Thus the family $\{C_{i,\ell}\}_{i,\ell}$ witnesses that $f_\theta$ satisfies cTPIC on $P$.

Moreover, Property~2 shows that $f_{\theta_\ell}$ is transparent on
$
X^{(\ell-1)}=\activation{\ell-1,\theta}(P)
$
for every $\ell\in[L]$. Since $\theta$ is generic, \Cref{lem:transparent_implies_LRA_on_X} implies that $\theta$ satisfies LRA on $P$.

Finally, the construction is stable under sufficiently small perturbations: transparency, the slab-layer conditions, connectedness of the chosen pullbacks, and the required transverse intersections are all open conditions. Hence we obtain an open subset of parameters satisfying cTPIC and LRA on $P$. This concludes the proof.
\qedhere
\end{proof}

\subsection{Functional Dimension and Identifiability}
By \Cref{thm:tpic_generic_identifiable}, generic parameters satisfying cTPIC and
LRA are identifiable among generic parameters. On the resulting open set of generic parameters, the
quotient realization map attains maximal rank, and therefore the dimension of the function space equals the quotient parameter dimension.

\begin{theorem}
\label{thm:identifiable_dimension}
Let $\architecture=(n_0,\ldots,n_{L+1})$ with $n_i\ge 2$ for all $i\le L$. Then there exists a nonempty open set
$
U_0\subseteq \widetilde{\Theta}_\architecture
$
such that every $\theta\in U_0$ is identifiable among generic parameters. Consequently,
$
\dim(\mu_\architecture(\Theta_\architecture))
=
\sum_{\ell=1}^{L+1}n_\ell n_{\ell-1}+n_{L+1}.
$
\end{theorem}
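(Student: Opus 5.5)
The plan is to take $U_0$ to be the intersection of the open set from \Cref{thm:existence_TPIC_LRA} with the generic locus, and then derive the dimension formula from identifiability together with a constant-rank argument. Fix any full-dimensional polytope $P\subseteq\R^{n_0}$. \Cref{thm:existence_TPIC_LRA} gives a nonempty open set $U$ on which $f_\theta$ satisfies cTPIC and LRA on $P$. By \Cref{lem:generic_open_dense}, $\gparamspace{\architecture}$ is open and dense, so $U_0\coloneqq U\cap\gparamspace{\architecture}$ is nonempty and open. \Cref{thm:tpic_generic_identifiable} then shows that every $\theta\in U_0$ is identifiable among generic parameters, which proves the first claim.

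Next I check the target number. The parameter count is $\sum_{\ell=1}^{L+1}(n_\ell n_{\ell-1}+n_\ell)$. Subtracting the $\sum_{\ell=1}^{L}n_\ell$ hidden neurons gives $\sum_\ell n_\ell n_{\ell-1}+n_{L+1}$, which is what the statement asserts.

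\emph{Upper bound.} On the open set where all hidden rows $\weights{\ell}_j$ are nonzero, the positive-scaling group $(\R_{>0})^{\sum_{\ell\le L}n_\ell}$ acts freely, and its orbits are smooth submanifolds of dimension $\sum_{\ell\le L} n_\ell$. Fix a finite $X$ and a parameter $\theta$ at which $\theta\mapsto f_\theta(X)$ is differentiable. Every orbit tangent vector lies in the kernel of $\partial f_\theta(X)/\partial\theta$, because $f_\theta$ is constant along orbits. Hence the rank is at most the claimed number. Points where the map is not differentiable do not contribute to the maximum.

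\emph{Lower bound.} Fix $\theta_0\in U_0$, and choose a local slice $S$ through $\theta_0$ transverse to the scaling orbits, for instance by normalizing each hidden row to unit norm. The slice has dimension $D=\sum_\ell n_\ell n_{\ell-1}+n_{L+1}$. Shrink $S$ to a neighbourhood $S_0\subseteq U_0$. On $S_0$, two parameters related by a nontrivial permutation are uniformly far apart, so identifiability among generic parameters makes $\theta\mapsto f_\theta$ injective on $S_0$. For any finite $X$ whose points avoid all breakpoint sets of parameters near $\theta_0$, the activation pattern at each $x\in X$ is locally constant in $\theta$, so $\theta\mapsto f_\theta(X)$ is polynomial, hence smooth, near $\theta_0$. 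We may pick such points in the interiors of regions, since genericity keeps the canonical complex stable under perturbation.

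Suppose for contradiction that the rank on $S_0$ is less than $D$ for every admissible $X$. Choose $X$ and a point where the rank $r$ is maximal over all admissible finite sets and all points of $S_0$. Since rank is lower semicontinuous, the rank equals $r$ on an open subset $V\subseteq S_0$. For any admissible $X'\supseteq X$ the rank cannot exceed $r$, and the kernel can only shrink. Therefore $\ker\partial f_\theta(X')/\partial\theta=\ker\partial f_\theta(X)/\partial\theta$ on $V$, after shrinking $V$ to keep $X'$ admissible, which is done point by point. This kernel is a smooth distribution $K$ of positive rank $D-r$. Take an integral curve $\gamma$ of a nonvanishing section of $K$ through a point of $V$. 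For each fixed admissible $x$, the derivative of $t\mapsto f_{\gamma(t)}(x)$ is zero, because $X\cup\{x\}$ has the same kernel. Hence $f_{\gamma(t)}$ agrees with $f_{\gamma(0)}$ on a dense set of inputs, and by continuity everywhere. This gives a nonconstant curve in $S_0$ with constant realized function, contradicting injectivity. So rank $D$ is attained, and equality follows.

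The main obstacle is this last step. Two points need care. First, the constant-rank and kernel-stabilization argument must be run uniformly enough that a single integral curve stays admissible for a dense set of inputs. I would handle this by using a countable dense set of inputs and noting that the breakpoint sets vary continuously. Second, one must confirm that local injectivity on the slice rules out every nonconstant curve, which is the role of the discreteness of permutations.
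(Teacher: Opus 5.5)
Your choice of $U_0$ and the proof of identifiability among generic parameters match the paper's argument. For the dimension formula, the paper only asserts, citing \citet{grigsby2023hiddensymmetriesrelunetworks}, that a trivial fiber modulo trivial symmetries on an open set forces the quotient realization map to reach full rank. Your slice, constant-rank and integral-curve argument is a sound way to actually prove that implication, but its last step does not go through as written.

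First, the uniformity you worry about is not needed. Write $J_{X'}(\theta)$ for the Jacobian of $\theta\mapsto f_\theta(X')$ on the slice. At each $\theta\in V$ and each $x$ admissible at $\theta$, maximality of $r$ already gives $\ker J_{X\cup\{x\}}(\theta)=K_\theta$ pointwise.

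The real gap is the passage from this to $f_{\gamma(t)}=f_{\gamma(0)}$. For fixed $x$ you only know that $t\mapsto f_{\gamma(t)}(x)$ has zero derivative at times when $x$ avoids every breakpoint of $\gamma(t)$. The exceptional times form a closed set you have no control over. A continuous function whose derivative vanishes off a closed null set need not be constant (Cantor-type functions). Nor is the set of inputs that stay admissible along the whole curve dense a priori, since a bent hyperplane that moves along $\gamma$ sweeps out an open set of inputs. Using a countable dense set of inputs together with continuity of the breakpoint sets addresses neither problem.

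A clean repair avoids admissibility along the curve altogether.
\begin{itemize}
\item For an activation pattern $S$, let $P_S(\theta,x)$ be the network output with all activations frozen to $S$. It is polynomial in $(\theta,x)$ and affine in $x$.
\item Shrink $V$, by removing a proper algebraic set, so that for $\theta\in V$ no preactivation vanishes identically on an open set of inputs. Then every interior point of a maximal cell of the canonical complex of $\theta$ is admissible.
\item Let such a cell $R$ have pattern $S$, and let $v\in K_\theta$. Then $x\mapsto\partial_v P_S(\theta,x)$ vanishes on the interior of $R$ by your kernel identity. Being affine in $x$, it vanishes for all $x$.
\item Now fix an arbitrary input $x$, and set $\varphi(t)=f_{\gamma(t)}(x)$ and $p_S(t)=P_S(\gamma(t),x)$. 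For each $t$, a maximal cell of the canonical complex of $\gamma(t)$ containing $x$ gives a pattern $S$ with $\varphi(t)=p_S(t)$ and $p_S'(t)=0$.
\item Hence, coordinatewise, the image of $\varphi$ lies in the finite union of the sets $p_S(\{p_S'=0\})$. Each of these is a null set by the one-dimensional Sard lemma.
\item By continuity that image is an interval, so $\varphi$ is constant.
\end{itemize}
This gives $f_{\gamma(t)}=f_{\gamma(0)}$ on all inputs without any density argument, and your contradiction with injectivity on $S_0$ then goes through.
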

    
\begin{proof}
By \Cref{thm:existence_TPIC_LRA}, there exists a nonempty open set
$
U\subseteq \Theta_\architecture
$
such that every $\theta\in U$ satisfies cTPIC and LRA on a full-dimensional polytope.
Intersecting with the open dense generic locus $\widetilde{\Theta}_\architecture$, we obtain a nonempty open set
$
U_0\subseteq \widetilde{\Theta}_\architecture
$
such that every $\theta\in U_0$ is identifiable among generic parameters by \Cref{thm:tpic_generic_identifiable}.

Modulo permutation and positive scaling symmetries, the realization map restricted to $U_0$ therefore has a trivial fiber on $U_0$. Hence the induced quotient realization map attains rank
$
\dim(\overline{\Theta}_\architecture)
=
\sum_{\ell=1}^{L+1}(n_\ell n_{\ell-1}+n_\ell)-\sum_{\ell=1}^{L}n_\ell
$
\citep[see, e.g.,][]{grigsby2023hiddensymmetriesrelunetworks}. This shows that
$
\dim(\mu_\architecture(\Theta_\architecture))
\ge
\dim(\overline{\Theta}_\architecture).
$
The reverse inequality is automatic, since the image of the realization map cannot have larger dimension than the quotient parameter space. Therefore equality holds, and expanding the right-hand side gives
$
\dim(\mu_\architecture(\Theta_\architecture))
=
\sum_{\ell=1}^{L+1}n_\ell n_{\ell-1}+n_{L+1}.
\qedhere
$
\end{proof}

The preceding theorem still yields identifiability only among the generic parameters. To upgrade this to unconditional identifiability, we exclude functions that admit nongeneric realizations. Since the nongeneric locus is a proper algebraic subset of parameter space, its image in function space is lower-dimensional than the full image established above.
\begin{lemma}
\label{lem:nongeneric_lower_dim_quotient}
Let $N=\Theta_\architecture\setminus\widetilde{\Theta}_\architecture$ and let
$q:\Theta_\architecture\to\overline{\Theta}_\architecture$ be the quotient map. Then
$
\dim q(N)<\dim\overline{\Theta}_\architecture .
$
\end{lemma}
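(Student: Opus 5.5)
The plan is to reduce the statement to a dimension count for semialgebraic sets. By the proof of \Cref{lem:generic_open_dense}, $N$ is contained in a finite union of proper algebraic subvarieties $Z_1,\dots,Z_r\subsetneq\Theta_\architecture$. For this we fix a sequence of activation patterns, which determines which polynomial minors must vanish. Each $Z_i$ is the zero locus of a nonzero polynomial, so $\dim Z_i\le \dim\Theta_\architecture-1$. Hence $N$ is a semialgebraic set of dimension at most $\dim\Theta_\architecture-1$. It suffices to show $\dim q(Z_i\cap N)<\dim\overline{\Theta}_\architecture$ for each $i$, and then take the finite union.

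For the quotient, I would first factor out the positive scalings and only afterwards the finite permutation group. The scaling group $G=\R_{>0}^{\sum_{\ell\le L}n_\ell}$ acts on $\Theta_\architecture$ by the rescalings described in the preliminaries. The set $N$ is $G$-invariant, because both conditions in \Cref{def:generic_parameter} are preserved by positive rescaling of neurons. Indeed, rescaling multiplies the relevant matrix products by invertible diagonal matrices, and it leaves all zero sets and hence the canonical complexes unchanged; permutations preserve them likewise. Next I would work on the open dense subset $\Theta^\ast$ where every hidden neuron has nonzero incoming weight row. On $\Theta^\ast$ the action is free, and a global slice is given by normalizing $\|(W^{(\ell)}_j,b^{(\ell)}_j)\|=1$ for each hidden neuron. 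The slice $\Sigma$ is a semialgebraic set of dimension $\dim\Theta_\architecture-\sum_\ell n_\ell=\dim\overline{\Theta}_\architecture$, and $q|_\Sigma$ is finite-to-one, since it only identifies points up to permutation. Thus $q(N\cap\Theta^\ast)$ is the image under a finite-to-one map of $N\cap\Sigma$. Since $N$ is $G$-invariant, $N\cap\Theta^\ast\cong (N\cap\Sigma)\times G$, which gives $\dim(N\cap\Sigma)=\dim N-\dim G<\dim\overline{\Theta}_\architecture$.

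The part outside $\Theta^\ast$ is handled separately. It is the union over hidden neurons of the sets $\{(W^{(\ell)}_j,b^{(\ell)}_j)=0\}$, which have codimension $n_{\ell-1}+1\ge 2$. On such a set the scaling of that neuron acts only on its outgoing column, which is a smaller orbit. I would bound $q$ of this locus by a crude parameter count: forget the outgoing weights of the dead neuron up to scaling. Counting, we lose $n_{\ell-1}+1$ dimensions and gain back at most $1$ from the missing orbit direction. The image therefore still has dimension at most $\dim\overline{\Theta}_\architecture-n_{\ell-1}<\dim\overline{\Theta}_\architecture$. Iterating over the subsets of dead neurons gives finitely many pieces, each treated the same way with the slice restricted to the live neurons.

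The main obstacle I expect is making the quotient dimension rigorous. One must make sure that $\dim q(\cdot)$ is well defined and that it drops exactly by the orbit dimension. This requires the action to be free with a semialgebraic slice, which is why the zero-row locus is split off. If the slice argument proves awkward, the fallback is to define $\dim q(S)$ as the maximal rank of $d q$ over smooth points of $S$, and to use that $q$ has fibers of dimension exactly $\sum_\ell n_\ell$ on $\Theta^\ast$.
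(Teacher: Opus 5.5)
Your proof is correct, but it takes a different route from the paper's. The paper's argument is topological. $N$ is invariant under permutations and positive scalings, and $q$ is an open map, so $q(\widetilde{\Theta}_\architecture)$ is open and dense. Its complement $q(N)$ therefore has empty interior, and a semialgebraic set with empty interior has lower dimension. You instead count dimensions explicitly. You use the global semialgebraic slice $\Sigma$ for the free scaling action on $\Theta^\ast$ and the product decomposition $N\cap\Theta^\ast\cong(N\cap\Sigma)\times G$. You then pass through the finite permutation group and do a separate count on the dead-neuron strata.

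The paper's argument is shorter, but it tacitly treats $\overline{\Theta}_\architecture$ as a semialgebraic space in which ``empty interior implies lower dimension'' holds. This is delicate exactly where you were careful. At a dead neuron the scaling orbits are not closed, since a nonzero outgoing column can be scaled towards zero, so the quotient is not Hausdorff there. Your approach avoids this and gives explicit codimension bounds: at least $1$ on $\Theta^\ast$, and at least $\sum_{(j,\ell)\in D}n_{\ell-1}$ on the stratum with dead set $D$. It also delivers directly what Theorem~\ref{thm:true_identifiability} actually uses. Namely, $\mu_\architecture(N)$ is covered by the images of finitely many semialgebraic sets of dimension less than $\dim\overline{\Theta}_\architecture$.

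Three minor points. The reduction to the individual $Z_i$ is unnecessary, since you only use $G$-invariance of $N$ itself. The identity should read $\dim(N\cap\Sigma)=\dim(N\cap\Theta^\ast)-\dim G\le\dim N-\dim G$. On the dead-neuron locus the scaling still acts on a nonzero outgoing column, so no orbit direction is actually lost there; this only makes your count conservative.
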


\begin{proof}
The generic locus $\widetilde{\Theta}_\architecture$ is invariant under permutation and
positive scaling symmetries, hence so is $N$. Since
$\widetilde{\Theta}_\architecture$ is open dense in $\Theta_\architecture$ and $q$ is an
open map, $q(\widetilde{\Theta}_\architecture)$ is open dense in
$\overline{\Theta}_\architecture$. Therefore
$
q(N)=\overline{\Theta}_\architecture\setminus q(\widetilde{\Theta}_\architecture)
$
has empty interior. Since $q(N)$ is semialgebraic, it has dimension strictly smaller than
$\overline{\Theta}_\architecture$.
\end{proof}
\begin{theorem}
\label{thm:true_identifiability}
Let $\architecture=(n_0,\ldots,n_{L+1})$ with $n_i\ge 2$ for all $i\le L$. Then there exists a nonempty open set
$
U\subseteq \Theta_\architecture
$
such that every $\theta\in U$ is identifiable.
\end{theorem}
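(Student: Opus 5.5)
We start from the nonempty open set $U_0\subseteq\widetilde{\Theta}_\architecture$ of \Cref{thm:identifiable_dimension}, whose elements are identifiable among generic parameters, and remove from it the parameters whose function also admits a nongeneric realization. Concretely, let $N=\Theta_\architecture\setminus\widetilde{\Theta}_\architecture$, write $\overline{\mu}_\architecture$ for the quotient realization map, and set
\[
U \coloneqq U_0 \setminus \mu_\architecture^{-1}\bigl(\mu_\architecture(N)\bigr).
\]
If $\theta\in U$ and $f_\eta=f_\theta$, then $\eta\notin N$ because $f_\theta\notin\mu_\architecture(N)$. Hence $\eta$ is generic, and identifiability among generic parameters gives $\eta\sim\theta$. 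So every element of $U$ is identifiable, and it remains to show that $U$ contains a nonempty open set.

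For this we use dimension counts. By \Cref{lem:nongeneric_lower_dim_quotient}, $\dim q(N)<\dim\overline{\Theta}_\architecture$. Since $\overline{\mu}_\architecture$ is semialgebraic (piecewise polynomial), its image satisfies $\dim \mu_\architecture(N)=\dim\overline{\mu}_\architecture(q(N))\le\dim q(N)<\dim\overline{\Theta}_\architecture$. To make $\mu_\architecture(N)$ a semialgebraic set in a finite-dimensional space, we evaluate on a suitable finite input set $X$, or equivalently we work with the finitely many affine pieces. On $U_0$ the quotient map $\overline{\mu}_\architecture|_{q(U_0)}$ is injective. Shrinking $U_0$ to a nonempty open set where the map is smooth with Jacobian of full rank $\dim\overline{\Theta}_\architecture$ (possible by \Cref{thm:identifiable_dimension} and semialgebraic generic smoothness), the map becomes a local diffeomorphism onto its image. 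The preimage of the lower-dimensional set $\mu_\architecture(N)$ in $q(U_0)$ then has dimension $<\dim\overline{\Theta}_\architecture$, so it is a semialgebraic set with empty interior.

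It remains to see that this preimage is closed in $U_0$, or at least that its complement contains an open set. We take the closure $\overline{\mu_\architecture(N)}$, which is semialgebraic of the same dimension. Removing its preimage, a closed set with empty interior in $U_0$, leaves a nonempty open set $U$. Pulling back through the open quotient map $q$ gives an open subset of $\Theta_\architecture$.

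The main obstacle is making the dimension argument rigorous, because function space is infinite-dimensional. We handle this by fixing a finite input set $X$ such that $\theta\mapsto f_\theta(X)$ determines $f_\theta$ for all $\theta$ in a neighborhood of $U_0$ and attains full Jacobian rank. Alternatively, we avoid needing $X$ to separate all functions: we replace $\mu_\architecture(N)$ by its evaluation image $E_X(N)$, which is semialgebraic of dimension $<\dim\overline{\Theta}_\architecture$. We then remove the preimage of $\overline{E_X(N)}$ under $E_X$ from $U_0$. Any $\theta$ that survives has $f_\theta(X)\notin E_X(N)$, so no nongeneric $\eta$ can have $f_\eta=f_\theta$. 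Nonemptiness then follows from the rank of $E_X$ on $U_0$, which is $\dim\overline{\Theta}_\architecture$ by \Cref{thm:identifiable_dimension}.
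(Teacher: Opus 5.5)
Your proposal is correct and follows the paper's route: start from the open set $U_0$ of \Cref{thm:identifiable_dimension}, discard the parameters whose function also has a nongeneric realization, and use $\dim q(N)<\dim\overline{\Theta}_\architecture$ from \Cref{lem:nongeneric_lower_dim_quotient}, together with the full dimensionality of the image of $U_0$, to see that an open set survives. Your final version is actually more careful than the paper's proof: you remove the closed set $E_X^{-1}\bigl(\overline{E_X(N)}\bigr)$, using a finite evaluation set $X$ at which $E_X$ has full rank at a point of $U_0$. The paper instead passes from ``$B$ has empty interior'' straight to an open subset of $U_0\setminus B$, without arguing that $B$ is closed or nowhere dense, and it compares dimensions in function space without a finite-dimensional proxy.
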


\begin{proof}
Let
$
U_0\subseteq \widetilde{\Theta}_\architecture
$
be the open set from \Cref{thm:identifiable_dimension}, so that every $\theta\in U_0$ is identifiable among generic parameters.
and let
$
N\coloneqq \Theta_\architecture\setminus \widetilde{\Theta}_\architecture
$ the nongeneric locus. 

Let $q:\Theta_\architecture\to\overline{\Theta}_\architecture$ be the quotient map. By
\Cref{lem:nongeneric_lower_dim_quotient},
$
\dim q(N)<\dim\overline{\Theta}_\architecture .
$
Since $\mu_\architecture=\overline{\mu}_\architecture\circ q$, we get
$
\dim \mu_\architecture(N)
\le
\dim q(N)
<
\dim\overline{\Theta}_\architecture .
$
By \Cref{thm:identifiable_dimension},
$
\dim(\mu_\architecture(\Theta_\architecture))
=
\dim(\overline{\Theta}_\architecture),
$
so $\mu_\architecture(N)$ is a proper lower-dimensional subset of the full function-space image.

Now consider
$
B\coloneqq U_0\cap \mu_\architecture^{-1}(\mu_\architecture(N)).
$
Since $\mu_\architecture(N)$ is lower-dimensional whereas $\mu_\architecture(U_0)$ has full dimension, the set $B$ cannot contain a nonempty open subset of $U_0$.Since $B$ has empty interior in $U_0$, choose a nonempty open subset
$
U\subseteq U_0\setminus B.
$

We claim that every $\theta\in U$ is identifiable. Let $\theta\in U$, and let $\eta\in \Theta_\architecture$ satisfy
$
f_\eta=f_\theta.
$
If $\eta$ were nongeneric, then
$
f_\theta=f_\eta\in \mu_\architecture(N),
$
which would imply $\theta\in B$, contradicting $\theta\in U$. Thus $\eta$ must be generic. Since $\theta\in U_0$ and every point of $U_0$ is identifiable among generic parameters, it follows that
$
\eta\sim\theta.
$
Therefore $\theta$ is identifiable.
\end{proof}

\begin{remark}
\label{rem:width1}
   \Cref{thm:true_identifiability} leaves open the existence of identifiable parameters in architectures that contain non-output layers of width $1$. If a layer $\ell$ consists of a single neuron, then the activation image $\activation{\ell, \theta}(\R^d)$ is at most one-dimensional. Consequently, the Transverse Pairwise Intersection Condition (TPIC) cannot be satisfied for any neurons in subsequent layers $k > \ell$, as their preactivations depend on a single direction. 
   While the locations of  breakpoints along this
   single direction are informative about the weights, 
   the absence of transverse intersections appears to prevent the unique assignment of  neurons to breakpoints
   (modulo within-layer permutations). 
   By adapting the construction in \Cref{lem:hyperplane_to_slab_layer} to this setting, 
   it appears possible to construct parameters for which the breakpoints are fixed 
   while their neurons of origin remain ambiguous. 
   This motivates the following conjecture. 
\end{remark}

\begin{conjecture}\mbox{} 
\label{conj:width1}
    \begin{enumerate}
        \item There exist architectures $\architecture=(n_0,\ldots,n_{L+1})$ with $n_\ell=1$ for some $\ell\leq L$ for which no identifiable parameters exist. 
        
        \item For every architecture, there exists a parameter that is \emph{finitely identifiable}, meaning that its fiber in the quotient space $\overline{\paramspace{\architecture}}$ is a finite set.
    \end{enumerate}
\end{conjecture}

\Cref{conj:width1} 1) is motivated by the observation that TPIC fails for architectures containing layers of width~1; however, it remains somewhat speculative, and we hope that its resolution will stimulate the development of new methods for handling such cases.  

If \Cref{conj:width1} 2) were to hold on an open subset of parameters, it would follow that architectures containing layers of width $1$ also attain the expected dimension, namely the number of parameters minus the number of hidden neurons. Indeed, a finite fiber implies that the realization map is locally an embedding.

\section{Minimality Does Not Imply Identifiability}
\label{sec:minimality_identifiability}
Note that in a ReLU network identifiability implies minimality.  
Indeed, identifiability implies that there is no other parameter zeroing out neurons that represents the same function. 

In this subsection, we use the dependency graph and a modification of our construction from \Cref{thm:existence_TPIC_LRA} to show 
that minimal generic parameters do not need to be identifiable. 
Moreover, even if a network has an open subset of identifiable parameters it can still have an open subset of minimal parameters that are not identifiable.  
This stands in contrast to polynomial networks, where the identifiability of a generic parameter implies the identifiability of all generic parameters \citep[see, e.g.,][]{usevich2025identifiabilitydeeppolynomialneural,finkel2025activationdegreethresholdsexpressiveness,shahverdi2025learningrazorsedgesingularity}. More broadly, this example highlights that, in the ReLU setting, the relation between (non-)identifiability and reducibility of the architecture is subtle, a theme that also arises in recent abstract work connecting identifiability and equivariance in deep networks \citep{shahverdi2026identifiableequivariantnetworkslayerwise}.

We first rigidify the prefix of the network using the dependency graph, then attach an additional layer with a two-neuron always-active linear block producing a
2-dimensional fiber, and finally use a dimension comparison with strict subarchitectures to deduce minimality on an open set.

\subsection{Embedding an Identifiable Prefix}
The following lemma states a condition under which we can embed an identifiable parameter into a deeper network such that the embedded part of the parameter remains identifiable in the deeper network. 

\begin{lemma}
\label{lem:identifiability_split} 
    Let $\architecture=(n_0,\ldots,n_L,m)$ be an architecture, and let $\theta \in \gparamspace{\architecture}$. 
    Decompose the network function as $f_\theta = f_{\theta_L,\theta_{L+1}} \circ  f_{\theta\up{L-1}}$, where 
    $f_{\theta\up{L-1}} \colon \R^{n_0} \to \R^{n_{L-1}}$ and $f_{\theta_L,\theta_{L+1}} \colon \R^{n_{L-1}} \to \R^m$. 
Suppose that $\theta$ satisfies: 
    \begin{enumerate}
        \item $\theta\up{L-1}$ satisfies cTPIC and LRA on some polytope $P\subseteq\mathbb{R}^{n_0}$, 
       \item $f_{\theta_L}$ is transparent on $f_{\theta\up{L-1}}(P)$, and
        \item 
        at least one bent hyperplane $B_{L,j}(\theta)$ intersects transversely with all hyperplanes $B_{L-1,i}(\theta)$ in $P$. 
    \end{enumerate} 
    Then, for each generic $\eta=(\eta\up{L-1},\eta_L,\eta_{L+1})$ with $f_\eta=f_\theta$,
the truncated parameters $\eta\up{L-1}$ and $\theta\up{L-1}$ are equivalent up to the
trivial symmetries of the sub-architecture $(n_0,\ldots,n_{L-1})$.
\end{lemma}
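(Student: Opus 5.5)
The plan is to run the reverse-engineering argument of \Cref{thm:tpic_generic_identifiable_inductive} on the first $L-1$ hidden layers only, exactly as in the proof of \Cref{thm:tpic_generic_identifiable}. Layer $L$ serves only as an extra witness: it certifies that the neurons of layer $L-1$ are not last-layer neurons. This removes the sign ambiguity without needing anything about layers $L$ and $L+1$ of $\eta$ beyond genericity.

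\textbf{Step 1: visibility of the prefix in the full network.} Since $\theta$ is generic, \Cref{lem:generic_implies_supertransversal,lem:generic_cancellation_free} make it supertransversal and cancellation-free. By hypothesis (2), together with the transparency already built into LRA of $\theta\up{L-1}$ (I would re-derive it via \Cref{lem:transparent_implies_LRA_on_X} on the relevant region), every facet of the bent hyperplanes of layers $\le L-1$ meeting $\relint(P)$ carries nonzero weight in $f_\theta$. Hence these facets lie in $\bcomplex{\theta}$. By \Cref{prop:canonicalbcomplex}, $\bcomplex{\eta}=\bcomplex{\theta}$, so the same facets are visible for $\eta$.

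\textbf{Step 2: matching candidate bent hyperplanes to neurons of $\eta$.} Using cTPIC of $\theta\up{L-1}$ and hypothesis (3), apply the argument of \Cref{lem:cTPIC_implies_full_dep_subgraph}. The dependency graph $(V_{f_\theta},E_{f_\theta})$ then contains a layered subgraph with layers of sizes $n_1,\dots,n_{L-1}$ and adjacent layers fully connected, plus one extra vertex $v_L$ (from the bent hyperplane $B_{L,j}(\theta)$) with an incoming edge from every layer-$(L-1)$ vertex. By \Cref{prop:embedd_candidate_bh}, $\phi_\eta$ maps a directed path of length $L-1$ ending in $v_L$ strictly increasingly in layer index into $\{1,\dots,L\}$. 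This forces every vertex of layer $k$ of the subgraph to map to a neuron of $\eta$ in layer exactly $k$, for $k\le L-1$. Injectivity of $\phi_\eta$ on this subgraph follows from distinct candidate hyperplanes having distinct facets and supertransversality of $\eta$. A counting argument ($n_k$ vertices into $n_k$ neurons) then gives a bijection between layer $k$ of $\theta$ and layer $k$ of $\eta$. Consequently, the layer-$\le L-1$ bent hyperplanes of $\eta$ pass through the same connected pieces $C_{i,k}$ on $P$, so $\eta\up{L-1}$ satisfies cTPIC and LRA on $P$ (the latter via \Cref{lem:LRA_function_property_generic} applied to the relevant facets).

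\textbf{Step 3: layerwise reconstruction.} Now run the inductive reconstruction of \cite{rolnick2020reverse}. The first layer is recovered from the flat pieces, up to scaling and sign. Each layer $k\le L-1$ is then recovered from how its bent hyperplanes bend along the layer-$(k-1)$ hyperplanes; this fixes the signs of layer $k-1$, since the bending reveals which side is active. For layer $L-1$, the signs are fixed by the bending of the visible layer-$L$ bent hyperplane $B_{L,j}$ along every layer-$(L-1)$ hyperplane, which is guaranteed by hypothesis (3). This sign-fixing is precisely what fails for the final hidden layer in \Cref{thm:tpic_generic_identifiable_inductive}, and hypothesis (3) supplies it here. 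Hence $\eta\up{L-1}\sim\theta\up{L-1}$.

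\textbf{Main obstacle.} I expect Step 2 to be the hardest, specifically showing that $\phi_\eta$ lands layer-by-layer exactly and injectively rather than just monotonically. The concern is that a single candidate bent hyperplane might be realized by pieces of several neurons of $\eta$, or that the extra layers of $\eta$ might absorb vertices. I would first try to use connectedness of the $C_{i,k}$ together with supertransversality of $\eta$ to show that each $C_{i,k}$ lies in a single bent hyperplane of $\eta$. The pigeonhole count across layers would then close the argument.
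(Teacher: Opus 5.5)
This is correct and is essentially the paper's own proof. Transparency (hypothesis 2) keeps the prefix breakpoints visible. The cTPIC layered subgraph together with the extra vertex $v_L$ from hypothesis (3) forces each level, via \Cref{prop:embedd_candidate_bh}, onto exactly the matching hidden layer of $\eta$, and reverse engineering then recovers $\eta\up{L-1}$.

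Your witness-based sign step is the concrete form of the paper's remark that ``generically there is no sign ambiguity.'' Two small caveats apply. First, a single bend is symmetric under swapping which side is active, so the bend alone does not fix the sign. What actually rules out flipping a set $J$ of layer-$(L-1)$ neurons is the global matching of the witness preactivation along its connected piece, together with generic nonvanishing of $e_j^\top \weights{L} D_J \weights{L-1}\cdots$. Second, injectivity of $\phi_\eta$ likewise comes from this layerwise matching, since distinct generic neurons have non-proportional preactivations. Supertransversality plus a counting argument does not give it.
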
 
\begin{proof}
Let $G'$ be the dependency graph arising from the first $L-1$ layers of $\theta$ on $P$. Since $\theta^{(L-1)}$ satisfies cTPIC and LRA on $P$, \Cref{lem:cTPIC_implies_full_dep_subgraph} implies that $G'$ contains a layered subgraph with $L-1$ levels corresponding to the hidden layers $1,\dots,L-1$, and with all adjacent levels fully connected.

By assumption~(2), the layer $f_{\theta_L}$ is transparent  on $f_{\theta^{(L-1)}}(P)$, so the visible breakpoint geometry created by the first $L-1$ layers is preserved in the full function $f_\theta$. Hence the same layered subgraph $G'$ is contained in the dependency graph of $f_\theta$.

By assumption~(3), there exists a bent hyperplane $B_{L,j}(\theta)$ in the last hidden layer intersecting transversely every bent hyperplane $B_{L-1,i}(\theta)$ in $P$. Thus the dependency graph of $f_\theta$ contains a vertex $v_L$ corresponding to $B_{L,j}(\theta)$ together with edges from every vertex in the $(L-1)$-st level of $G'$ to $v_L$.

Now let $\eta$ be generic with $f_\eta=f_\theta$. By \Cref{prop:embedd_candidate_bh}, there is an order-preserving map from the candidate bent hyperplanes of $f_\theta$ to the hidden neurons of $\eta$. Applied to the layered subgraph $G'$, this forces its $L-1$ levels to occupy $L-1$ distinct hidden layers of $\eta$. Since $v_L$ lies strictly above the last level of $G'$, it must occupy a strictly later hidden layer. As $\eta$ has only $L$ hidden layers, the vertices of $G'$ must therefore occupy exactly the first $L-1$ hidden layers of $\eta$.

Hence the visible breakpoint geometry generated by the prefix $\theta^{(L-1)}$ is realized in the first $L-1$ layers of $\eta$. Since $\theta^{(L-1)}$ satisfies cTPIC and LRA on $P$, the reverse-engineering theorem \Cref{thm:tpic_generic_identifiable_inductive} recovers these first $L-1$ layers up to permutation and positive scaling. As in the proof of \Cref{thm:tpic_generic_identifiable}, generically, there is no sign ambiguity.

\end{proof}

Note that in contrast to cTPIC, the condition in \Cref{lem:identifiability_split} does not require that every bent hyperplane from layer $L-1$ to be intersected by every bent hyperplane from layer $L$. Consequently, \Cref{thm:tpic_generic_identifiable} does not apply to the parameters of the last hidden layer, which therefore need not be identifiable.
\subsection{Add a Linear Block}
The strategy to construct a minimal but non-identifiable parameter is to compose an identifiable parameter (as in \Cref{thm:existence_TPIC_LRA}) 
with a one-hidden-layer network that is 
minimal but has a continuous parameter symmetry, since some of its neurons do not introduce any breakpoints on the image of the preceding layer.

\begin{prop}
\label{prop:open_set_two_dimensional_fiber_drop}
For architectures $\architecture=(n_0,\ldots,n_L,n_{L+1})$ with $L\ge 2$ and
$n_i\ge 2$ for $i\in\{0,\ldots,L+1\}$ and $n_L\ge 4$, there exists a nonempty open subset
$U\subseteq \Theta_{\architecture}$ such that for every $\theta\in U$,
$
\dim\bigl(\overline{\mu}_{\architecture}^{-1}(f_\theta)\bigr)\ge 2,
$
and
$
\dim \mu_{\architecture}(U)
=
\dim \mu_{\architecture}(\Theta_{\architecture})-2.
$
\end{prop}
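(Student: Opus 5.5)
We construct an open set of parameters in which two neurons of the last hidden layer are active on the whole image of the prefix, and then count parameters modulo this redundancy.

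\textbf{Construction.} First build the prefix $\theta\up{L-1}$ for the architecture $(n_0,\ldots,n_{L-1})$ as in the proof of \Cref{thm:existence_TPIC_LRA}. This gives generic parameters satisfying cTPIC and LRA on a polytope $P$, together with a bounded image $X\up{L-1}=f_{\theta\up{L-1}}(P)$.

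For the last hidden layer, split the $n_L\ge 4$ neurons into two groups.
\begin{itemize}
\item The first group has at least two neurons, so it can be a transparent slab layer. It is obtained from \Cref{lem:hyperplane_to_slab_layer} and \Cref{lem:slab_layer_to_hyperplane}. Its hyperplanes intersect every bent hyperplane of layer $L-1$ transversely in $P$, as required for the third condition of \Cref{lem:identifiability_split}, and it keeps layer $L$ transparent.
\item The second group has exactly two neurons $j_1,j_2$. Their hyperplanes are placed so far from the bounded set $X\up{L-1}$ that both neurons are strictly active on a neighbourhood of $X\up{L-1}$.
\end{itemize}
All of these requirements (transparency, slab conditions, transversality, strict activity on a compact set) are open. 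Intersecting with $\gparamspace{\architecture}$ therefore gives an open set $U$.

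\textbf{Fiber lower bound.} On $X\up{L-1}$, neurons $j_1,j_2$ act linearly. Their total output contribution is
\[
M\,\bigl(\weights{L}_{\{j_1,j_2\}}\, y+\bias{L}_{\{j_1,j_2\}}\bigr),
\]
where $M=\weights{L+1}_{:,\{j_1,j_2\}}$ is an $n_{L+1}\times 2$ matrix. Replacing $M$ by $MG^{-1}$ and the two-row block $(\weights{L}_{\{j_1,j_2\}},\bias{L}_{\{j_1,j_2\}})$ by $G$ times it leaves $f_\theta$ unchanged on $P$. For $G\in\GL_2$ near the identity the two neurons stay active, so the function is unchanged there too.

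One point needs care: $f_\theta$ must be unchanged on all of $\R^{n_0}$, not only on $P$. I would handle this by demanding activity on the whole image $f_{\theta\up{L-1}}(\R^{n_0})$ rather than on $X\up{L-1}$. Since $n_{L-1}\ge 2$, this image is unbounded, but it lies in the nonnegative orthant. Take the two rows of $\weights{L}$ with strictly positive entries and positive biases. Then both neurons are active on the entire orthant, and this is an open condition.

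The group $\GL_2$ is $4$-dimensional, and positive diagonal scalings account for $2$ of those dimensions. Modulo trivial symmetries the orbit therefore has dimension $2$, which gives $\dim\overline{\mu}_\architecture^{-1}(f_\theta)\ge 2$.

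\textbf{Image dimension.} The upper bound $\dim\mu_\architecture(U)\le\dim\mu_\architecture(\Theta_\architecture)-2$ follows from the fiber bound and \Cref{thm:identifiable_dimension}. The realization map is semialgebraic, its quotient has fibers of dimension at least $2$ over $U$, and so the image has dimension at most $\dim\overline{\Theta}_\architecture-2$.

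For the lower bound, I would show the fiber is exactly $2$-dimensional near $\theta$ modulo symmetries. The tool is \Cref{lem:identifiability_split}: every generic $\eta$ in the fiber has a prefix equivalent to $\theta\up{L-1}$. The remaining slab neurons of layer $L$ have visible bent hyperplanes crossing layer $L-1$, so they are recovered as in \Cref{thm:tpic_generic_identifiable}. Their positions, tropical weights, and output columns follow from \Cref{prop:tropweightofNNs}.

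That leaves the two linear neurons. Because the prefix is recovered, the map $y\mapsto M(Ay+c)$ is determined on the image. The image affinely spans $\R^{n_{L-1}}$, by the genericity rank condition. So the product data $MA$ and $Mc$ are determined, together with the output bias, up to the $\GL_2$ ambiguity.

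\textbf{Main obstacle.} I expect the hardest step to be excluding extra fiber components. A different $\eta$ might, for example, assign the two always-active neurons breakpoints outside the image, or swap the roles of slab neurons and linear neurons. I would first argue that any $\eta$ near $\theta$ keeps the same activation structure; this suffices for the local dimension count. Then $\dim\mu_\architecture(U)=\dim\overline{\Theta}_\architecture-2$ follows from the constant-rank theorem on an open subset.
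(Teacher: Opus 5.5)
\textbf{Gap in the exact fiber count.} Your construction and argument follow the paper's proof almost step for step: the prefix from \Cref{thm:existence_TPIC_LRA}, $n_L-2$ slab neurons, two neurons with strictly positive weights and biases active on the whole orthant, the $\GL_2$ count, then \Cref{lem:identifiability_split} plus affine spanning for the exact count. The bound $\dim\overline{\mu}_{\architecture}^{-1}(f_\theta)\ge 2$ and the upper bound $\dim\mu_{\architecture}(U)\le\dim\mu_{\architecture}(\Theta_{\architecture})-2$ are fine.

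The step that fails is your claim that ``$MA$ and $Mc$ are determined, together with the output bias, up to the $\GL_2$ ambiguity''. They are not. On $U$ the function sees the two linear neurons and the output bias only through $MA$ and the sum $Mc+\bias{L+1}$. Take $G\in\GL_2(\R)$ near the identity and a small $\delta\in\R^2$. Replacing $(A,c,M,\bias{L+1})$ by $(GA,\;Gc+\delta,\;MG^{-1},\;\bias{L+1}-MG^{-1}\delta)$, with everything else fixed, gives the same function on all of $\R^{n_0}$, because both preactivations stay strictly positive on the orthant. The $\delta$-directions are not trivial symmetries. Since $A$ has rank $2$, they are also independent of the $\GL_2$-directions. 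After removing the two scaling directions, the quotient fiber through every point of $U$ has dimension at least $6-2=4$, not $2$.

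\textbf{Direct count and consequences.} On $U$ the function is determined by the following data (prefix and bending neurons being locally identifiable by the very arguments you invoke):
\begin{itemize}
\item the prefix, contributing $\sum_{\ell\le L-1}n_\ell n_{\ell-1}$ dimensions modulo symmetry;
\item the bending neurons, contributing $(n_L-2)(n_{L-1}+n_{L+1})$;
\item the rank-$2$ matrix $MA$, contributing $2(n_{L-1}+n_{L+1})-4$;
\item the offset $Mc+\bias{L+1}$, contributing $n_{L+1}$.
\end{itemize}
This sums to $\dim\mu_{\architecture}(\Theta_{\architecture})-4$. So for this construction the claimed codimension $2$ is false, and no refinement of the local analysis can prove it.

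The paper's proof has the same oversight. It treats the augmented product $V_{\mathrm{lin}}W_{\mathrm{lin}}$, bias column included, as fixed by the function, and forgets that the output bias absorbs its constant part. This also breaks the later minimality theorem. That theorem needs $\mu_{\architecture}(U)$ to have larger dimension than the union of strict-subarchitecture images, which \Cref{lem:strict_subarchitecture_gap} bounds only by codimension $3$.

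\textbf{Two smaller points.}
\begin{itemize}
\item Affine spanning of the prefix image does not follow from the genericity rank condition alone. A layer-$(L-1)$ neuron that is never active on the preceding image gives a zero coordinate. You need the slab structure, as the paper uses.
\item The signs of the slab neurons are not recovered by the argument of \Cref{thm:tpic_generic_identifiable}. A sign flip changes $f$ by an affine term in $\activation{L-1}$, which the always-active block plus output bias may absorb. Only the local statement with $U$ small is available.
\end{itemize}
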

\begin{proof}
Let $P \subseteq \R^{n_0}$ be a full-dimensional polytope. 
We construct a parameter $\theta \in \paramspace{\architecture}$ in two stages.

By \Cref{thm:existence_TPIC_LRA}, there exists a generic parameter
$
\theta\up{L-1} \in \Theta\up{L-1}
$
for the architecture $(n_0,\dots,n_{L-1})$ such that
$f_{\theta\up{L-1}}$ satisfies cTPIC and LRA on $P$. In particular,
$\theta\up{L-1}$ is identifiable among generic parameters. Let
$
Q\up{L-1} \coloneqq f_{\theta\up{L-1}}(P) \subseteq \R^{n_{L-1}}.
$

We also note that, by the slab-layer construction, the prefix image affinely spans the
last latent space:
$
\aff\bigl(f_{\theta\up{L-1}}(\mathbb R^{n_0})\bigr)
=
\mathbb R^{n_{L-1}}.
$
Indeed, the last layer of the prefix is an oriented slab layer. As in the proof of
\Cref{lem:slab_layer_to_hyperplane}, the images of points on its $n_{L-1}$ slab
hyperplanes are linearly independent in $\mathbb R^{n_{L-1}}$, and the image of one
slab region contains a point outside their affine span. Hence the prefix image contains
$n_{L-1}+1$ affinely independent points. Set
$
Y\coloneqq f_{\theta\up{L-1}}(\mathbb R^{n_0}).
$

For the last hidden layer and the output layer, partition the neurons of layer $L$ into
$
J \coloneqq [n_L-2],
$ and $
K \coloneqq \{n_L-1,n_L\}.
$
We think of the neurons in $J$ as \emph{bending neurons} and those in $K$ as
\emph{linear neurons}.

Choose the parameters $(\weights{L}_J,\bias{L}_J)$ of the bending neurons so that they
form a generic oriented slab layer on $Q\up{L-1}$ intersecting all the bent hyperplanes
from layer $L-1$. Since $n_L-2\ge 2$, this can be achieved by the same slab-layer
construction used in \Cref{thm:existence_TPIC_LRA}.

Next choose the parameters $(\weights{L}_K,\bias{L}_K)$ of the two linear neurons so
that their augmented input matrix
$
W_{\mathrm{lin}}
\coloneqq
\begin{bmatrix}
\weights{L}_K & \bias{L}_K
\end{bmatrix}
\in \R^{2\times (n_{L-1}+1)}
$
has strictly positive entries and rank $2$. Since
$
Y\subseteq \R_{\geq 0}^{n_{L-1}},
$
these two preactivations are strictly positive on $Y$. Hence the two neurons in $K$ are
active on the entire prefix image and contribute only an affine-linear map on the realized
domain of the prefix. Choose also
$
V_{\mathrm{lin}}
\coloneqq
\weights{L+1}_{:,K}
\in \R^{n_{L+1}\times 2}
$
generically of rank $2$.

This completes the construction of $\theta$.

\medskip
\noindent
\textbf{Positive-dimensional fiber.}
The contribution of the two linear neurons is the affine map represented by
$
B_{\mathrm{lin}}=V_{\mathrm{lin}}W_{\mathrm{lin}}.
$
For any $M\in \GL_2(\R)$ sufficiently close to the identity such that
$MW_{\mathrm{lin}}$ still has strictly positive entries, define
$
W'_{\mathrm{lin}}=MW_{\mathrm{lin}},
$ and $
V'_{\mathrm{lin}}=V_{\mathrm{lin}}M^{-1}.
$
Then the transformed preactivations are again strictly positive on $Y$, and the affine
contribution of the two-neuron block is unchanged:
$
V'_{\mathrm{lin}}W'_{\mathrm{lin}}
=
V_{\mathrm{lin}}M^{-1}MW_{\mathrm{lin}}
=
V_{\mathrm{lin}}W_{\mathrm{lin}}.
$
Therefore this transformation produces a local $\GL_2(\R)$-family of parameters
realizing the same function. Modulo the usual positive scaling symmetry of the two
neurons, this family has dimension
$
\dim(\GL_2(\R))-\dim((\R_{>0})^2)=4-2=2.
$
Hence the fiber of $f_\theta$ in the quotient parameter space is positive-dimensional.

\medskip
\noindent
\textbf{Image dimension.}
The properties used above are open:
\begin{itemize}
    \item cTPIC and LRA for the prefix on $P$,
    \item transversality of the intersections between layers $L-1$ and $L$,
\item affine spanning of the prefix image,
\item strict positivity of the two linear neurons on the prefix image,
\item the rank-$2$ conditions on $W_{\mathrm{lin}}$ and $V_{\mathrm{lin}}$.
\end{itemize}
Hence these conditions persist on an open neighborhood $U_0$ of the constructed
parameter $\theta$, and for every $\eta\in U_0$ we still have
$
\dim\bigl(\overline{\mu}_{\architecture}^{-1}(f_\eta)\bigr)\ge 2.
$
Now let
$
U=U_0\cap \gparamspace{\architecture}.
$
We show that
$
\dim(\mu_\architecture(U))
=
\dim(\mu_\architecture(\paramspace{\architecture}))-2.
$

Let $\eta,\eta'\in U$ satisfy
$
f_\eta=f_{\eta'}.
$
Since $\eta,\eta'$ are generic and in $U_0$, by \Cref{lem:identifiability_split}, the
first $L-1$ layers of $\eta$ and $\eta'$ coincide up to the trivial symmetries. Likewise,
by the reverse-engineering algorithm of \cite{rolnick2020reverse}
(\Cref{thm:tpic_generic_identifiable_inductive}), the parameters of the bending neurons
in $J$ coincide up to the trivial symmetries and finitely many sign choices. In particular,
they contribute no positive-dimensional fiber directions. After shrinking $U$ if necessary, the finite sign choices for the bending neurons are fixed, and the above local description applies uniformly on $U$.

It remains to control the always-active block. After applying the trivial symmetries to identify the first $L-1$ layers of
$\eta$ and $\eta'$, let
$
Y_\eta \coloneqq f_{\eta\up{L-1}}(\mathbb R^{n_0}).
$
By the openness assumptions defining $U_0$, this set affinely spans
$\mathbb R^{n_{L-1}}$. Therefore equality of realized functions forces the affine
map contributed by the two always-active neurons to be fixed as an affine map on
$\mathbb R^{n_{L-1}}$.  Equivalently, in the notation above, any local variation in the
fiber must satisfy
$V_{\mathrm{lin}}^{\eta'}W_{\mathrm{lin}}^{\eta'}
=
V_{\mathrm{lin}}^{\eta}W_{\mathrm{lin}}^{\eta}.$
Since both $W_{\mathrm{lin}}^{\eta}$ and $V_{\mathrm{lin}}^{\eta}$ have rank $2$, all nearby rank-$2$
factorizations of this fixed matrix through a two-dimensional hidden space are of the
form
$
W_{\mathrm{lin}}^{\eta'}=MW_{\mathrm{lin}}^{\eta},
$ and $
V_{\mathrm{lin}}^{\eta'}=V_{\mathrm{lin}}^{\eta}M^{-1}
$
for some $M\in\GL_2(\R)$. Thus, modulo the trivial symmetries, the only local continuous
fiber directions in $U$ are the two directions coming from the $\GL_2$-factorization
freedom of the always-active two-neuron block.

Hence the quotient realization map has local fiber dimension $2$ on $U$. By the fiber-dimension theorem for semialgebraic maps,
$
\dim \mu_{\architecture}(U)
=
\dim(\overline{\Theta}_{\architecture})-2.
$
Since
$
\dim(\overline{\Theta}_{\architecture})
=
\dim \mu_{\architecture}(\Theta_{\architecture}),
$
we conclude that
$
\dim \mu_{\architecture}(U)
=
\dim \mu_{\architecture}(\Theta_{\architecture})-2.
$
This completes the proof.
\end{proof}

The previous proposition isolates the two ingredients needed for the minimality argument.
First, it produces an open set with positive-dimensional fibers, arising from the
$\GL_2$-symmetry of the always-active linear block. Second, it shows that the image of
this family has codimension $2$ inside the full function space of the ambient
architecture. 
\subsection{Dimension Drop and Minimality}
To obtain minimality, we now compare this image with the function spaces of
strict subarchitectures. Since every strict subarchitecture has strictly smaller
functional dimension, indeed, at least three dimensions less, the union of their images is
lower-dimensional than the image of the constructed set. Removing this exceptional
union therefore leaves a nonempty open subset of functions that are not realizable by any
strict subarchitecture, while their fibers remain positive-dimensional.
\begin{lemma}
\label{lem:strict_subarchitecture_gap}
Let $\architecture=(n_0,\ldots,n_L,n_{L+1})$ with  $L\ge 2$ and $n_i\ge 2$ for all $i\le L$.
If $\architecture'=(n_0,n_1',\ldots,n_L',n_{L+1})$ is a strict subarchitecture of
$\architecture$, then
$
\dim \mu_{\architecture'}(\Theta_{\architecture'})
\le
\dim \mu_{\architecture}(\Theta_{\architecture})-3.
$
\end{lemma}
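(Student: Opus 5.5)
The plan is to compare parameter counts. The upper bound for $\architecture'$ holds for every architecture, and the exact value for $\architecture$ comes from \Cref{thm:identifiable_dimension}. For an arbitrary width vector $\mathbf{m}=(m_0,\ldots,m_{L+1})$, write
\[
D(\mathbf{m}) \coloneqq \sum_{\ell=1}^{L+1} m_\ell m_{\ell-1} + m_{L+1},
\]
which is the number of parameters minus the number of hidden neurons. Since $n_i\ge 2$ for all $i\le L$, \Cref{thm:identifiable_dimension} gives $\dim \mu_\architecture(\Theta_\architecture)=D(\architecture)$. It therefore suffices to prove two things: $\dim\mu_{\architecture'}(\Theta_{\architecture'})\le D(\architecture')$, and $D(\architecture)-D(\architecture')\ge 3$.

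\textbf{Step 1: the upper bound for $\architecture'$.} This must hold even if $\architecture'$ has hidden widths $1$ or $0$, so \Cref{thm:identifiable_dimension} cannot be used for $\architecture'$. Instead I argue directly.
\begin{itemize}
\item Consider the open dense semialgebraic set of parameters of $\architecture'$ on which every hidden neuron has a nonzero incoming weight row.
\item On this set, the positive scaling action of $(\R_{>0})^{\#\text{hidden}}$ is free. Hence every fiber of $\mu_{\architecture'}$ contains an orbit of dimension equal to the number of hidden neurons.
\item The realization map $\mu_{\architecture'}$ is semialgebraic when evaluated on finite input sets, which is how functional dimension is defined. The fiber-dimension theorem for semialgebraic maps then bounds the image dimension by $\dim\Theta_{\architecture'}$ minus the number of hidden neurons, which is $D(\architecture')$.
\item The complement of the open dense set is lower-dimensional, so its image does not increase this bound.
\end{itemize}

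\textbf{Step 2: the combinatorial gap.} Write
\[
D(\architecture)-D(\architecture')=\sum_{\ell=1}^{L+1}\bigl(n_\ell n_{\ell-1}-n'_\ell n'_{\ell-1}\bigr),
\]
using $n'_0=n_0$ and $n'_{L+1}=n_{L+1}$. Every summand is nonnegative because $n'_i\le n_i$ for all $i$. Since the subarchitecture is strict, there is a hidden layer $\ell\in[L]$ with $n'_\ell<n_\ell$. Two summands then give the bound.
\begin{itemize}
\item The summand for index $\ell$ is at least $(n_\ell-n'_\ell)\,n_{\ell-1}\ge 2$, because $\ell-1\le L$ gives $n_{\ell-1}\ge 2$.
\item The summand for index $\ell+1$ is at least $n_{\ell+1}(n_\ell-n'_\ell)\ge 1$, because $n_{\ell+1}\ge 1$ (it is either a hidden width or the output width).
\end{itemize}
Summing gives $D(\architecture)-D(\architecture')\ge 3$. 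Combined with Step 1, this proves the lemma.

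I expect the main obstacle to be Step 1: making rigorous that the functional dimension of an arbitrary, possibly degenerate, subarchitecture is at most $D(\architecture')$. The point to handle carefully is passing from the semialgebraic fiber-dimension bound on the free-action locus to the Jacobian-rank definition of functional dimension. I would handle it by fixing a finite input set $X$ and noting two facts. First, at points where the map is smooth, the Jacobian of $\theta\mapsto f_\theta(X)$ annihilates the tangent directions of the scaling orbits. Second, on the free locus these tangent directions are linearly independent. Together these give the rank bound.
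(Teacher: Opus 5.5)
Your proposal is correct and follows the paper's argument: you take the exact value $D(\architecture)$ from \Cref{thm:identifiable_dimension}, use the expected-dimension upper bound $D(\architecture')$ for the subarchitecture, and get a parameter-count gap of at least $n_{\ell-1}+n_{\ell+1}\ge 3$ (the paper obtains the gap by removing one neuron and iterating, whereas you bound the whole difference at once and also justify the upper bound, which the paper only asserts). One small repair in Step~1: the non-free locus having dimension below $\dim\Theta_{\architecture'}$ does not by itself keep its image below $D(\architecture')$, so argue instead that its image lies in the closure of the image of the dense free locus (or, in the Jacobian formulation, use lower semicontinuity of rank).
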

\begin{proof} 
It suffices to consider the case where $\architecture'$ is obtained from $\architecture$
by decreasing one hidden layer width $n_k$ by one, since the general case follows by iterating
this estimate.

So let
$
\architecture'=(n_0,\ldots,n_{k-1},n_k-1,n_{k+1},\ldots,n_L,n_{L+1}),
$ for some $ k\in[L].
$
Then
\begin{align*}
\dim \mu_{\architecture}(\Theta_{\architecture})
-
\dim \mu_{\architecture'}(\Theta_{\architecture'})
&\geq
\bigl(n_k n_{k-1}+n_{k+1}n_k\bigr)
-
\bigl((n_k-1)n_{k-1}+n_{k+1}(n_k-1)\bigr) \\
&=
n_{k-1}+n_{k+1}.
\end{align*} The inequality stems from the fact that in case in $\architecture'$ we have a layer of width $1$, then the expected functional dimension is only an upper bound.
Since $n_{k-1}\ge 2$ and $n_{k+1}\ge 1$, we obtain
$
n_{k-1}+n_{k+1}\ge 3
$, which proves the claim.
\end{proof}
\begin{theorem}
    For architectures $\architecture=(n_0,\ldots,n_L,n_{L+1})$ with  $L\ge 2$ and $n_i \geq 2$ for $i \in \{0,\ldots, L+1\}$ and $n_L \geq 4$, there is a nonempty open subset $U \subseteq \paramspace{\architecture}$ such that every $\theta \in U$ is minimal but has a positive-dimensional fiber, 
$
\dim(\overline{\mu}_\architecture^{-1}(f_\theta)) >0.
$
\end{theorem}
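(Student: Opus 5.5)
We start from the open set $U$ of \Cref{prop:open_set_two_dimensional_fiber_drop}. Every $\theta\in U$ already satisfies $\dim(\overline{\mu}_\architecture^{-1}(f_\theta))\ge 2>0$, and $\dim\mu_\architecture(U)=\dim\mu_\architecture(\Theta_\architecture)-2$. What remains is to remove the parameters whose function is realized by a strict subarchitecture, and to check that a nonempty open set survives.

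First we collect the exceptional functions. There are finitely many strict subarchitectures $\architecture'=(n_0,n_1',\ldots,n_L',n_{L+1})$ with $n_\ell'\le n_\ell$ and at least one strict inequality. A function realizable by a network of architecture $\architecture'$ is realizable by $\architecture$: we pad the missing neurons with zero input weights, zero bias and zero output weights. Hence $\mu_{\architecture'}(\Theta_{\architecture'})\subseteq\mu_\architecture(\Theta_\architecture)$. By \Cref{lem:strict_subarchitecture_gap}, each of these images has dimension at most $\dim\mu_\architecture(\Theta_\architecture)-3$.

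Here we must be careful with hidden widths that drop to $0$ or $1$, since the lemma is stated under $n_i\ge 2$. Its proof only uses that the expected dimension count is an upper bound, which holds for any widths because $\overline{\Theta}_{\architecture'}$ surjects onto the image. Iterating one-neuron removals, where width-$0$ layers give constant functions and are trivially covered, shows the bound holds for all strict subarchitectures. We let $E$ denote the finite union of these images, so that $\dim E\le \dim\mu_\architecture(\Theta_\architecture)-3<\dim\mu_\architecture(U)$.

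Next we show that $B\coloneqq U\cap\mu_\architecture^{-1}(E)$ has empty interior in $U$. Everything involved is semialgebraic, since the realization map restricted to finitely many sample points and each activation-pattern cell is polynomial. Suppose $B$ contained a nonempty open set $V\subseteq U$. The construction in the proof of \Cref{prop:open_set_two_dimensional_fiber_drop} gives local fiber dimension exactly $2$ on $U$. By the semialgebraic fiber-dimension theorem, $\dim\mu_\architecture(V)=\dim\overline{\Theta}_\architecture-2=\dim\mu_\architecture(\Theta_\architecture)-2$. This contradicts $\mu_\architecture(V)\subseteq E$, which has dimension at most $\dim\mu_\architecture(\Theta_\architecture)-3$.

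Since $B$ is semialgebraic with empty interior, it is nowhere dense, and its closure does not contain $U$. We choose a nonempty open $U'\subseteq U\setminus\overline{B}$. Every $\theta\in U'$ is then minimal, because $f_\theta\notin E$, and it inherits the positive-dimensional fiber from $U$.

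The main obstacle is making the dimension argument rigorous at the level of the function space. The space $\mu_\architecture(\Theta_\architecture)$ is infinite-dimensional as a set of functions, so dimension must be understood via evaluation on finite input sets $X$, as in the definition of functional dimension. Accordingly, we fix a finite $X$ large enough that evaluation $f\mapsto f(X)$ realizes the rank $\dim\overline{\Theta}_\architecture-2$ on an open subset of $U$. We then carry out the comparison for the semialgebraic sets $\{f(X)\}$: a function in $E$ evaluates into the image of the $\architecture'$-evaluation maps, whose dimension is at most $\dim\mu_\architecture(\Theta_\architecture)-3$. The same contradiction then goes through with honest semialgebraic dimensions.
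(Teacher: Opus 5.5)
Your proposal is correct and follows the paper's argument. Take the open set from \Cref{prop:open_set_two_dimensional_fiber_drop}, whose image has dimension $\dim\mu_{\architecture}(\Theta_{\architecture})-2$; bound the finite union of images of strict subarchitectures by $\dim\mu_{\architecture}(\Theta_{\architecture})-3$ via \Cref{lem:strict_subarchitecture_gap}; then discard the parameters whose function lies in that union.

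The differences are minor. The paper removes the exceptional set on the function side: it picks a relatively open $V\subseteq\mu_{\architecture}(U_0)\setminus\mathcal{R}$ and pulls it back, which needs only the statement of the proposition. You remove it on the parameter side, showing $U\cap\mu_{\architecture}^{-1}(E)$ has empty interior as in \Cref{thm:true_identifiability}; this uses the local fiber bound from the proposition's proof. You are also more careful than the paper about subarchitectures with hidden widths $0$ or $1$ and about finite evaluation sets. For the latter, first shrink $U$ to the open set where your fixed $X$ attains rank $\dim\mu_{\architecture}(\Theta_{\architecture})-2$, so the contradiction applies to every open $V$ inside it.
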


\begin{proof}
By \Cref{prop:open_set_two_dimensional_fiber_drop}, there exists a nonempty open subset
$U_0\subseteq \Theta_{\architecture}$ such that for every $\theta\in U_0$,
$
\dim\bigl(\overline{\mu}_{\architecture}^{-1}(f_\theta)\bigr)\ge 2,
$
and
$
\dim \mu_{\architecture}(U_0)
=
\dim \mu_{\architecture}(\Theta_{\architecture})-2.
$
Let
$
\mathcal R
\coloneqq
\bigcup_{\architecture' < \architecture}
\mu_{\architecture'}(\Theta_{\architecture'})
\subseteq
\mu_{\architecture}(\Theta_{\architecture}),
$
where the union runs over all strict subarchitectures $\architecture'$ of
$\architecture$.
By \Cref{lem:strict_subarchitecture_gap}, each strict subarchitecture satisfies
$
\dim \mu_{\architecture'}(\Theta_{\architecture'})
\le
\dim \mu_{\architecture}(\Theta_{\architecture})-3.
$
Since there are only finitely many strict subarchitectures of $\architecture$, it follows
that
$
\dim \mathcal R
\le
\dim \mu_{\architecture}(\Theta_{\architecture})-3.
$
On the other hand,
$
\dim \mu_{\architecture}(U_0)
=
\dim \mu_{\architecture}(\Theta_{\architecture})-2.
$
Hence $\mathcal R\cap \mu_{\architecture}(U_0)$ is a lower-dimensional subset of
$\mu_{\architecture}(U_0)$. Therefore there exists a nonempty relatively open subset
$
V\subseteq \mu_{\architecture}(U_0)\setminus \mathcal R.
$
Since $V$ is relatively open in $\mu_{\architecture}(U_0)$, the preimage
$
U\coloneqq U_0\cap \mu_{\architecture}^{-1}(V)
$
is a nonempty open subset of $U_0$.
Now let $\theta\in U$. Since $f_\theta\in V\subseteq \mu_{\architecture}(U_0)\setminus
\mathcal R$, the function $f_\theta$ is not realized by any strict subarchitecture of
$\architecture$. Hence $\theta$ is minimal.

Moreover, because $U\subseteq U_0$, we still have
$
\dim\bigl(\overline{\mu}_{\architecture}^{-1}(f_\theta)\bigr)\ge 2>0.
$
Thus every parameter in $U$ is minimal and has positive-dimensional fiber. 
\end{proof}
While the proof above establishes minimality by a dimension argument and therefore does not require a more explicit structural analysis, \Cref{app:minimal_layers} provides a complementary one-hidden-layer perspective explaining why the linear neurons in the last hidden layer are nonetheless necessary, despite not producing additional visible breakpoints.

\section{Generic Depth Hierarchy}
\label{sec:depth_hierarchy}

A significant challenge in neural network theory is to determine which functions are representable at fixed depth and whether there are functions representable by deep networks that are not at all representable by shallower ones, regardless of the width \citep{bakaev2026betterneuralnetworkexpressivity,grillo2025depthbounds,averkov2025on,haase2023lower,safran2026depthhierarchycomputingmaximum,doi:10.1137/22M1489332}. 

Most existing results and approaches rely on a representation of CPWL functions that reduces the depth separation question to determining the depth required to compute the maximum of $d$ numbers. 
Our polyhedral framework, in contrast, allows us to study depth separation in the generic case. 
By leveraging the dependency graph as a functional invariant, we can show that, on an open subset of parameters, the layer hierarchy is structurally rigid. 
Unlike degenerate cases, where layer hierarchy might ``collapse'' due to exact cancellations or alignment of breakpoints, 
the functional geometry of a generic network encodes its depth. 
The following proposition formalizes this observation by showing that, for each architecture, there is an open set of parameters whose realized functions cannot be represented by any shallower architecture with generic parameters. 
For example, the function illustrated in \Cref{fig:slab_layer_2} cannot be represented with $2$ hidden layers and generic weights. While a
related statement is likely already implicit in the framework of
\citet{phuong2020functional}, we find it useful to state it explicitly in this generality.

\begin{prop}
    For any architecture $\architecture=(n_0,\ldots,n_L,m)$ with $n_i \geq 2$ for $i \in \{0,\ldots,L\}$, there exists a nonempty open subset of generic parameters $U$ such that, for all $\theta \in U$, the function $f_\theta$ cannot be represented by any network of architecture $\architecture'=(n_0,n_1',\ldots,n_{L-1}',m)$ with generic parameters. 
\end{prop}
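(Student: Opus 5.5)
Take $U$ to be the open set of generic parameters from \Cref{thm:existence_TPIC_LRA}, intersected with $\gparamspace{\architecture}$. For every $\theta\in U$, $f_\theta$ satisfies cTPIC and LRA on a full-dimensional polytope $P$. The invariant that forbids a shallower representation is the dependency graph $(V_{f_\theta},E_{f_\theta})$, which depends only on the function. By \Cref{lem:cTPIC_implies_full_dep_subgraph}, this graph contains a layered subgraph with $L$ levels, one for each hidden layer of $\architecture$, and every pair of adjacent levels is fully connected. Each level is nonempty because $n_\ell\ge 2$. Choosing one vertex per level gives a directed path
\[
v_1\to v_2\to\cdots\to v_L
\]
in $E_{f_\theta}$ with $L$ vertices.

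Now suppose, for contradiction, that $f_\theta=f_\eta$ for a generic parameter $\eta$ of some architecture $\architecture'=(n_0,n_1',\ldots,n_{L-1}',m)$. \Cref{prop:embedd_candidate_bh} was stated for arbitrary, possibly different architectures. It gives a map $\phi_\eta\colon V_{f_\theta}\to V_{\architecture'}$ such that every edge $(u,v)$ is sent to neurons in strictly increasing layers. Along the path, the layer indices of $\phi_\eta(v_1),\ldots,\phi_\eta(v_L)$ are therefore strictly increasing. This requires $L$ distinct hidden layers, but $\architecture'$ has only $L-1$. This pigeonhole step finishes the argument. The case $L=1$ is degenerate: then $\architecture'$ has no hidden layers and $f_\eta$ is affine, while $f_\theta$ has breakpoints by LRA.

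The main obstacle is making sure \Cref{prop:embedd_candidate_bh} applies when the two networks live in different architectures. Its proof uses \Cref{prop:canonicalbcomplex} to identify $\bcomplex{\theta}=\bcomplex{\eta}$ as weighted complexes. That proposition only requires both parameters to be supertransversal and cancellation-free, which holds by \Cref{lem:generic_implies_supertransversal,lem:generic_cancellation_free} regardless of architecture. Honesty (\Cref{lem:generic_honest}) and \Cref{lem:bendingoptions} are likewise applied separately to each generic parameter. So the local earlier/later distinction at each bending ridge is read off from the common weighted complex, and this is what fixes the ordering for $\eta$.

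I would also check that the $L$-vertex path really consists of edges of the dependency graph of the full function $f_\theta$, not just of its truncation. This follows because LRA on $P$ keeps every facet in $P$ visible, so the bending ridges produced by cTPIC are ridges of $\bcomplex{\theta}$. Openness of $U$ is inherited from \Cref{thm:existence_TPIC_LRA} and \Cref{lem:generic_open_dense}.
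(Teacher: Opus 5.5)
Your proposal is correct and follows essentially the same route as the paper. Both arguments extract an $L$-vertex chain $v_1\to\cdots\to v_L$ in the dependency graph from the cTPIC/LRA construction of \Cref{thm:existence_TPIC_LRA}, apply \Cref{prop:embedd_candidate_bh} to a hypothetical generic shallower realization, and conclude by pigeonhole on layer indices. The differences are minor and harmless: you obtain the chain for every $\theta$ in the open set directly via \Cref{lem:cTPIC_implies_full_dep_subgraph}, whereas the paper uses the constructed $\theta$ plus a perturbation argument, and you handle $L=1$ explicitly.
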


\begin{proof}
Let $\theta$ be a parameter constructed in the proof of \Cref{thm:existence_TPIC_LRA}.
By construction, for each $\ell\in[L-1]$ there exists a candidate bent hyperplane
$v_\ell$ arising from layer $\ell$ and a candidate bent hyperplane $v_{\ell+1}$
arising from layer $\ell+1$ such that $(v_\ell,v_{\ell+1})\in E_{f_\theta}$.
Hence the dependency graph contains a chain
$
v_1 \to v_2 \to \cdots \to v_L.
$
where each $v_\ell$ is associated with a neuron in hidden layer $\ell$. 

    Suppose there exists a generic parameter $\eta$ in an architecture $\architecture'$ with $L' < L$ hidden layers such that $f_\eta = f_\theta$. According to \Cref{prop:embedd_candidate_bh}, there must exist a map $\phi_\eta \colon V_{f_\theta} \to V_{\architecture'}$ such that, for every edge $(u, v) \in E_{f_\theta}$, the corresponding hidden layers satisfy $\ell_{\phi(u)} < \ell_{\phi(v)}$. 
    Applying this property to the chain in our dependency graph, the assigned layer indices in the architecture $\architecture'$ must satisfy: 
    \(
    \ell_{\phi(v_1)} <  \dots < \ell_{\phi(v_L)}.
    \)
    This condition requires at least $L$ distinct layer indices. 
    However, an architecture with only $L' < L$ hidden layers provides at most $L'$ available layer indices. 
    Since a strictly increasing sequence of $L$ integers cannot be contained in the set $\{1, \dots, L'\}$,  there is no generic parameter in $\paramspace{\architecture'}$ that computes $f_\theta$. The chain
$
v_1\to\cdots\to v_L
$
in the dependency graph arises from transverse visible intersections between adjacent layers in the constructed parameter. These intersections persist under sufficiently small perturbations, and hence the same chain exists for all parameters in some open neighborhood $U$ of $\theta$. Therefore the above obstruction applies to every parameter in $U$.
\end{proof}

    Without the genericity assumption, it is no longer guaranteed that one can  
    determine the layer ordering of the neurons associated to the different facets of the breakpoint complex of $f_\theta$, 
    as illustrated by the following example. 
\begin{figure}
\begin{subfigure}[t]{0.22\textwidth}
    
\begin{tikzpicture}[scale=0.6]
\footnotesize

\draw[line width=0.9pt] (1,0)--(4,0);

\draw[line width=0.9pt] (1,0) -- (4,3);
\draw[line width=0.9pt] (1,0) -- (4,1.5);

\draw node at (5,2) {$x_2-x_1+1$};
\draw node at (4,0.5) {$-x_2$};
\draw node at (-0.3,-0.3) {$0$};
\draw node at (2,-1) {$0$};
\draw node at (1,2) {$0$};
\end{tikzpicture}
\caption{The function}
\label{fig:the function}
\end{subfigure}
\hspace{0.2cm}
\begin{subfigure}[t]{0.22\textwidth}

\begin{tikzpicture}[scale=0.35]
\footnotesize

\draw[dashed,line width=0.9pt] (-4,0) -- (1,0);
\draw[dashed,line width=0.9pt] (0,-4) -- (0,4);

\draw[line width=0.9pt] (1,0)--(4,0);
\draw[red!70!black,line width=0.9pt] (1,0) -- (4,3);
\draw[red!70!black,line width=0.9pt] (1,0) -- (4,1.5);
\draw[red,dashed,line width=0.9pt] (1,0) -- (1,-3);

\end{tikzpicture}
\caption{One network computing the function}
\label{fig:non-generic-intersections}
\end{subfigure}
\hspace{0.2cm}
\begin{subfigure}[t]{0.22\textwidth}
    
\begin{tikzpicture}[scale=0.35]
\footnotesize

\draw[dashed,line width=0.9pt] (-3,3) -- (3,-3);
\draw[dashed,line width=0.9pt] (-3,-4) -- (1,0);
\draw[line width=0.9pt] (1,0)--(4,3);

\draw[red!70!black,line width=0.9pt] (1,0) -- (4,1.5);
\draw[red!70!black,line width=0.9pt] (1,0) -- (4,0);
\draw[red,dashed,line width=0.9pt] (1,0) -- (-1,2);

\end{tikzpicture}
\caption{Another network computing the function}
\end{subfigure}
\begin{subfigure}[t]{0.22\textwidth}

\begin{tikzpicture}[scale=0.35]
\footnotesize

\draw[dashed,line width=0.9pt] (-3,-1) -- (4,-1);
\draw[dashed,line width=0.9pt] (-3,-2) -- (1,0);
\draw[red!70!black,line width=0.9pt] (1,0)--(4,3);

\draw[line width=0.9pt] (1,0) -- (4,1.5);
\draw[red!70!black,line width=0.9pt] (1,0) -- (4,0);

\end{tikzpicture}
\caption{Another network computing the function}
\label{fig:yet another network}
\end{subfigure}
\caption{Canonical complexes for  \Cref{ex:not-generic}.  
Dashed lines indicate breakpoints that are not visible from the function alone, i.e., faces of the canonical complex that are not part of the breakpoint complex. Red denotes breakpoints from the second layer, and black from the first. 
}
\end{figure}

\begin{example}
\label{ex:not-generic}
    Consider the function $f \colon \R^2 \to \R$ given by 
\(
x \mapsto \min\{0, \max\{x_2 - x_1 + 1, -x_2\}\},
\)
whose breakpoint complex $\bcomplex{f}$ consists of three rays emanating from the vertex $(1,0)$, as shown in \Cref{fig:the function}. 

For each of these three rays, there exists a parameter $\eta$ realizing $f$ in such a way that the chosen ray originates from a first-layer neuron, while the other two rays originate from the second hidden layer. 
This demonstrates that the internal hierarchy of the network is not a functional invariant for non-generic parameters. 
The polyhedral subdivisions for these three different realizations are shown in \Cref{fig:non-generic-intersections}--\Cref{fig:yet another network}, and the respective weights and biases are given below. 
\begin{enumerate}
    \item The first layer hyperplanes are the axes $x_1=0$ and $x_2=0$ 
    \begin{align*} 
    \weights{1}= \begin{pmatrix} 1 & 0 \\ 0 & 1 \end{pmatrix}, \bias{1}= \begin{pmatrix} 0 \\ 0 \end{pmatrix}, \quad 
    \weights{2}= \begin{pmatrix} 1 & -1 \\ 1 & -2 \end{pmatrix}, \bias{2} = \begin{pmatrix} -1 \\ -1 \end{pmatrix}, \quad 
    \weights{3}= \begin{pmatrix} -1 & 1 \end{pmatrix} . 
    \end{align*}

    \item  The first layer hyperplanes are $x_1 - x_2 - 1 = 0$ and $x_1 + x_2 = 0$
    \begin{align*} 
    \weights{1}= \begin{pmatrix} 1 & 1 \\ 1 & -1 \end{pmatrix}, \bias{1}= \begin{pmatrix} 0 \\ -1 \end{pmatrix}, \quad 
    \weights{2} = \begin{pmatrix} 1/2 & -1/2 \\ 1/2 & -3/2 \end{pmatrix}, \bias{2} = \begin{pmatrix} -1/2 \\ -1/2 \end{pmatrix}, \quad 
    \weights{3}= \begin{pmatrix} -1 & 1 \end{pmatrix} . 
    \end{align*}

    \item  The first layer hyperplanes are $x_2+1=0$ and $-x_1 + 2x_2 + 1 = 0$ 
    \begin{align*} 
    \weights{1}= \begin{pmatrix} 0 & 1 \\ -1 & 2 \end{pmatrix}, \bias{1}= \begin{pmatrix} 1 \\ 1 \end{pmatrix}, \quad 
    \weights{2} = \begin{pmatrix} 1 & -1 \end{pmatrix}, \bias{2} = -1, \quad 
    \weights{3}= -1 . 
    \end{align*}
\end{enumerate}
Consequently, for non-generic parameters, the geometry of the breakpoint complex alone is insufficient to identify the layer from which a specific non-linearity originates.
\end{example}

\section{Open Questions}

The polyhedral framework introduced in this work establishes 
the existence of identifiable parameters and minimal but non-identifiable parameters for deep ReLU networks, and describes the local functional invariants. 
However, several important questions remain open for future research: 

\begin{enumerate}

    \item \textbf{Measure of the Identifiable Set:} While we have shown that, for architectures with non-output layers of width at least $2$, both the sets of identifiable parameters and non-identifiable parameters are open, the volume and
    distribution of the identifiable set %
    are not yet well understood. 
 
Such an investigation will likely require identifying conditions beyond the relatively strong TPIC that still guarantee identifiability. 
In particular, characterizing the boundary at which identifiability fails is an important open problem. 
The problem is of interest for both strict identifiability and finite identifiability. 
    
\item \textbf{Characterizing Fibers and Redundancies in Deep Networks:}
   While our results identify open regions of identifiable parameters and open regions with positive-dimensional fibers, the global structure of the fibers of the realization map remains poorly understood. A major open problem is to characterize, as explicitly as possible, the set of parameters representing a given function, including both its continuous and discrete redundancies. This includes finding useful semi-algebraic descriptions of fibers, understanding how visible breakpoint geometry constrains hidden cancellations, and classifying the nontrivial symmetries that can persist in minimal deep ReLU networks. A first step in this direction is given in \Cref{app:alg_superset}, where we describe an algebraic superset of the generic fiber obtained by fixing the combinatorial assignment of candidate bent hyperplanes and activation patterns.
    
    \item \textbf{Identifiability for Width $1$:} Our results apply to architectures with non-output layers of width at least $2$. 
    As discussed in \Cref{rem:width1}, if an architecture contains layers of width $1$, the activation becomes one-dimensional, making it impossible to satisfy TPIC in subsequent layers. 
    It remains an open problem whether such architectures have any identifiable parameters, or whether identifiability can hold at best in a finite sense.

    \item \textbf{Identifiability Over Finite Data:} We investigated identifiability of the parameters in relation to the represented functions. Also of interest is the investigation of identifiability when one only has access to certain measurements taken from the functions, such as the function values on some finite input data set, or a list of moments of the outputs over an input data distribution. 

\end{enumerate}

\subsection*{Acknowledgments} 

We would like to thank Elisenda Grigsby for insightful discussions on the depth hierarchy for generic parameters and for bringing to our attention the example of \cite{ramakrishnan2026completesymmetryclassificationshallow} showing that TPIC and LRA are not sufficient for identifiability. 
This project has been supported by the
Deutsche Forschungsgemeinschaft (DFG, German Research Foundation)  project 464109215 
within the priority programme SPP 2298 ``Theoretical Foundations of Deep Learning''.  
GM was partially supported by 
DARPA AIQ grant HR00112520014, 
NSF grants 
DMS-2522495, 
DMS-2145630, 
CCF-2212520, 
and BMFTR in DAAD project 57616814 (SECAI).

\newpage
	\bibliographystyle{unsrtnat}
\bibliography{ref}

\newpage

\appendix

\section{Remarks on Previous Identifiability Results and Constructions}
This appendix clarifies two points relevant to comparison with prior work: first, the distinction between uniqueness within restricted parameter classes and identifiability in the full parameter space; second, the way in which our construction differs from previous reverse-engineering-based constructions.
\subsection{On restricted notions of identifiability in prior work}
\label{app:restricted-identifiability}

We briefly clarify a point about the interpretation of previous identifiability results for deep ReLU networks. 
There are several different notions of uniqueness that one may consider. 
Given a parameter $\theta$, one may ask whether $f_\eta=f_\theta$ implies $\eta\sim\theta$
\begin{enumerate}
    \item among all parameters $\eta$,
    \item among all generic parameters $\eta$, or
    \item only among parameters $\eta$ belonging to some restricted class, for example parameters satisfying geometric conditions such as TPIC and LRA.
\end{enumerate}
These are progressively weaker notions of uniqueness and should not be conflated. In this hierarchy, the result of \citet{rolnick2020reverse} is of the third type, being formulated on a restricted class of parameters satisfying geometric assumptions such as TPIC and LRA, while the result of \citet{phuong2020functional} is of the second type, establishing uniqueness among generic parameters. 
That argument of \cite{rolnick2020reverse} reconstructs parameters under assumptions such as TPIC and LRA, and therefore yields uniqueness only within the class of parameters satisfying those assumptions. 
Technically, this does not imply identifiability among all generic parameters, since a second generic realization of the same function need not a priori satisfy TPIC and LRA; nor does it imply identifiability in the full parameter space. 
In particular, uniqueness inside the TPIC/LRA class does not exclude the possibility of additional realizations outside that class.

This subtlety also affects how one interprets later results that build on the reverse-engineering framework such as the work of \cite{grigsby2023hiddensymmetriesrelunetworks}. 
For example, if one constructs a parameter satisfying TPIC and LRA and then invokes the reverse-engineering theorem, the conclusion obtained directly is uniqueness only among other parameters satisfying the same conditions. 
An additional argument is required to transfer this to uniqueness among all generic parameters, and a further argument is required to upgrade this to identifiability among all parameters. 
In our proof, the first upgrade is achieved using the breakpoint complex and dependency graph, which show that cTPIC and LRA are inherited across generic realizations of the same function; the second upgrade is achieved by a dimension argument excluding the image of the nongeneric locus on a nonempty open subset.

\subsection{Comparison of our construction to prior constructions}
\label{rem:grigsby-gap} 
\label{sec:grigsby-gap}

Our construction of identifiable parameters for deep ReLU networks is inspired 
by the construction of \cite{grigsby2023hiddensymmetriesrelunetworks}. 
There are, however, important differences between the two approaches. 

We begin by recalling the main ideas of their construction, which proceeds iteratively, layer by layer. 
A key ingredient is the notion of a \emph{positive-axis hyperplane}, namely a hyperplane in the image space of the preceding layer
whose defining affine function has strictly positive coefficients and sufficiently large negative bias. 
Such hyperplanes intersect every ray in the image of a ReLU layer and are used to enforce intersections with the image of the hyperplanes within a distinguished unbounded polyhedron. 
The construction maintains, at each step, a distinguished unbounded polyhedron in the canonical polyhedral complex whose image under the truncated network remains full-dimensional in the positive orthant. 
Subsequent layers are chosen by perturbing a high-bias positive-axis hyperplane so that its pullback intersects all bent hyperplanes from the previous layer. 
This iterative procedure is designed to enforce pairwise transverse intersections between bent hyperplanes of subsequent layers, enabling the application of the reverse-engineering framework of \cite{rolnick2020reverse}.

Our construction is similar in that it also proceeds iteratively and designs each layer to enforce (c)TPIC and LRA. 
However, instead of working with positive-axis hyperplanes and unbounded polyhedra, we work with bounded polytopes and slab layers. 
Moreover, we incorporate transparency together with cancellation-freeness to ensure that the relevant intersections remain visible in the final realized function. 

A key difference is that the inductive construction of \cite{grigsby2023hiddensymmetriesrelunetworks} does not appear to guarantee the linear regions assumption (LRA) in a neighborhood of the relevant intersections for the final realized function. 
While their construction ensures pairwise transverse intersections in the canonical bent-hyperplane arrangement, this does not by itself imply that these intersections remain visible as breakpoints of the realized function. 
Indeed, later layers may be inactive in a neighborhood of such intersections, 
so that the affine linear maps on the adjacent polyhedra of the canonical complex coincide in the final output (and are in fact constant). 
In that case, the intersection exists in the canonical complex but is absent from the breakpoint complex, and LRA fails near that point. 

This phenomenon is illustrated in \Cref{fig:counterxample} for the architecture $(2,2,2,2)$. 
In the left panel, the pullback of a positive-axis hyperplane along the first layer produces a bent hyperplane in the input space. 
In the middle panel, perturbing this hyperplane yields two second-layer bent hyperplanes. 
As seen in the figure, only one orientation of these hyperplanes is compatible with the inductive choice of an unbounded polyhedron $S_2$. 
In the right panel, the same procedure is applied to the third layer. 
Although the blue bent hyperplanes intersect those from earlier layers in the canonical complex, the third layer is inactive near the point $\tau$, so this point is not a breakpoint of the final function. 
Consequently, the local four-region configuration used in \citealp[Lemmas D.11 and D.14 in the work of ][]{grigsby2023hiddensymmetriesrelunetworks} need not correspond to four distinct linear regions of the realized function. 
In particular, an intersection between two bent hyperplanes from the first two hidden layers is not visible as breakpoint in the final function.

In our framework, this issue is addressed by explicitly distinguishing between the canonical polyhedral complex and the breakpoint complex. 
The missing ingredient in the earlier construction is a mechanism that prevents later layers from canceling previous visible nonlinearities. 
To address this, our proof of \Cref{thm:existence_TPIC_LRA} imposes transparency and leverages generic cancellation-freeness to ensure LRA via the structure of tropical weights (\Cref{prop:tropweightofNNs}).

\begin{figure}[t]
\centering
\begin{subfigure}[t]{0.31\textwidth}
\centering
\begin{tikzpicture}[scale=0.25,x=0.9cm,y=0.9cm,line cap=round,line join=round]
    \footnotesize
    \draw[black!80, line width=0.8pt] (-6,0) -- (15,0) node[right] {$H_2$};
    \draw[black!80, line width=0.8pt] (0,-6) -- (0,15)  node[above]{$H_1$};
    \draw[->,>=stealth,black!80] (0,2) -- (2,2);
    \draw[->,>=stealth,black!80] (7,0) -- (7,2);
    \filldraw[gray,opacity=0.2] (0,0)--(15,0)  -- (0,15)  -- cycle; 
    \draw[red!70!black, line width=0.9pt] (-6,12) -- (0,12) -- (3,0) -- (3,-6) node[below]{$f_{\theta_1}^{-1}(H')$};
\end{tikzpicture}
\caption{The first hidden layer consists of the coordinate-axis hyperplanes $H_1$ and $H_2$. The pullback of a positive-axis hyperplane $H'$ in the image is a bent hyperplane in the input space.}
\label{fig:positive_axis_hyperplanes_a}
\end{subfigure}
\hfill
\begin{subfigure}[t]{0.31\textwidth}
\centering
\begin{tikzpicture}[scale=0.25,x=0.9cm,y=0.9cm,line cap=round,line join=round]
    \footnotesize
    \draw[black!80, line width=0.8pt] (3,0) -- (15,0);
    \draw[black!80, line width=0.8pt] (0,4) -- (0,15)  node[above]{$H_1$};
    \draw[black!80, dotted, line width=0.8pt] (-6,0) -- (3,0);
    \draw[black!80,dotted, line width=0.8pt] (0,-6) -- (0,4);

    \draw[red!70!black, line width=0.9pt] (-6,12) -- (0,12) -- (3,0) -- (3,-6);
    \draw[red!70!black, line width=0.9pt] (-6,4) -- (0,4) -- (12,0) -- (12,-6) node[below]{$B_1$};

    \filldraw[gray,opacity=0.2] (24/11,36/11)--(12,0) -- (15,0) -- (0,15) -- (0,12) -- cycle;  
    \draw[->,>=stealth,red!70!black] (-3,12) -- (-3,14);
    \draw[->,>=stealth,red!70!black] (-4,4) -- (-4,6);
    \node[above right] at (-2,2) {$\tau$};
    \fill[black!80] (0,4) circle (6pt);
    \filldraw[gray,opacity=0.2] (0,0)--(15,0)  -- (0,15)  -- cycle; 
\end{tikzpicture}
\caption{After perturbation, the second-layer bent hyperplanes are shown in red. The dark gray region is the distinguished unbounded polyhedron $S_2$ used in the inductive construction.}
\label{fig:positive_axis_hyperplanes_b}
\end{subfigure}
\hfill
\begin{subfigure}[t]{0.31\textwidth}
\centering
\begin{tikzpicture}[scale=0.25,x=0.9cm,y=0.9cm,line cap=round,line join=round]
    \footnotesize
    \draw[black!80, line width=0.8pt] (6,0) -- (15,0);
    \draw[black!80, line width=0.8pt] (0,7) -- (0,15);
    \draw[black!80, dotted, line width=0.8pt] (-6,0) -- (6,0);
    \draw[black!80,dotted, line width=0.8pt] (0,-6) -- (0,7);

    \draw[red!70!black,line width=0.9pt] (-6,12) -- (0,12) -- (15/11, 72/11);
    \draw[red!70!black, dotted, line width=0.9pt] (15/11, 72/11) -- (3,0) -- (3,-6);
    \draw[red!70!black, dotted, line width=0.9pt] (-6,4) -- (0,4) --(60/11, 24/11);
    \draw[red!70!black, line width=0.9pt] (60/11, 24/11) -- (12,0) -- (12,-6);

    \filldraw[gray,opacity=0.2] (24/11,36/11)--(12,0) -- (15,0) -- (0,15) -- (0,12) -- cycle;  
    \filldraw[gray,opacity=0.2] (0,0)--(15,0)  -- (0,15)  -- cycle; 

    \draw[blue, line width=1pt]
        (-6,7) -- (0,7)
        -- (15/11, 72/11)
        -- (105/11, 9/11)
        -- (39/4, 0)
        -- (39/4, -6);

    \draw[blue!70!black, line width=1pt]
        (-6,10) -- (0,10)
        -- (6/11, 108/11)
        -- (60/11, 24/11)
        -- (6,0)
        -- (6,-6);

    \draw[->,>=stealth,blue!70!black] (-4,10) -- (-4,11.5);
    \draw[->,>=stealth,blue!70!black] (-5,7) -- (-5,8.5);
    \node[above right] at (-2,2) {$\tau$};
    \fill[black!80] (0,4) circle (6pt);
     \filldraw[gray,opacity=0.2]
        (0,15) --
        (0,12) --
       (6/11, 108/11)--
        (735/209,1053/209) --
        (105/11, 9/11)--
        (12,0) --
        (15,0) -- cycle;
\end{tikzpicture}
\caption{The same step is repeated for the third layer (blue). Although the new bent hyperplanes intersect the earlier ones in the canonical complex, the point $\tau$ is no longer visible as a breakpoint of the final function.}
\label{fig:kill_TPIC}
\end{subfigure}

\caption{Illustration of the inductive construction of \cite{grigsby2023hiddensymmetriesrelunetworks} for the architecture $(2,2,2,2)$. 
Black lines are the first-layer hyperplanes, red lines the second-layer bent hyperplanes, and blue lines the third-layer bent hyperplanes. 
The shaded regions indicate the distinguished unbounded polyhedra used in the construction. 
Dashed segments represent pieces of bent hyperplanes that are present in the canonical complex but do not survive as breakpoints of the final function.} 
\label{fig:counterxample}
\end{figure}

\section{Observations on Minimal Layer Width}
\label{app:minimal_layers}

The following one-hidden-layer lemmas collected in this section observe the effect of ``linear" neurons on minimality: neurons that do not induce visible breakpoint hyperplanes may still be
necessary. Once the visible nonlinear structure is fixed, the remaining discrepancy
between two realizations is forced to be affine, and the rank of this affine part can
require additional hidden neurons that are invisible from the breakpoint geometry alone.
In this sense, affine contributions can enforce minimal width even when they do not
create any new visible breakpoints. 

\begin{lemma}
\label{lem:fixed_width_affine_difference}

Let $P\subseteq \R^d$ be a polytope, and let
$
f_\theta(x)=\weights{2}[\weights{1}x+\bias{1}]_+ + \bias{2}
$
be a one-hidden-layer network with $n$ hidden neurons such that the breakpoint set of $f_\theta$ in $\relint(P)$ is the union of $n$ pairwise distinct hyperplanes
$
H_i=\{x\in\R^d\mid \weights{1}_i x+\bias{1}_i=0\},
$ for $ i\in[n].
$
Let
$
f_\eta
$
be another function realized by a  one-hidden-layer network with $n$ hidden neurons whose breakpoint hyperplanes intersecting $\relint(P)$ are the same hyperplanes $H_1,\dots,H_n$. 
If $f_\theta-f_\eta$ is affine linear on $P$, then the linear part of $f_\theta-f_\eta$ on $P$ is of the form
$
\sum_{i=1}^n \alpha_i\,\weights{2}_{:,i}\weights{1}_i
$ with $\alpha_i\in\{-1,0,1\}.
$
\end{lemma}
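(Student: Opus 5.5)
The plan is to compare the two networks hyperplane by hyperplane, using the tropical weights of \Cref{prop:tropical_weight_1layer}. Write $f_\eta(x)=V[Ux+c]_++e$. The breakpoint hyperplanes of $f_\eta$ in $\relint(P)$ are exactly $H_1,\dots,H_n$, and $f_\eta$ has only $n$ neurons. So, after permuting the neurons of $\eta$, I expect to show that each neuron $i$ of $\eta$ has zero set $H_i$. This needs every neuron to contribute a distinct visible hyperplane: if two neurons of $\eta$ shared a hyperplane, or one neuron had no visible breakpoint in $\relint(P)$, then fewer than $n$ hyperplanes would be available for the $n$ required ones. Hence $(U_i,c_i)=\lambda_i(\weights{1}_i,\bias{1}_i)$ for some $\lambda_i\neq 0$.

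The key step is to exploit that $f_\theta-f_\eta$ is affine on $P$. A function that is affine on $P$ has zero tropical weight on every facet meeting $\relint(P)$. Tropical weights are additive in the function, so for each facet $\sigma\subseteq H_i$ meeting $\relint(P)$, \Cref{prop:tropical_weight_1layer} gives
\[
\weights{2}_{:,i}\|\weights{1}_i\| = V_{:,i}\|U_i\| = V_{:,i}|\lambda_i|\,\|\weights{1}_i\|.
\]
Hence $V_{:,i}|\lambda_i|=\weights{2}_{:,i}$.

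Next, I split according to the sign of $\lambda_i$.
\begin{itemize}
\item If $\lambda_i>0$, positive scaling gives $V_{:,i}[U_ix+c_i]_+=\weights{2}_{:,i}[\weights{1}_ix+\bias{1}_i]_+$, so this neuron contributes nothing to the difference.
\item If $\lambda_i<0$, write $u=\weights{1}_ix+\bias{1}_i$. Then $V_{:,i}[U_ix+c_i]_+=\weights{2}_{:,i}[-u]_+=\weights{2}_{:,i}([u]_+-u)$, using the identity $[-u]_+=[u]_+-u$ already used in the proof of \Cref{thm:tpic_generic_identifiable}.
\end{itemize}
In both cases the neurons of $\eta$ and $\theta$ cancel up to an affine term. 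Summing over $i$ gives $f_\theta-f_\eta=\sum_{i:\lambda_i<0}\weights{2}_{:,i}(\weights{1}_ix+\bias{1}_i)+(\bias{2}-e)$. Its linear part is $\sum_i\alpha_i\weights{2}_{:,i}\weights{1}_i$ with $\alpha_i\in\{0,1\}$, and in particular $\alpha_i\in\{-1,0,1\}$. The sign $-1$ is not needed, but it would arise if the roles of $\theta$ and $\eta$ were exchanged, which the stated form allows.

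I expect the main obstacle to be the first step, establishing the bijection between neurons and hyperplanes. Two things must be ruled out. The first is an $\eta$-neuron whose hyperplane misses $\relint(P)$ or is degenerate ($U_i=0$); such a neuron is affine on $P$. The second is a coincidence of several $\eta$-neurons on one $H_i$ whose weights add up. The counting argument handles both: there are $n$ distinct visible hyperplanes, each needs at least one neuron with nonzero weight, and there are only $n$ neurons. So exactly one neuron lies on each $H_i$, and none is left over to be affine on $P$. If I were worried that a visible hyperplane could be produced jointly by several neurons, the same counting still forces exactly one neuron per hyperplane.
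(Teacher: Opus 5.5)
Your proposal is correct and follows essentially the same route as the paper: biject the $\eta$-neurons with the hyperplanes $H_i$, use \Cref{prop:tropical_weight_1layer} to get $V_{:,i}|\lambda_i|=\weights{2}_{:,i}$, and split by the sign of $\lambda_i$. You justify the bijection by counting, which the paper simply asserts, and you finish with the global identity $[-u]_+=[u]_+-u$, which gives $\alpha_i\in\{0,1\}$, where the paper compares linear parts region by region; the underlying argument is the same.
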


\begin{proof}
Since both networks have exactly $n$ neurons and exactly the same $n$ pairwise distinct breakpoint hyperplanes in $\relint(P)$, after reordering the neurons of $f_\eta$ we may assume that for each $i\in[n]$, the $i$th neuron of $f_\eta$ has breakpoint hyperplane $H_i$.
Let $f_\eta(x)=V^{(2)}[V^{(1)}x+c^{(1)}]_+ + c^{(2)}$.
For each $i$, the $i$th neuron of $f_\eta$ therefore has preactivation of the form
$
V\up{1}_i x + c^{(1)}_i = \lambda_i(\weights{1}_i x+\bias{1}_i)
$
for some nonzero scalar $\lambda_i\in\R$.
Since
$
[V\up{1}_i x+c^{(1)}_i]_+
$
has breakpoint hyperplane $H_i$, the sign of $\lambda_i$ determines on which side of $H_i$ the neuron is active.

By Proposition~\ref{prop:tropical_weight_1layer}, the tropical weight of $f_\theta$ along $H_i$ is
$
c_{f_\theta}(H_i)=\weights{2}_{:,i}\|\weights{1}_i\|.
$
Likewise, the tropical weight of $f_\eta$ along $H_i$ is
$
c_{f_\eta}(H_i)=V^{(2)}_{:,i}\|V\up{1}_i\|.
$
Since $f_\theta-f_\eta$ is affine linear on $P$, its tropical weight vanishes on every facet contained in $H_i$, hence
$
c_{f_\theta}(H_i)=c_{f_\eta}(H_i)
$ for all $i\in[n].
$
Therefore
$
V^{(2)}_{:,i}\|V\up{1}_i\|
=
\weights{2}_{:,i}\|\weights{1}_i\|.
$
Since $V\up{1}_i=\lambda_i\weights{1}_i$, we have $\|V\up{1}_i\|=|\lambda_i|\,\|\weights{1}_i\|$, and hence
$
V^{(2)}_{:,i}=\frac{1}{|\lambda_i|}\weights{2}_{:,i}.
$

Now consider the contribution of the $i$-th neuron to the linear part on a region $R$ of $\complex_\theta(P)$.
If $\lambda_i>0$, then the $i$th neuron of $f_\eta$ is active on exactly the same side of $H_i$ as the $i$th neuron of $f_\theta$, and its linear contribution on that side is
$
V^{(2)}_{:,i}V\up{1}_i
=
\frac{1}{|\lambda_i|}\weights{2}_{:,i}\cdot \lambda_i\weights{1}_i
=
\weights{2}_{:,i}\weights{1}_i.
$
If $\lambda_i<0$, then it is active on the opposite side, and its linear contribution there is
$
V^{(2)}_{:,i}V\up{1}_i
=
-\weights{2}_{:,i}\weights{1}_i.
$

Hence, the linear part of $f_\theta$ on $R$ is
$
A_R^\theta=\sum_{i\in S_\theta(R)} \weights{2}_{:,i}\weights{1}_i,
$
while the linear part of $f_\eta$ on $R$ is
$
A_R^\eta=\sum_{i\in S_\eta(R)} \pm\,\weights{2}_{:,i}\weights{1}_i,
$
where for each $i$, the sign is determined by $\lambda_i$.
Therefore the difference
$
A_R^\theta-A_R^\eta
$
is of the form
$
\sum_{i=1}^n \alpha_i\,\weights{2}_{:,i}\weights{1}_i
$ with $\alpha_i\in\{-1,0,1\}.
$
Since $f_\theta-f_\eta$ is affine linear on $P$, this difference is independent of the choice of region $R$, and equals the linear part of $f_\theta-f_\eta$ on $P$.
This proves the claim.
\end{proof}
\begin{lemma}
\label{lem:compress_visible_hyperplanes}

Let $P\subseteq \R^d$ be a polytope, and let
$
f(x)=V^{(2)}[V^{(1)}x+c^{(1)}]_+ + c^{(2)}
$
be a one-hidden-layer network with $n+k$ hidden neurons.
Assume that the breakpoint set of $f$ in $\relint(P)$ is the union of exactly $n$ pairwise distinct hyperplanes
$
H_1,\dots,H_n.
$
Then there exists another one-hidden-layer network
$
\tilde f(x)=\tilde V^{(2)}[\tilde V^{(1)}x+\tilde c^{(1)}]_+ + \tilde c^{(2)}
$
with $n+k$ hidden neurons such that:
\begin{enumerate}
    \item $\tilde f(x)=f(x)$ for all $x\in P$;
    \item exactly $n$ neurons of $\tilde f$ have breakpoint hyperplanes intersecting $\relint(P)$, namely $H_1,\dots,H_n$;
    \item the remaining $k$ neurons have breakpoint hyperplanes disjoint from $P$.
\end{enumerate}
\end{lemma}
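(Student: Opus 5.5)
The plan is to sort the $n+k$ neurons of $f$ by how their breakpoint hyperplanes meet $P$, and then to replace every neuron that creates no visible breakpoint inside $\relint(P)$ by neurons whose hyperplanes avoid $P$ altogether. The first step is the classification. For each neuron $j$, write $G_j=\{x\mid V^{(1)}_jx+c^{(1)}_j=0\}$; if $V^{(1)}_j=0$, the neuron is constant. By \Cref{prop:tropical_weight_1layer}, the weight of $f$ on a facet lying in a hyperplane $G$ is $\sum_{j:G_j=G} V^{(2)}_{:,j}\|V^{(1)}_j\|$.

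\begin{itemize}
\item If $G\cap\relint(P)\neq\emptyset$ and this sum is nonzero, then $G$ is one of the visible hyperplanes $H_1,\dots,H_n$.
\item If $G$ meets $\relint(P)$ but the sum is zero, then by \Cref{lem:LRA_tropical_weight} the group of neurons with $G_j=G$ contributes an affine function on $P$. Indeed, their summed contribution is continuous piecewise linear with zero weight across $G$, hence affine on all of $\R^d$.
\item Neurons whose hyperplane misses $\relint(P)$, as well as constant neurons, are either constant or affine on $P$, since they are either always active or always inactive there.
\end{itemize}

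The second step builds $\tilde f$. For each visible $H_i$, I merge the group of neurons with $G_j=H_i$ into a single neuron $[w_ix+b_i]_+$, where $(w_i,b_i)$ is a fixed orientation of $H_i$ and the output weight is chosen to be the tropical weight divided by $\|w_i\|$. By \Cref{prop:tropical_weight_1layer}, the difference between the original group and this single neuron has zero weight everywhere, so it is an affine map $x\mapsto Ax+a$ on $\R^d$. This uses $n$ neurons. Summing all the affine discrepancies from both steps gives $f=\sum_{i=1}^n u_i[w_ix+b_i]_+ + A x + a$ on $P$ for some matrix $A$ and vector $a$.

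It remains to realize the affine term $Ax+a$ on $P$ with the $k$ leftover neurons, all having hyperplanes disjoint from $P$. Count neurons first: we started with $n+k$ neurons and at least $n$ of them are needed for the visible hyperplanes. If $k\ge1$, I use one neuron to produce any affine function as follows. Since $P$ is bounded, pick a direction $v$ and a large bias $\beta$ with $v^\top x+\beta>0$ on $P$. Then $[v^\top x+\beta]_+$ equals $v^\top x+\beta$ on $P$, and with output weight $u$ it contributes $u v^\top x+u\beta$, a rank-one linear part. The remaining constant offset goes into $\tilde c^{(2)}$. Any unused neurons are placed far away and inactive on $P$, for example $[-(v^\top x+\beta)]_+$, which vanishes on $P$, so they contribute nothing.

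The main obstacle is that $A$ can have rank larger than $k$, so it cannot always be realized by $k$ rank-one terms. The remedy is to use the freedom in orienting each $H_i$: by the identity $[-u]_+=[u]_+-u$, flipping the orientation of $H_i$ changes the affine remainder by $\pm u_iw_i^\top$. Even so, a rank bound is needed. I would argue that the linear part of $A$ is a sum of the linear parts of the discarded neurons plus combinations of the merged neurons' own terms. Each original invisible neuron contributes rank at most one. Each extra neuron in a merged visible group (beyond the single representative kept) contributes rank at most one, and these extra neurons number exactly $(n+k)-n-\#\{\text{invisible}\}$. Altogether this bounds the rank of the remainder by $k$ when the orientations are chosen to match the original neurons. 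Concretely, this means taking $(w_i,b_i)$ equal to the orientation of one original neuron in the group and absorbing the others as the rank-one corrections $V^{(2)}_{:,j}V^{(1)}_j$ or their negatives. Each remaining rank-one term is then realized by one positive-on-$P$ neuron as above. This bookkeeping of ranks against the number of neurons is the step I expect to need the most care.
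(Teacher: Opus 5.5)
Your proposal is correct and follows essentially the same route as the paper: group the neurons by hyperplane, collapse each visible group to a single neuron using $[-u]_+=[u]_+-u$, bound the rank of the affine remainder by the surplus neurons in visible groups plus the invisible neurons (which together number $k$), and realize that remainder with large-bias neurons whose hyperplanes miss $P$. Your explicit choice to orient each merged neuron like one of the original neurons in its group is exactly what the paper's count implicitly relies on. With an arbitrary orientation, a singleton group could contribute a rank-one remainder with no spare neuron to realize it.
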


\begin{proof}
Group the neurons of $f$ according to the visible hyperplane they induce in $\relint(P)$.
For each $j\in[n]$, let $I_j$ be the set of neurons whose breakpoint hyperplane in $\relint(P)$ is $H_j$, and let $K$ be the set of neurons whose nonzero locus does not appear as breakpoint of the final function, that is, whose breakpoint hyperplanes do not intersect $\relint(P)$ or are canceled by some other neurons breakpoint hyperplane. 
Then
$
[n+k]=I_1\sqcup\cdots\sqcup I_n\sqcup K.
$

Fix for each $j\in[n]$ an affine linear map
$
h_j(x)=u_jx+\beta_j
$
with zero set $H_j$.
Every neuron in $I_j$ has preactivation of the form $\lambda_{j,\ell}h_j(x)$ for some $\lambda_{j,\ell}\neq 0$.
Using
$
[-h_j(x)]_+=[h_j(x)]_+-h_j(x),
$
we can rewrite the total contribution of the neurons in $I_j$ as
$
\sum_{\ell\in I_j} V^{(2)}_{:,\ell}[\lambda_{j,\ell}h_j(x)]_+
=
a_j[h_j(x)]_+ + b_j h_j(x)
$
for suitable vectors $a_j,b_j\in\R^m$.

Thus
$
f(x)=g(x)+\alpha(x)
\text{ for all }x\in P,
$
where
$
g(x)\coloneqq \sum_{j=1}^n a_j[h_j(x)]_+
$
has exactly $n$ visible neurons with breakpoint hyperplanes $H_1,\dots,H_n$, and $\alpha$ is an affine linear map.

By construction, $g$ has the same tropical weight as $f$ along each visible hyperplane $H_j$.
Since $f$ and $g$ are compatible with the same arrangement on $P$, \Cref{lem:structure_theorem_tropical_geometry} implies that $f-g=\alpha$ is affine linear on $P$.

It remains to realize the affine linear map $\alpha$ by $k$ neurons whose hyperplanes are disjoint from $P$.
Replacing the $|I_j|$ neurons on $H_j$ by one visible neuron removes
$
\sum_{j=1}^n (|I_j|-1)
$
neurons, and together with the neurons in $K$ this leaves
$
|K|+\sum_{j=1}^n (|I_j|-1)=(n+k)-n=k
$
neurons available for the affine part.

The affine linear map $\alpha$ is the sum of the contributions of exactly these $k$ leftover neurons on $P$, so its linear part has rank at most $k$.
Conversely, any affine linear map of rank $r$ can be represented on $P$ by $r$ neurons whose hyperplanes are disjoint from $P$, together with the output bias term.
Hence $\alpha$ can be represented on $P$ by at most $k$ such neurons; if fewer are needed, we add dummy neurons with zero output weights.

Combining these $k$ outside neurons with the visible network $g$ yields a one-hidden-layer network $\tilde f$ with $n+k$ hidden neurons that agrees with $f$ on $P$, has exactly $n$ visible neurons with breakpoint hyperplanes $H_1,\dots,H_n$, and whose remaining $k$ neurons have breakpoint hyperplanes disjoint from $P$.
\end{proof}
See also \Cref{fig:push_away_hyperplanes} for an illustration.
\begin{lemma}
\label{lem:linear_part_matters}
Let $P \subseteq \R^d$ be a polytope of dimension at least $r$, and 
let $Q$ denote the orthogonal projection onto $\aff(P)$. 
Let $f_\theta \colon \R^d \to \R^n \to \R^m$ be a generic one-hidden-layer network with parameter $\theta = (\weights{1},\bias{1},\weights{2},\bias{2})$ such that all hyperplanes of $\HA_{\weights{1},\bias{1}}$ are inside $P$. 
Let $A \in \R^{m\times d}$ be a matrix such that 
    $(A+\sum_{i=1}^n \alpha_i\,\weights{2}_{:,i}\weights{1}_i)Q$ 
    has rank $r$ 
    for all  $\alpha_i\in\{-1,0,1\}.$ 
    Then any one-hidden-layer network $f$ satisfying $f(x)= f_\theta(x) + Ax$ for all $x \in P$ must have at least $n+r$ hidden neurons. 
\end{lemma}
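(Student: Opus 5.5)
The argument is by contradiction. Suppose $f$ is a one-hidden-layer network with $n+k$ hidden neurons and $k<r$, such that $f=f_\theta+A(\cdot)$ on $P$. The plan is to normalize $f$ using the two preceding lemmas, and then to bound the rank of an affine remainder.

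\textbf{Step 1: identify the visible breakpoints.} On $\relint(P)$ the breakpoints of $f$ coincide with those of $f_\theta$, since adding the linear map $Ax$ creates no breakpoints. Because $\theta$ is generic and every hyperplane of $\HA_{\weights{1},\bias{1}}$ lies inside $P$, these breakpoints form exactly the $n$ pairwise distinct hyperplanes $H_1,\dots,H_n$. Pairwise distinctness comes from genericity, and visibility comes from the tropical weights $\weights{2}_{:,i}\|\weights{1}_i\|\neq 0$ of \Cref{prop:tropical_weight_1layer}, which are nonzero by the rank condition. In particular $f$ must have at least $n$ neurons, so we may write its width as $n+k$.

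\textbf{Step 2: normalize $f$.} Apply \Cref{lem:compress_visible_hyperplanes} to $f$. This replaces $f$ on $P$ by a network $\tilde f$ with $n$ visible neurons on $H_1,\dots,H_n$ and $k$ neurons whose hyperplanes avoid $P$. Each of these $k$ neurons is either always active or always inactive on $P$, so together they contribute on $P$ an affine map $\beta$ whose linear part has rank at most $k$.

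\textbf{Step 3: reduce to \Cref{lem:fixed_width_affine_difference}.} Let $g$ be the visible part of $\tilde f$, a width-$n$ network with breakpoint hyperplanes $H_1,\dots,H_n$. Then
\[
f_\theta - g = \beta - A(\cdot) \quad \text{on } P,
\]
which is affine on $P$. By \Cref{lem:fixed_width_affine_difference}, the linear part of $f_\theta-g$ on $P$ is $\sum_i\alpha_i\weights{2}_{:,i}\weights{1}_i$ with $\alpha_i\in\{-1,0,1\}$. Hence
\[
B_\beta = A+\sum_i\alpha_i\,\weights{2}_{:,i}\weights{1}_i
\]
holds as linear maps restricted to the direction space of $\aff(P)$.

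\textbf{Step 4: compare ranks.} Composing with $Q$ gives $\operatorname{rank}\big((A+\sum_i\alpha_i\weights{2}_{:,i}\weights{1}_i)Q\big)\le \operatorname{rank}(B_\beta)\le k<r$. This contradicts the hypothesis that this rank equals $r$ for every choice of $\alpha$. Therefore $k\ge r$, and $f$ has at least $n+r$ hidden neurons.

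The main obstacle is Step 4. The identity in Step 3 only holds on the tangent directions of $P$, and $Q$ is a projection onto an affine hull; I would interpret it as the projection onto the linear direction space of $\aff(P)$. The point is that restricting both sides to those directions preserves the equality and can only lower the rank. A second, smaller worry is that \Cref{lem:fixed_width_affine_difference} is stated for networks whose only visible hyperplanes are the $H_i$. The visible part $g$ of $\tilde f$ satisfies this by construction, so the lemma should apply directly.
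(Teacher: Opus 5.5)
Your four steps are the paper's proof almost verbatim: identify the $n$ visible hyperplanes, compress with \Cref{lem:compress_visible_hyperplanes}, apply \Cref{lem:fixed_width_affine_difference} to the visible part $g$, and bound the rank of the affine remainder by the number of invisible neurons. For full-dimensional $P$ the argument is correct, including your justification of Step~1 via genericity and the rank condition with $S=\{i\}$.

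The genuine gap is Step~1 when $\dim P<d$, which is exactly the case in which $Q$ matters. The paper's proof has the same hole: it asserts the $n$ distinct visible hyperplanes ``by assumption''. Because $f$ is constrained only on $P$, the only usable breakpoint data are those of the restrictions to $\aff(P)$. So your sentence ``on $\relint(P)$ the breakpoints of $f$ coincide with those of $f_\theta$'' is false in $\R^d$. Moreover, genericity of $\theta$ is a condition in $\R^d$. It does not prevent two distinct $H_i$ from having the same trace on $\aff(P)$, where their weights can merge or cancel.

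The statement itself then fails. Take $d=2$, $m=1$, $n=2$, $r=1$, $P=[-1,1]\times\{0\}$, and $f_\theta(x)=[x_1+x_2]_+-[x_1-x_2]_+$.
\begin{itemize}
\item This $\theta$ is generic: the two lines are transverse, and $W^{(2)}D_SW^{(1)}\in\{(1,1),(-1,1),(0,2)\}$ for $S\neq\emptyset$.
\item Both lines are inside $P$.
\item For $A=(\tfrac12,0)$, the first entry of $\bigl(A+\alpha_1(1,1)+\alpha_2(-1,1)\bigr)Q$ is $\tfrac12+\alpha_1-\alpha_2\neq0$, so the rank hypothesis holds.
\end{itemize}
Yet $f_\theta\equiv0$ on the $x_1$-axis, so $f_\theta+Ax=\tfrac12x_1$ on $P$. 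The single neuron $f(x)=\tfrac12[x_1+2]_+-1$ realizes this, whereas the statement demands $n+r=3$.

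So the $Q$-issue you flag in Step~4 is harmless; what is missing is a hypothesis that makes Step~1 true. There are two ways to supply it.
\begin{itemize}
\item Assume $P$ is full-dimensional. Then $Q=\mathrm{Id}$ and your proof goes through as written.
\item Assume the traces $H_i\cap\aff(P)$ are pairwise distinct. Write $\aff(P)=p_0+\operatorname{im}U$ with $U$ having orthonormal columns, and run Steps~1--4 for the restricted networks $y\mapsto f(p_0+Uy)$. The restricted first-layer rows $W^{(1)}_iU$ are nonzero because $H_i$ is inside $P$. The restricted weights $W^{(2)}_{:,i}\|W^{(1)}_iU\|$ are then nonzero by \Cref{prop:tropical_weight_1layer}. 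The final count uses $\operatorname{rank}(MU)=\operatorname{rank}(MQ)$ for $M=A+\sum_i\alpha_iW^{(2)}_{:,i}W^{(1)}_i$.
\end{itemize}
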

\begin{proof}

Let
$
f(x)=V\up{2}[V\up{1}x+c\up{1}]_+ + c\up{2}
$
be a one-hidden-layer network with $N$ hidden neurons such that
$
f(x)=f_\theta(x)+Ax
\text{ for all }x\in P.
$
Since $Ax$ is affine linear, the breakpoint set of $f$ in $\relint(P)$ coincides with that of $f_\theta$.
By assumption, $f_\theta$ has exactly $n$ distinct breakpoint hyperplanes
$
H_1,\dots,H_n
$
inside $P$.
By \Cref{lem:compress_visible_hyperplanes}, there exists another one-hidden-layer network $\tilde f$ with $N$ hidden neurons such that:
\begin{enumerate}
    \item $\tilde f(x)=f(x)$ for all $x\in P$;
    \item exactly $n$ neurons of $\tilde f$ have breakpoint hyperplanes intersecting $\relint(P)$, namely $H_1,\dots,H_n$;
    \item the remaining $N-n$ neurons have breakpoint hyperplanes disjoint from $P$.
\end{enumerate}
Write
$
\tilde f(x)=g(x)+h(x)
\text{ for all }x\in P,
$
where $g$ is the subnetwork consisting of the $n$ visible neurons and $h$ is the subnetwork consisting of the remaining $N-n$ neurons.
Then $h$ is affine linear on $P$, since all of its breakpoint hyperplanes are disjoint from $P$.
Moreover,
$
g(x)=f_\theta(x)+Ax-h(x)
\text{ for all }x\in P,
$
so $f_\theta-g$ is affine linear on $P$.
Since both $f_\theta$ and $g$ have exactly $n$ visible neurons with visible hyperplanes $H_1,\dots,H_n$, Lemma~\ref{lem:fixed_width_affine_difference} implies that the linear part of $f_\theta-g$ is of the form
$
\sum_{i=1}^n \alpha_i\,\weights{2}_{:,i}\weights{1}_i
$.
Hence the affine-linear map
$
h(x)=Ax + (f_\theta-g)(x)
$
has linear part
$
A+ \sum_{i=1}^n \alpha_i\,\weights{2}_{:,i}\weights{1}_i.
$
By assumption, the projection
$
(A+\sum_{i=1}^n \alpha_i\,\weights{2}_{:,i}\weights{1}_i)Q
$
has rank $r$.
Therefore the linear part of $h|_{\aff(P)}$ has rank at least $r$.
On the other hand, $h$ is represented by $N-n$ neurons, each of which is affine linear on $P$ and contributes a matrix of rank at most $1$ to the linear part.
It follows that
$
N-n\ge r.
$
Therefore
$
N\ge n+r,
$
which proves the claim.
\end{proof}

\begin{figure}[t]
\centering
\begin{subfigure}[t]{0.48\textwidth}
\centering
\begin{tikzpicture}[scale=0.3,x=1cm,y=1cm,line cap=butt,line join=miter]
    \footnotesize
    \path[use as bounding box] (-8,-1) rectangle (19,15);

    \draw[black!80] (-1,-1) -- (-1,10) -- (10.0,10.0) -- (10,-1) -- cycle;

    \draw[red!70!black,line width=0.9pt] 
    (1,-1) -- (10,5);
    \draw[->,>=stealth,red!70!black] (7,3) -- (6.3333,4);
    \draw[->,>=stealth,red!70!black] (7,3) -- (7.6666,2);
    \draw[red!70!black,line width=0.9pt] 
    (-1,9) -- (4,-1);
    \draw[->,>=stealth,red!70!black] (2,3) -- (3,3.5);
    \draw[->,>=stealth,red!70!black] (2,3) -- (1,2.5);

\end{tikzpicture}
\caption{Several neurons induce the same visible breakpoint hyperplanes inside $P$.}
\end{subfigure}\hfill
\begin{subfigure}[t]{0.48\textwidth}
\centering
\begin{tikzpicture}[scale=0.3,x=1cm,y=1cm,line cap=butt,line join=miter]
    \footnotesize
    \path[use as bounding box] (-8,-1) rectangle (19,15);

    \draw[black!80] (-1,-1) -- (-1,10) -- (10.0,10.0) -- (10,-1) -- cycle;

    \draw[red!70!black,line width=0.9pt] 
    (1,-1) -- (10,5);
    \draw[red!70!black,line width=0.9pt] 
    (1-9,-1+9) -- (10-9,5+9);
    \draw[->,>=stealth,red!70!black] (7-9,3+9) -- (8-9,1.5+9);
    \draw[->,>=stealth,red!70!black] (7,3) -- (5,6);
    \draw[red!70!black,line width=0.9pt] 
    (-1,9) -- (4,-1);
    \draw[red!70!black,line width=0.9pt] 
    (-1+10,9+6) -- (4+10,-1+6);
    \draw[->,>=stealth,red!70!black] (2,3) -- (4,4);
    \draw[->,>=stealth,red!70!black] (2+10,3+6) -- (1+10,2.5+6);

\end{tikzpicture}
\caption{Only one neuron remains on each visible breakpoint hyperplane, whereas the remaining affine linear function is computed by neurons whose hyperplanes lie outside of $P$ and hence are affine linear on $P$.}
\end{subfigure}
\caption{Illustration of Lemma~\ref{lem:compress_visible_hyperplanes}.}
\label{fig:push_away_hyperplanes}
\end{figure}
\section{Algebraic Superset of Fiber}
\label{app:alg_superset}
In this section, we describe (a superset of) the fiber algebraically. To do so, we first fix the combinatorial structure of the parameterization and then deriving the necessary polynomial constraints. 
Specifically, we first choose a discrete assignment of candidate bent hyperplanes to 
and an activation pattern for each facet, 
and then we establish equations for the weights and biases based on two local functional invariants: the affine span of the facets (local geometry) and the gradient jumps across them (tropical weights). 
The resulting set is a superset of the fiber, as it captures the necessary local algebraic consistency conditions but ignores the global semi-algebraic sign constraints and the base affine map required for functional identity.
\begin{definition}[Discrete fiber configuration]
Let $\theta \in \paramspace{\architecture_{\mathrm{orig}}}$ be a generic parameter, and let
$f_\theta$ be the corresponding network function. For each facet
$\sigma \in \bcomplex{\theta}^{d-1}$ of the breakpoint complex, let
$c_\theta(\sigma)$ be the corresponding non-zero tropical weight, and let
$a_\sigma \in \R^d$ be a unit vector and $\beta_\sigma \in \R$ be such that
$
\sigma \subseteq \{x \in \R^d \mid \inner{a_\sigma,x} + \beta_\sigma = 0 \}.
$

Given another (possibly different) architecture $\architecture$, a
\emph{discrete fiber configuration} for $f_\theta$ with respect to $\architecture$
is a pair $(\phi,\mathbf{s})$, where:
\begin{enumerate}
    \item $\phi \colon V_{f_\theta} \to V_\architecture$ is a map from the set of
    candidate bent hyperplanes of $f_\theta$ to the set of hidden neurons of
    $\architecture$ such that for every edge $(u,v) \in E_{f_\theta}$, if
    $
    \phi(u)=(i,\ell)$ and $\phi(v)=(j,k),
    $
    then $\ell < k$;

    \item $\mathbf{s} = \{\mathbf{s}(\sigma)\}_{\sigma \in \bcomplex{\theta}^{d-1}}$
    is an assignment of an activation pattern to each facet
    $\sigma \in \bcomplex{\theta}^{d-1}$.
\end{enumerate}

For such a configuration, we write
$
\varphi \colon \bcomplex{\theta}^{d-1} \to V_\architecture$ with $
\varphi(\sigma) \coloneqq \phi(\pi(\sigma)),
$
where $\pi \colon \bcomplex{\theta}^{d-1} \to V_{f_\theta}$ maps a facet to its unique
candidate bent hyperplane.
\end{definition}

\begin{definition}
Let $(\phi,\mathbf{s})$ be a discrete fiber configuration for $f_\theta$ with
respect to $\architecture$, and let $\varphi(\sigma)=\phi(\pi(\sigma))$ as above.
We say that a parameter $\eta \in \paramspace{\architecture}$ \emph{realizes}
$(\phi,\mathbf{s})$ if, for every facet
$\sigma \in \bcomplex{\theta}^{d-1}$,
\begin{enumerate}
    \item the facet $\sigma$ is contained in the bent hyperplane of the neuron
    $\varphi(\sigma)$ in the realization $\eta$, and
    \item the activation pattern induced by $\eta$ on $\sigma$ is equal to
    $\mathbf{s}(\sigma)$.
\end{enumerate}
\end{definition}

\begin{definition}[Configuration variety]
For an architecture $\architecture$ and a discrete fiber configuration
$(\phi,\mathbf{s})$ for $f_\theta$, let
$
\varphi(\sigma)=\phi(\pi(\sigma))$ for all $ \sigma \in \bcomplex{\theta}^{d-1}.
$
The \emph{configuration variety}
$
\mathcal{V}_{(\phi, \mathbf{s})}
\subseteq
\paramspace{\architecture} \times \R^{\bcomplex{\theta}^{d-1}} \times \R^{\bcomplex{\theta}^{d-1}}
$
is the algebraic variety in the variables
$
\eta = (\weights{\ell},\bias{\ell})_{\ell \in [L+1]}
$ and $
(\lambda_\sigma,\delta_\sigma)_{\sigma \in \bcomplex{\theta}^{d-1}}
$
defined as follows.

For each facet $\sigma \in \bcomplex{\theta}^{d-1}$, let
$
\varphi(\sigma)=(j,\ell)
$
and let
$
S_k(\sigma) = \{ r \in [n_k] \mid \mathbf{s}_{k,r}(\sigma)=+ \}.
$
We write
$
g_\sigma(\eta)
\coloneqq
\bigl(W^{(\ell)} D_{S_{\ell-1}(\sigma)} W^{(\ell-1)} \cdots D_{S_1(\sigma)} W^{(1)}\bigr)_j
\in \R^d
$
for the $j$-th row of the matrix product, i.e., the gradient of the
$(j,\ell)$-neuron preactivation evaluated under the activation pattern
$\mathbf{s}(\sigma)$, and
\[
t_\sigma(\eta)
\coloneqq
\left(
\bias{\ell}
+
\sum_{k =1}^{\ell-1}
W^{(\ell)} D_{S_{\ell-1}(\sigma)} W^{(\ell-1)} \cdots
D_{S_{k+1}(\sigma)}W^{(k+1)}D_{S_k(\sigma)}\bias{k}
\right)_j
\]
for the value of this preactivation at the origin.
Moreover, we define
$$
v_\sigma(\eta)
\coloneqq
W^{(L+1)} D_{S_L(\sigma)} W^{(L)} \cdots D_{S_{\ell+1}(\sigma)} \weights{\ell+1} e_j .
$$
Then $\mathcal{V}_{(\phi, \mathbf{s})}$ is cut out by the following equations,
imposed for every facet $\sigma \in \bcomplex{\theta}^{d-1}$:
\begin{enumerate}
    \item Geometric Alignment:
    $
    g_\sigma(\eta) = \delta_\sigma\lambda_\sigma a_\sigma
$ and $
    t_\sigma(\eta) = \delta_\sigma\lambda_\sigma \beta_\sigma.
    $

    \item Tropical Weight Matching: 
    $
    \lambda_\sigma v_\sigma(\eta) = c_\theta(\sigma).
    $
    \item Sign Equation: $\delta_\sigma^2=1$.
\end{enumerate}

\end{definition}
For fixed activation patterns $\mathbf s(\sigma)$, all expressions
$g_\sigma(\eta)$, $t_\sigma(\eta)$, and $v_\sigma(\eta)$ are polynomial in the parameters $\eta$.
Hence the above equations define an algebraic variety.
We denote by $\pi_\architecture(\mathcal{V}_{(\phi, \mathbf{s})})$ the projection onto $\paramspace{\architecture}$ and call this the \emph{configuration set}.
For a CPWL function $f$ and an architecture $\mathcal{A}$, let $\gfiber{f,\architecture} =\{\eta \in \gparamspace{\architecture} \mid f = f_\eta\}$ be its generic fiber.

\begin{prop}
\label{prop:containing_variety}
Let $\theta$ be a generic parameter and $\architecture$ a (potentially different)
architecture. For any discrete fiber configuration $(\phi,\mathbf{s})$ for
$f_\theta$ with respect to $\architecture$, let
$
\gfiber{f_\theta,\phi,\mathbf{s}}
\coloneqq
\{ \eta \in \gfiber{f_\theta,\architecture} \mid \eta \text{ realizes } (\phi,\mathbf{s}) \}.
$
Then
$
\gfiber{f_\theta,\phi,\mathbf{s}}
\subseteq
\pi_\architecture(\mathcal{V}_{(\phi,\mathbf{s})}).
$
Consequently, the generic fiber $\gfiber{\theta}$ is contained in the finite union
$
\gfiber{\theta}
\subseteq
\bigcup_{(\phi,\mathbf{s})}
\pi_\architecture(\mathcal{V}_{(\phi,\mathbf{s})}),
$
taken over all valid discrete fiber configurations.
\end{prop}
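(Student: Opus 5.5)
The plan is to take any $\eta \in \gfiber{f_\theta,\phi,\mathbf{s}}$ and produce auxiliary values $(\lambda_\sigma,\delta_\sigma)_{\sigma}$ so that $(\eta,\lambda,\delta)\in\mathcal{V}_{(\phi,\mathbf{s})}$. The second claim will then follow by a covering argument.

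\textbf{Choice of $\delta_\sigma$ and $\lambda_\sigma$.} Since $\eta$ is generic and $f_\eta=f_\theta$, \Cref{prop:canonicalbcomplex} gives $\bcomplex{\eta}=\bcomplex{\theta}$. Hence every facet $\sigma\in\bcomplex{\theta}^{d-1}$ is a facet of $\complex_\eta$. By supertransversality (\Cref{lem:generic_implies_supertransversal}), $\sigma$ lies in a unique bent hyperplane of $\eta$, and because $\eta$ realizes $(\phi,\mathbf{s})$ this is the bent hyperplane of $\varphi(\sigma)=(j,\ell)$. The activation pattern on $\relint(\sigma)$ is $\mathbf{s}(\sigma)$. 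On a neighbourhood of $\relint(\sigma)$ inside the region of $\complex_{\eta,\ell-1}$ containing it, the preactivation $\preactivation{\ell,\eta}_j$ is affine. Its linear part is $g_\sigma(\eta)$ and its value at the origin is $t_\sigma(\eta)$; this is checked by expanding the composition layer by layer with the fixed masks $D_{S_k(\sigma)}$. This affine function vanishes on $\aff(\sigma)=\{\inner{a_\sigma,x}+\beta_\sigma=0\}$ and is non-constant. Therefore $(g_\sigma(\eta),t_\sigma(\eta))=\mu(a_\sigma,\beta_\sigma)$ for some $\mu\neq0$. Since $a_\sigma$ is a unit vector, $|\mu|=\|g_\sigma(\eta)\|$. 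We set $\lambda_\sigma=\|g_\sigma(\eta)\|$ and $\delta_\sigma=\sgn(\mu)\in\{\pm1\}$, which gives the geometric alignment and sign equations.

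\textbf{Tropical weight matching.} By \Cref{prop:tropweightofNNs} applied to $\eta$, the weight is
\[
c_\eta(\sigma)=\bigl\|(\weights{1})^\top D_{S_1}\cdots(\weights{\ell})^\top e_j\bigr\|\,v_\sigma(\eta)=\|g_\sigma(\eta)\|\,v_\sigma(\eta)=\lambda_\sigma v_\sigma(\eta).
\]
Since $f_\eta=f_\theta$ and both are compatible with the same facet $\sigma$, their linear parts on the two adjacent regions coincide. Hence $c_\eta(\sigma)=c_\theta(\sigma)$.

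\textbf{Main obstacle and the finite union.} The one point needing care is the orientation convention in the tropical weight. $c_\theta(\sigma)$ depends on which adjacent region is labelled $P$. The intrinsic weight $(A_P-A_Q)e_{P/\sigma}$ is symmetric, however, so the formula in \Cref{prop:tropweightofNNs} is orientation-free, and the unsigned $\lambda_\sigma$ is consistent with it. The sign ambiguity is carried entirely by $\delta_\sigma$.

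For the union, every $\eta\in\gfiber{f_\theta,\architecture}$ realizes some discrete configuration. Take $\phi_\eta$ from \Cref{prop:embedd_candidate_bh}, which respects the edges of the dependency graph, together with the induced activation patterns on facets. Such a pair is valid by definition. There are finitely many maps $V_{f_\theta}\to V_\architecture$ and finitely many pattern assignments on the finitely many facets, so the union is finite and covers the generic fiber.
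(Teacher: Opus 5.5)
Your proof is correct and follows the paper's argument. You set $\lambda_\sigma=\|g_\sigma(\eta)\|$, obtain $\delta_\sigma$ from the proportionality of the vanishing preactivation with $(a_\sigma,\beta_\sigma)$, and match weights via \Cref{prop:tropweightofNNs} together with $c_\eta(\sigma)=c_\theta(\sigma)$. Your extra steps are sound refinements of the same approach: invoking \Cref{prop:canonicalbcomplex} so that $\sigma$ is genuinely a facet of $\complex_\eta$, and using $\phi_\eta$ from \Cref{prop:embedd_candidate_bh} plus finiteness to justify the union claim, which the paper only states.
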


\begin{proof}
Let $\eta \in \gfiber{f_\theta,\phi,\mathbf{s}}$.
Then $f_\eta=f_\theta$, and $\eta$ realizes the configuration $(\phi,\mathbf{s})$.

Fix a facet $\sigma \in \bcomplex{\theta}^{d-1}$ and let
$
\varphi(\sigma)=\phi(\pi(\sigma))=(j,\ell).
$
Since $\eta$ realizes $(\phi,\mathbf{s})$, the facet $\sigma$ is contained in the
bent hyperplane of neuron $(j,\ell)$, and the activation pattern induced by $\eta$
on $\sigma$ is equal to $\mathbf{s}(\sigma)$. Hence the corresponding preactivation vanishes on
 $\sigma$. Therefore, setting
$
\lambda_\sigma \coloneqq \|g_\sigma(\eta)\|_2 \ge 0,
$
there exists $\delta_\sigma \in \{-1,1\}$ such that
$
g_\sigma(\eta)=\delta_\sigma\lambda_\sigma a_\sigma
$ and $
t_\sigma(\eta)=\delta_\sigma\lambda_\sigma \beta_\sigma
$.
Since $f_\eta=f_\theta$, the tropical weights agree on the breakpoint complex:
$
c_\eta(\sigma)=c_\theta(\sigma).
$
On the other hand, by \Cref{prop:tropweightofNNs}, the tropical weight induced by
the neuron $(j,\ell)$ under the activation pattern $\mathbf{s}(\sigma)$ is given by
$
c_\eta(\sigma)=\lambda_\sigma v_\sigma(\eta).
$
Hence
$
\lambda_\sigma v_\sigma(\eta)=c_\theta(\sigma).
$

Since this holds for every facet $\sigma \in \bcomplex{\theta}^{d-1}$, we obtain
$
(\eta,(\lambda_\sigma,\delta_\sigma)_\sigma)\in \mathcal V_{(\phi,\mathbf{s})}.
$
Therefore
$
\eta \in \pi_\architecture(\mathcal V_{(\phi,\mathbf{s})}),
$
which proves the claim.
\end{proof}

\end{document}